\definecolor{darkblue}{rgb}{0,0,.5}
\newcommand*{\rom}[1]{\expandafter\@slowromancap\romannumeral #1@}
\newcommand{\ind}{\mathds{1}}
\newcommand{\brc}[1]{\left\{{#1}\right\}}
\newcommand{\prn}[1]{\left({#1}\right)} 
\newcommand{\brk}[1]{\left[{#1}\right]} 
\newcommand{\norm}[1]{\left\|{#1}\right\|} 
\newcommand{\abs}[1]{\left|{#1}\right|} 
\newcommand{\wtilde}[1]{\widetilde{#1}}
\newcommand{\sgn}{\mathsf{sign}}
\newcommand{\<}{\langle} 
\renewcommand{\>}{\rangle}
\DeclareMathOperator*{\argmin}{argmin}
\def\gA{{\mathcal{A}}}
\def\gL{{\mathcal{L}}}
\def\gO{{\mathcal{O}}}
\def\gP{{\mathcal{P}}}
\def\gS{{\mathcal{S}}}
\def\gT{\bm{\mathcal{T}}}
\def\sP{{\mathscr{P}}}
\def\sV{{\mathscr{V}}}
\def\bA{{\bm{A}}}
\def\bb{{\bm{b}}}
\def\be{{\bm{e}}}
\def\bB{{\bm{B}}}
\def\bC{{\bm{C}}}
\def\bD{{\bm{D}}}
\def\bw{{\bm{w}}}
\def\bF{{\bm{F}}}
\def\bW{{\bm{W}}}
\def\bx{{\bm{x}}}
\def\bI{{\bm{I}}}
\def\bg{{\bm{g}}}
\def\bG{{\bm{G}}}
\def\bp{{\bm{p}}}
\def\bPi{{\bm{\Pi}}}
\def\bphi{{\bm{\phi}}}
\def\bPhi{{\bm{\Phi}}}
\def\bpsi{{\bm{\psi}}}
\def\bGamma{{\bm{\Gamma}}}
\def\bSigma{{\bm{\Sigma}}}
\def\btheta{{\bm{\theta}}}
\def\bTheta{{\bm{\Theta}}}
\def\bu{{\bm{u}}}
\def\bv{{\bm{v}}}
\def\btheta{{\bm{\theta}}}
\def\bV{{\bm{V}}}
\def\bY{{\bm{Y}}}
\def\bQ{{\bm{Q}}}
\def\bD{{\bm{D}}}
\def\bz{{\bm{z}}}
\def\bT{{\bm{T}}}
\newcommand{\LCTD}{{\texttt{Linear-CTD}}}
\newcommand{\LTD}{{\texttt{Linear-TD}}}
\def\RB{{\mathbb R}}
\def\EB{{\mathbb E}}
\def\NB{{\mathbb N}}
\def\vect{\mathsf{vec}}
\def\ie{{\em i.e.\/}}
\long\def\@makecaption#1#2{
  \vskip 0.8ex
  \setbox\@tempboxa\hbox{\small {\bf #1:} #2}
  \parindent 1.5em  
  \dimen0=\hsize
  \advance\dimen0 by -3em
  \ifdim \wd\@tempboxa >\dimen0
  \hbox to \hsize{
    \parindent 0em
    \hfil 
    \parbox{\dimen0}{\def\baselinestretch{0.96}\small
      {\bf #1.} #2
    } 
    \hfil}
  \else \hbox to \hsize{\hfil \box\@tempboxa \hfil}
  \fi
}
\newcommand{\tr}{\operatorname{tr}}
\newtheorem{claim}{Claim}[section]
\newtheorem{lemma}[claim]{Lemma}
\newtheorem{theorem}{Theorem}[section]
\newtheorem{proposition}{Proposition}[section]
\newtheorem{remark}{Remark}
\newtheorem{corollary}{Corollary}[section]
\title{A Finite Sample Analysis of Distributional TD Learning with Linear Function Approximation}
\author{
Yang Peng\thanks{School of Mathematical Sciences, Peking University; email: \texttt{pengyang@pku.edu.cn}.} \and
Kaicheng Jin\thanks{School of Mathematical Sciences, Peking University; email: \texttt{kcjin@pku.edu.cn}.} \and
Liangyu Zhang,~\thanks{School of Statistics and Data Science, Shanghai University of Finance and Economics; email: \texttt{zhangliangyu@sufe.edu.cn}.} \and
Zhihua Zhang\thanks{School of Mathematical Sciences, Peking University; email: \texttt{zhzhang@math.pku.edu.cn}.}
}
\begin{document}
\maketitle
\begin{abstract}%
In this paper, we study the finite-sample statistical rates of distributional temporal difference (TD) learning with linear function approximation.
The aim of distributional TD learning is to estimate the return distribution of a discounted Markov decision process for a given policy $\pi$.
Previous works on statistical analysis of distributional TD learning mainly focus on the tabular case.
In contrast, we first consider the linear function approximation setting and derive sharp finite-sample rates.
Our theoretical results demonstrate that the sample complexity of linear distributional TD learning matches that of classic linear TD learning.
This implies that, with linear function approximation, learning the full distribution of the return from streaming data is no more difficult than learning its expectation (value function).
To derive tight sample complexity bounds, we conduct a fine-grained analysis of the linear-categorical Bellman equation and employ the exponential stability arguments for products of random matrices.
Our results provide new insights into the statistical efficiency of distributional reinforcement learning algorithms.
\end{abstract}
\section{Introduction}\label{Section:intro}
Distributional policy evaluation
\citep{morimura2010nonparametric,bellemare2017distributional,bdr2022}, which aims to estimate the return distribution of a policy in an Markov decision process (MDP), is crucial for many uncertainty-aware or risk-sensitive tasks \citep{lim2022distributional,kastner2023distributional}.
Unlike the classic policy evaluation that only focuses on expected returns (value functions), distributional policy evaluation captures uncertainty and risk by considering the full distributional information.
To solve a distributional policy evaluation problem, in the seminal work \cite{bellemare2017distributional} proposed distributional temporal difference (TD) learning, which can be viewed as an extension of classic TD learning \citep{sutton1988learning}.

Although classic TD learning has been extensively studied \citep{bertsekas1995neuro, tsitsiklis1996analysis, bhandari2018finite, dalal2018finite, patil2023finite,li2024q,li2024high, chen2024lyapunov,samsonov2024gaussian,samsonov2024improved, wu2024statistical}, the theoretical understanding of distributional TD learning remains relatively underdeveloped.
Recent works \citep{rowland2018analysis,speedy,zhang2023estimation,rowland2024analysis,rowland2024nearminimaxoptimal,peng2024statistical} have analyzed distributional TD learning (or its model-based variants) in the tabular setting.
Especially, \citet{rowland2024nearminimaxoptimal} and \citet{peng2024statistical} demonstrated that in the tabular setting, learning the return distribution (in terms of the $1$-Wasserstein distance\footnote{Solving distributional policy evaluation $\varepsilon$-accurately in the $1$-Wasserstein
distance sense is harder than solving classic policy evaluation $\varepsilon$-accurately, as the absolute difference of value functions is always bounded by the $1$-Wasserstein distance between return distributions. }) is statistically as easy as learning its expectation.
However, in practical scenarios, where the state space is extremely large or continuous, the function approximation \citep{dabney2018distributional,dabney2018implicit,yang2019fully,freirich2019distributional,yue2020implicit,nguyen2021distributional,ijcai2021p476,luo2022distributional,wenliang2024distributional,sun2024distributional} becomes indispensable. 
This raises a new open question: \emph{When function approximation is employed, does learning the return distribution remain as statistically efficient as learning its expectation?}

To answer this question, we consider the simplest form of function approximation, \ie, linear function approximation, and investigate the finite-sample performance of linear distributional TD learning.
In distributional TD learning, we need to represent the infinite-dimensional return distributions with some finite-dimensional parametrizations to make the algorithm tractable.
Previous works \citep{bellemare2019distributional,lyle2019comparative,bdr2022} have proposed various linear distributional TD learning algorithms under different parameterizations, namely categorical and quantile parametrizations.
In this paper, we consider the categorical parametrization and propose an improved version of the linear-categorical TD learning algorithm (\LCTD).
We then analyze the non-asymptotic convergence rate of {\LCTD}.
Our analysis reveals that, with the Polyak-Ruppert tail averaging \citep{ruppert1988efficient,polyak1992acceleration} and a proper constant step size, the sample complexity of {\LCTD} matches that of classic linear TD learning (\LTD) \citep{li2024high,samsonov2024improved}. Thus,
this  confirms that learning the return distribution is statistically no more difficult than learning its expectation when the linear function approximation is employed.

\paragraph{Notation.}
In the following parts of the paper, $(x)_+:=\max\brc{x,0}$ for any $x\in\RB$.
``$\lesssim$'' (resp. ``$\gtrsim$'') means no larger (resp. smaller) than up to a multiplicative universal constant, and $a\simeq b$ means $a\lesssim b$ and $a\gtrsim b$ hold simultaneously.
The asymptotic notation $f(\cdot)=\wtilde{\gO}\prn{g(\cdot)}$ (resp. $\wtilde{\Omega}\prn{g(\cdot)}$)
means that $f(\cdot)$ is order-wise no larger (resp. smaller) than $g(\cdot)$, ignoring logarithmic factors of polynomials of $(1-\gamma)^{-1}, \lambda_{\min}^{-1}, \alpha^{-1}$, $\varepsilon^{-1}$, $\delta^{-1}$, $K$, $\norm{\bpsi^\star}_{\bSigma_{\bphi}}$, $\norm{\btheta^{\star}}_{\bI_K\otimes\bSigma_{\bphi}}$. 
We will explain the concrete meaning of the notation once we have encountered them for the first time.

We denote by $\delta_x$ the Dirac measure at $x\in\RB$, $\ind$ the indicator function, $\otimes$ the Kronecker product (see Appendix~\ref{Appendix_kronecker}), $\bm{1}_K{\in}\RB^K$ the all-ones vector, 
$\bm{0}_K{\in}\RB^K$ the all-zeros vector, 
$\bI_K{\in}\RB^{K{\times} K}$ the identity matrix, 
$\norm{\bu}$ the Euclidean norm of any vector $\bu$, 
$\norm{\bB}$ the spectral norm of any matrix $\bB$, 
and $\norm{\bu}_B:=\sqrt{\bu^{\top}\bB\bu}$ when $\bB$ is positive semi-definite (PSD).
$\bm B_1{\preccurlyeq}\bm B_2$ stands for $\bm B_2{-}\bm B_1$ is PSD for any symmetric matrices $\bm B_1, \bm B_2$.
And $\prod_{k=1}^t \bm{B}_k$ is defined as $\bm{B}_t\bm{B}_{t{-}1}\cdots \bm{B}_1$ for any matrices $\brc{\bm{B}_k}_{k=1}^t$ with appropriate sizes.
For any matrix $\bB{=}[\bb(1), \ldots, \bb(n) ]{\in}\RB^{m{\times} n}$, we define its vectorization as $\vect(\bB){=}(\bb(1)^{\top},\ldots, \bb(n)^{\top})^{\top}{\in}\RB^{mn}$.
Given a set $A$, we denote by $\Delta(A)$ the set of all probability distributions over $A$. For simplicity,
we abbreviate $\Delta([0,(1{-}\gamma)^{-1}])$ as $\sP$.
\paragraph{Contributions.}
Our contribution is two-fold: in algorithms and in theory.  
Algorithmically, we propose an improved version of the linear-categorical TD learning algorithm (\LCTD).
Rather than using stochastic semi-gradient descent to update the parameter as in \citet{bellemare2019distributional,lyle2019comparative,bdr2022}, we directly formulate the linear-categorical projected Bellman equation into a linear system and apply a linear stochastic approximation to solve it.
The resulting {\LCTD} can be viewed as a preconditioned version  \citep{chen2005matrix, li2017preconditioned}  of the vanilla linear categorical TD learning algorithm proposed in \citet[Section~9.6]{bdr2022}. 
By introducing a preconditioner, our {\LCTD} achieves a finite-sample rate independent of the number of supports $K$ in the categorical parameterization,  which the vanilla version cannot attain.
We provide both theoretical and experimental evidence to demonstrate this advantage of our {\LCTD}. 

Theoretically, we establish the first non-asymptotic guarantees for distributional TD learning with the linear function approximation.
Specifically, we show that in the generative model setting, with the Polyak-Ruppert tail averaging and
a constant step size, we need
\begin{equation*}
   T=\wtilde{\gO}\prn{\prn{{\varepsilon^{-2}}+{\lambda_{\min}^{-1}}}(1-\gamma)^{-2}\lambda_{\min}^{-1}\prn{{K^{-1}(1-\gamma)^{-2}}\norm{\btheta^{\star}}^2_{\bI_K\otimes\bSigma_{\bphi}}+1}}
\end{equation*}
online interactions with the environment to ensure {\LCTD} yields a $\varepsilon$-accurate estimator with high probability, when the error is measured by the $\mu_\pi$-weighted $1$-Wasserstein distance.
We also extend the result to the Markovian setting.
Our sample complexity bounds match those of the classic {\LTD} with a constant step size \citep{li2024high,samsonov2024improved}, confirming the same statistical tractability of distributional and classic value-based policy evaluations.
To establish these theoretical results, we analyze  the linear-categorical Bellman equation in detail and apply the exponential stability argument proposed in \citet{samsonov2024improved}.
Our analysis of the linear-categorical Bellman equation lays the foundation for subsequent algorithmic and theoretical advances in distributional reinforcement learning with function approximation.

\paragraph{Organization.}
The remainder of this paper is organized as follows. 
In Section~\ref{Section:background}, we recap {\LTD} and tabular categorical TD learning.
In Section~\ref{Section:linear_ctd}, we introduce the linear-categorical parametrization, and use the linear-categorical projected Bellman equation to derive {\LCTD}.
In Section~\ref{Section:analysis_linear_ctd}, we employ the exponential stability arguments to analyze the statistical efficiency of {\LCTD}.
The proof is outlined in Section~\ref{Section:proof_outline}.
In Section~\ref{Section:conclusion}, we conclude our work.
See Appendix~\ref{Appendix:related_work} for more related work.
In Appendix~\ref{Appendix:numerical_experiment},   we empirically validate the convergence of {\LCTD} and compare it with prior algorithms through numerical experiments, confirming our theoretical findings.
Details of the proof are given in the appendices.
\section{Backgrounds}\label{Section:background}
In this section, we recap the basics of policy evaluation and distributional policy evaluation tasks.
\subsection{Policy Evaluation}\label{Subsection:policy_eval_and_TD}
A discounted MDP is defined by a 
$4$-tuple $M=\<\gS,\gA,\gP,\gamma\>$.
We assume that the state space $\gS$ and the action space $\gA$ are both Polish spaces, namely complete separable metric spaces.
${\gP(\cdot,\cdot\mid s,a)}$ is the joint distribution of reward and next state condition on $(s,a)\in\gS\times\gA$.
We assume that all rewards are bounded random variables in $[0, 1]$.
And $\gamma\in(0,1)$ is the discount factor.

Given a policy $\pi\colon\gS\to\Delta\prn{\gA}$ and an initial state $s_0=s\in\gS$, a random trajectory $\brc{\prn{s_t,a_t,r_t}}_{t=0}^\infty$ can be sampled: $a_t\mid s_t\sim\pi(\cdot\mid s_t)$,
$(r_t,s_{t+1})\mid (s_t,a_t)\sim \gP({\cdot,\cdot}\mid s_t,a_t)$,
for any $t\in\NB$.
We assume the Markov chain $\brc{s_t}_{t=0}^\infty$ has a unique stationary distribution $\mu_\pi\in\Delta(\gS)$.
We define the return of the trajectory as $G^\pi(s):=\sum_{t=0}^\infty \gamma^t r_t$.
The value function $V^\pi(s)$ is the expectation of $G^\pi(s)$, and ${\bm{V}}^\pi:=\prn{V^\pi(s)}_{s\in\gS}{\in}\RB^\gS$.
It is known that ${\bm{V}}^\pi$ satisfies the Bellman equation: 
\begin{equation}\label{eq:Bellman_equation}
            V^\pi(s)=\EB_{a\sim\pi(\cdot\mid s),(r,s^\prime)\sim\gP(\cdot,\cdot\mid s,a)}[r+\gamma V^\pi(s^\prime)],\quad\forall s\in\gS,
\end{equation}
or in a compact form ${\bm{V}}^\pi={\bm{T}}^\pi{\bm{V}}^\pi$, where ${\bm{T}}^\pi\colon \RB^\gS\to \RB^\gS$ is called the Bellman operator.
In the task of policy evaluation, we aim to find the unique solution ${\bm{V}}^\pi$ of the equation for some given policy $\pi$.
\paragraph{Tabular TD Learning.}
The policy evaluation problem is reduced to solving the Bellman equation.
However, in practical applications ${\bm{T}}^\pi$ is usually unknown and the agent only has access to the streaming data $\brc{\prn{s_t,a_t,r_t}}_{t=0}^\infty$.
In this circumstance, we can solve the Bellman equation through linear stochastic approximation (LSA).
Specifically, in the $t$-th time-step the updating scheme is 
\begin{equation}\label{eq:td_learning}
{V}_{t}(s_t)\gets V_{t-1}(s_{t})-\alpha\prn{V_{t-1}(s_{t})-r_t-\gamma V_{t-1}(s_{t+1})},\quad V_t(s)\gets V_{t-1}(s),\ \forall s\neq s_t.
\end{equation}
We expect ${\bm{V}_t}$ to converge to ${\bm{V}}^\pi$ as $t$ tends to infinity.
This algorithm is known as TD learning, however, it is computationally tractable only in the tabular setting.
\paragraph{Linear Function Approximation and Linear TD Learning.}
In this part, we introduce linear function approximation and briefly review the more practical {\LTD}.
To be concrete, we assume there is a $d$-dimensional feature vector for each state $s\in\gS$, which is given by the feature map $\bphi\colon\gS\to\RB^d$.
We consider the linear function approximation of value functions:
\begin{equation}\label{eq:linear_func_approx}
    \sV_{\bphi} := \left\{\bV_{\bpsi}=\prn{V_{\bpsi}(s)}_{s\in\gS}\colon  V_{\bpsi}(s)=\bphi(s)^{\top}\bpsi,\bpsi\in\RB^{d} \right\}\subset \RB^\gS,
\end{equation}
$\mu_\pi$-weighted norm $\norm{\bV}_{\mu_\pi}:=(\EB_{s\sim\mu_{\pi}}[V(s)^2])^{1/2}$, and linear projection operator $\bPi_{\bphi}^{\pi}\colon \RB^{\gS}\to\sV_{\bphi}$:
\begin{equation*}
    \bPi_{\bphi}^{\pi}\bV:=\argmin\nolimits_{\bV_{\bpsi}\in\sV_{\bphi}}\norm{\bV-\bV_{\bpsi}}_{\mu_\pi},\quad \forall\bV\in\RB^\gS.
\end{equation*}
One can check that the linear projected Bellman operator $\bPi_{\bphi}^{\pi}\bm{T}^{\pi}$ is a $\gamma$-contraction in the Polish space $(\sV_{\bphi},\norm{\cdot}_{\mu_\pi})$.
Hence, $\bPi_{\bphi}^{\pi}\bm{T}^{\pi}$ admits a unique fixed point $\bV_{\bpsi^\star}$, 
which satisfies $\|\bV^\pi{-}\bV_{\bpsi^\star} \|_{\mu_\pi}\leq(1{-}\gamma^2)^{-1/2}\|\bV^\pi{-}\bPi_{\bphi}^{\pi}\bV^\pi \|_{\mu_\pi}$ \citep[Theorem~9.8]{bdr2022}.
In Appendix~\ref{subsection:linear_projected_bellman_equation}, we show that $\bpsi^\star\in\RB^d$ is the unique solution to the linear system for $\bpsi\in\RB^d$:
\begin{equation}\label{eq:linear_TD_equation}
    \prn{\bSigma_{\bphi}-\gamma\EB_{s,s^\prime}\brk{\bphi(s)\bphi(s^\prime)^{\top}}}\bpsi=\EB_{s,r}\brk{\bphi(s)r},\quad \bSigma_{\bphi}:=\EB_{s\sim\mu_{\pi}}\brk{\bphi(s)\bphi(s)^{\top}}.
\end{equation}
In the subscript of the expectation, we abbreviate $s\sim\mu_{\pi}(\cdot), a\sim\pi(\cdot|s), (r,s^\prime)\sim\gP(\cdot,\cdot\mid s,a)$ as $s,a,r,s^\prime$.
For brevity, we will use such abbreviations in this paper when there is no ambiguity.
We can use LSA to solve the linear projected Bellman equation (Eqn.~\ref{eq:linear_TD_equation}).
As a result, at the $t$-th time-step, the updating scheme of {\LTD} is 
\begin{equation}\label{eq:linear_TD}
    \text{{\LTD}: }\qquad\bpsi_t\gets\bpsi_{t-1}-\alpha\bphi(s_t)\brk{\prn{\bphi(s_t)-\gamma\bphi(s_{t+1})}^{\top}\bpsi_{t-1} - r_t}.
\end{equation}

\subsection{Distributional Policy Evaluation}
In certain applications, we are not only interested in finding the expectation of random return $G^\pi(s)$ but also want to find the whole distribution of $G^\pi(s)$.
This task is called distributional policy evaluation.
We use $\eta^\pi(s)\in\sP$ to denote the distribution of $G^\pi(s)$ and let ${\bm{\eta}}^\pi:=(\eta^\pi(s))_{s\in\gS}\in\sP^\gS$.
Then ${\bm{\eta}}^\pi$ satisfies the distributional Bellman equation:
\begin{equation}
\label{eq:distributional_Bellman_equation}
        \eta^\pi(s)=\EB_{a\sim\pi(\cdot\mid s),(r,s^\prime)\sim\gP(\cdot,\cdot\mid s,a)}[\prn{b_{r,\gamma}}_\#\eta^\pi(s^\prime)],\quad\forall s\in\gS,
\end{equation}
where the RHS is the distribution of $r_0+\gamma G^\pi(s_1)$ conditioned on $s_0=s$.
Here $b_{r,\gamma}(x):=r+\gamma x$ for any $x\in\RB$,
and $f_\#\nu\in\sP$ is defined as $f_\#\nu(A):=\nu(\{x\colon f(x)\in A\})$ for any function $f\colon \RB\to\RB$, probability measure $\nu\in\sP$ and Borel set $A\subset\RB$.
The distributional Bellman equation can also be written as ${\bm{\eta}}^\pi={\bm{\gT}}^\pi{\bm{\eta}}^\pi$.
The operator ${{\gT}}^\pi\colon \sP^\gS\to \sP^\gS$ is called the distributional Bellman operator.
In this task, our goal is to find ${\bm{\eta}}^\pi$ for some given policy $\pi$.
\paragraph{Tabular Distributional TD Learning.}
In analogy to tabular TD learning (Eqn.~\eqref{eq:td_learning}), in the tabular setting, we can solve the distributional Bellman equation by LSA
and derive the distributional TD learning rule given the streaming data  $\brc{\prn{s_t,a_t,r_t}}_{t=0}^\infty$:
\begin{equation*}
    \eta_{t}(s_t)\gets\eta_{t-1}(s_{t})-\alpha[\eta_{t-1}(s_{t})-\prn{b_{r_t,\gamma}}_\#\eta_{t-1}(s_{t+1})],\quad \eta_t(s)\gets \eta_{t-1}(s),\ \forall s\neq s_t.
\end{equation*}
We comment the algorithm above is not computationally feasible as we need to manipulate infinite-dimensional objects (return distributions) at each iteration.
\paragraph{Categorical Parametrization and Tabular Categorical TD Learning.}
In order to deal with return distributions in a computationally tractable manner, we consider the categorical parametrization as in \citet{bellemare2017distributional,rowland2018analysis,rowland2024nearminimaxoptimal,peng2024statistical}.
To be compatible with linear function approximation introduced in the next section, which cannot guarantee non-negative outputs, we will work with $\sP^{\sgn}$, the signed measure space with total mass $1$ as in \citet{bellemare2019distributional,lyle2019comparative,bdr2022} instead of standard probability space $\sP\subset \sP^{\sgn}$:
\begin{equation*}
    \sP^{\sgn}:= \left\{\nu\colon\nu(\RB)=1,\operatorname{supp}(\nu)\subseteq \left[0,(1-\gamma)^{-1}\right] \right\}.
\end{equation*} 
For any $\nu\in\sP^{\sgn}$, we define its cumulative distribution function (CDF) as $F_\nu(x):=\nu([0,x])$. 
We can naturally define the $L^2$ and $L^1$ distances between CDFs as the Cram\'er distance $\ell_2$ and $1$-Wasserstein distance $W_1$ in $\sP^{\sgn}$, respectively.
The distributional Bellman operator (see Eqn.~\eqref{eq:distributional_Bellman_equation}) can also be extended to the product space $(\sP^{\sgn})^{\gS}$ without modifying its definition.

The space of all categorical parametrized signed measures with total mass $1$ is defined as
\begin{equation}\label{eq:cate_param}
    \sP^{\sgn}_K := \left\{ \nu_\bp=\sum_{k=0}^K p_k \delta_{x_k} \colon  \bp=\prn{p_0, \ldots, p_{K{-}1}}^{\top}\in \RB^{K}, p_K=1-
    \sum_{k=0}^{K-1}p_k \right\}, 
\end{equation}
which is an affine subspace of $\sP^{\sgn}$. Here 
$\brc{x_k{=}k\iota_K}_{k{=}0}^K$ are $K{+}1$ equally-spaced points of the support, 
$\iota_K{=}\brk{K(1{-}\gamma)}^{{-}1}$ is the gap between adjacent points,
and $p_k$ is the `probability' (may be negative) that $\nu$ assigns to $x_k$.
We define the categorical
projection operator $\bPi_{K}{\colon}\sP^{\sgn}{\to}\sP^{\sgn}_K$ as
\begin{equation*}
   \bPi_{K}\nu:=\argmin\nolimits_{\nu_\bp\in\sP^{\sgn}_{K}}\ell_{2}\prn{\nu, \nu_\bp},\quad \forall\nu\in \sP^{\sgn}. 
\end{equation*}
Following \citet[Proposition~5.14]{bdr2022}, one can show that $\bm{\Pi}_K\nu\in\sP^{\sgn}_K$ is uniquely represented with a vector $\bp_\nu=\prn{p_k(\nu)}_{k=0}^{K-1}\in\RB^K$, where
\begin{equation}\label{eq:categorical_prob}
     p_k(\nu)=\int_{\brk{0,(1-\gamma)^{-1}}}(1-\abs{(x-x_k)/{\iota_K}})_+ \nu(dx).
\end{equation}
We lift $\bPi_K$ to the product space by defining
$\brk{\bm{\Pi}_K{\bm{\eta}}}(s) := \bm{\Pi}_K\eta(s)$.
One can check that the categorical Bellman operator $\bPi_{K}{\gT}^{\pi}$ 
is a $\sqrt\gamma$-contraction in the Polish space $((\sP_K^{\sgn})^\gS,\ell_{2,\mu_\pi})$, where $\ell_{2,\mu_\pi}(\bm{\eta}_1,\bm{\eta}_2):=(\EB_{s\sim\mu_{\pi}}[\ell_2^2(\eta_1(s),\eta_2(s))])^{1/2}$ is the $\mu_\pi$-weighted Cram\'er distance between $\bm{\eta}_1,\bm{\eta}_2\in(\sP^{\sgn})^\gS$.
Similarly, $W_{1,\mu_{\pi}}(\bm{\eta}_1,\bm{\eta}_2):=(\EB_{s\sim\mu_{\pi}}[W_1^2(\eta_1(s),\eta_2(s))])^{1/2}$.
Hence, the categorical projected Bellman equation $\bm{\eta}=\bPi_{K}\gT^{\pi}\bm{\eta}$ admits a unique solution $\bm{\eta}^{\pi,K}$, which satisfies $W_{1,\mu_\pi}(\bm{\eta}^\pi,\bm{\eta}^{\pi,K})\leq(1{-}\gamma)^{-1}\ell_{2,\mu_\pi}(\bm{\eta}^\pi,\bPi_K\bm{\eta}^{\pi})$ \citep[Proposition~3]{rowland2018analysis}.
Applying LSA to solving the equation yields tabular categorical TD learning, and the iteration rule is given by
\begin{equation}\label{eq:ctd_update_rule}
    \eta_{t}(s_t)\gets\eta_{t-1}(s_{t})-\alpha[\eta_{t-1}(s_{t})-\bPi_K\prn{b_{r_t,\gamma}}_\#\eta_{t-1}(s_{t+1})],\quad \eta_t(s)\gets \eta_{t-1}(s),\ \forall s\neq s_t.
\end{equation}

\section{Linear-Categorical TD Learning}\label{Section:linear_ctd}
In this section, we propose our {\LCTD} algorithm (Eqn.~\eqref{eq:linear_CTD}) by combining the linear function approximation (Eqn.~\eqref{eq:linear_func_approx}) with the categorical parametrization (Eqn.~\eqref{eq:cate_param}).
We first introduce the space of linear-categorical parametrized signed measures with total mass $1$:
\begin{equation*}
    \sP^{\sgn}_{\bphi,K} := \left\{\bm{\eta}_{\btheta}=\prn{\eta_{\btheta}(s)}_{s\in\gS}\colon \eta_{\btheta}(s)=\sum_{k=0}^Kp_k(s;\btheta)\delta_{x_k}, \btheta=(\btheta(0)^{\top}, \ldots, \btheta(K{-}1)^{\top})^{\top}\in\RB^{dK} \right\},
\end{equation*}
which is an affine subspace of $(\sP^{\sgn}_{K})^{\gS}$. Here $p_k(s;\btheta){=}F_k(s;\btheta){-}F_{k{-}1}(s;\btheta)$, and
\begin{equation}\label{eq:def_linear_parametrize}
            F_k(s;\btheta)=\bphi(s)^{\top}\btheta(k)+({k+1})/({K+1}) \quad\text{ for}\ k\in\brc{0, 1, \ldots, K-1}
\end{equation}
is CDF of $\eta_{\btheta}(s)$ at $x_k$ ($F_{-1}(s;
\cdot)\equiv 0,F_{K}(s;
\cdot)\equiv 1$)~\footnote{The $(k{+}1)/(K{+}1)$ term in CDF (Eqn.~\eqref{eq:def_linear_parametrize}) corresponds to a discrete uniform distribution $\nu$ for normalization, \ie , making sure $F_K(s;\cdot)\equiv 1$.
If we include an intercept term in the feature 
or we consider the tabular setting, 
we can replace $\nu$ with any distribution without affecting the definition of $\sP^{\sgn}_{\bphi,K}$ or any other things.}.
In many cases, especially when formulating and implementing algorithms, it is much more convenient and efficient to work with the matrix version of the parameter $\bTheta{:=}\prn{\btheta(0), {\ldots},\btheta(K{-}1)}{\in}\RB^{d{\times}K}$ rather than with $\btheta{=}\vect\prn{\bTheta}$.
We define the linear-categorical projection operator $\bPi_{\bphi, K}^{\pi}{\colon} (\sP^{\sgn})^{\gS}{\to}\sP^{\sgn}_{\bphi,K}$ as follows:
\begin{equation*}
    \bPi_{\bphi, K}^{\pi}\bm{\eta}:=\argmin\nolimits_{\bm{\eta}_{\btheta}\in\sP^{\sgn}_{\bphi,K}}\ell_{2,\mu_\pi}\prn{\bm{\eta}, \bm{\eta}_{\btheta}},\quad\forall\bm{\eta}\in (\sP^{\sgn})^{\gS}.
\end{equation*}
$\bPi_{\bphi, K}^{\pi}$ is in fact an orthogonal projection (see Proposition~\ref{prop:orthogonal_decomposition_linear_approximation}), and thus is non-expansive.
The following proposition characterizes $\bPi_{\bphi, K}^{\pi}$, whose proof can be found in Appendix~\ref{appendix:linear-cate-project-op}.
\begin{proposition}\label{prop:linear_projection}
For any $\bm{\eta}\in(\sP^{\sgn})^{\gS}$, $\bPi_{\bphi, K}^{\pi}\bm{\eta}$ is uniquely given  by $ \bm{\eta}_{\tilde\btheta}$, where $\tilde{\btheta}=\vect(\tilde\bTheta)$,
\begin{equation}\label{eq:def_C}
      \tilde\bTheta{=}\bSigma_{\bphi}^{-1}\EB_{s\sim\mu_{\pi}}[\bphi(s)(\bp_{\bm{\eta}}(s){-}(K{+}1)^{-1}\bm{1}_{K})^{\top}\bC^{\top}],\quad 
      \bC {=} 
\brk{\ind\brc{i\geq j}}_{i,j\in[K]}\in\RB^{K{\times} K}.
    \end{equation}
Here $\bp_{\bm{\eta}}(s){:}{=}\bp_{\eta(s)}{=}(p_k(\eta(s)))_{k{=}0}^{K{-}1}$ is the 
vector that identifies $\bPi_K\eta(s)$ defined in Eqn.~\eqref{eq:categorical_prob}.
\end{proposition}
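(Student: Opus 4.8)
The plan is to recognize the minimization defining $\bPi_{\bphi, K}^{\pi}$ as an orthogonal projection in the Cram\'er ($L^2$-on-CDFs) geometry, peel it off in two stages, and reduce it to a finite-dimensional multivariate least-squares problem whose normal equations yield the stated formula.

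First I would embed $\sP^{\sgn}$ into $L^2([0,(1-\gamma)^{-1}])$ via $\nu\mapsto F_\nu$, so that $\ell_2$ is the induced $L^2$ distance on CDFs (finite here because all supports sit in the compact interval) and $\ell_{2,\mu_\pi}$ is the corresponding $\mu_\pi$-weighted distance on $(\sP^{\sgn})^{\gS}$. Since $\sP^{\sgn}_{\bphi,K}\subseteq(\sP^{\sgn}_K)^{\gS}$ and $\bPi_K$ is by definition the $\ell_2$-orthogonal projection onto the affine subspace $\sP^{\sgn}_K$, the Hilbert projection theorem supplies the pointwise Pythagorean identity $\ell_2^2(\eta(s),\nu_\bp)=\ell_2^2(\eta(s),\bPi_K\eta(s))+\ell_2^2(\bPi_K\eta(s),\nu_\bp)$ for every grid measure $\nu_\bp\in\sP^{\sgn}_K$, in particular for $\nu_\bp=\eta_\btheta(s)$. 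Taking $\EB_{s\sim\mu_\pi}$ and discarding the first term (independent of $\btheta$), the minimization collapses to $\min_\btheta\EB_s[\ell_2^2(\bPi_K\eta(s),\eta_\btheta(s))]$; that is, the linear-categorical projection factors through the statewise categorical projection $\bPi_K$, and only the vectors $\bp_{\bm{\eta}}(s)$ survive.

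Second, both $\bPi_K\eta(s)$ and $\eta_\btheta(s)$ are grid-supported with common total mass $1$, so their CDFs agree outside $[0,(1-\gamma)^{-1}]$ and are piecewise constant on the $K$ cells of width $\iota_K$; hence $\ell_2^2(\bPi_K\eta(s),\eta_\btheta(s))=\iota_K\norm{\bC\bp_{\bm{\eta}}(s)-\bF(s;\btheta)}^2$, where $\bF(s;\btheta):=(F_k(s;\btheta))_{k=0}^{K-1}$ is the CDF vector and $\bC$ is exactly the cumulative-sum operator turning a probability vector into its CDF vector. Writing Eqn.~\eqref{eq:def_linear_parametrize} in vector form gives $\bF(s;\btheta)=\bTheta^{\top}\bphi(s)+(K+1)^{-1}\bC\bm{1}_K$, the constant being precisely the uniform-CDF offset $((k+1)/(K+1))_k$. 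Substituting, the residual becomes $\bC(\bp_{\bm{\eta}}(s)-(K+1)^{-1}\bm{1}_K)-\bTheta^{\top}\bphi(s)$, leaving the unconstrained quadratic $\min_{\bTheta}\EB_s\norm{\bm{y}(s)-\bTheta^{\top}\bphi(s)}^2$ with target $\bm{y}(s)=\bC(\bp_{\bm{\eta}}(s)-(K+1)^{-1}\bm{1}_K)$.

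Finally, because $\bSigma_{\bphi}\succ0$ this objective is strictly convex in $\bTheta$, so its normal equations $\bSigma_{\bphi}\tilde\bTheta=\EB_s[\bphi(s)\bm{y}(s)^{\top}]$ have the unique solution $\tilde\bTheta=\bSigma_{\bphi}^{-1}\EB_s[\bphi(s)(\bp_{\bm{\eta}}(s)-(K+1)^{-1}\bm{1}_K)^{\top}\bC^{\top}]$, which is the claimed expression, and uniqueness of the minimizer gives uniqueness of $\bPi_{\bphi,K}^{\pi}\bm{\eta}=\bm{\eta}_{\tilde\btheta}$. I expect the only genuinely delicate step to be the first one: setting up the Cram\'er inner-product structure carefully enough that $\sP^{\sgn}_K$ is a bona fide closed affine subspace and that the nested-projection (tower) property legitimately lets $\bPi_{\bphi,K}^{\pi}$ factor through $\bPi_K$; the remaining matrix bookkeeping (the indexing of $\bC$, the uniform offset, and the least-squares solve) is routine.
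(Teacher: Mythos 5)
Your proposal is correct and takes essentially the same route as the paper's proof: the paper also factors the projection through $\bPi_K$ via the Pythagorean identity (Proposition~\ref{prop:orthogonal_decomposition}), reduces to CDF vectors via the Cram\'er isometry, and then obtains $\tilde\bTheta$ from first-order optimality of the resulting quadratic in $\bTheta$ (Lemma~\ref{lem:gradient_cramer_distance}), which is exactly your normal-equations step. The only cosmetic difference is that the paper packages the final step as an explicit gradient computation rather than as a least-squares solve.
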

Since $\bPi_{\bphi, K}^{\pi}\gT^{\pi}$ is a $\sqrt{\gamma}$-contraction in $(\sP_{\bphi,K}^{\sgn},\ell_{2,\mu_\pi})$ 
($\gT^{\pi}$ is $\sqrt{\gamma}$-contraction \citep[Lemma~9.14]{bdr2022}), 
in the following theorem, we can generalize the linear projected Bellman equation (Eqn.~\eqref{eq:linear_TD_equation}) to the distributional setting.
The proof can be found in Appendix~\ref{subsection:proof_linear_cate_TD_equation}.
\begin{theorem}\label{thm:linear_cate_TD_equation}
The linear-categorical projected Bellman equation  $\bm{\eta}_{\btheta}{=}\bPi_{\bphi, K}^{\pi}\gT^{\pi}\bm{\eta}_{\btheta}$ admits a unique solution $\bm{\eta}_{\btheta^{\star}}$, where
the matrix parameter $\bTheta^{\star}$ is the unique solution to the linear system for $\bTheta{\in}\RB^{d{\times}K}$
\begin{equation}\label{eq:fixed_point_equation}
\bSigma_{\bphi}\bTheta{-}\EB_{s,s^\prime,r}\brk{\bphi(s)\bphi(s^\prime)^{\top}\bTheta(\bC\tilde{\bG}(r)\bC^{{-}1})^{\top}}{=}\frac{1}{K{+}1}\EB_{s,r}\brk{\bphi(s)(\sum_{j=0}^K\bg_j(r){-}\bm{1}_{K})^{\top}\bC^{\top}},
\end{equation}
where for any $r\in[0,1]$ and $j,k\in\brc{0,1, \ldots, K}$,
\begin{equation*}
     g_{j,k}(r):=\prn{1-\abs{(r+\gamma x_j-x_k)/{\iota_K}}}_+, \quad \bg_j(r):=\prn{g_{j,k}(r)}_{k=0}^{K-1}\in\RB^{K},
\end{equation*}
\begin{equation*}
    \bG(r):=\begin{bmatrix}
\bg_0(r),  \ldots,  \bg_{K-1}(r)
\end{bmatrix}\in\RB^{K{\times}K},\quad \tilde{\bG}(r):=\bG(r)-\bm{1}_K^{\top}\otimes\bg_K(r)\in\RB^{K{\times} K}.
\end{equation*}
\end{theorem}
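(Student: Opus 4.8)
The plan is to separate an abstract fixed-point argument from an explicit algebraic reduction. The existence and uniqueness of $\bm{\eta}_{\btheta^{\star}}$ come for free: the discussion preceding the statement establishes that $\bPi_{\bphi,K}^{\pi}\gT^{\pi}$ is a $\sqrt{\gamma}$-contraction on the complete metric space $(\sP_{\bphi,K}^{\sgn},\ell_{2,\mu_\pi})$, so the Banach fixed-point theorem yields a unique $\bm{\eta}_{\btheta^{\star}}$ with $\bm{\eta}_{\btheta^{\star}}=\bPi_{\bphi,K}^{\pi}\gT^{\pi}\bm{\eta}_{\btheta^{\star}}$. What remains is to show that this measure-valued identity is equivalent to the claimed linear system for $\bTheta^{\star}$, and that the parameter itself is unique.

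For the reduction I would apply Proposition~\ref{prop:linear_projection} with $\bm{\eta}=\gT^{\pi}\bm{\eta}_{\btheta}$, which rewrites the fixed-point condition as $\bSigma_{\bphi}\bTheta=\EB_{s}[\bphi(s)(\bp_{\gT^{\pi}\bm{\eta}_{\btheta}}(s)-(K{+}1)^{-1}\bm{1}_K)^{\top}\bC^{\top}]$; the crux is to evaluate the projection vector $\bp_{\gT^{\pi}\bm{\eta}_{\btheta}}(s)$ of the one-step push-forward $\brk{\gT^{\pi}\bm{\eta}_{\btheta}}(s)=\EB_{a,r,s'}[(b_{r,\gamma})_{\#}\eta_{\btheta}(s')]$. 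Since the categorical projection $\bPi_K$ is linear in the measure, I would move the expectation outside and reduce to projecting a single shifted atom: by Eqn.~\eqref{eq:categorical_prob}, the projection of $\delta_{r+\gamma x_j}$ onto the grid has $k$-th weight exactly $g_{j,k}(r)$. Summing over $j=0,\ldots,K$ and eliminating the top atom through the total-mass-one constraint $p_K=1-\sum_{j<K}p_j$ turns $\bG(r)$ into $\tilde{\bG}(r)$ and produces $\bp_{(b_{r,\gamma})_{\#}\eta_{\btheta}(s')}=\tilde{\bG}(r)\,\bp_{\eta_{\btheta}(s')}+\bg_K(r)$, where $\bp_{\eta_{\btheta}(s')}=(p_j(s';\btheta))_{j=0}^{K-1}$ because $\eta_{\btheta}(s')$ already lives on the grid.

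The rest is bookkeeping, which I expect to be the only real obstacle. Writing the probability vector through the cumulative-sum matrix, $\bp_{\eta_{\btheta}(s')}=\bC^{-1}\bF(s')$ with $\bF(s')=\bTheta^{\top}\bphi(s')+(K{+}1)^{-1}\bv$ and $\bv=(1,2,\ldots,K)^{\top}$ read off from Eqn.~\eqref{eq:def_linear_parametrize}, I would substitute, multiply by $\bSigma_{\bphi}$, and split into a $\bTheta$-linear part and an affine part. The factor $(\bC^{\top})^{-1}\tilde{\bG}(r)^{\top}\bC^{\top}=(\bC\tilde{\bG}(r)\bC^{-1})^{\top}$ reproduces precisely the $\bTheta$-term subtracted on the left of Eqn.~\eqref{eq:fixed_point_equation}. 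For the affine part the key simplification is the identity $\bC^{-1}\bv=\bm{1}_K$ (the successive differences of $1,2,\ldots,K$ are all one); a short computation then combines the resulting $\tilde{\bG}(r)\bm{1}_K=\sum_{j<K}\bg_j(r)-K\bg_K(r)$ term with the correction $\bg_K(r)$ and the normalization $-(K{+}1)^{-1}\bm{1}_K$ into exactly $(K{+}1)^{-1}(\sum_{j=0}^{K}\bg_j(r)-\bm{1}_K)$, matching the right-hand side. Finally, uniqueness of $\bTheta^{\star}$ follows because $\btheta\mapsto\bm{\eta}_{\btheta}$ is injective whenever $\bSigma_{\bphi}$ is positive definite---the standing assumption implicit in writing $\bSigma_{\bphi}^{-1}$---so the unique fixed-point measure corresponds to a unique parameter. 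The delicate point throughout is this affine bookkeeping: correctly accounting for the $j=K$ atom and the $(k{+}1)/(K{+}1)$ offset so that all constant terms telescope into the stated right-hand side, rather than any conceptual difficulty.
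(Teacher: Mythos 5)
Your proposal is correct, and its computational core coincides with the paper's own proof: you apply Proposition~\ref{prop:linear_projection} with $\bm{\eta}=\gT^{\pi}\bm{\eta}_{\btheta}$, evaluate the categorical projection of the push-forward atom by atom (exactly the content of the paper's Proposition~\ref{prop:Pi_K_T}, including eliminating the $j=K$ atom via $p_K=1-\sum_{j<K}p_j$ to turn $\bG(r)$ into $\tilde{\bG}(r)$), and your bookkeeping --- $\bp_{\eta_{\btheta}(s')}=\bC^{-1}\bF(s')$ with $\bC^{-1}\bv=\bm{1}_K$, and the telescoping of the constant terms into $(K{+}1)^{-1}\prn{\sum_{j=0}^{K}\bg_j(r)-\bm{1}_K}$ --- is exactly right; the paper runs the identical substitution through the PMF parameter $\bW=\bTheta\bC^{-\top}$, which is the same thing written differently. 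Where you genuinely diverge is the existence/uniqueness step. The paper's appendix proof does not invoke the Banach fixed-point theorem at that stage: it argues that the map $\bTheta\mapsto\bSigma_{\bphi}\bTheta-\EB_{s,s',r}\brk{\bphi(s)\bphi(s')^{\top}\bTheta(\bC\tilde{\bG}(r)\bC^{-1})^{\top}}$ is an invertible linear transformation, justified by Eqn.~\eqref{eq:bar_A_lower_bound}, whose proof rests on the nontrivial spectral bound $\norm{\EB_{s,r,s'}\brk{(\bC\tilde{\bG}(r)\bC^{-1})\otimes(\bSigma_{\bphi}^{-1/2}\bphi(s)\bphi(s')^{\top}\bSigma_{\bphi}^{-1/2})}}\leq\sqrt{\gamma}$ (Eqn.~\eqref{eq:biscuit_matrix_bound}, established inside the proof of Lemma~\ref{lem:translate_error_to_loss}). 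You instead obtain the unique fixed-point measure from the contraction property plus completeness, and transfer uniqueness to the parameter via injectivity of $\btheta\mapsto\bm{\eta}_{\btheta}$; that transfer is legitimate because, by the isometry of Proposition~\ref{prop:linear_cate_isometric}, $\ell_{2,\mu_\pi}(\bm{\eta}_{\btheta_1},\bm{\eta}_{\btheta_2})=\sqrt{\iota_K}\norm{\btheta_1-\btheta_2}_{\bI_K\otimes\bSigma_{\bphi}}$, which separates parameters precisely when $\bSigma_{\bphi}\succ0$ (the standing assumption, as you note). Your route is the more elementary and self-contained one, using only facts stated before the theorem; the paper's route is heavier but proves along the way that $\bar{\bA}$ is quantitatively nonsingular, a fact it needs anyway for the downstream LSA error analysis. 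Either argument closes the proof.
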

In analogy to the approximation bounds of $\|\bV^\pi-\bV_{\bpsi^\star} \|_{\mu_\pi}$ and $W_{1,\mu_\pi}(\bm{\eta}^\pi,\bm{\eta}^{\pi,K})$, 
the following lemma answers how close $\bm{\eta}_{\btheta^{\star}}$ is to $\bm{\eta}^\pi$, whose proof can be found in Appendix~\ref{appendix:proof_approx_error}.
\begin{proposition}
[Approximation Error of $\bm{\eta}_{\btheta^{\star}}$]\label{prop:approx_error}
It holds that
   \begin{equation*}
    \begin{aligned}
        W_{1,\mu_{\pi}}^2\prn{\bm{\eta}^\pi,\bm{\eta}_{\btheta^{\star}}}\leq K^{-1}(1-\gamma)^{-3}+(1-\gamma)^{-2}\ell_{2,\mu_{\pi}}^2\prn{\bPi_K\bm{\eta}^{\pi},\bPi_{\bphi, K}^{\pi}\bm{\eta}^{\pi}},
    \end{aligned}
\end{equation*}
where the first error term $K^{-1}(1-\gamma)^{-3}$ is due to the categorical parametrization, and the second error term $(1-\gamma)^{-2}\ell_{2,\mu_{\pi}}^2(\bPi_K\bm{\eta}^{\pi},\bPi_{\bphi, K}^{\pi}\bm{\eta}^{\pi})$ is due to the additional linear function approximation. 
\end{proposition}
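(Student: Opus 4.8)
The plan is to transfer the entire argument from the $1$-Wasserstein metric into the Cram\'er metric $\ell_{2,\mu_\pi}$, in which all the relevant operators behave well: $\gT^\pi$ is a $\sqrt\gamma$-contraction and both $\bPi_{K}$ and $\bPi_{\bphi,K}^\pi$ are \emph{orthogonal} projections (onto the affine spaces $(\sP^{\sgn}_K)^\gS$ and $\sP^{\sgn}_{\bphi,K}$, respectively). The first reduction is purely metric: since every $\eta^\pi(s)$ and every $\eta_{\btheta^\star}(s)$ has total mass $1$ and support in $[0,(1-\gamma)^{-1}]$, the two CDFs agree ($=0$, resp. $=1$) outside this interval, so by Cauchy--Schwarz $W_1(\nu_1,\nu_2)=\int_0^{(1-\gamma)^{-1}}|F_{\nu_1}-F_{\nu_2}|\,dx\le (1-\gamma)^{-1/2}\ell_2(\nu_1,\nu_2)$. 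Averaging over $s\sim\mu_\pi$ gives $W_{1,\mu_\pi}^2(\bm\eta^\pi,\bm\eta_{\btheta^\star})\le(1-\gamma)^{-1}\ell_{2,\mu_\pi}^2(\bm\eta^\pi,\bm\eta_{\btheta^\star})$, so it suffices to bound the Cram\'er error of $\bm\eta_{\btheta^\star}$.

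Second, I would bound this Cram\'er error by the one-shot projection error of the true fixed point via a Pythagorean fixed-point argument. By Theorem~\ref{thm:linear_cate_TD_equation}, $\bm\eta_{\btheta^\star}=\bPi_{\bphi,K}^\pi\gT^\pi\bm\eta_{\btheta^\star}$, while $\bm\eta^\pi=\gT^\pi\bm\eta^\pi$. Writing $\bm\eta^\pi-\bm\eta_{\btheta^\star}=(\bm\eta^\pi-\bPi_{\bphi,K}^\pi\bm\eta^\pi)+(\bPi_{\bphi,K}^\pi\bm\eta^\pi-\bm\eta_{\btheta^\star})$ and using that the first summand is $\ell_{2,\mu_\pi}$-orthogonal to the linear part of $\sP^{\sgn}_{\bphi,K}$ while the second lies in it, Pythagoras gives $\ell_{2,\mu_\pi}^2(\bm\eta^\pi,\bm\eta_{\btheta^\star})=\ell_{2,\mu_\pi}^2(\bm\eta^\pi,\bPi_{\bphi,K}^\pi\bm\eta^\pi)+\ell_{2,\mu_\pi}^2(\bPi_{\bphi,K}^\pi\bm\eta^\pi,\bm\eta_{\btheta^\star})$. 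Since $\bPi_{\bphi,K}^\pi\bm\eta^\pi=\bPi_{\bphi,K}^\pi\gT^\pi\bm\eta^\pi$, the second term is $\ell_{2,\mu_\pi}^2(\bPi_{\bphi,K}^\pi\gT^\pi\bm\eta^\pi,\bPi_{\bphi,K}^\pi\gT^\pi\bm\eta_{\btheta^\star})$, which by non-expansiveness of $\bPi_{\bphi,K}^\pi$ and the $\sqrt\gamma$-contraction of $\gT^\pi$ is at most $\gamma\,\ell_{2,\mu_\pi}^2(\bm\eta^\pi,\bm\eta_{\btheta^\star})$. Rearranging yields $\ell_{2,\mu_\pi}^2(\bm\eta^\pi,\bm\eta_{\btheta^\star})\le(1-\gamma)^{-1}\ell_{2,\mu_\pi}^2(\bm\eta^\pi,\bPi_{\bphi,K}^\pi\bm\eta^\pi)$.

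Third, I would split the projection error $\ell_{2,\mu_\pi}^2(\bm\eta^\pi,\bPi_{\bphi,K}^\pi\bm\eta^\pi)$ into a categorical part and a linear part by a second Pythagorean step exploiting the nesting $\sP^{\sgn}_{\bphi,K}\subset(\sP^{\sgn}_K)^\gS$. Because $\bPi_{K}$ is the orthogonal projection onto $(\sP^{\sgn}_K)^\gS$ and $\bPi_{\bphi,K}^\pi\bm\eta^\pi$ already lies in that space, the residual $\bm\eta^\pi-\bPi_{K}\bm\eta^\pi$ is orthogonal to $\bPi_{K}\bm\eta^\pi-\bPi_{\bphi,K}^\pi\bm\eta^\pi$, whence $\ell_{2,\mu_\pi}^2(\bm\eta^\pi,\bPi_{\bphi,K}^\pi\bm\eta^\pi)=\ell_{2,\mu_\pi}^2(\bm\eta^\pi,\bPi_{K}\bm\eta^\pi)+\ell_{2,\mu_\pi}^2(\bPi_{K}\bm\eta^\pi,\bPi_{\bphi,K}^\pi\bm\eta^\pi)$. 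The second summand is exactly the linear-approximation term in the statement. For the first (categorical) summand I would prove the pointwise bound $\ell_2^2(\nu,\bPi_{K}\nu)\le\iota_K/4$ for any $\nu\in\sP$: for a Dirac $\delta_y$ with $y\in[x_k,x_{k+1}]$, a direct computation of $\int(F_{\delta_y}-F_{\bPi_{K}\delta_y})^2\,dx$ gives $\iota_K\,a(1-a)\le\iota_K/4$ with $a=(y-x_k)/\iota_K$, and the general case follows from linearity of $\bPi_{K}$ and convexity (Jensen) of the squared Cram\'er norm. Averaging over $s$ gives $\ell_{2,\mu_\pi}^2(\bm\eta^\pi,\bPi_{K}\bm\eta^\pi)\le\iota_K/4=\tfrac14 K^{-1}(1-\gamma)^{-1}$.

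Assembling the three steps gives $W_{1,\mu_\pi}^2(\bm\eta^\pi,\bm\eta_{\btheta^\star})\le(1-\gamma)^{-2}\bigl[\tfrac14 K^{-1}(1-\gamma)^{-1}+\ell_{2,\mu_\pi}^2(\bPi_{K}\bm\eta^\pi,\bPi_{\bphi,K}^\pi\bm\eta^\pi)\bigr]$, which is even slightly stronger than the claimed bound (the constant $\tfrac14$ may be dropped). The main obstacle I anticipate is the bookkeeping of orthogonality on the \emph{affine} total-mass-one signed spaces $\sP^{\sgn}_K$ and $\sP^{\sgn}_{\bphi,K}$: one must confirm that the Cram\'er inner product of each residual with the corresponding in-space increment vanishes, i.e. that $\bPi_{K}$ and $\bPi_{\bphi,K}^\pi$ are genuine orthogonal projections on these affine subspaces (the latter being Proposition~\ref{prop:orthogonal_decomposition_linear_approximation}), so that the two Pythagorean identities are exact rather than mere triangle inequalities; the categorical Dirac computation is then routine.
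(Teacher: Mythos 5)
Your proof is correct and follows essentially the same route as the paper's: the same $W_1\!\to\!\ell_2$ conversion (Lemma~\ref{lem:prob_basic_inequalities}), the same fixed-point inequality $\ell_{2,\mu_\pi}^2(\bm{\eta}^\pi,\bm{\eta}_{\btheta^\star})\le(1-\gamma)^{-1}\ell_{2,\mu_\pi}^2(\bm{\eta}^\pi,\bPi_{\bphi,K}^\pi\bm{\eta}^\pi)$, and the same orthogonal splitting of the projection error into categorical and linear parts via Proposition~\ref{prop:orthogonal_decomposition_linear_approximation}. The only difference is that where the paper simply cites \citet[Proposition~9.18 and Eqn.~(5.28)]{bdr2022} for the fixed-point inequality and the categorical projection error, you re-derive both from scratch (the Pythagoras-plus-contraction argument, and the Dirac computation combined with linearity of $\bPi_K$ and Jensen), which is sound and even yields the slightly sharper constant $\iota_K/4$ in the categorical term.
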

As before, we solve Eqn.~\eqref{eq:fixed_point_equation} by LSA and get {\LCTD} given the streaming data  $\brc{\prn{s_t,a_t,r_t}}_{t=0}^\infty$:
\begin{equation}\label{eq:linear_CTD}
\begin{aligned}
\text{{\LCTD}: }\qquad\bTheta_t{\gets}&\bTheta_{t{-}1}{-}\alpha\bphi(s_t)\Big[\bphi(s_t)^{\top}\bTheta_{t{-}1}{-}\bphi(s_{t{+}1})^{\top}\bTheta_{t{-}1}(\bC\tilde{\bG}(r_t)\bC^{-1})^{\top}\\
&{-}\prn{K{+}1}^{-1}(\textstyle\sum_{j=0}^K\bg_j(r_t){-}\bm{1}_{K})^{\top}\bC^{\top}\Big],
\end{aligned}
\end{equation}
for any $t\geq 1$, where $\alpha$ is the constant step size.
In Appendix~\ref{Appendix:numerical_experiment}, we empirically validate the convergence of {\LCTD} through numerical experiments.
It is easy to verify that, in the special case of the tabular setting ($\bphi( s)=(\ind\{s{=}\tilde s\})_{\tilde s\in\gS}$), {\LCTD} is equivalent to tabular categorical TD learning (Eqn.~\eqref{eq:ctd_update_rule}).
In this paper, we consider the Polyak-Ruppert tail averaging $\bar{\btheta}_{T}:=(T{/}2{+}1)^{-1}\sum_{t{=}T{/}2}^T\btheta_t$ (we use an even number $T$) as in the analysis of {\LTD} in \citet{samsonov2024improved}.
Standard theory of LSA \citep{mou2020linear} says under some conditions, if we take an appropriate step size $\alpha$, $\bar{\btheta}_{T}$ will converge to the solution $\btheta^\star$ with rate $T^{-1/2}$ as $T\to\infty$. 
\begin{remark}
[Comparison with Existing Linear Distributional TD Learning Algorithms]\label{Remark:comparison}
Our {\LCTD} can be regarded as a preconditioned version of vanilla stochastic semi-gradient descent (SSGD) with the probability mass function (PMF) representation \citep[Section~9.6]{bdr2022}.
See Appendix~\ref{Appendix:pmf_representation} for the PMF representation, and Appendix~\ref{appendix:equiv_ssgd_lctd} for a self-contained derivation of SSGD with PMF representations.
The preconditioning technique is a commonly used methodology to accelerate solving optimization problems by reducing the condition number.  
We precondition the vanilla algorithm by removing the matrix $\bC^{\top}\bC$ (see Eqn.~\eqref{eq:ssgd_pmf}), whose condition number scales with $K^2$ (Lemma~\ref{lem:spectra_of_CTC}).
By introducing the preconditioner $(\bC^{\top}\bC)^{-1}$, our {\LCTD} (Eqn.~\eqref{eq:linear_CTD}) can achieve a convergence rate independent of $K$, which the vanilla form cannot achieve.
See Remark~\ref{remark:compare_ssgd_pmf} and Appendix~\ref{Appendix:numerical_experiment} for theoretical and experimental evidence respectively.

We comment that {\LCTD} (Eqn.~\eqref{eq:linear_CTD}) is equivalent to SSGD with CDF representation, which was also considered in \citet{lyle2019comparative}.
The difference is that our {\LCTD} normalizes the distribution so that the total mass of return distributions always be $1$, while the algorithm in \citet{lyle2019comparative} does not.
See Appendix~\ref{appendix:equiv_ssgd_lctd} for a self-contained derivation of SSGD with CDF representations.

\citet[Section~9.5]{bdr2022} also proposed a softmax-based linear-categorical algorithm, which is closer to the practical C51 algorithm \citep{bellemare2017distributional}. 
However, the nonlinearity due to softmax makes it difficult for analysis. 
We will leave the analysis for it as future work.
\end{remark}
\begin{remark}[{\LCTD} is mean-preserving]
A key property of {\LCTD} is mean preservation.
That is, if we use identical initializations ($\EB_{G\sim\bm{\eta}_{\btheta_0}} [G]= \bV_{\bpsi_0}$) and an identical data stream to update in both {\LCTD} and {\LTD}, it follows that $\EB_{G\sim\bm{\eta}_{\btheta_t}} [G]= \bV_{\bpsi_t}$ for all $t$.
However, the mean-preserving property does not hold for the SSGD with the PMF representation.
In this sense, our {\LCTD} is indeed the generalization of {\LTD}.
See Appendix~\ref{Appendix:LCTD_as_extension} for details.
\end{remark}
\section{Non-Asymptotic Statistical Analysis}\label{Section:analysis_linear_ctd}
In our task,
the quality of estimator $\bm{\eta}_{\bar\btheta_T}$ is measured by $\mu_\pi$-weighted $1$-Wasserstein error $W_{1,\mu_\pi}(\bm{\eta}_{\bar\btheta_T},\bm{\eta}^\pi)$.
By triangle inequality, the error can be decomposed into the approximation error and the estimation error: $W_{1,\mu_\pi}(\bm{\eta}^\pi,\bm{\eta}_{\bar\btheta_T}){\leq}W_{1,\mu_\pi}(\bm{\eta}^\pi,\bm{\eta}_{\btheta^{\star}}){+}W_{1,\mu_\pi}(\bm{\eta}_{\btheta^{\star}},\bm{\eta}_{\bar\btheta_T})$.
Proposition~\ref{prop:approx_error} already provided an upper bound for the approximation error $W_{1,\mu_\pi}(\bm{\eta}^\pi,\bm{\eta}_{\btheta^{\star}})$, so it suffices to control the estimation error $W_{1,\mu_\pi}(\bm{\eta}_{\btheta^{\star}},\bm{\eta}_{\bar\btheta_T})$, denoted $\gL(\bar\btheta_T)$.

In the following theorem, we give non-asymptotic convergence rates of $\gL(\bar\btheta_T)$.
We start from the generative model setting, \ie, in the $t$-th iteration, we collect samples $s_t\sim\mu_{\pi}(\cdot), a_t\sim\pi(\cdot|s_t), (r_t,s_t^\prime)\sim \gP(\cdot,\cdot|s_t,a_t)$ from the generative model, and we replace $s_{t+1}$ with $s_t^\prime$ in Eqn.~\eqref{eq:linear_CTD}.
We give $L^p$ and high-probability convergence results in this setting.
These results can be extended to the Markovian setting, \ie, using the streaming data $\brc{\prn{s_t,a_t,r_t}}_{t=0}^\infty$
.
\subsection{\texorpdfstring{$L^2$}{L2} Convergence}\label{Subsection:L2_convergence}
We first provide non-asymptotic convergence rates of $\EB^{1/2}[(\gL(\bar\btheta_T))^2]$, which do not grow with the number of supports $K$. 
The $L^p$ ($p>2$) convergence results can be found in Theorem~\ref{thm:lp_error_linear_ctd}.
\begin{theorem}[$L^2$ Convergence]\label{thm:l2_error_linear_ctd}
For any $K\geq (1-\gamma)^{-1}$ and $\alpha\in(0,(1-\sqrt\gamma)/76)$, it holds that
        \begin{equation*}
       \begin{aligned}
        \EB^{1/2}[(\gL(\bar\btheta_T))^2]\lesssim&\frac{\norm{\btheta^{\star}}_{V_1}+1}{\sqrt{T}(1-\gamma)\sqrt{\lambda_{\min}}}\prn{1+\sqrt{\frac{\alpha}{(1-\gamma)\lambda_{\min}}}}+\frac{\norm{\btheta^{\star}}_{V_1}+1}{T\sqrt{\alpha }(1-\gamma)^{\frac{3}{2}}\lambda_{\min}}\\
        &+\frac{(1-\frac{1}{2}\alpha (1-\sqrt\gamma)\lambda_{\min} )^{T/2}}{T \sqrt{\alpha}(1-\gamma)\sqrt{\lambda_{\min}}}\prn{\frac{1}{\sqrt\alpha}{+}\frac{1}{\sqrt{ (1{-}\gamma)\lambda_{\min}}}}\norm{\btheta_0-\btheta^{\star}}_{V_2},
       \end{aligned}
    \end{equation*}
    where $\norm{\btheta^{\star}}_{V_1}:=\frac{1}{\sqrt{K}(1-\gamma)}\norm{\btheta^{\star}}_{\bI_K\otimes\bSigma_{\bphi}}$ and $\norm{\btheta_0-\btheta^{\star}}_{V_2}:=\frac{1}{\sqrt{K}(1-\gamma)}\norm{\btheta_0-\btheta^{\star}}$.
\end{theorem}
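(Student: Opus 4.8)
The plan is to reduce the $\mu_\pi$-weighted $1$-Wasserstein estimation error to a weighted Euclidean error on the parameter, and then to run a Polyak-Ruppert analysis of the linear stochastic approximation (LSA) recursion underlying {\LCTD}. First I would carry out the reduction. Since each CDF value $F_k(s;\btheta)$ is affine in $\btheta$ and the $1$-Wasserstein distance between two grid-supported categorical measures equals the area between their step-function CDFs, for any $\btheta,\btheta'$ we have $W_1(\eta_{\btheta}(s),\eta_{\btheta'}(s))=\iota_K\sum_{k=0}^{K-1}\abs{\bphi(s)^{\top}(\btheta(k)-\btheta'(k))}$. Applying Cauchy-Schwarz over the $K$ coordinates and then the $\mu_\pi$-expectation gives
\[ W_{1,\mu_\pi}^2(\bm{\eta}_{\btheta},\bm{\eta}_{\btheta'})\leq K\iota_K^2\,\norm{\btheta-\btheta'}^2_{\bI_K\otimes\bSigma_{\bphi}}=\norm{\btheta-\btheta'}^2_{V_1}, \]
because $K\iota_K^2=K^{-1}(1-\gamma)^{-2}$ and $V_1=K^{-1}(1-\gamma)^{-2}(\bI_K\otimes\bSigma_{\bphi})$. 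Hence $\gL(\bar\btheta_T)\leq\norm{\bar\btheta_T-\btheta^\star}_{V_1}$, and it suffices to bound $\EB^{1/2}[\norm{\bar\btheta_T-\btheta^\star}_{V_1}^2]$.

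Next I would cast the iteration as LSA. Vectorizing Eqn.~\eqref{eq:linear_CTD} gives $\btheta_t=\btheta_{t-1}-\alpha(\bA_t\btheta_{t-1}-\bb_t)$ with i.i.d.\ samples in the generative setting, where $\bA_t=\bI_K\otimes\bphi(s_t)\bphi(s_t)^{\top}-\bC\tilde{\bG}(r_t)\bC^{-1}\otimes\bphi(s_t)\bphi(s_t')^{\top}$, whose mean $\bar\bA=\EB[\bA_t]$ is exactly the operator of the linear system in Theorem~\ref{thm:linear_cate_TD_equation}, so $\bar\bA\btheta^\star=\bar\bb$. With $\tilde\btheta_t:=\btheta_t-\btheta^\star$ and the zero-mean noise $\bxi_t:=(\bA_t-\bar\bA)\btheta^\star-(\bb_t-\bar\bb)$ (satisfying $\EB^{1/2}[\norm{\bxi_t}^2]\lesssim\norm{\btheta^\star}_{V_1}+1$ in the $V_1$-scaling) one obtains $\tilde\btheta_t=(\bI-\alpha\bA_t)\tilde\btheta_{t-1}-\alpha\bxi_t$. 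The stability input is twofold: (i) the $\sqrt\gamma$-contraction of $\bPi_{\bphi,K}^\pi\gT^\pi$ yields the drift bound $\btheta^{\top}\bar\bA\btheta\gtrsim(1-\sqrt\gamma)\lambda_{\min}\norm{\btheta}^2$ with $\lambda_{\min}=\lambda_{\min}(\bSigma_{\bphi})$, giving the contraction factor $1-\tfrac12\alpha(1-\sqrt\gamma)\lambda_{\min}$; and (ii) the exponential-stability argument for products of random matrices of \citet{samsonov2024improved}, giving moment bounds of the form $\EB^{1/2}[\norm{\prod_{j=k}^{t}(\bI-\alpha\bA_j)\bu}_{V_1}^2]\lesssim(1-c\alpha(1-\sqrt\gamma)\lambda_{\min})^{t-k+1}\norm{\bu}_{V_2}$.

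I would then split $\tilde\btheta_t$ into a transient part (noise-free, started at $\tilde\btheta_0$) and a fluctuation part (zero-initialized, driven by $\brc{\bxi_s}$), and average over $t\in[T/2,T]$. Bounding the transient part via (ii) produces the last, exponentially decaying term together with the $\norm{\btheta_0-\btheta^\star}_{V_2}$ factor. For the fluctuation part, the Polyak-Ruppert/Abel-summation identity writes the averaged error as $\bar\bA^{-1}$ applied to the averaged noise plus a telescoping remainder: the former gives the leading $T^{-1/2}$ term, whose $\frac{1}{(1-\gamma)\sqrt{\lambda_{\min}}}$-type factor comes from $\norm{\bar\bA^{-1}}_{V_1}$ and the noise level, while the remainder — controlled again by (i)--(ii) — gives the $T^{-1}$ higher-order term carrying the extra $\alpha^{-1/2}$ and $((1-\gamma)\lambda_{\min})^{-1/2}$ factors. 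Summing the three contributions yields the stated bound.

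The hard part will be the $K$-uniformity in (ii). The block $\bC\tilde{\bG}(r)\bC^{-1}$ inside $\bA_t$ has operator norm that naively grows with $K$ (the condition number of $\bC^{\top}\bC$ scales like $K^2$), so a crude estimate would make both the admissible step size and the moment constants $K$-dependent, destroying the $K$-free rate. Overcoming this requires the fine-grained analysis of the linear-categorical Bellman structure — exploiting the $\sqrt\gamma$-contraction of the projected categorical operator together with the preconditioning built into {\LCTD} — to show that $\bI-\alpha\bA_t$ contracts in the $V_1$ metric with constants independent of $K$.
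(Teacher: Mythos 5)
Your proposal is correct in outline and follows the same skeleton as the paper's proof: reduce the Wasserstein loss to a weighted parameter error, cast {\LCTD} as a linear stochastic approximation, and run the exponential-stability/Polyak--Ruppert analysis of \citet{samsonov2024improved}. Your opening reduction $\gL(\btheta)\leq\norm{\btheta-\btheta^\star}_{V_1}$ coincides (up to constants) with the first half of the paper's Lemma~\ref{lem:translate_error_to_loss}, proved there via the Cram\'er-distance isometry rather than your direct $W_1$ computation; and your transient/fluctuation split with Abel summation is precisely the content of \citet[Theorem~1]{samsonov2024improved}, which the paper invokes as a black box after supplying Lemmas~\ref{lem:upper_bound_error_quantities} and~\ref{lem:exponential_stable}. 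The one genuinely different ingredient is where the key spectral fact comes from. The paper proves $\norm{\EB_{s,r,s^\prime}[(\bC\tilde{\bG}(r)\bC^{-1})\otimes(\bSigma_{\bphi}^{-1/2}\bphi(s)\bphi(s^\prime)^{\top}\bSigma_{\bphi}^{-1/2})]}\leq\sqrt{\gamma}$ (Eqn.~\eqref{eq:biscuit_matrix_bound}) by a rank-one Cauchy--Schwarz argument, together with a Riesz--Thorin bound giving $\norm{\bC\tilde{\bG}(r)\bC^{-1}}\leq\sqrt{\gamma}$; you propose instead to read the drift bound off the $\sqrt{\gamma}$-contraction of $\bPi_{\bphi,K}^{\pi}\gT^{\pi}$ under the isometry of Proposition~\ref{prop:linear_cate_isometric}. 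That route is legitimate: the linear part of the affine map $\btheta\mapsto$ (parameter of $\bPi_{\bphi,K}^{\pi}\gT^{\pi}\bm{\eta}_{\btheta}$) is exactly $(\bI_K\otimes\bSigma_{\bphi})^{-1}\EB[(\bC\tilde{\bG}(r)\bC^{-1})\otimes(\bphi(s)\bphi(s^\prime)^{\top})]$, so contraction yields the paper's key inequality with no Kronecker gymnastics, and the paper itself acknowledges this operator-theoretic alternative in Remark~\ref{remark:cgc}.

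One caveat you should close: the expectation-level contraction of $\bPi_{\bphi,K}^{\pi}\gT^{\pi}$ is not sufficient for your stability input (ii) or for the noise bounds. Exponential stability of the products $\prod_j(\bI-\alpha\bA_j)$ requires almost-sure and second-moment control of the \emph{random} matrices, i.e.\ the pathwise bound $\norm{\bC\tilde{\bG}(r)\bC^{-1}}\leq\sqrt{\gamma}$ for every fixed $r$ (used in the paper to get $C_A\leq 4$ and $\EB[\bA^{\top}\bA]\preccurlyeq 2(1+\gamma)\bI_K\otimes\bSigma_{\bphi}$), and the noise level needs $\norm{\bb_t}\lesssim\sqrt{K}(1-\gamma)$, which the paper obtains by identifying $\bb_t$ with a categorical projection of a pushforward of the uniform distribution (Lemma~\ref{lem:norm_b_bound}, requiring $K\geq(1-\gamma)^{-1}$). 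The fixed-$r$ matrix bound does also follow from a contraction argument---$\bC\tilde{\bG}(r)\bC^{-1}$ represents $\bPi_K\gT^{\pi}$ for a one-state MDP with deterministic reward $r$---but that is a per-sample statement distinct from the averaged contraction your sketch appeals to; without it, the constants in your moment bounds (and the admissible step size) would indeed degrade with $K$.
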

In the upper bound, the first term of order $T^{-1{/}2}$ is dominant. 
The second term of order $T^{-1}$ has a worse dependence on $\alpha^{-1}$, leading to a sample size barrier (Eqn.~\eqref{eq:instance_dependent_step_size_l2_sample_complexity}). 
The third term, corresponding to the initialization error, decays at a geometric rate and can be thus ignored.
To prove Theorem~\ref{thm:l2_error_linear_ctd}, we conduct a fine-grained analysis of the linear-categorical Bellman equation and apply the exponential stability argument 
\citep{samsonov2024improved}.
We outline the proof in Section~\ref{Section:proof_outline}.
In Remark~\ref{remark:theory_match}, we compare our Theorem~\ref{thm:l2_error_linear_ctd} with the $L^2$ convergence rate of classic {\LTD} and conclude that learning the distribution of the return is as easy as learning its expectation (value function) with linear function approximation.
\citet{rowland2024nearminimaxoptimal, peng2024statistical} discovered this phenomenon in the tabular setting, and we extended it to the function approximation setting.
\begin{remark}[Comparison with Convergence Rate of {\LTD}]\label{remark:theory_match}
The only difference between our Theorem~\ref{thm:l2_error_linear_ctd} and the tight $L^2$ convergence rate of classic {\LTD} (see Appendix~\ref{subsection:convergence_linear_TD}) lies in replacing $\|\bm{\psi}^{\star}\|_{\bSigma_{\bphi}}$ (resp. $\|\bm{\psi}_0{-}\bm{\psi}^{\star}\|$) in {\LTD} with 
$\|\btheta^{\star} \|_{V_1}$ (resp. $\|\btheta_0{-}\btheta^{\star} \|_{V_2}$). 
We claim that the two pairs should be of the same order, respectively.
Note that $\|\btheta^{\star}\|_{V_1} $ and $\|\btheta_0{-}\btheta^{\star}\|_{V_2}$ are of order $\gO((1{-}\gamma)^{-1})$ 
if ${\eta}_{\btheta^{\star}}(s)$ and ${\eta}_{\btheta_0}(s)$ are valid probability distributions for all $s\in\gS$.
This is because in this case, $F_k(s;\btheta)=\bphi(s)^\top\btheta(k){+}({k{+}1})/({K{+}1})\in[0, 1]$ for $\btheta\in\brc{\btheta^{\star}, \btheta_0}$.
While in {\LTD}, $\|\bpsi^{\star}\|_{\bSigma_{\bphi}} $ and $\|\bpsi_0{-}\bpsi^{\star}\|$ are also of order $\gO((1{-}\gamma)^{-1})$ if $V_{\bpsi}(s)=\bphi(s)^{\top}\bpsi\in[0,(1{-}\gamma)^{-1}]$ for all $s\in\gS$ and $\bpsi\in\{\bpsi^\star,\bpsi_0\}$.
It is thus reasonable to consider the two pairs with the same order, respectively.
Similar arguments also hold in other convergence results presented in this paper.
Therefore, in this sense, our results match those of {\LTD}.
\end{remark}
One can translate Theorem~\ref{thm:l2_error_linear_ctd} into a sample complexity bound.
\begin{corollary}\label{coro:l2_sample_complexity_linear_ctd}
Under the same conditions as in Theorem~\ref{thm:l2_error_linear_ctd}, for any $\varepsilon>0$, suppose
    \begin{equation*}
    \begin{aligned}
        T\gtrsim&\frac{\norm{\btheta^{\star}}^2_{V_1}+1}{\varepsilon^2(1-\gamma)^2\lambda_{\min}}\prn{1+\frac{\alpha}{(1-\gamma)\lambda_{\min}}} +\frac{\norm{\btheta^{\star}}_{V_1}+1}{\varepsilon\sqrt{\alpha }(1-\gamma)^{\frac{3}{2}}\lambda_{\min}}\\
        &+\frac{1}{\alpha (1-\gamma) \lambda_{\min}}\prn{\log\frac{\norm{\btheta_0{-}\btheta^{\star}}_{V_2}}{\varepsilon}+\log\prn{ \frac{1}{T \sqrt{\alpha}(1{-}\gamma)\sqrt{\lambda_{\min}}}\prn{\frac{1}{\sqrt\alpha}{+}\frac{1}{\sqrt{ (1{-}\gamma)\lambda_{\min}}}}}}. 
    \end{aligned}
    \end{equation*}
    Then it holds that $ \EB^{1/2}[(\gL(\bar\btheta_T))^2]\leq \varepsilon$.
\end{corollary}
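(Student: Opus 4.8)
The plan is to invert the three-term upper bound of Theorem~\ref{thm:l2_error_linear_ctd} term by term. Writing its right-hand side as $(\mathrm{I})+(\mathrm{II})+(\mathrm{III})$, I would require each piece to be at most $\varepsilon/3$; since the resulting constraints on $T$ are additive, any $T$ exceeding their sum guarantees $\EB^{1/2}[(\gL(\bar\btheta_T))^2]\leq\varepsilon$. The factor $3$ and the universal constant hidden in the $\lesssim$ of the theorem are absorbed into the $\gtrsim$ appearing in the corollary.

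For the leading term $(\mathrm{I})$, I would solve $(\mathrm{I})\leq\varepsilon/3$ for $\sqrt{T}$ and square, which produces the factor $\prn{1+\sqrt{\alpha/((1-\gamma)\lambda_{\min})}}^2$; applying $(1+\sqrt{x})^2\lesssim 1+x$ collapses this to $\prn{1+\alpha/((1-\gamma)\lambda_{\min})}$, and together with $(\norm{\btheta^{\star}}_{V_1}+1)^2\simeq\norm{\btheta^{\star}}_{V_1}^2+1$ this reproduces exactly the first summand of the stated bound. Term $(\mathrm{II})$ is linear in $1/T$, so $(\mathrm{II})\leq\varepsilon/3$ inverts directly to the second summand with no further manipulation.

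The delicate part is the initialization term $(\mathrm{III})$, which carries the geometric factor $\rho^{T/2}$ with $\rho:=1-\tfrac12\alpha(1-\sqrt{\gamma})\lambda_{\min}$. Here I would use $\log\rho\leq-\tfrac12\alpha(1-\sqrt{\gamma})\lambda_{\min}$ to obtain $\rho^{T/2}\leq\exp\prn{-\tfrac14 T\alpha(1-\sqrt{\gamma})\lambda_{\min}}$, thereby converting the geometric decay into a logarithmic sample requirement. Collecting the $T$-independent and slowly varying parts of $(\mathrm{III})$ into a prefactor $A$, the condition $\rho^{T/2}A\leq\varepsilon/3$ reduces to $\tfrac14 T\alpha(1-\sqrt{\gamma})\lambda_{\min}\geq\log(3A/\varepsilon)$. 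Using $1-\sqrt{\gamma}\simeq 1-\gamma$ turns the prefactor $1/\bigl(\alpha(1-\sqrt{\gamma})\lambda_{\min}\bigr)$ into $1/\bigl(\alpha(1-\gamma)\lambda_{\min}\bigr)$, and expanding $\log(3A/\varepsilon)$ recovers the two logarithms in the third summand. The one subtlety I expect to be the main obstacle is that $A$ itself contains a factor $1/T$, so $T$ appears inside its own logarithm; this is harmless, however, because that $1/T$ only decreases $A$ and hence the required $T$, and when the inequality is rearranged the residual $+\log T$ lands on the larger side, so it can only help. Summing the three constraints and invoking $(\mathrm{I})+(\mathrm{II})+(\mathrm{III})\leq\varepsilon$ whenever each term is at most $\varepsilon/3$ completes the argument.
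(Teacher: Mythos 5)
Your proposal is correct and matches the paper's (implicit) argument: the paper offers no separate proof of this corollary beyond the remark that Theorem~\ref{thm:l2_error_linear_ctd} "translates" into a sample complexity bound, and that translation is exactly your term-by-term inversion — square the $T^{-1/2}$ term using $(1+\sqrt{x})^2\lesssim 1+x$ and $(\norm{\btheta^{\star}}_{V_1}+1)^2\simeq\norm{\btheta^{\star}}_{V_1}^2+1$, invert the $T^{-1}$ term directly, and convert the geometric factor via $\log(1/\rho)\geq\tfrac12\alpha(1-\sqrt{\gamma})\lambda_{\min}$ together with $1-\sqrt{\gamma}\simeq 1-\gamma$. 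Your handling of the self-referential $\log T$ (keeping it on the right-hand side exactly as the corollary states it) is also the intended reading, so nothing further is needed.
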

\paragraph{Instance-Independent Step Size.}
If we take the largest possible instance-independent step size, \ie, $\alpha\simeq (1-\gamma)$, and consider $\varepsilon\in(0, 1)$, we obtain the sample complexity bound
\begin{equation}\label{eq:largest_step_size_l2_sample_complexity}
    T=\wtilde{\gO}\prn{\varepsilon^{-2}(1-\gamma)^{-2}\lambda_{\min}^{-2}\prn{\norm{\btheta^{\star}}^2_{V_1}+1}}.
\end{equation}
\paragraph{Optimal Instance-Dependent Step Size.}
If we take the optimal instance-dependent step size $\alpha\simeq (1-\gamma)\lambda_{\min}$ which involves the unknown $\lambda_{\min}$, we obtain a better sample complexity bound
\begin{equation}\label{eq:instance_dependent_step_size_l2_sample_complexity}
    T=\wtilde{\gO}\prn{\prn{\varepsilon^{-2}+{\lambda_{\min}^{-1}}}(1-\gamma)^{-2}\lambda_{\min}^{-1}\prn{\norm{\btheta^{\star}}^2_{V_1}+1}}.
\end{equation}
There is a sample size barrier in the bound, that is, the dependence on $\lambda_{\min}$ is the optimal $\lambda_{\min}^{-1}$ only when $\varepsilon{=}\tilde{\gO}(\sqrt{\lambda_{\min}})$, or equivalently, we require a large enough (independent of $\varepsilon$) update steps $T$.

These results match the recent results for classic {\LTD} with a constant step size \citep{li2024high, samsonov2024improved}. 
It is possible to break the sample size barrier in Eqn.~\eqref{eq:instance_dependent_step_size_l2_sample_complexity} as in {\LTD} by applying variance-reduction techniques \citep{li2023accelerated}.
We leave it for future work.

\subsection{Convergence with High Probability and Markovian Samples}

Applying the $L^p$ convergence result (Theorem~\ref{thm:lp_error_linear_ctd}) with $p=2\log(1/\delta)$
and Markov's inequality, we immediately obtain the high-probability convergence result.
\begin{theorem}[High-Probability Convergence]\label{thm:whp_error_linear_ctd}
For any $\varepsilon>0$ and $\delta\in(0,1)$, suppose $K\geq (1-\gamma)^{-1}$, $\alpha\in(0,(1-\sqrt\gamma){/}[38\log(T/\delta^2)])$, and 
    \begin{equation*}
    \begin{aligned}
        T=\wtilde{\gO}\Bigg(&\frac{\norm{\btheta^{\star}}^2_{V_1}{+}1}{\varepsilon^2(1-\gamma)^2\lambda_{\min}}\prn{1{+}\frac{\alpha\log\frac{1}{\delta}}{(1{-}\gamma)\lambda_{\min}}}\log\frac{1}{\delta} {+}\frac{\norm{\btheta^{\star}}_{V_1}{+}1}{\varepsilon\sqrt{\alpha }(1-\gamma)^{\frac{3}{2}}\lambda_{\min}}\log\frac{1}{\delta}{+}\frac{\log\frac{\norm{\btheta_0{-}\btheta^{\star}}_{V_2}}{\varepsilon}}{\alpha (1{-}\gamma) \lambda_{\min}}\Bigg).  
    \end{aligned}
    \end{equation*}
Then with probability at least $1-\delta$, it holds that $\gL\prn{\bar\btheta_T}\leq\varepsilon$.
Here, the $\wtilde{O}\prn{\cdot}$ does not hide polynomials of $\log(1/\delta)$ (but hides logarithm terms of $\log(1/\delta)$).
\end{theorem}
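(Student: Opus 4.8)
The plan is to obtain the high-probability bound as a direct corollary of the $L^p$ convergence result, Theorem~\ref{thm:lp_error_linear_ctd}, via the moment (Markov) method with a logarithmic choice of $p$. As input I take a moment bound of the form $\EB^{1/p}[(\gL(\bar\btheta_T))^p]\lesssim B_p(T,\alpha)$, where $B_p$ has the same three-term shape as the $L^2$ bound of Theorem~\ref{thm:l2_error_linear_ctd} but with the moment order $p$ inserted in the places dictated by the exponential stability analysis: an overall $\sqrt p$ multiplying the leading $T^{-1/2}$ variance term, an extra $p$ inside its sub-exponential correction (so that $\sqrt{\alpha/((1-\gamma)\lambda_{\min})}$ becomes $\sqrt{\alpha p/((1-\gamma)\lambda_{\min})}$), a factor $p$ multiplying the $T^{-1}$ remainder, and the initialization term essentially unchanged up to the step-size constraint. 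This is precisely the content of Theorem~\ref{thm:lp_error_linear_ctd}, which I assume.

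With this in hand, Markov's inequality at level $\varepsilon$ and moment $p$ gives
\[
\PB\prn{\gL(\bar\btheta_T) > \varepsilon} \leq \varepsilon^{-p}\,\EB[(\gL(\bar\btheta_T))^p] = \prn{\frac{\EB^{1/p}[(\gL(\bar\btheta_T))^p]}{\varepsilon}}^{p}.
\]
Choosing $p = 2\log(1/\delta)$ makes $\delta^{1/p} = e^{-1/2}$, so the right-hand side is at most $\delta$ as soon as $\EB^{1/p}[(\gL(\bar\btheta_T))^p]\leq e^{-1/2}\varepsilon$, i.e. as soon as $B_p\lesssim\varepsilon$ with a sufficiently small absolute constant. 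Thus the whole problem reduces to forcing each of the three terms of $B_{2\log(1/\delta)}$ below $\varepsilon$ and reading off the resulting requirement on $T$.

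I would then carry out this inversion term by term, exactly as in the passage from Theorem~\ref{thm:l2_error_linear_ctd} to Corollary~\ref{coro:l2_sample_complexity_linear_ctd}, but tracking the newly introduced factors of $p$. Setting the leading term $\lesssim\varepsilon$ and squaring yields $T\gtrsim \frac{\norm{\btheta^\star}^2_{V_1}+1}{\varepsilon^2(1-\gamma)^2\lambda_{\min}}\,p\prn{1 + \frac{\alpha p}{(1-\gamma)\lambda_{\min}}}$, and substituting $p=2\log(1/\delta)$ reproduces the first stated term (the outer $p$ giving the overall $\log(1/\delta)$, the inner $p$ giving the $\alpha\log(1/\delta)/((1-\gamma)\lambda_{\min})$ correction). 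The $T^{-1}$ remainder contributes the second term after one factor of $p=2\log(1/\delta)$, and the geometric initialization term contributes the third term, where — because the decay is exponential in $T$ — the dependence on both $\varepsilon$ and $p$ sits inside a logarithm and is therefore swallowed by $\wtilde{\gO}$, consistent with the stated convention that polynomial factors of $\log(1/\delta)$ are kept but logarithms of $\log(1/\delta)$ are hidden. The step-size hypothesis $\alpha<(1-\sqrt\gamma)/[38\log(T/\delta^2)]$ is exactly the range in which the $L^p$ stability bound is valid, since $\log(T/\delta^2)=\log T + 2\log(1/\delta)=\log T + p$ and the product of random contraction matrices decays in $p$-th moment only when $\alpha(\log T + p)\lesssim 1-\sqrt\gamma$.

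The conversion itself is routine; the genuine difficulty is upstream, in establishing Theorem~\ref{thm:lp_error_linear_ctd} with this tight $p$-dependence. The delicate point there is that $p$ may enter only multiplying $\alpha$ inside the stability and variance terms — any spurious extra power of $p$ would, after the substitution $p=2\log(1/\delta)$, degrade the sample complexity by a further polynomial factor of $\log(1/\delta)$. Securing this requires the exponential stability argument for products of random matrices of \citet{samsonov2024improved} applied at a general even moment $p$ under the restriction $\alpha(\log T + p)\lesssim 1-\sqrt\gamma$, together with the fine-grained control of the linear-categorical Bellman equation that converts the parameter-space $L^p$ error into the $\mu_\pi$-weighted $1$-Wasserstein error $\gL(\bar\btheta_T)$. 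I expect the $p$-uniform moment bound on the random-matrix products to be the main obstacle; the Markov-and-substitute step above is then immediate.
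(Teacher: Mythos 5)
Your proposal is correct and is exactly the paper's argument: the paper obtains Theorem~\ref{thm:whp_error_linear_ctd} by applying the $L^p$ bound of Theorem~\ref{thm:lp_error_linear_ctd} with $p=2\log(1/\delta)$ together with Markov's inequality, and then inverting the three terms just as you describe. Your identification of the step-size hypothesis $\alpha<(1-\sqrt\gamma)/[38\log(T/\delta^2)]$ with the $L^p$ validity range $\alpha<(1-\sqrt\gamma)/[38(p+\log T)]$ via $\log(T/\delta^2)=\log T+p$ is also precisely how the two statements fit together.
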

Again, we will obtain concrete sample complexity bounds as in Eqn.~\eqref{eq:largest_step_size_l2_sample_complexity} or Eqn.~\eqref{eq:instance_dependent_step_size_l2_sample_complexity} if we use different step sizes. 
Compared with the theoretical results for classic {\LTD}, our results match \citet[Theorem~4]{samsonov2024improved}. 
\cite{samsonov2024improved} also considered the constant step size, but obtained a worse dependence on $\log\prn{1/\delta}$ than \citet[Theorem~4]{wu2024statistical} which uses the polynomial-decaying step size $\alpha_t=\alpha_0 t^{-\beta}$ with $\beta\in(1/2, 1)$ instead.
\begin{remark}[Markovian Setting]
Using the same argument as in the proof of \citet[Theorem~6]{samsonov2024improved}, one can immediately derive a high-probability sample complexity bound in the Markovian setting.
Compared with the bound in the generative model setting (Theorem~\ref{thm:whp_error_linear_ctd}), the bound in the Markovian setting will have an additional dependency on $t_{\operatorname{mix}}\log(T/\delta)$, where $t_{\operatorname{mix}}$ is the mixing time of the Markov chain $\brc{s_t}_{t=0}^{\infty}$ in $\gS$.
We omit this result for brevity.
\end{remark}

\section{Proof Outlines}\label{Section:proof_outline}
In this section, we outline the proofs of our main results (Theorem~\ref{thm:l2_error_linear_ctd}).
We first state the theoretical properties of the linear-categorical Bellman equation and the exponential stability of {\LCTD}. 
Finally, we highlight some key steps in proving these results.
\subsection{Vectorization of Linear-CTD}
In our analysis, it will be more convenient to work with the vectorization version of the updating scheme of {\LCTD} (Eqn.~\eqref{eq:linear_CTD}):
\begin{equation*}
\begin{aligned}
    \btheta_t{\gets}\btheta_{t-1}{-}\alpha\prn{\bA_{t}\btheta_{t-1}{-}\bb_t},\quad \bA_t{=}\brk{\bI_K{\otimes}\prn{\bphi(s_t)\bphi(s_t)^{\top}}}{-}[(\bC\tilde{\bG}(r_t)\bC^{-1}){\otimes}\prn{\bphi(s_t)\bphi(s^\prime_t)^{\top}}],
\end{aligned}
\end{equation*}
\begin{equation}\label{eq:bt}
\begin{aligned}
        \bb_t=(K+1)^{-1}\brk{\bC\prn{\sum_{j=0}^K\bg_j(r_t)-\bm{1}_K}}\otimes\bphi(s_t).
\end{aligned}
\end{equation}
%
%
%
We denote by $\bar{\bA} $ and $\bar{\bb}$ the expectations of $\bA_t$ and $\bb_t$, respectively.
Using exponential stability arguments, we can derive an upper bound for $\|\bar{\bA}(\bar{\btheta}_T-\btheta^{\star})\|$.
The following lemma further translates it to an upper bound for $\gL(\bar{\btheta}_T)=W_{1,\mu_\pi}(\bm{\eta}_{\btheta^{\star}},\bm{\eta}_{\bar\btheta_T})$, whose proof is given in Appendix~\ref{appendix:proof_translate_error_to_loss}.
\begin{lemma}\label{lem:translate_error_to_loss}
For any $\btheta\in\RB^{dK}$, it holds that $\gL(\btheta)\leq2{K^{-1{/}2}(1-\gamma)^{-2}\lambda_{\min}^{-1{/}2}} \norm{\bar{\bA}\prn{{\btheta}-\btheta^{\star}}}$.
\end{lemma}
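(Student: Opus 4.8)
The plan is to prove the bound in two stages: first I would show that the loss $\gL(\btheta)=W_{1,\mu_\pi}(\bm{\eta}_{\btheta^{\star}},\bm{\eta}_{\btheta})$ is controlled by the weighted parameter deviation $\norm{\btheta-\btheta^{\star}}_{\bW}$, where $\bW:=\bI_K\otimes\bSigma_{\bphi}$; then I would show that $\bar{\bA}$ is coercive, in the sense that $\norm{\btheta-\btheta^{\star}}_{\bW}$ is in turn controlled by $\norm{\bar{\bA}(\btheta-\btheta^{\star})}$. Chaining the two inequalities and simplifying the $\gamma$-dependent prefactor will yield the claim.

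For the first stage I would exploit the grid structure of the categorical support. Since $\bm{\eta}_{\btheta}(s)$ and $\bm{\eta}_{\btheta^{\star}}(s)$ are both supported on the equally spaced points $\brc{x_k=k\iota_K}$, their CDFs are step functions that coincide outside $[0,(1-\gamma)^{-1}]$, so the $1$-Wasserstein distance equals the $L^1$ distance of the CDFs, $W_1(\eta_{\btheta}(s),\eta_{\btheta^{\star}}(s))=\iota_K\sum_{k=0}^{K-1}\abs{F_k(s;\btheta)-F_k(s;\btheta^{\star})}$. By the linear parametrization (Eqn.~\eqref{eq:def_linear_parametrize}) the normalization terms cancel, leaving $F_k(s;\btheta)-F_k(s;\btheta^{\star})=\bphi(s)^{\top}(\btheta(k)-\btheta^{\star}(k))$. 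Applying Cauchy--Schwarz across the $K$ coordinates, squaring, and taking the expectation over $s\sim\mu_\pi$ gives $W_{1,\mu_\pi}^2(\bm{\eta}_{\btheta^{\star}},\bm{\eta}_{\btheta})\le\iota_K^2 K\norm{\btheta-\btheta^{\star}}_{\bW}^2$; with $\iota_K=[K(1-\gamma)]^{-1}$ this is exactly $\gL(\btheta)\le K^{-1/2}(1-\gamma)^{-1}\norm{\btheta-\btheta^{\star}}_{\bW}$.

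For the second stage I would use the factorization $\bar{\bA}=\bW(\bI-\bm{M})$, which is immediate from the vectorized form of $\bar{\bA}$ upon setting $\bm{M}:=\bW^{-1}\EB[(\bC\tilde{\bG}(r)\bC^{-1})\otimes(\bphi(s)\bphi(s^\prime)^{\top})]$. The key input is that $\bm{M}$ is the linear part of the affine projected-Bellman map $\btheta\mapsto\btheta'$ defined by $\bm{\eta}_{\btheta'}=\bPi_{\bphi,K}^{\pi}\gT^{\pi}\bm{\eta}_{\btheta}$, whose fixed point is $\btheta^{\star}$. Because the CDF parametrization is an isometry (up to the scalar $\sqrt{\iota_K}$) between $(\sP^{\sgn}_{\bphi,K},\ell_{2,\mu_\pi})$ and $(\RB^{dK},\norm{\cdot}_{\bW})$ --- the same computation as in stage one, now applied to the Cram\'er distance --- the $\sqrt\gamma$-contraction of $\bPi_{\bphi,K}^{\pi}\gT^{\pi}$ transfers to $\norm{\bm{M}\bv}_{\bW}\le\sqrt\gamma\norm{\bv}_{\bW}$ for every $\bv$. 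Writing $\bu=\btheta-\btheta^{\star}$, the triangle inequality then gives $\norm{(\bI-\bm{M})\bu}_{\bW}\ge(1-\sqrt\gamma)\norm{\bu}_{\bW}$, and since $\norm{\bW\bw}\ge\sqrt{\lambda_{\min}}\,\norm{\bw}_{\bW}$ for any $\bw$ (using $\lambda_{\min}(\bW)=\lambda_{\min}(\bSigma_{\bphi})=\lambda_{\min}$), I obtain $\norm{\bar{\bA}\bu}=\norm{\bW(\bI-\bm{M})\bu}\ge\sqrt{\lambda_{\min}}(1-\sqrt\gamma)\norm{\bu}_{\bW}$.

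Combining the two stages yields $\gL(\btheta)\le K^{-1/2}(1-\gamma)^{-1}(1-\sqrt\gamma)^{-1}\lambda_{\min}^{-1/2}\norm{\bar{\bA}(\btheta-\btheta^{\star})}$, and the prefactor collapses because $1-\gamma=(1-\sqrt\gamma)(1+\sqrt\gamma)$ gives $(1-\gamma)^{-1}(1-\sqrt\gamma)^{-1}=(1+\sqrt\gamma)(1-\gamma)^{-2}\le 2(1-\gamma)^{-2}$, reproducing the stated constant $2K^{-1/2}(1-\gamma)^{-2}\lambda_{\min}^{-1/2}$. The hard part will be the justification that $\bm{M}$ is genuinely the linear part of the projected-Bellman map and therefore inherits the $\sqrt\gamma$-contraction in the $\norm{\cdot}_{\bW}$ geometry; this is where the fine-grained analysis of the linear-categorical Bellman equation enters, since one must verify that the categorical projection of the pushforward encoded by $\bC\tilde{\bG}(r)\bC^{-1}$, composed with the orthogonal linear projection, is a bona fide $\ell_{2,\mu_\pi}$-contraction, rather than merely reading off the algebraic form of $\bar{\bA}$.
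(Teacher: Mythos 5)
Your proof is correct and reaches exactly the stated constant, but the crucial second stage takes a genuinely different route from the paper. Both arguments ultimately rest on the same inequality, namely that the matrix $\bm{M}=\prn{\bI_K\otimes\bSigma_{\bphi}}^{-1}\EB_{s,r,s'}\brk{\prn{\bC\tilde{\bG}(r)\bC^{-1}}\otimes\prn{\bphi(s)\bphi(s^\prime)^{\top}}}$ has operator norm at most $\sqrt{\gamma}$ in the $\prn{\bI_K\otimes\bSigma_{\bphi}}$-weighted geometry, which is precisely Eqn.~\eqref{eq:biscuit_matrix_bound}. The paper proves this by a self-contained linear-algebra computation: it exploits the rank-one structure of $\bphi(s)\bphi(s^\prime)^{\top}$, Cauchy--Schwarz, stationarity of $\mu_\pi$, and the matrix bound $\norm{\bC\tilde{\bG}(r)\bC^{-1}}\leq\sqrt{\gamma}$ of Lemma~\ref{lem:Spectra_of_ccgcc}. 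You instead transfer the $\sqrt{\gamma}$-contraction of $\bPi_{\bphi,K}^{\pi}\gT^{\pi}$ in $\ell_{2,\mu_\pi}$ through the isometry of Proposition~\ref{prop:linear_cate_isometric}, after identifying $\bm{M}$ as the linear part of the projected Bellman map in parameter space (which is exactly what the proof of Theorem~\ref{thm:linear_cate_TD_equation} establishes, so that identification is available). This is legitimate and conceptually cleaner: the statistical dependence between the two Kronecker factors, flagged in Section~\ref{Section:proof_outline} as the main obstruction, gets absorbed into the operator-level contraction, whose own proof (joint convexity of $\ell_2^2$, the scaling $\ell_2^2\prn{(b_{r,\gamma})_\#\nu_1,(b_{r,\gamma})_\#\nu_2}=\gamma\,\ell_2^2\prn{\nu_1,\nu_2}$, stationarity of $\mu_\pi$, and non-expansiveness of the orthogonal projection, Proposition~\ref{prop:orthogonal_decomposition_linear_approximation}) replaces the rank-one trick. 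The trade-offs are worth noting: the contraction of $\bPi_{\bphi,K}^{\pi}\gT^{\pi}$ in the \emph{weighted} Cram\'er distance over signed measures is only asserted with a citation in Section~\ref{Section:linear_ctd}, so a fully self-contained writeup along your lines would still need to supply that probabilistic argument; moreover, the paper's explicit bound~\eqref{eq:biscuit_matrix_bound} is reused verbatim in the exponential-stability proof (Lemma~\ref{lem:exponential_stable}), so the direct computation pays off twice, whereas your operator argument covers only this lemma. Your first stage (writing $W_1$ as the $L^1$ distance of grid step functions and applying Cauchy--Schwarz) is only a minor variant of the paper's combination of Lemma~\ref{lem:prob_basic_inequalities} with the isometry, and yields the identical factor $K^{-1/2}(1-\gamma)^{-1}$; one small caution is that your symbol $\bW$ clashes with the paper's use of $\bW$ for the PMF-representation parameter, so it should be renamed if incorporated.
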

\subsection{Exponential Stability Analysis}
First, we introduce some notations.
Letting $\be_t:{=}{\bA}_t\btheta^{\star}{-}{\bb}_t$,
we denote by $C_{A}$ (resp. $C_{e}$) the almost sure upper bound for $\max\{\|\bA_t\|, \|\bA_t{-}\bar{\bA}\|\}$ (resp. $\|\be_t\|$), and $\bSigma_{e}:=\EB[\be_t\be_t^{\top}]$ the covariance matrix of $\be_t$.
The following lemma provides useful upper bounds, whose proof is given in Appendix~\ref{appendix:proof_upper_bound_error_quantities}.
\begin{lemma}\label{lem:upper_bound_error_quantities}
For any $K\geq(1-\gamma)^{-1}$, it holds that
    \begin{equation*}
       C_{A}\leq 4,\quad C_{e}\leq 4(\norm{\btheta^{\star}}+\sqrt{K}\prn{1-\gamma}),\quad\tr\prn{\bSigma_{e}}\leq 18(\norm{\btheta^{\star}}_{\bI_K\otimes\bSigma_{\bphi}}^2+K(1-\gamma)^2).
    \end{equation*}
\end{lemma}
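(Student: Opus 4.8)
The plan is to establish the three bounds in Lemma~\ref{lem:upper_bound_error_quantities} by directly estimating the relevant matrix and vector quantities using the structure of $\bA_t$ and $\be_t$. The starting point is the observation that every ingredient is built from three types of objects whose norms can be controlled separately: the feature outer products $\bphi(s)\bphi(s')^{\top}$, the categorical transfer matrices $\bC\tilde{\bG}(r)\bC^{-1}$, and the source vectors $\bC(\sum_j\bg_j(r)-\bm1_K)$. First I would record the boundedness of the features. Since the CDF parametrization in Eqn.~\eqref{eq:def_linear_parametrize} and the orthogonal projection structure force the relevant feature-weighted quantities to live in a normalized range, I would use $\norm{\bphi(s)\bphi(s')^{\top}}\le\norm{\bphi(s)}\norm{\bphi(s')}$ together with whatever feature normalization convention the paper adopts (so that $\norm{\bphi(s)}\le 1$, giving $\bSigma_{\bphi}\preccurlyeq\bI_d$). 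The Kronecker identity $\norm{\bM\otimes\bN}=\norm{\bM}\norm{\bN}$ then reduces each term in $\bA_t$ to a product of a spectral norm of a $K\times K$ categorical matrix and a spectral norm of a $d\times d$ feature matrix.

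The crux of the $C_A\le 4$ bound is therefore controlling $\norm{\bC\tilde{\bG}(r)\bC^{-1}}$. The hat-function weights $g_{j,k}(r)=(1-\abs{(r+\gamma x_j-x_k)/\iota_K})_+$ are nonnegative and, for each fixed $j$, sum over $k$ to at most $1$ (they are the categorical projection weights of a single Dirac mass), so each column of $\bG(r)$ is a sub-probability vector. The key structural fact is that conjugating by $\bC=[\ind\{i\ge j\}]$ (the cumulative-sum matrix) and using $\tilde\bG(r)=\bG(r)-\bm1_K^{\top}\otimes\bg_K(r)$ turns these PMF-type weights into CDF-type increments, and the resulting matrix $\bC\tilde\bG(r)\bC^{-1}$ has operator norm bounded by a small constant uniformly in $r$ and $K$. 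This is precisely the payoff of the preconditioning discussed in Remark~\ref{Remark:comparison}: without the $\bC^{\top}\bC$ factor the norm would blow up like $K$, but the conjugated form stays $O(1)$. I would prove this by interpreting $\bC\tilde\bG(r)\bC^{-1}$ as acting on CDF-coordinates and showing it is a (signed) averaging/contraction-type map, so that combined with $\gamma<1$ and the $\bI_K\otimes(\bphi\bphi^{\top})$ term, the triangle inequality over the two summands of $\bA_t$ yields $\norm{\bA_t}\le 1+\text{(small constant)}\le 4$, and the same for $\norm{\bA_t-\bar\bA}$ since $\bar\bA$ is an average of such matrices.

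For $C_e$ and $\tr(\bSigma_e)$ I would expand $\be_t=\bA_t\btheta^{\star}-\bb_t$ and bound the two pieces separately. The first piece is controlled by $\norm{\bA_t}\,\norm{\btheta^{\star}}\le C_A\norm{\btheta^{\star}}$, contributing the $\norm{\btheta^{\star}}$ term. The source term $\bb_t=(K+1)^{-1}[\bC(\sum_{j=0}^K\bg_j(r_t)-\bm1_K)]\otimes\bphi(s_t)$ requires estimating $\norm{\bC(\sum_j\bg_j(r)-\bm1_K)}$; here $\sum_{j=0}^K\bg_j(r)$ is again a sum of sub-probability vectors and $\bC$ is the cumulative-sum operator, so its Euclidean norm scales like $\sqrt{K}$, and the $(K+1)^{-1}$ prefactor combined with the feature bound produces the $\sqrt{K}(1-\gamma)$ term. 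For the trace bound I would use $\tr(\bSigma_e)=\EB\norm{\be_t}^2\le 2\EB\norm{\bA_t\btheta^{\star}}^2+2\EB\norm{\bb_t}^2$; the first expectation refines to a $\bSigma_{\bphi}$-weighted norm because the feature outer product inside $\bA_t$ contracts $\btheta^{\star}$ in the $\bI_K\otimes\bSigma_{\bphi}$ geometry rather than the Euclidean one, yielding $\norm{\btheta^{\star}}^2_{\bI_K\otimes\bSigma_{\bphi}}$, while the second gives $K(1-\gamma)^2$, and tracking the constants gives the factor $18$. The main obstacle I anticipate is the uniform-in-$(r,K)$ spectral bound on $\bC\tilde\bG(r)\bC^{-1}$: this is the one genuinely nontrivial estimate, and it hinges on correctly exploiting the $\iota_K=[K(1-\gamma)]^{-1}$ support geometry and the cumulative-sum conjugation, whereas the remaining feature and source-vector estimates are routine once the normalization conventions are fixed.
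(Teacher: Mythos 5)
Your overall decomposition (triangle inequality on the two Kronecker summands of $\bA_t$, the split $\be_t=\bA_t\btheta^{\star}-\bb_t$, and $\tr(\bSigma_e)\le 2\EB\|\bA_t\btheta^{\star}\|^2+2\EB\|\bb_t\|^2$ with the first term refined to the $\bI_K\otimes\bSigma_{\bphi}$ geometry) mirrors the paper's proof. But there is a genuine gap in your treatment of the source term $\bb_t$, which is precisely the step the paper flags as the most involved. You claim that $\|\bC(\sum_{j=0}^K\bg_j(r)-\bm{1}_K)\|$ ``scales like $\sqrt{K}$'' by counting sub-probability vectors, and that the $(K+1)^{-1}$ prefactor then ``produces the $\sqrt{K}(1-\gamma)$ term.'' Neither statement survives inspection: generic counting of the partial sums gives entries of size $O(K)$, hence a norm bound of order $K^{3/2}$ and, after the prefactor, only $\|\bb_t\|\lesssim\sqrt{K}$ --- missing the crucial factor $(1-\gamma)$ (and $(K+1)^{-1}\cdot\sqrt{K}=K^{-1/2}$ is not $\sqrt{K}(1-\gamma)$ in any case, so the arithmetic as written is inconsistent). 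The $(1-\gamma)$ factor is not a counting artifact; it comes from a cancellation that your argument never touches. The paper's proof identifies $\frac{1}{K+1}(\sum_{j}\bg_j(r)-\bm{1}_K)=\bp_{(b_{r,\gamma})_\#\nu}-\bp_{\nu}$, where $\nu$ is the discrete uniform distribution, so that $\|\bb_t\|\le\|\bC(\bp_{(b_{r,\gamma})_\#\nu}-\bp_\nu)\|=\iota_K^{-1/2}\,\ell_2(\bPi_K(b_{r,\gamma})_\#\nu,\nu)$ by the isometry of Proposition~\ref{prop:PK_isometric}; then non-expansiveness of $\bPi_K$ and the explicit estimate $\ell_2((b_{r,\gamma})_\#\nu,\nu)\le 3\sqrt{1-\gamma}$ (Lemma~\ref{lem:norm_b_bound}) give $\|\bb_t\|\le 3\sqrt{K}(1-\gamma)$. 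In words: the uniform distribution is nearly invariant under $b_{r,\gamma}$, so the CDF differences that $\bC$ computes are uniformly $O(1-\gamma)$, not $O(1)$. This is also the only place the hypothesis $K\ge(1-\gamma)^{-1}$ enters; your proposal never uses it, which is a telltale sign the mechanism is missing. Without this step both your $C_e$ and $\tr(\bSigma_e)$ bounds degrade by a factor $(1-\gamma)^{-1}$ (resp.\ $(1-\gamma)^{-2}$), which would weaken the downstream sample-complexity results.

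A secondary incompleteness: you correctly isolate $\|\bC\tilde{\bG}(r)\bC^{-1}\|$ as the crux for $C_A$ (and you need a bound $\le 1$, since $C_A\le 4$ requires $\sup_t\|\bA_t\|\le 2$), but you only assert it is ``a small constant'' via a contraction-on-CDF-coordinates heuristic. That heuristic is essentially the alternative route noted in Remark~\ref{remark:cgc}; the paper instead proves $\|\bC\tilde{\bG}(r)\bC^{-1}\|\le\sqrt{\gamma}$ by an entrywise analysis (all entries non-negative, column sums equal to $\gamma$, row sums at most $1$) combined with $\|\bM\|\le\sqrt{\|\bM\|_1\|\bM\|_\infty}$ (Lemmas~\ref{lem:spectra_CGC_} and~\ref{lem:Spectra_of_ccgcc}). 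Either route is fine, but as written your proposal contains no proof of the bound, and the same bound is also needed (via $\bA^\top\bA\preccurlyeq 2\,\bI_K\otimes(\bphi(s)\bphi(s)^\top+\gamma\bphi(s')\bphi(s')^\top)$) for your refinement of $\EB\|\bA_t\btheta^\star\|^2$ to $\|\btheta^\star\|^2_{\bI_K\otimes\bSigma_{\bphi}}$, so it cannot be left as a black box.
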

Let $\bGamma_{t}^{(\alpha)}:=\prod_{i=1}^t(\bI-\alpha\bA_{i})$ for any $\alpha>0$ and $t\in\NB$.
The exponential stability of {\LCTD} is summarized in the following lemma, whose proof can be found in Appendix~\ref{appendix:proof_exponential_stable}.
\begin{lemma}\label{lem:exponential_stable}
For any $p\geq 2$, let $a=(1-\sqrt\gamma)\lambda_{\min}/2$ and $\alpha_{p,\infty}=(1-\sqrt\gamma)/(38p)$ ($\alpha_{p,\infty}p\leq 1/2$).
Then for any $\alpha\in\prn{0,\alpha_{p,\infty}}$, $\bu\in\RB^{dK}$ and $t\in\NB$, it holds that $\EB^{1/p}[\|\bGamma_{t}^{(\alpha)}\bu\|^p]\leq (1-\alpha a)^t \norm{\bu}$.
\end{lemma}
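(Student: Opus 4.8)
The plan is to prove the exponential stability bound $\EB^{1/p}[\|\bGamma_{t}^{(\alpha)}\bu\|^p]\leq (1-\alpha a)^t \norm{\bu}$ by establishing a one-step contraction estimate and then iterating it via the tower property. Writing $\bGamma_t^{(\alpha)}\bu = (\bI-\alpha\bA_t)\bGamma_{t-1}^{(\alpha)}\bu$, the idea is to control, conditionally on the first $t-1$ samples, the $p$-th moment of $\|(\bI-\alpha\bA_t)\bv\|$ for a \emph{fixed} vector $\bv=\bGamma_{t-1}^{(\alpha)}\bu$. Expanding the squared norm gives $\|(\bI-\alpha\bA_t)\bv\|^2 = \|\bv\|^2 - 2\alpha\,\bv^\top \bA_t \bv + \alpha^2\|\bA_t\bv\|^2$. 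The deterministic drift comes from the expectation $\bar\bA$, so I would split $\bA_t = \bar\bA + (\bA_t-\bar\bA)$, use the almost-sure bound $\|\bA_t\|,\|\bA_t-\bar\bA\|\leq C_A\leq 4$ from Lemma~\ref{lem:upper_bound_error_quantities}, and crucially use a negative-definiteness / Hurwitz property of $\bar\bA$ — that the symmetric part of $\bar\bA$ satisfies $\bv^\top\bar\bA\bv\geq a'\|\bv\|^2$ for some $a'\gtrsim (1-\sqrt\gamma)\lambda_{\min}$. This coercivity is exactly what the $\sqrt\gamma$-contraction of $\bPi_{\bphi,K}^\pi\gT^\pi$ (used to derive Eqn.~\eqref{eq:fixed_point_equation}) should yield after vectorization, and it is the algebraic heart of the argument.

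The main technical device for passing from the $L^2$-type one-step estimate to a clean $L^p$ bound is the standard scalarization trick for products of random matrices: define $Z_t := \|\bGamma_t^{(\alpha)}\bu\|$ and show the conditional moment bound $\EB[Z_t^p \mid \gF_{t-1}] \leq (1-\alpha a)^p Z_{t-1}^p$. Rather than handling $Z_t^p$ directly, I would first prove the second-moment contraction $\EB[Z_t^2\mid\gF_{t-1}] \leq (1-2\alpha a' + c\alpha^2)\,Z_{t-1}^2$ and then bootstrap to general $p$ using a Burkholder/Rosenthal-type or Minkowski-based argument, exactly following the template of \citet{samsonov2024improved}. The key is choosing the step-size ceiling $\alpha_{p,\infty}=(1-\sqrt\gamma)/(38p)$ so that the higher-order $\alpha^2$ terms — including the $p$-dependent fluctuation terms arising from the martingale part $\bA_t-\bar\bA$ — are dominated by the first-order drift $-2\alpha a'$. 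Concretely, the constant $38$ should emerge from bounding $C_A^2 = 16$ together with the $p$-th-moment blow-up of the noise, and the factor $\frac{1}{2}$ in $a=(1-\sqrt\gamma)\lambda_{\min}/2$ absorbs the slack needed to fold $-2\alpha a'+c\alpha^2 p$ into a single clean factor $(1-\alpha a)^2$.

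Once the conditional one-step bound $\EB^{1/p}[Z_t^p\mid\gF_{t-1}]\leq (1-\alpha a)Z_{t-1}$ is in hand, the full bound follows by a straightforward induction on $t$: take $p$-th powers, apply the tower property $\EB[Z_t^p]=\EB[\EB[Z_t^p\mid\gF_{t-1}]]\leq(1-\alpha a)^p\EB[Z_{t-1}^p]$, and telescope down to $Z_0=\|\bu\|$. I expect the main obstacle to be establishing the coercivity of $\bar\bA$ with the correct constant $a'\simeq(1-\sqrt\gamma)\lambda_{\min}$; this requires translating the $\sqrt\gamma$-contraction of the linear-categorical projected Bellman operator into a lower bound on the symmetric part of $\bar\bA=[\bI_K\otimes\bSigma_\bphi]-\EB[(\bC\tilde\bG(r)\bC^{-1})\otimes(\bphi(s)\bphi(s')^\top)]$, controlling the Kronecker/similarity structure $\bC\tilde\bG(r)\bC^{-1}$ and the off-diagonal $\bphi(s)\bphi(s')^\top$ terms simultaneously. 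The secondary difficulty is tracking the $p$-dependence carefully enough that the threshold $\alpha_{p,\infty}$ scales like $1/p$ rather than something worse, since a loose bound here would propagate a suboptimal $\log(1/\delta)$ dependence into the high-probability rate of Theorem~\ref{thm:whp_error_linear_ctd}.
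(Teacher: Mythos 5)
Your high-level architecture (one-step conditional contraction plus tower-property induction, with coercivity of the symmetric part of $\bar{\bA}$ supplying the drift) matches the reduction the paper imports from \citet[Lemma~2]{samsonov2024improved}, and you correctly locate the coercivity constant: the paper proves $\EB\brk{\bA+\bA^{\top}}\succcurlyeq 2(1-\sqrt\gamma)\,\bI_K\otimes\bSigma_{\bphi}$ via the spectral-norm bound of Eqn.~\eqref{eq:biscuit_matrix_bound}. The genuine gap is in your treatment of the second-order (noise) terms. You bound the fluctuations isotropically, via $\norm{\bA_t},\norm{\bA_t-\bar{\bA}}\leq C_A\leq 4$, so your one-step estimate has the form $1-2\alpha a'+c\alpha^2 p$ with $a'\simeq(1-\sqrt\gamma)\lambda_{\min}$ but $c\simeq C_A^2$ an absolute constant; folding $c\alpha^2 p$ into the drift then forces $\alpha\lesssim a'/(cp)\simeq(1-\sqrt\gamma)\lambda_{\min}/p$. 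This is not slack in constants but a real obstruction: already in the scalar case, if $A=a'+\xi$ with $\xi=\pm C_A$ equally likely, then $\EB\brk{(1-\alpha A)^2}=(1-\alpha a')^2+\alpha^2C_A^2$, which exceeds $(1-\alpha a'/2)^2$ unless $\alpha\lesssim a'/C_A^2$. Since $\tr(\bSigma_{\bphi})=\EB\brk{\norm{\bphi(s)}^2}\leq 1$ forces $\lambda_{\min}\leq 1/d$, the range of step sizes your argument can certify is strictly smaller than the stated $\alpha_{p,\infty}=(1-\sqrt\gamma)/(38p)$, whose independence of $\lambda_{\min}$ (and of $K$) is exactly what the downstream results and the comparison with the PMF-SSGD baseline rely on. So the lemma as stated is not established by your route.

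The missing idea is to bound the noise \emph{anisotropically by the same PSD matrix that appears in the drift}. Writing $\bB=\bA+\bA^{\top}-\alpha\bA^{\top}\bA$, so that $(\bI-\alpha\bA)^{\top}(\bI-\alpha\bA)=\bI-\alpha\bB$, the paper shows not only $\EB\brk{\bB}\succcurlyeq(1-\sqrt\gamma)\,\bI_K\otimes\bSigma_{\bphi}$ but also $\EB\brk{\bA^{\top}\bA}\preccurlyeq 2(1+\gamma)\,\bI_K\otimes\bSigma_{\bphi}$ and $\EB\brk{\bA\bA^{\top}}\preccurlyeq 2(1+\gamma)\,\bI_K\otimes\bSigma_{\bphi}$, whence $\EB\brk{\bB^l}\preccurlyeq\frac{17}{16}4^l\,\bI_K\otimes\bSigma_{\bphi}$ for all $l\geq 2$ (using $\bB^l\preccurlyeq\norm{\bB}^{l-2}\bB^2$ and $\norm{\bB}\leq 4$). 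These weighted bounds exploit $\norm{\bphi(s)}\leq 1$, which gives $\prn{\bphi(s)\bphi(s)^{\top}}^2\preccurlyeq\bphi(s)\bphi(s)^{\top}$ and hence $\bSigma_{\bphi}$ after taking expectations, together with $\norm{\bC\tilde{\bG}(r)\bC^{-1}}\leq\sqrt\gamma$ (Lemma~\ref{lem:Spectra_of_ccgcc}). Because drift and noise are now multiples of the \emph{same} matrix, the binomial expansion gives $\EB\brk{(\bI-\alpha\bB)^p}\preccurlyeq\bI-\prn{\alpha p(1-\sqrt\gamma)-\frac{17\alpha^2p^2}{1-4\alpha p}}\bI_K\otimes\bSigma_{\bphi}$, and the condition $\alpha p\leq(1-\sqrt\gamma)/38$ suffices with no $\lambda_{\min}$ anywhere; $\lambda_{\min}$ reappears only at the very end, when $\bI_K\otimes\bSigma_{\bphi}\succcurlyeq\lambda_{\min}\bI_{dK}$ converts the matrix inequality into the scalar rate $a=(1-\sqrt\gamma)\lambda_{\min}/2$. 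Once this matrix-weighted comparison is in place, your martingale/Burkholder bookkeeping is unnecessary.
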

The following theorem states the $L^2$ convergence of $\|\bar{\bA}(\bar{\btheta}_T{-}\btheta^{\star})\|$ based on exponential stability arguments. 
For the general $L^p$ convergence, please refer to \citet[Theorem~2]{samsonov2024improved}. 
\begin{theorem}\citep[Theorem~1]{samsonov2024improved}\label{thm:exponential_stability_l2_convergence}
    For any $\alpha\in(0,\alpha_{2,\infty})$, it holds that
    \begin{equation*}
        \begin{aligned}
        \EB^{1/2}[\|\bar{\bA}(\bar{\btheta}_T{-}\btheta^{\star})\|^2]\lesssim&\frac{\sqrt{\tr\prn{\bSigma_{e}}}}{\sqrt{T}}\!\prn{1{+}\frac{C_A\sqrt{\alpha}}{\sqrt{a}}}+\frac{\sqrt{\tr\prn{\bSigma_{e}}}}{T\sqrt{\alpha a}}+\frac{(1{-}\alpha a)^{T{/}2}}{T}\!\prn{\frac{1}{\alpha}{+}\frac{C_A}{\sqrt{\alpha a}}}\norm{\btheta_0{-}\btheta^{\star}}.
        \end{aligned}
    \end{equation*}
\end{theorem}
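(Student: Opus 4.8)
Since this statement is imported verbatim as \citet[Theorem~1]{samsonov2024improved}, the cleanest route is to verify that the $(\bA_t,\bb_t)$ from Eqn.~\eqref{eq:bt} fit the abstract LSA template there (i.i.d.\ draws, a well-defined mean recursion $\bar{\bA}\btheta^{\star}=\bar{\bb}$, finite moments) and invoke it, with the moment inputs $C_A$ and $\tr(\bSigma_e)$ supplied by Lemma~\ref{lem:upper_bound_error_quantities}. To reprove it from scratch, the plan is as follows. Write $\bm{\Delta}_t:=\btheta_t-\btheta^{\star}$. Since $\btheta^{\star}$ is the fixed point, the update becomes $\bm{\Delta}_t=(\bI-\alpha\bA_t)\bm{\Delta}_{t-1}-\alpha\be_t$ with $\be_t=\bA_t\btheta^{\star}-\bb_t$, $\EB[\be_t]=0$, and (generative model) $\be_t,\bA_t$ independent of $\bm{\Delta}_{t-1}$. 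Rearranging one step gives $\bar{\bA}\bm{\Delta}_{t}=\alpha^{-1}(\bm{\Delta}_{t}-\bm{\Delta}_{t+1})-\be_{t+1}-(\bA_{t+1}-\bar{\bA})\bm{\Delta}_{t}$; summing over the tail window $t=T/2,\dots,T$, telescoping the first piece, and dividing by $T/2+1$ yields
\begin{equation*}
\bar{\bA}\prn{\bar{\btheta}_T-\btheta^{\star}}
=\underbrace{\frac{\bm{\Delta}_{T/2}-\bm{\Delta}_{T+1}}{\alpha(T/2+1)}}_{\textup{(I)}}
-\underbrace{\frac{1}{T/2+1}\sum_{t=T/2}^{T}\be_{t+1}}_{\textup{(II)}}
-\underbrace{\frac{1}{T/2+1}\sum_{t=T/2}^{T}\prn{\bA_{t+1}-\bar{\bA}}\bm{\Delta}_{t}}_{\textup{(III)}}.
\end{equation*}

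Next I would bound the three terms. The noise sum (II) is an average of $T/2+1$ i.i.d.\ zero-mean vectors with covariance $\bSigma_e$, so $\EB^{1/2}[\norm{\textup{(II)}}^2]=(T/2+1)^{-1/2}\sqrt{\tr(\bSigma_e)}\lesssim\sqrt{\tr(\bSigma_e)/T}$, the leading term. Both (I) and (III) are controlled through a uniform second-moment bound on the iterates: unrolling $\bm{\Delta}_t=\bGamma_t^{(\alpha)}\bm{\Delta}_0-\alpha\sum_{j\le t}\brk{\prod_{i=j+1}^t(\bI-\alpha\bA_i)}\be_j$ and applying the exponential stability of Lemma~\ref{lem:exponential_stable} to each random-matrix product, the transient part contracts like $(1-\alpha a)^t\norm{\bm{\Delta}_0}$, while independence of the $\be_j$ turns the noise part into a geometric sum $\alpha^2\sum_j(1-\alpha a)^{2(t-j)}\tr(\bSigma_e)$, giving $\EB^{1/2}[\norm{\bm{\Delta}_t}^2]\lesssim(1-\alpha a)^t\norm{\bm{\Delta}_0}+\sqrt{\alpha\tr(\bSigma_e)/a}$. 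Substituting into (I) produces the steady-state piece $\frac{1}{T}\sqrt{\tr(\bSigma_e)/(\alpha a)}$ and the initialization piece $\frac{(1-\alpha a)^{T/2}}{\alpha T}\norm{\bm{\Delta}_0}$. For (III), $(\bA_{t+1}-\bar{\bA})\bm{\Delta}_t$ is a martingale difference (zero mean, independent of $\bm{\Delta}_t$), so its squared $L^2$ norm sums term-by-term and is at most $C_A^2\sum_t\EB[\norm{\bm{\Delta}_t}^2]$; this yields the correction $\frac{C_A\sqrt{\alpha}}{\sqrt{a}}\cdot\frac{\sqrt{\tr(\bSigma_e)}}{\sqrt{T}}$ plus a further transient. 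Collecting (I)--(III) and bounding $C_A,\tr(\bSigma_e)$ via Lemma~\ref{lem:upper_bound_error_quantities} reproduces the stated inequality.

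The hard part is not the bookkeeping of (I)--(III) but the exponential stability invoked throughout, namely Lemma~\ref{lem:exponential_stable}. Because $\bar{\bA}$ is generally non-symmetric and individual factors $\bI-\alpha\bA_t$ may have operator norm exceeding one, no step-by-step contraction is available; one must instead exploit a one-sided Lyapunov (Hurwitz-type) property of $\bar{\bA}$ inherited from the $\sqrt{\gamma}$-contraction of $\bPi_{\bphi,K}^{\pi}\gT^{\pi}$, together with an $L^p$ stability bound for products of i.i.d.\ random matrices. A second delicate point is keeping every constant free of $K$ and sharp in $\alpha$: the decomposition is engineered so that all $K$-dependence is confined to $\tr(\bSigma_e)$ and $C_A$, which Lemma~\ref{lem:upper_bound_error_quantities} bounds $K$-freely. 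Finally, the independence used in (II) and (III) is precisely the generative-model assumption; in the Markovian setting this is exactly where a mixing-time factor would enter.
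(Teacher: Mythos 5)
Your proposal is correct and matches the paper's treatment: the paper gives no internal proof of this statement, importing it wholesale as \citet[Theorem~1]{samsonov2024improved} and using it as a black box whose inputs ($C_A$, $a$, $\tr(\bSigma_e)$, and the exponential stability of the matrix products) are supplied by Lemmas~\ref{lem:upper_bound_error_quantities} and~\ref{lem:exponential_stable} -- exactly the ``verify the LSA template and invoke'' route you recommend first. Your supplementary from-scratch sketch (telescoping decomposition into terms (I)--(III), vanishing cross terms for the noise convolution, martingale orthogonality for (III), and geometric summation of the transient giving the $C_A/\sqrt{\alpha a}$ factor) is also sound and faithfully mirrors the argument in the cited reference, but it goes beyond anything the paper itself contains.
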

Combining these lemmas with Theorem~\ref{thm:exponential_stability_l2_convergence}, we can immediately obtain Theorem~\ref{thm:l2_error_linear_ctd}.
\begin{remark}[Convergence of SSGD with PMF representation]\label{remark:compare_ssgd_pmf}
In Appendix~\ref{Appendix:convergece_ssgd_pmf}, we give counterparts of these lemmas and Theorem~\ref{thm:l2_error_linear_ctd}
for vanilla SSGD with PMF representation.
The results imply that the step size in the algorithm should scale with $(1{-}\sqrt\gamma){/}K^2$ and the sample complexity grows with $K$.
In Appendix~\ref{subsection:experiment_dependency_of_step_size}, we verify through numerical experiments that as $K$ increases, to ensure convergence, the step size of the 
vanilla algorithm
indeed needs to decay at a rate of $K^{-2}$.
In contrast, the step size of our {\LCTD} does not need to be adjusted when $K$ increases.
Moreover, we find that {\LCTD} empirically consistently outperforms the vanilla algorithm under different $K$.
\end{remark}
\subsection{Key Steps in the Proofs}
Here we highlight some key steps in proving the above theoretical results.

\paragraph{Bounding the Spectral Norm of Expectation of Kronecker Products.}
In proving that the $\gL(\btheta)$ can be upper-bounded by $\|\bar{\bA}({\btheta}-\btheta^{\star})\|$ (Lemma~\ref{lem:translate_error_to_loss}), as well as in verifying the exponential stability condition (Lemma~\ref{lem:exponential_stable}), one of the most critical steps is to show
\begin{equation}\label{eq:biscuit_matrix_in_paper}
    \norm{ \EB_{s, r, s^\prime}\brk{\prn{\bC\tilde{\bG}(r)\bC^{-1}}\otimes\prn{\bSigma_{\bphi}^{-\frac{1}{2}}\bphi(s)\bphi(s^\prime)^{\top}\bSigma_{\bphi}^{-\frac{1}{2}}}} } \leq \sqrt{\gamma},\quad\forall r\in[0,1].
\end{equation}
By Lemma~\ref{lem:Spectra_of_ccgcc}, we have $\|\bC\tilde{\bG}(r)\bC^{{-}1}\|{\leq} \sqrt{\gamma}$ for any $r\in[0,1]$.
In addition, one can check that $ \|\EB_{s,s^\prime}[\bSigma_{\bphi}^{-1{/}2}\bphi(s)\bphi(s^\prime)^{\top}\bSigma_{\bphi}^{-1{/}2}]\|{\leq} 1$.
One may speculate that the property $\|\bB_1 {\otimes} \bB_2\| = \|\bB_1\|\|\bB_2\|$ (Lemma~\ref{lem:spectral_norm_of_KP}) is enough to get the desired conclusion.
However, the two matrices in the Kronecker product are not independent, preventing us from using this simple property to derive the conclusion. 
On the other hand, since we only have the upper bound $ \EB_{s,s^\prime}[\|\bSigma_{\bphi}^{-1{/}2}\bphi(s)\bphi(s^\prime)^{\top}\bSigma_{\bphi}^{-1{/}2}\|]{\leq}   d$, simply moving the expectation in Eqn.~\eqref{eq:biscuit_matrix_in_paper} outside the norm will lead to a loose $d\sqrt{\gamma}$ bound.
To resolve this problem, we leverage the fact that the second matrix is rank-$1$ and prove the following result. The proof can be found in the derivation following Eqn.~\eqref{eq:biscuit_matrix_bound}.
\begin{lemma}
    For any random matrix $\bY$ and random vectors $\bx$, $\bz$, suppose $\norm{\bY}\leq C_Y$ almost surely, $\EB\brk{\bx\bx^{\top}}\preccurlyeq C_x\bI_{d_1}$ and $\EB\brk{\bz\bz^{\top}}\preccurlyeq C_z\bI_{d_2}$ for some constants $C_Y, C_x, C_z>0$. Then it holds that
    \begin{equation*}
    \norm{ \EB\brk{\bY\otimes\prn{\bx\bz^{\top}} } } \leq C_Y\sqrt{C_xC_z}.
\end{equation*}
\end{lemma}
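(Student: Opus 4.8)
The plan is to bound the operator norm through its variational characterization $\norm{\bM} = \sup_{\norm{\bu}=\norm{\bv}=1} \bu^{\top}\bM\bv$, and to exploit the rank-one structure of $\bx\bz^{\top}$ to reshape the resulting scalar into a matrix quadratic form that isolates $\bY$. Writing $\bY\in\RB^{m\times n}$, $\bx\in\RB^{d_1}$, $\bz\in\RB^{d_2}$, and recalling that the $(i,j)$-block of $\bY\otimes(\bx\bz^{\top})$ equals $Y_{ij}\,\bx\bz^{\top}$, I would first partition the test vectors $\bu\in\RB^{md_1}$ and $\bv\in\RB^{nd_2}$ into blocks $\bu=(\bu_1^{\top},\dots,\bu_m^{\top})^{\top}$ and $\bv=(\bv_1^{\top},\dots,\bv_n^{\top})^{\top}$ of sizes $d_1$ and $d_2$, and collect them into matrices $\bU=[\bu_1,\dots,\bu_m]\in\RB^{d_1\times m}$ and $\bV=[\bv_1,\dots,\bv_n]\in\RB^{d_2\times n}$. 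By construction $\norm{\bU}_F=\norm{\bu}$ and $\norm{\bV}_F=\norm{\bv}$.

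A direct block computation then gives $\bu^{\top}\prn{\bY\otimes(\bx\bz^{\top})}\bv = \sum_{i,j} Y_{ij}(\bu_i^{\top}\bx)(\bz^{\top}\bv_j) = (\bU^{\top}\bx)^{\top}\bY(\bV^{\top}\bz) = \bx^{\top}\bU\bY\bV^{\top}\bz$. This is the crucial reformulation: it keeps $\bY$ sandwiched between the two vectors $\bU^{\top}\bx$ and $\bV^{\top}\bz$, so the almost-sure bound $\norm{\bY}\leq C_Y$ can be applied pointwise, without ever separating the (dependent) factors via a crude $\norm{\EB[\cdot]}\leq\EB\norm{\cdot}$ step. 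Taking expectations and using $\norm{\bY}\leq C_Y$ almost surely, I would bound $\abs{\bu^{\top}\EB[\bY\otimes(\bx\bz^{\top})]\bv} \leq \EB\brk{\abs{\bx^{\top}\bU\bY\bV^{\top}\bz}} \leq C_Y\,\EB\brk{\norm{\bU^{\top}\bx}\norm{\bV^{\top}\bz}}$, and then apply Cauchy--Schwarz in expectation to obtain $C_Y\,\EB^{1/2}\brk{\norm{\bU^{\top}\bx}^2}\,\EB^{1/2}\brk{\norm{\bV^{\top}\bz}^2}$.

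It remains to control the two second moments. Since $\norm{\bU^{\top}\bx}^2=\tr(\bU\bU^{\top}\bx\bx^{\top})$, I would write $\EB\brk{\norm{\bU^{\top}\bx}^2}=\tr(\bU\bU^{\top}\EB[\bx\bx^{\top}])\leq C_x\tr(\bU\bU^{\top})=C_x\norm{\bU}_F^2$, where the inequality uses $\EB[\bx\bx^{\top}]\preccurlyeq C_x\bI_{d_1}$ together with the monotonicity of $\bB\mapsto\tr(\bA\bB)$ in the PSD order for the fixed matrix $\bA=\bU\bU^{\top}\succeq 0$. The analogous bound gives $\EB\brk{\norm{\bV^{\top}\bz}^2}\leq C_z\norm{\bV}_F^2$. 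Combining with $\norm{\bU}_F=\norm{\bu}=1$ and $\norm{\bV}_F=\norm{\bv}=1$ yields $\abs{\bu^{\top}\EB[\bY\otimes(\bx\bz^{\top})]\bv}\leq C_Y\sqrt{C_xC_z}$, and taking the supremum over unit $\bu,\bv$ finishes the proof.

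The only subtlety---and the step worth stating carefully---is the reshaping identity in the second paragraph; once the scalar is rewritten as $\bx^{\top}\bU\bY\bV^{\top}\bz$, the remainder is a routine Cauchy--Schwarz-plus-trace argument. This is exactly what lets us avoid the lossy $d\sqrt{\gamma}$ bound noted after Eqn.~\eqref{eq:biscuit_matrix_in_paper}: the statistical dependence between $\bY$ and $(\bx,\bz)$ is harmless because $\bY$ enters only through its operator norm, while the randomness of $\bx,\bz$ is absorbed cleanly into the second-moment bounds.
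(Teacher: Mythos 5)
Your proof is correct and follows essentially the same route as the paper's own derivation (following Eqn.~\eqref{eq:biscuit_matrix_bound}): the variational characterization of the spectral norm, reshaping the block test vectors into matrices with matching Frobenius norms, rewriting the bilinear form as $\bx^{\top}\bU\bY\bV^{\top}\bz$ so that $\norm{\bY}\leq C_Y$ can be applied pointwise, and then Cauchy--Schwarz plus the trace identity to control the second moments. The only difference is cosmetic: the paper instantiates the argument directly with $\bY(r)=\bC\tilde{\bG}(r)\bC^{-1}$ and $\bx=\bSigma_{\bphi}^{-1/2}\bphi(s)$, $\bz=\bSigma_{\bphi}^{-1/2}\bphi(s^\prime)$, whereas you prove the clean abstract statement.
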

\begin{remark}[Matrix Representation of Categorical Projected Bellman operator]\label{remark:cgc}
The matrix $\bC\tilde{\bG}(r)\bC^{-1}$  also appears in \citet[Proposition~B.2]{rowland2024nearminimaxoptimal} as the matrix representation of the categorical projected Bellman operator $\bPi_K\gT^\pi$ of a specific one-state MDP. 
As a result, $\|\bC\tilde{\bG}(r)\bC^{{-}1}\|{\leq} \sqrt{\gamma}$ because $\bPi_K\gT^\pi$ is a $\sqrt{\gamma}$-contraction in $\prn{\sP, \ell_2}$.
Our Lemma~\ref{lem:Spectra_of_ccgcc} provides a new analysis by directly analyzing the matrix.
\end{remark}
\paragraph{Bounding the Norm of $\bb_t$.}
In proving Lemma~\ref{lem:upper_bound_error_quantities}, the most involved step is to upper-bound $\norm{\bb_t}$ (Eqn.~\eqref{eq:bt}).
To this end, we need to upper-bound the following term:
\begin{equation}\label{eq:bound_bt_in_paper}
    \begin{aligned}
    \frac{1}{K+1}\norm{\bC\prn{\sum_{j=0}^K\bg_j(r_t)-\bm{1}_K}},\quad\forall r\in[0,1].
    \end{aligned}
\end{equation}
Term~\eqref{eq:bound_bt_in_paper} is also related to 
the categorical projected Bellman operator $\bPi_K\gT^\pi$. 
Specifically, let $\nu{=}\frac{1}{K{+}1}\sum_{k=0}^K \delta_{x_k}$ be the discrete uniform distribution. 
One can show that Term~\eqref{eq:bound_bt_in_paper} equals 
\begin{equation*}
   \norm{\bC\prn{\bp_{(b_{r,\gamma})_\#(\nu)}-\bp_{\nu}}} = \iota_K^{-1/2}\ell_2\prn{\bPi_K(b_{r,\gamma})_\#(\nu),\nu}\leq\iota_K^{-1/2}\ell_2\prn{(b_{r,\gamma})_\#(\nu),\nu}\leq 3\sqrt{K}(1-\gamma),
\end{equation*}
where we used the fact that $\Pi_K$ is non-expansive and an upper bound for $\ell_2\prn{(b_{r,\gamma})_\#(\nu) ,\nu}$ when $K\geq(1-\gamma)^{-1}$ (Lemma~\ref{lem:norm_b_bound}).
The full proof can be found in the derivation following Eqn.~\eqref{eq:bt_upper_bound}.
\section{Conclusions}\label{Section:conclusion}
In this paper, we have bridged a critical theoretical gap in distributional reinforcement learning by establishing the non-asymptotic sample complexity of distributional TD learning with linear function approximation. 
Specifically, we have proposed {\LCTD}, which is derived by solving the linear-categorical projected Bellman equation.
By carefully analyzing the Bellman equation and using the exponential stability arguments, we have shown tight sample complexity bounds for the proposed algorithm.
Our finite-sample rates match the state-of-the-art sample complexity bounds for conventional TD learning.
These theoretical findings demonstrate that learning the full return distribution under linear function approximation can be statistically as easy as conventional TD learning for value function estimation.
Our numerical experiments have provided empirical validation of our theoretical results.
Finally, we have noted that it would be possible to improve the convergence rates by applying variance-reduction techniques or using polynomial-decaying step sizes, which we leave for future work.

\bibliography{ref}

\begin{thebibliography}{74}
\providecommand{\natexlab}[1]{#1}
\providecommand{\url}[1]{\texttt{#1}}
\expandafter\ifx\csname urlstyle\endcsname\relax
  \providecommand{\doi}[1]{doi: #1}\else
  \providecommand{\doi}{doi: \begingroup \urlstyle{rm}\Url}\fi

\bibitem[B{\"a}uerle and Ott(2011)]{bauerle2011markov}
N.~B{\"a}uerle and J.~Ott.
\newblock Markov decision processes with average-value-at-risk criteria.
\newblock \emph{Mathematical Methods of Operations Research}, 74:\penalty0 361--379, 2011.

\bibitem[Bellemare et~al.(2017)Bellemare, Dabney, and Munos]{bellemare2017distributional}
M.~G. Bellemare, W.~Dabney, and R.~Munos.
\newblock A distributional perspective on reinforcement learning.
\newblock In \emph{International conference on machine learning}, pages 449--458. PMLR, 2017.

\bibitem[Bellemare et~al.(2019)Bellemare, Le~Roux, Castro, and Moitra]{bellemare2019distributional}
M.~G. Bellemare, N.~Le~Roux, P.~S. Castro, and S.~Moitra.
\newblock Distributional reinforcement learning with linear function approximation.
\newblock In \emph{The 22nd International Conference on Artificial Intelligence and Statistics}, pages 2203--2211. PMLR, 2019.

\bibitem[Bellemare et~al.(2023)Bellemare, Dabney, and Rowland]{bdr2022}
M.~G. Bellemare, W.~Dabney, and M.~Rowland.
\newblock \emph{Distributional Reinforcement Learning}.
\newblock {MIT} Press, 2023.
\newblock \url{http://www.distributional-rl.org}.

\bibitem[Bertsekas and Tsitsiklis(1995)]{bertsekas1995neuro}
D.~P. Bertsekas and J.~N. Tsitsiklis.
\newblock Neuro-dynamic programming: an overview.
\newblock In \emph{Proceedings of 1995 34th IEEE conference on decision and control}, volume~1, pages 560--564. IEEE, 1995.

\bibitem[Bhandari et~al.(2018)Bhandari, Russo, and Singal]{bhandari2018finite}
J.~Bhandari, D.~Russo, and R.~Singal.
\newblock A finite time analysis of temporal difference learning with linear function approximation.
\newblock In \emph{Conference on learning theory}, pages 1691--1692. PMLR, 2018.

\bibitem[B\"{o}ck and Heitzinger(2022)]{speedy}
M.~B\"{o}ck and C.~Heitzinger.
\newblock Speedy categorical distributional reinforcement learning and complexity analysis.
\newblock \emph{SIAM Journal on Mathematics of Data Science}, 4\penalty0 (2):\penalty0 675--693, 2022.
\newblock \doi{10.1137/20M1364436}.
\newblock URL \url{https://doi.org/10.1137/20M1364436}.

\bibitem[Bogachev(2007)]{bogachev2007measure}
V.~I. Bogachev.
\newblock \emph{Measure theory}, volume~1.
\newblock Springer, 2007.

\bibitem[Chen(2005)]{chen2005matrix}
K.~Chen.
\newblock \emph{Matrix preconditioning techniques and applications}.
\newblock Cambridge University Press, 2005.

\bibitem[Chen et~al.(2024)Chen, Maguluri, Shakkottai, and Shanmugam]{chen2024lyapunov}
Z.~Chen, S.~T. Maguluri, S.~Shakkottai, and K.~Shanmugam.
\newblock A lyapunov theory for finite-sample guarantees of markovian stochastic approximation.
\newblock \emph{Operations Research}, 72\penalty0 (4):\penalty0 1352--1367, 2024.

\bibitem[Chow and Ghavamzadeh(2014)]{chow2014algorithms}
Y.~Chow and M.~Ghavamzadeh.
\newblock Algorithms for cvar optimization in mdps.
\newblock \emph{Advances in neural information processing systems}, 27, 2014.

\bibitem[Dabney et~al.(2018{\natexlab{a}})Dabney, Ostrovski, Silver, and Munos]{dabney2018implicit}
W.~Dabney, G.~Ostrovski, D.~Silver, and R.~Munos.
\newblock Implicit quantile networks for distributional reinforcement learning.
\newblock In \emph{International conference on machine learning}, pages 1096--1105. PMLR, 2018{\natexlab{a}}.

\bibitem[Dabney et~al.(2018{\natexlab{b}})Dabney, Rowland, Bellemare, and Munos]{dabney2018distributional}
W.~Dabney, M.~Rowland, M.~Bellemare, and R.~Munos.
\newblock Distributional reinforcement learning with quantile regression.
\newblock In \emph{Proceedings of the AAAI conference on artificial intelligence}, volume~32, 2018{\natexlab{b}}.

\bibitem[Dalal et~al.(2018)Dalal, Sz{\"o}r{\'e}nyi, Thoppe, and Mannor]{dalal2018finite}
G.~Dalal, B.~Sz{\"o}r{\'e}nyi, G.~Thoppe, and S.~Mannor.
\newblock Finite sample analyses for td (0) with function approximation.
\newblock In \emph{Proceedings of the AAAI Conference on Artificial Intelligence}, volume~32, 2018.

\bibitem[Duan and Wainwright(2023)]{duan2023finite}
Y.~Duan and M.~J. Wainwright.
\newblock A finite-sample analysis of multi-step temporal difference estimates.
\newblock In \emph{Learning for Dynamics and Control Conference}, pages 612--624. PMLR, 2023.

\bibitem[Durmus et~al.(2024)Durmus, Moulines, Naumov, and Samsonov]{durmus2024finite}
A.~Durmus, E.~Moulines, A.~Naumov, and S.~Samsonov.
\newblock Finite-time high-probability bounds for polyak--ruppert averaged iterates of linear stochastic approximation.
\newblock \emph{Mathematics of Operations Research}, 2024.

\bibitem[Freirich et~al.(2019)Freirich, Shimkin, Meir, and Tamar]{freirich2019distributional}
D.~Freirich, T.~Shimkin, R.~Meir, and A.~Tamar.
\newblock Distributional multivariate policy evaluation and exploration with the bellman gan.
\newblock In \emph{International Conference on Machine Learning}, pages 1983--1992. PMLR, 2019.

\bibitem[Godsil(1985)]{godsil1985inverses}
C.~D. Godsil.
\newblock Inverses of trees.
\newblock \emph{Combinatorica}, 5:\penalty0 33--39, 1985.

\bibitem[Horn and Johnson(1994)]{horn1994topics}
R.~A. Horn and C.~R. Johnson.
\newblock \emph{Topics in matrix analysis}.
\newblock Cambridge university press, 1994.

\bibitem[Huo et~al.(2023)Huo, Chen, and Xie]{huo2023bias}
D.~Huo, Y.~Chen, and Q.~Xie.
\newblock Bias and extrapolation in markovian linear stochastic approximation with constant stepsizes.
\newblock In \emph{Abstract Proceedings of the 2023 ACM SIGMETRICS International Conference on Measurement and Modeling of Computer Systems}, pages 81--82, 2023.

\bibitem[Kastner et~al.(2023)Kastner, Erdogdu, and Farahmand]{kastner2023distributional}
T.~Kastner, M.~A. Erdogdu, and A.-m. Farahmand.
\newblock Distributional model equivalence for risk-sensitive reinforcement learning.
\newblock \emph{Advances in Neural Information Processing Systems}, 36:\penalty0 56531--56552, 2023.

\bibitem[Lai(2003)]{lai2003stochastic}
T.~L. Lai.
\newblock Stochastic approximation.
\newblock \emph{The Annals of Statistics}, 31\penalty0 (2):\penalty0 391--406, 2003.

\bibitem[Lakshminarayanan and Szepesvari(2018)]{lakshminarayanan2018linear}
C.~Lakshminarayanan and C.~Szepesvari.
\newblock Linear stochastic approximation: How far does constant step-size and iterate averaging go?
\newblock In \emph{International conference on artificial intelligence and statistics}, pages 1347--1355. PMLR, 2018.

\bibitem[Lancaster and Farahat(1972)]{lancaster1972norms}
P.~Lancaster and H.~K. Farahat.
\newblock Norms on direct sums and tensor products.
\newblock \emph{mathematics of computation}, 26\penalty0 (118):\penalty0 401--414, 1972.

\bibitem[Li et~al.(2024{\natexlab{a}})Li, Cai, Chen, Wei, and Chi]{li2024q}
G.~Li, C.~Cai, Y.~Chen, Y.~Wei, and Y.~Chi.
\newblock Is q-learning minimax optimal? a tight sample complexity analysis.
\newblock \emph{Operations Research}, 72\penalty0 (1):\penalty0 222--236, 2024{\natexlab{a}}.

\bibitem[Li et~al.(2024{\natexlab{b}})Li, Wu, Chi, Ma, Rinaldo, and Wei]{li2024high}
G.~Li, W.~Wu, Y.~Chi, C.~Ma, A.~Rinaldo, and Y.~Wei.
\newblock High-probability sample complexities for policy evaluation with linear function approximation.
\newblock \emph{IEEE Transactions on Information Theory}, 2024{\natexlab{b}}.

\bibitem[Li et~al.(2023{\natexlab{a}})Li, Lan, and Pananjady]{li2023accelerated}
T.~Li, G.~Lan, and A.~Pananjady.
\newblock Accelerated and instance-optimal policy evaluation with linear function approximation.
\newblock \emph{SIAM Journal on Mathematics of Data Science}, 5\penalty0 (1):\penalty0 174--200, 2023{\natexlab{a}}.

\bibitem[Li et~al.(2023{\natexlab{b}})Li, Liang, and Zhang]{li2023online}
X.~Li, J.~Liang, and Z.~Zhang.
\newblock Online statistical inference for nonlinear stochastic approximation with markovian data.
\newblock \emph{arXiv preprint arXiv:2302.07690}, 2023{\natexlab{b}}.

\bibitem[Li(2017)]{li2017preconditioned}
X.-L. Li.
\newblock Preconditioned stochastic gradient descent.
\newblock \emph{IEEE transactions on neural networks and learning systems}, 29\penalty0 (5):\penalty0 1454--1466, 2017.

\bibitem[Lim and Malik(2022)]{lim2022distributional}
S.~H. Lim and I.~Malik.
\newblock Distributional reinforcement learning for risk-sensitive policies.
\newblock \emph{Advances in Neural Information Processing Systems}, 35:\penalty0 30977--30989, 2022.

\bibitem[Luo et~al.(2022)Luo, Liu, Duan, Schulte, and Poupart]{luo2022distributional}
Y.~Luo, G.~Liu, H.~Duan, O.~Schulte, and P.~Poupart.
\newblock Distributional reinforcement learning with monotonic splines.
\newblock In \emph{International Conference on Learning Representations}, 2022.
\newblock URL \url{https://openreview.net/forum?id=C8Ltz08PtBp}.

\bibitem[Lyle et~al.(2019)Lyle, Bellemare, and Castro]{lyle2019comparative}
C.~Lyle, M.~G. Bellemare, and P.~S. Castro.
\newblock A comparative analysis of expected and distributional reinforcement learning.
\newblock In \emph{Proceedings of the AAAI Conference on Artificial Intelligence}, volume~33, pages 4504--4511, 2019.

\bibitem[Marthe et~al.(2023)Marthe, Garivier, and Vernade]{marthe2023beyond}
A.~Marthe, A.~Garivier, and C.~Vernade.
\newblock Beyond average return in markov decision processes.
\newblock In \emph{Thirty-seventh Conference on Neural Information Processing Systems}, 2023.
\newblock URL \url{https://openreview.net/forum?id=mgNu8nDFwa}.

\bibitem[Meyn and Tweedie(2012)]{meyn2012markov}
S.~P. Meyn and R.~L. Tweedie.
\newblock \emph{Markov chains and stochastic stability}.
\newblock Springer Science \& Business Media, 2012.

\bibitem[Moghimi and Ku(2025)]{moghimi2025beyond}
M.~Moghimi and H.~Ku.
\newblock Beyond cvar: Leveraging static spectral risk measures for enhanced decision-making in distributional reinforcement learning.
\newblock \emph{arXiv preprint arXiv:2501.02087}, 2025.

\bibitem[Morimura et~al.(2010)Morimura, Sugiyama, Kashima, Hachiya, and Tanaka]{morimura2010nonparametric}
T.~Morimura, M.~Sugiyama, H.~Kashima, H.~Hachiya, and T.~Tanaka.
\newblock Nonparametric return distribution approximation for reinforcement learning.
\newblock In \emph{Proceedings of the 27th International Conference on Machine Learning (ICML-10)}, pages 799--806, 2010.

\bibitem[Mou et~al.(2020)Mou, Li, Wainwright, Bartlett, and Jordan]{mou2020linear}
W.~Mou, C.~J. Li, M.~J. Wainwright, P.~L. Bartlett, and M.~I. Jordan.
\newblock On linear stochastic approximation: Fine-grained polyak-ruppert and non-asymptotic concentration.
\newblock In \emph{Conference on Learning Theory}, pages 2947--2997. PMLR, 2020.

\bibitem[Mou et~al.(2022)Mou, Pananjady, Wainwright, and Bartlett]{mou2022optimal}
W.~Mou, A.~Pananjady, M.~Wainwright, and P.~Bartlett.
\newblock Optimal and instance-dependent guarantees for markovian linear stochastic approximation.
\newblock In \emph{Conference on Learning Theory}, pages 2060--2061. PMLR, 2022.

\bibitem[Nguyen-Tang et~al.(2021)Nguyen-Tang, Gupta, and Venkatesh]{nguyen2021distributional}
T.~Nguyen-Tang, S.~Gupta, and S.~Venkatesh.
\newblock Distributional reinforcement learning via moment matching.
\newblock In \emph{Proceedings of the AAAI Conference on Artificial Intelligence}, volume~35, pages 9144--9152, 2021.

\bibitem[Noorani et~al.(2023)Noorani, Mavridis, and Baras]{noorani2023exponential}
E.~Noorani, C.~N. Mavridis, and J.~S. Baras.
\newblock Exponential td learning: A risk-sensitive actor-critic reinforcement learning algorithm.
\newblock In \emph{2023 American Control Conference (ACC)}, pages 4104--4109. IEEE, 2023.

\bibitem[Patil et~al.(2023)Patil, Prashanth, Nagaraj, and Precup]{patil2023finite}
G.~Patil, L.~Prashanth, D.~Nagaraj, and D.~Precup.
\newblock Finite time analysis of temporal difference learning with linear function approximation: Tail averaging and regularisation.
\newblock In \emph{International Conference on Artificial Intelligence and Statistics}, pages 5438--5448. PMLR, 2023.

\bibitem[Paulin(2015)]{paulin2015concentration}
D.~Paulin.
\newblock Concentration inequalities for markov chains by marton couplings and spectral methods.
\newblock \emph{Electronic journal of probability}, 20:\penalty0 79, 2015.

\bibitem[Peng et~al.(2024)Peng, Zhang, and Zhang]{peng2024statistical}
Y.~Peng, L.~Zhang, and Z.~Zhang.
\newblock Statistical efficiency of distributional temporal difference learning.
\newblock In \emph{The Thirty-eighth Annual Conference on Neural Information Processing Systems}, 2024.
\newblock URL \url{https://openreview.net/forum?id=eWUM5hRYgH}.

\bibitem[Pires et~al.(2025)Pires, Rowland, Borsa, Guo, Khetarpal, Barreto, Abel, Munos, and Dabney]{pires2025optimizing}
B.~{\'A}. Pires, M.~Rowland, D.~Borsa, Z.~D. Guo, K.~Khetarpal, A.~Barreto, D.~Abel, R.~Munos, and W.~Dabney.
\newblock Optimizing return distributions with distributional dynamic programming.
\newblock \emph{arXiv preprint arXiv:2501.13028}, 2025.

\bibitem[Polyak and Juditsky(1992)]{polyak1992acceleration}
B.~T. Polyak and A.~B. Juditsky.
\newblock Acceleration of stochastic approximation by averaging.
\newblock \emph{SIAM journal on control and optimization}, 30\penalty0 (4):\penalty0 838--855, 1992.

\bibitem[Qu et~al.(2019)Qu, Mannor, and Xu]{pmlr-v97-qu19b}
C.~Qu, S.~Mannor, and H.~Xu.
\newblock Nonlinear distributional gradient temporal-difference learning.
\newblock In K.~Chaudhuri and R.~Salakhutdinov, editors, \emph{Proceedings of the 36th International Conference on Machine Learning}, volume~97 of \emph{Proceedings of Machine Learning Research}, pages 5251--5260. PMLR, 09--15 Jun 2019.
\newblock URL \url{https://proceedings.mlr.press/v97/qu19b.html}.

\bibitem[Robbins and Monro(1951)]{robbins1951stochastic}
H.~Robbins and S.~Monro.
\newblock A stochastic approximation method.
\newblock \emph{The Annals of Mathematical Statistics}, pages 400--407, 1951.

\bibitem[Rowland et~al.(2018)Rowland, Bellemare, Dabney, Munos, and Teh]{rowland2018analysis}
M.~Rowland, M.~Bellemare, W.~Dabney, R.~Munos, and Y.~W. Teh.
\newblock An analysis of categorical distributional reinforcement learning.
\newblock In \emph{International Conference on Artificial Intelligence and Statistics}, pages 29--37. PMLR, 2018.

\bibitem[Rowland et~al.(2023)Rowland, Tang, Lyle, Munos, Bellemare, and Dabney]{rowland2023statistical}
M.~Rowland, Y.~Tang, C.~Lyle, R.~Munos, M.~G. Bellemare, and W.~Dabney.
\newblock The statistical benefits of quantile temporal-difference learning for value estimation.
\newblock In \emph{International Conference on Machine Learning}, pages 29210--29231. PMLR, 2023.

\bibitem[Rowland et~al.(2024{\natexlab{a}})Rowland, Munos, Azar, Tang, Ostrovski, Harutyunyan, Tuyls, Bellemare, and Dabney]{rowland2024analysis}
M.~Rowland, R.~Munos, M.~G. Azar, Y.~Tang, G.~Ostrovski, A.~Harutyunyan, K.~Tuyls, M.~G. Bellemare, and W.~Dabney.
\newblock An analysis of quantile temporal-difference learning.
\newblock \emph{Journal of Machine Learning Research}, 25:\penalty0 1--47, 2024{\natexlab{a}}.

\bibitem[Rowland et~al.(2024{\natexlab{b}})Rowland, Wenliang, Munos, Lyle, Tang, and Dabney]{rowland2024nearminimaxoptimal}
M.~Rowland, L.~K. Wenliang, R.~Munos, C.~Lyle, Y.~Tang, and W.~Dabney.
\newblock Near-minimax-optimal distributional reinforcement learning with a generative model.
\newblock In \emph{The Thirty-eighth Annual Conference on Neural Information Processing Systems}, 2024{\natexlab{b}}.
\newblock URL \url{https://openreview.net/forum?id=JXKbf1d4ib}.

\bibitem[Ruppert(1988)]{ruppert1988efficient}
D.~Ruppert.
\newblock Efficient estimations from a slowly convergent robbins-monro process.
\newblock Technical report, Cornell University Operations Research and Industrial Engineering, 1988.

\bibitem[Samsonov et~al.(2024{\natexlab{a}})Samsonov, Moulines, Shao, Zhang, and Naumov]{samsonov2024gaussian}
S.~Samsonov, E.~Moulines, Q.-M. Shao, Z.-S. Zhang, and A.~Naumov.
\newblock Gaussian approximation and multiplier bootstrap for polyak-ruppert averaged linear stochastic approximation with applications to {TD} learning.
\newblock In \emph{The Thirty-eighth Annual Conference on Neural Information Processing Systems}, 2024{\natexlab{a}}.
\newblock URL \url{https://openreview.net/forum?id=S0Ci1AsJL5}.

\bibitem[Samsonov et~al.(2024{\natexlab{b}})Samsonov, Tiapkin, Naumov, and Moulines]{samsonov2024improved}
S.~Samsonov, D.~Tiapkin, A.~Naumov, and E.~Moulines.
\newblock Improved high-probability bounds for the temporal difference learning algorithm via exponential stability.
\newblock In \emph{The Thirty Seventh Annual Conference on Learning Theory}, pages 4511--4547. PMLR, 2024{\natexlab{b}}.

\bibitem[Serre(2002)]{serre2002matrices}
D.~Serre.
\newblock \emph{Matrices: Theory and Applications}.
\newblock Graduate texts in mathematics. Springer, 2002.
\newblock ISBN 9780387954608.
\newblock URL \url{https://books.google.com.hk/books?id=RDnUIFYgkrUC}.

\bibitem[Srikant and Ying(2019)]{srikant2019finite}
R.~Srikant and L.~Ying.
\newblock Finite-time error bounds for linear stochastic approximation andtd learning.
\newblock In \emph{Conference on Learning Theory}, pages 2803--2830. PMLR, 2019.

\bibitem[Sun et~al.(2024)Sun, Zhao, Liu, Jiang, and Kong]{sun2024distributional}
K.~Sun, Y.~Zhao, W.~Liu, B.~Jiang, and L.~Kong.
\newblock Distributional reinforcement learning with regularized wasserstein loss.
\newblock In \emph{The Thirty-eighth Annual Conference on Neural Information Processing Systems}, 2024.
\newblock URL \url{https://openreview.net/forum?id=CiEynTpF28}.

\bibitem[Sutton(1988)]{sutton1988learning}
R.~S. Sutton.
\newblock Learning to predict by the methods of temporal differences.
\newblock \emph{Machine learning}, 3:\penalty0 9--44, 1988.

\bibitem[Tang et~al.(2022)Tang, Munos, Rowland, Avila~Pires, Dabney, and Bellemare]{tang2022nature}
Y.~Tang, R.~Munos, M.~Rowland, B.~Avila~Pires, W.~Dabney, and M.~Bellemare.
\newblock The nature of temporal difference errors in multi-step distributional reinforcement learning.
\newblock \emph{Advances in Neural Information Processing Systems}, 35:\penalty0 30265--30276, 2022.

\bibitem[Tang et~al.(2024)Tang, Rowland, Munos, Pires, and Dabney]{tang2024off}
Y.~Tang, M.~Rowland, R.~Munos, B.~{\'A}. Pires, and W.~Dabney.
\newblock Off-policy distributional q ($lambda $): Distributional rl without importance sampling.
\newblock \emph{arXiv preprint arXiv:2402.05766}, 2024.

\bibitem[Tsitsiklis and Van~Roy(1996)]{tsitsiklis1996analysis}
J.~Tsitsiklis and B.~Van~Roy.
\newblock Analysis of temporal-diffference learning with function approximation.
\newblock \emph{Advances in neural information processing systems}, 9, 1996.

\bibitem[Vershynin(2018)]{vershynin_2018}
R.~Vershynin.
\newblock \emph{High-Dimensional Probability: An Introduction with Applications in Data Science}.
\newblock Cambridge Series in Statistical and Probabilistic Mathematics. Cambridge University Press, 2018.
\newblock \doi{10.1017/9781108231596}.

\bibitem[Wang et~al.(2023)Wang, Zhou, Wu, Kallus, and Sun]{NEURIPS2023_06fc38f5}
K.~Wang, K.~Zhou, R.~Wu, N.~Kallus, and W.~Sun.
\newblock The benefits of being distributional: Small-loss bounds for reinforcement learning.
\newblock In A.~Oh, T.~Naumann, A.~Globerson, K.~Saenko, M.~Hardt, and S.~Levine, editors, \emph{Advances in Neural Information Processing Systems}, volume~36, pages 2275--2312. Curran Associates, Inc., 2023.
\newblock URL \url{https://proceedings.neurips.cc/paper_files/paper/2023/file/06fc38f5c21ae66ef955e28b7a78ece5-Paper-Conference.pdf}.

\bibitem[Wang et~al.(2024)Wang, Oertell, Agarwal, Kallus, and Sun]{wang2024more}
K.~Wang, O.~Oertell, A.~Agarwal, N.~Kallus, and W.~Sun.
\newblock More benefits of being distributional: Second-order bounds for reinforcement learning.
\newblock In \emph{Forty-first International Conference on Machine Learning}, 2024.
\newblock URL \url{https://openreview.net/forum?id=kZBCFQe1Ej}.

\bibitem[Wenliang et~al.(2024)Wenliang, Deletang, Aitchison, Hutter, Ruoss, Gretton, and Rowland]{wenliang2024distributional}
L.~K. Wenliang, G.~Deletang, M.~Aitchison, M.~Hutter, A.~Ruoss, A.~Gretton, and M.~Rowland.
\newblock Distributional bellman operators over mean embeddings.
\newblock In \emph{Forty-first International Conference on Machine Learning}, 2024.
\newblock URL \url{https://openreview.net/forum?id=j2pLfsBm4J}.

\bibitem[Wiltzer et~al.(2024{\natexlab{a}})Wiltzer, Farebrother, Gretton, and Rowland]{wiltzer2024foundations}
H.~Wiltzer, J.~Farebrother, A.~Gretton, and M.~Rowland.
\newblock Foundations of multivariate distributional reinforcement learning.
\newblock In \emph{The Thirty-eighth Annual Conference on Neural Information Processing Systems}, 2024{\natexlab{a}}.
\newblock URL \url{https://openreview.net/forum?id=aq3I5B6GLG}.

\bibitem[Wiltzer et~al.(2024{\natexlab{b}})Wiltzer, Farebrother, Gretton, Tang, Barreto, Dabney, Bellemare, and Rowland]{wiltzer2024dsm}
H.~Wiltzer, J.~Farebrother, A.~Gretton, Y.~Tang, A.~Barreto, W.~Dabney, M.~G. Bellemare, and M.~Rowland.
\newblock A distributional analogue to the successor representation.
\newblock In \emph{International Conference on Machine Learning (ICML)}, 2024{\natexlab{b}}.

\bibitem[Wu et~al.(2023)Wu, Uehara, and Sun]{pmlr-v202-wu23s}
R.~Wu, M.~Uehara, and W.~Sun.
\newblock Distributional offline policy evaluation with predictive error guarantees.
\newblock In A.~Krause, E.~Brunskill, K.~Cho, B.~Engelhardt, S.~Sabato, and J.~Scarlett, editors, \emph{Proceedings of the 40th International Conference on Machine Learning}, volume 202 of \emph{Proceedings of Machine Learning Research}, pages 37685--37712. PMLR, 23--29 Jul 2023.
\newblock URL \url{https://proceedings.mlr.press/v202/wu23s.html}.

\bibitem[Wu et~al.(2024)Wu, Li, Wei, and Rinaldo]{wu2024statistical}
W.~Wu, G.~Li, Y.~Wei, and A.~Rinaldo.
\newblock Statistical {I}nference for {T}emporal {D}ifference {L}earning with {L}inear {F}unction {A}pproximation.
\newblock \emph{arXiv preprint arXiv:2410.16106}, 2024.

\bibitem[Yang et~al.(2019)Yang, Zhao, Lin, Qin, Bian, and Liu]{yang2019fully}
D.~Yang, L.~Zhao, Z.~Lin, T.~Qin, J.~Bian, and T.-Y. Liu.
\newblock Fully parameterized quantile function for distributional reinforcement learning.
\newblock \emph{Advances in neural information processing systems}, 32, 2019.

\bibitem[Yue et~al.(2020)Yue, Wang, and Zhou]{yue2020implicit}
Y.~Yue, Z.~Wang, and M.~Zhou.
\newblock Implicit distributional reinforcement learning.
\newblock \emph{Advances in Neural Information Processing Systems}, 33:\penalty0 7135--7147, 2020.

\bibitem[Zhang and Ding(2013)]{zhang2013kronecker}
H.~Zhang and F.~Ding.
\newblock On the kronecker products and their applications.
\newblock \emph{Journal of Applied Mathematics}, 2013\penalty0 (1):\penalty0 296185, 2013.

\bibitem[Zhang et~al.(2025)Zhang, Peng, Liang, Yang, and Zhang]{zhang2023estimation}
L.~Zhang, Y.~Peng, J.~Liang, W.~Yang, and Z.~Zhang.
\newblock Estimation and {I}nference in {D}istributional {R}einforcement {L}earning.
\newblock \emph{The Annals of Statistics}, 2025.

\bibitem[Zhou et~al.(2021)Zhou, Zhu, Kuang, and Zhang]{ijcai2021p476}
F.~Zhou, Z.~Zhu, Q.~Kuang, and L.~Zhang.
\newblock Non-decreasing quantile function network with efficient exploration for distributional reinforcement learning.
\newblock In Z.-H. Zhou, editor, \emph{Proceedings of the Thirtieth International Joint Conference on Artificial Intelligence, {IJCAI-21}}, pages 3455--3461. International Joint Conferences on Artificial Intelligence Organization, 8 2021.
\newblock \doi{10.24963/ijcai.2021/476}.
\newblock URL \url{https://doi.org/10.24963/ijcai.2021/476}.
\newblock Main Track.

\end{thebibliography}
\bibliographystyle{abbrvnat}
\newpage

\appendix
\section{Kronecker Product}\label{Appendix_kronecker}
In this section, we will introduce some properties of Kronecker product used in our paper.
See \citet{zhang2013kronecker} for a detailed treatment of Kronecker product.

For any matrices $\bA\in\RB^{m\times n}$ and $\bB\in\RB^{p\times q}$, the Kronecker product $\bA\otimes\bB$ is an matrix in $\RB^{mp\times nq}$, defined as
\begin{equation*}
    \bA\otimes \bB = \begin{bmatrix} a_{11}\bB & a_{12}\bB & \cdots & a_{1n}\bB \\ a_{21}\bB & a_{22}\bB & \cdots & a_{2n}\bB \\ \vdots & \vdots & \ddots & \vdots \\ a_{m1}\bB & a_{m2}\bB & \cdots & a_{mn}\bB \end{bmatrix}.
\end{equation*}
\begin{lemma}\label{lem:basic_of_KP}
The Kronecker product is bilinear and associative.
Furthermore, for any matrices $\bB_1, \bB_2, \bB_3, \bB_4$ such that $\bB_1\bB_3$, $\bB_2\bB_4$ can be defined, it holds that
$\prn{\bB_1\otimes \bB_2}\prn{\bB_3\otimes \bB_4}=\prn{\bB_1\bB_3}\otimes\prn{\bB_2\bB_4}$ (mixed-product property).
\begin{proof}
See Basic properties and Theorem~3 in \citet{zhang2013kronecker}.
\end{proof}
\end{lemma}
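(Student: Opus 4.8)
The plan is to reduce all three claims to entrywise identities through a multi-index description of the Kronecker product. For $\bB_1\in\RB^{m\times n}$ and $\bB_2\in\RB^{p\times q}$, I would index the rows of $\bB_1\otimes\bB_2$ by a pair $(i,k)$ with $i\in[m]$, $k\in[p]$, flattened row-major to $p(i-1)+k$, and the columns by $(j,l)$ flattened to $q(j-1)+l$; the block definition displayed above then reads
\[
\prn{\bB_1\otimes\bB_2}_{(i,k),(j,l)}=(\bB_1)_{ij}\,(\bB_2)_{kl}.
\]
With this formula, bilinearity is immediate, since the right-hand side is separately linear in the entries of $\bB_1$ and of $\bB_2$; both distributive laws hold entrywise and hence as matrix identities. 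Associativity is equally direct: introducing a triple index, both $(\bB_1\otimes\bB_2)\otimes\bB_3$ and $\bB_1\otimes(\bB_2\otimes\bB_3)$ carry the entry $(\bB_1)_{ij}(\bB_2)_{kl}(\bB_3)_{rs}$, and the two row-major flattenings of the triple coincide, so the matrices agree.

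The only claim with genuine computational content is the mixed-product property, which I would establish next. Fixing $\bB_1,\bB_3$ and $\bB_2,\bB_4$ so that $\bB_1\bB_3$ and $\bB_2\bB_4$ are defined (which makes $\bB_1\otimes\bB_2$ and $\bB_3\otimes\bB_4$ conformable for multiplication), I would compute the product entrywise by summing over the flattened inner index $(a,b)$:
\[
\brk{\prn{\bB_1\otimes\bB_2}\prn{\bB_3\otimes\bB_4}}_{(i,k),(j,l)}=\sum_{a,b}(\bB_1)_{ia}(\bB_2)_{kb}(\bB_3)_{aj}(\bB_4)_{bl}.
\]
The crux is that this double sum factors along the two indices:
\[
\prn{\sum_{a}(\bB_1)_{ia}(\bB_3)_{aj}}\prn{\sum_{b}(\bB_2)_{kb}(\bB_4)_{bl}}=(\bB_1\bB_3)_{ij}\,(\bB_2\bB_4)_{kl},
\]
which is precisely the $(i,k),(j,l)$ entry of $(\bB_1\bB_3)\otimes(\bB_2\bB_4)$ under the same flattening, giving the identity.

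I expect the main obstacle to be bookkeeping rather than mathematics: one must fix a single row-major flattening convention and check that it is applied consistently to rows, columns, and the summed inner index, and that the block sizes line up so that $(a,b)$ ranges over exactly the shared dimension of the two products. Once the convention is pinned down, each of the three identities is a one-line entrywise verification, and the factorization of the double sum is the only place where conformability of $\bB_1\bB_3$ and $\bB_2\bB_4$ is genuinely used. Alternatively, one could simply cite the standard treatment in \citet{zhang2013kronecker}, but the entrywise argument is self-contained and elementary.
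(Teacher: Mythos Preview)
Your argument is correct. The entrywise multi-index computation is the standard elementary verification, and the factorization of the double sum is exactly the right move for the mixed-product property; the bookkeeping concerns you flag are real but minor.

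The paper's own ``proof'' is simply a citation to \citet{zhang2013kronecker} (Basic properties and Theorem~3), with no argument given. So your route is genuinely different in that you actually carry out the verification rather than defer to the literature. What you gain is self-containment: a reader need not chase a reference for a result that is, as you note, a one-line entrywise check once the indexing convention is fixed. What the paper gains is brevity, since this lemma is purely background and not where any of the paper's content lies. Either choice is defensible; yours is the more informative one, and you even anticipate the citation alternative in your final sentence.
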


\begin{lemma}\label{lem:vec_and_KP}
For any matrices $\bB_1, \bB_2, \bB_3$ such that $\bB_1\bB_2\bB_3$ can be defined, it holds that $\vect\prn{\bB_1\bB_2\bB_3}=\prn{\bB_3^{\top}\otimes\bB_1}\vect\prn{\bB_2}$.
\begin{proof}
See Lemma~4.3.1 in \citet[][]{horn1994topics}.
\end{proof}
\end{lemma}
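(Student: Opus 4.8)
The plan is to fix $\bB_1\in\RB^{m\times p}$, $\bB_2\in\RB^{p\times q}$, $\bB_3\in\RB^{q\times n}$ (the only mutually compatible shapes), so that both $\vect\prn{\bB_1\bB_2\bB_3}$ and $\prn{\bB_3^{\top}\otimes\bB_1}\vect\prn{\bB_2}$ live in $\RB^{mn}$, and then to exploit that, with $\bB_1$ and $\bB_3$ held fixed, \emph{both} sides of the claimed identity are linear in $\bB_2$: the left side because matrix multiplication and $\vect$ are each linear, and the right side because $\vect$ is linear and left-multiplication by the fixed matrix $\bB_3^{\top}\otimes\bB_1$ is linear. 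It therefore suffices to verify the identity on a basis of $\RB^{p\times q}$, for which I would take the elementary matrices $\be_i\be_j^{\top}$ with $\be_i\in\RB^p$ and $\be_j\in\RB^q$, and then extend by linearity to arbitrary $\bB_2$.

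The one auxiliary fact I must establish first is the rank-one vectorization rule $\vect\prn{\bu\bv^{\top}}=\bv\otimes\bu$ for column vectors $\bu,\bv$. This is immediate from the column-stacking definition of $\vect$: the $\ell$-th column of $\bu\bv^{\top}$ is $v_\ell\,\bu$, so stacking the columns gives $\prn{v_1\bu^{\top},\dots,v_n\bu^{\top}}^{\top}=\bv\otimes\bu$ by the block definition of the Kronecker product. With this in hand the basis case is a short computation. On the left, $\bB_1\prn{\be_i\be_j^{\top}}\bB_3=\prn{\bB_1\be_i}\prn{\be_j^{\top}\bB_3}=\prn{\bB_1\be_i}\prn{\bB_3^{\top}\be_j}^{\top}$ is rank one, so the rank-one rule gives its vectorization as $\prn{\bB_3^{\top}\be_j}\otimes\prn{\bB_1\be_i}$. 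On the right, the same rank-one rule yields $\vect\prn{\be_i\be_j^{\top}}=\be_j\otimes\be_i$, and then the mixed-product property (Lemma~\ref{lem:basic_of_KP}) gives $\prn{\bB_3^{\top}\otimes\bB_1}\prn{\be_j\otimes\be_i}=\prn{\bB_3^{\top}\be_j}\otimes\prn{\bB_1\be_i}$. The two expressions coincide, which settles the basis case, and linearity in $\bB_2$ then completes the proof.

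There is essentially no hard step here; the only thing to watch is index and shape bookkeeping, in particular that $\bB_3^{\top}\be_j$ is the (transpose of the) $j$-th row of $\bB_3$ and that the Kronecker factors must appear in the order $\bB_3^{\top}$ then $\bB_1$. As a sanity check, or as an alternative route avoiding the reduction to a basis, I would instead compute both sides blockwise: the $k$-th length-$m$ block of $\vect\prn{\bB_1\bB_2\bB_3}$ is its $k$-th column $\bB_1\bB_2\prn{\bB_3\be_k}$, while the $k$-th block of $\prn{\bB_3^{\top}\otimes\bB_1}\vect\prn{\bB_2}$ equals $\sum_{\ell}\prn{\bB_3}_{\ell k}\bB_1\,\bb_2(\ell)=\bB_1\bB_2\prn{\bB_3\be_k}$, where $\bb_2(\ell)$ is the $\ell$-th column of $\bB_2$; the two agree block by block. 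Both routes are elementary, so I would present the basis argument as the main proof for its brevity and its clean reuse of the already-available mixed-product property.
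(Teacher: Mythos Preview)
Your proof is correct. The paper does not actually prove this lemma at all; it simply cites \citet[Lemma~4.3.1]{horn1994topics}. What you have supplied is a genuinely self-contained argument: you reduce to the rank-one case via linearity in $\bB_2$, establish the auxiliary identity $\vect(\bu\bv^{\top})=\bv\otimes\bu$ directly from the column-stacking definition, and then close the basis case with the mixed-product property already recorded in Lemma~\ref{lem:basic_of_KP}. Your alternative blockwise computation is also correct and serves as a useful cross-check. Relative to the paper's bare citation, your approach buys complete self-containment and transparency about where the ordering $\bB_3^{\top}\otimes\bB_1$ (rather than $\bB_1\otimes\bB_3^{\top}$) comes from, at the modest cost of a few extra lines.
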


\begin{lemma}\label{lem:spectral_norm_of_KP}
For any matrices $\bB_1$ and $\bB_2$, it holds that
$\norm{\bB_1 \otimes \bB_2} = \norm{\bB_1}\norm{\bB_2}$, $\prn{\bB_1\otimes \bB_2}^{\top}=\bB_1^{\top}\otimes\bB_2^{\top}$.
Furthermore, if $\bB_1$ and $\bB_2$ are invertible/orthogonal/diagonal/symmetric/normal, $\bB_1\otimes \bB_2$ is also invertible/orthogonal/diagonal/symmetric/normal and $\prn{\bB_1\otimes \bB_2}^{-1}=\bB_1^{-1}\otimes\bB_2^{-1}$.
\begin{proof}
See Basic properties, Theorem~5 and Theorem~7 in \citet{zhang2013kronecker}.
\end{proof}
\end{lemma}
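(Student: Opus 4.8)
The plan is to derive every claim from two elementary facts: the mixed-product property $(\bB_1\otimes\bB_2)(\bB_3\otimes\bB_4)=(\bB_1\bB_3)\otimes(\bB_2\bB_4)$ already recorded in Lemma~\ref{lem:basic_of_KP}, and the element-wise description of the Kronecker product, in which the $((i,k),(j,l))$ entry of $\bB_1\otimes\bB_2$ equals $(\bB_1)_{ij}(\bB_2)_{kl}$. First I would establish the transpose identity $(\bB_1\otimes\bB_2)^{\top}=\bB_1^{\top}\otimes\bB_2^{\top}$ directly from the element-wise description: transposing swaps the row multi-index $(i,k)$ with the column multi-index $(j,l)$, turning the entry $(\bB_1)_{ij}(\bB_2)_{kl}$ into $(\bB_1^{\top})_{ji}(\bB_2^{\top})_{lk}$, which is exactly the corresponding entry of $\bB_1^{\top}\otimes\bB_2^{\top}$. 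Invertibility and the inverse formula then follow immediately from the mixed-product property: when $\bB_1,\bB_2$ are invertible, $(\bB_1\otimes\bB_2)(\bB_1^{-1}\otimes\bB_2^{-1})=(\bB_1\bB_1^{-1})\otimes(\bB_2\bB_2^{-1})=\bI\otimes\bI=\bI$, so $\bB_1\otimes\bB_2$ is invertible with inverse $\bB_1^{-1}\otimes\bB_2^{-1}$.

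For the spectral-norm identity I would pass through the eigenstructure of Kronecker products. The key observation, again from the mixed-product property, is that if $\bB_1\bu=\lambda\bu$ and $\bB_2\bv=\mu\bv$ then $(\bB_1\otimes\bB_2)(\bu\otimes\bv)=(\bB_1\bu)\otimes(\bB_2\bv)=\lambda\mu\,(\bu\otimes\bv)$, so $\bu\otimes\bv$ is an eigenvector with eigenvalue $\lambda\mu$; letting $\bu,\bv$ range over eigenbases shows the eigenvalues of $\bB_1\otimes\bB_2$ are exactly the products of eigenvalues. Applying this to the positive semidefinite matrix $(\bB_1\otimes\bB_2)^{\top}(\bB_1\otimes\bB_2)=(\bB_1^{\top}\bB_1)\otimes(\bB_2^{\top}\bB_2)$ —where I have already used the transpose identity and the mixed-product property— its eigenvalues are the products $\sigma_i(\bB_1)^2\sigma_j(\bB_2)^2$ of the squared singular values. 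The largest is $\sigma_{\max}(\bB_1)^2\sigma_{\max}(\bB_2)^2$, and taking square roots yields $\norm{\bB_1\otimes\bB_2}=\norm{\bB_1}\norm{\bB_2}$.

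Finally, the structural preservations fall out as corollaries. Symmetry is immediate from the transpose identity. Orthogonality follows from $(\bB_1\otimes\bB_2)^{\top}(\bB_1\otimes\bB_2)=(\bB_1^{\top}\bB_1)\otimes(\bB_2^{\top}\bB_2)=\bI\otimes\bI=\bI$ whenever both factors are orthogonal; diagonality is read off the block form, since each off-diagonal block $(\bB_1)_{ij}\bB_2$ with $i\neq j$ vanishes and the diagonal blocks $(\bB_1)_{ii}\bB_2$ are themselves diagonal; and normality follows by comparing $(\bB_1\otimes\bB_2)^{\top}(\bB_1\otimes\bB_2)=(\bB_1^{\top}\bB_1)\otimes(\bB_2^{\top}\bB_2)$ with $(\bB_1\otimes\bB_2)(\bB_1\otimes\bB_2)^{\top}=(\bB_1\bB_1^{\top})\otimes(\bB_2\bB_2^{\top})$, which agree precisely when each factor is normal. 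I expect no serious obstacle here; the only step that is not a one-line manipulation is the eigenvalue characterization underlying the spectral-norm identity, and even that is a clean consequence of the mixed-product property once one checks that the product eigenvectors $\bu\otimes\bv$ span the whole space.
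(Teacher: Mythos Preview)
Your proposal is correct. The paper does not actually prove this lemma; it simply cites \citet{zhang2013kronecker} for the basic properties together with their Theorems~5 and~7. You instead give a self-contained derivation from the mixed-product property (Lemma~\ref{lem:basic_of_KP}) and the entry-wise definition, which is exactly how those cited theorems are proved in the reference. The spectral-norm argument via the eigenstructure of $(\bB_1^{\top}\bB_1)\otimes(\bB_2^{\top}\bB_2)$ is the standard one; the only step you flag as needing care---that the tensors $\bu_i\otimes\bv_j$ of orthonormal eigenbases span the product space---follows from $(\bu_i\otimes\bv_j)^{\top}(\bu_{i'}\otimes\bv_{j'})=(\bu_i^{\top}\bu_{i'})(\bv_j^{\top}\bv_{j'})=\delta_{ii'}\delta_{jj'}$ and a dimension count. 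Your write-up is strictly more informative than the paper's citation and costs only a few lines.
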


\begin{lemma}\label{lem:spectra_of_PSD_KP}
For any $K, d\in\NB$ and PSD matrices $\bB_1, \bB_3\in\RB^{K\times K}, \bB_2, \bB_4\in\RB^{d\times d}$ with $\bB_1\preccurlyeq \bB_3$ and $\bB_2\preccurlyeq \bB_4$, it holds that $\bB_1 \otimes \bB_2$, $\bB_3 \otimes \bB_4$ are also PSD matrices, furthermore,
$\bB_1 \otimes \bB_2 \preccurlyeq \bB_3 \otimes \bB_4$.
\begin{proof}
Consider the spectral decomposition $\bB_i=\bQ_i\bD_i\bQ_i^{\top}$, for any $i\in [4]$, by Lemma~\ref{lem:basic_of_KP} and Lemma~\ref{lem:spectral_norm_of_KP}, we have
\begin{equation*}
    \prn{\bB_1\otimes \bB_2}=\prn{\bQ_1\otimes \bQ_2}\prn{\bD_1\otimes \bD_2}\prn{\bQ_1\otimes \bQ_2}^{\top}
\end{equation*}
and 
\begin{equation*}
    \prn{\bB_3\otimes \bB_4}=\prn{\bQ_3\otimes \bQ_4}\prn{\bD_3\otimes \bD_4}\prn{\bQ_3\otimes \bQ_4}^{\top}
\end{equation*}
are also spectral decomposition of $\prn{\bB_1\otimes \bB_2}$ and $\prn{\bB_3\otimes \bB_4}$ respectively.
It is easy to see that they are PSD.
Furthermore,
\begin{equation*}
\begin{aligned}
     \prn{\bB_3\otimes \bB_4}-\prn{\bB_1\otimes \bB_2}=&\brk{\prn{\bB_3\otimes \bB_4}-\prn{\bB_3\otimes \bB_2}}+\brk{\prn{\bB_3\otimes \bB_2}-\prn{\bB_1\otimes \bB_2}}  \\
     =&\brk{\bB_3\otimes\prn{ \bB_4-\bB_2}}+\brk{\prn{\bB_3-\bB_1}\otimes \bB_2} \\
     \succcurlyeq& \bm{0}.
\end{aligned}
\end{equation*}
\end{proof}
\end{lemma}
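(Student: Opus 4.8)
The plan is to handle the two assertions separately: first that a Kronecker product of PSD matrices is itself PSD, and then the order relation $\bB_1\otimes\bB_2\preccurlyeq\bB_3\otimes\bB_4$, which I would reduce to the first assertion via a telescoping argument.

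For the PSD claim, I would diagonalize each factor. Since each $\bB_i$ is PSD, write its spectral decomposition $\bB_i=\bQ_i\bD_i\bQ_i^{\top}$ with $\bQ_i$ orthogonal and $\bD_i$ diagonal with nonnegative entries. Applying the mixed-product property (Lemma~\ref{lem:basic_of_KP}) twice yields
\[
\bB_1\otimes\bB_2=\prn{\bQ_1\otimes\bQ_2}\prn{\bD_1\otimes\bD_2}\prn{\bQ_1\otimes\bQ_2}^{\top}.
\]
By Lemma~\ref{lem:spectral_norm_of_KP}, $\bQ_1\otimes\bQ_2$ is orthogonal and $\bD_1\otimes\bD_2$ is diagonal, and the diagonal entries of the latter are the pairwise products of the (nonnegative) eigenvalues of $\bB_1$ and $\bB_2$, hence nonnegative. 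This is therefore a genuine spectral decomposition of $\bB_1\otimes\bB_2$ exhibiting nonnegative eigenvalues, so $\bB_1\otimes\bB_2\succcurlyeq\bm{0}$; the identical argument gives $\bB_3\otimes\bB_4\succcurlyeq\bm{0}$.

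For the ordering, the key step is a telescoping identity that perturbs only one factor at a time. Using bilinearity of the Kronecker product (Lemma~\ref{lem:basic_of_KP}), I would insert the intermediate term $\bB_3\otimes\bB_2$ and write
\[
\bB_3\otimes\bB_4-\bB_1\otimes\bB_2=\bB_3\otimes\prn{\bB_4-\bB_2}+\prn{\bB_3-\bB_1}\otimes\bB_2.
\]
By hypothesis $\bB_4-\bB_2\succcurlyeq\bm{0}$ and $\bB_3-\bB_1\succcurlyeq\bm{0}$, while $\bB_3\succcurlyeq\bm{0}$ and $\bB_2\succcurlyeq\bm{0}$; thus each summand is a Kronecker product of two PSD matrices and is PSD by the first part. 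Since a sum of PSD matrices is PSD, the difference is PSD, which is exactly $\bB_1\otimes\bB_2\preccurlyeq\bB_3\otimes\bB_4$.

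I do not anticipate a serious obstacle, as the statement is elementary; the one point that warrants care is the telescoping step, where the intermediate term $\bB_3\otimes\bB_2$ must be inserted so that each perturbation affects a single factor and is itself PSD. A tempting shortcut---verifying the quadratic-form inequality only on simple tensors $\bu\otimes\bv$---would not suffice, since a general vector in $\RB^{Kd}$ is not a simple tensor; the spectral-plus-telescoping route avoids this gap and is the cleanest path.
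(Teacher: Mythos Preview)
Your proposal is correct and follows essentially the same approach as the paper: spectral decomposition of each factor to establish PSD-ness of the Kronecker product, followed by the identical telescoping identity $\bB_3\otimes\bB_4-\bB_1\otimes\bB_2=\bB_3\otimes(\bB_4-\bB_2)+(\bB_3-\bB_1)\otimes\bB_2$ to reduce the ordering to the PSD claim.
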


\begin{lemma}\label{lem:vector_outer_KP}
    For any $K,d,d_1,d_2\in\NB$, vectors $\bu, \bv\in\RB^d$ and matrices $\bB_1\in\RB^{K\times d_1}$, $\bB_2\in\RB^{d_2\times K}$, $\bB_3\in\RB^{K\times K}$, it holds that
    \begin{equation*}
    \prn{\bI_K\otimes\bu}\bB_1=\bB_1\otimes\bu,
\end{equation*}
\begin{equation*}
    \bB_2\prn{\bI_K\otimes\bv}^{\top}=\bB_2\otimes\bv^{\top},
\end{equation*}
\begin{equation*}
    \prn{\bI_K\otimes\bu}\bB_3\prn{\bI_K\otimes\bv}^{\top}=\bB_3\otimes\prn{\bu\bv^{\top}}.
\end{equation*}
Furthermore, for any matrix $\bB_4\in\RB^{d_1\times d_2}$, we have
    \begin{equation*}
    \prn{\bB_1\otimes\bu}\bB_4=\prn{\bB_1\bB_4}\otimes \bu.
\end{equation*}
\end{lemma}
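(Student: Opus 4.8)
The plan is to reduce all four identities to the mixed-product property of the Kronecker product (Lemma~\ref{lem:basic_of_KP}), by viewing each ``loose'' vector or matrix factor as a Kronecker product with the $1\times 1$ scalar identity. Throughout I write $1$ for the $1\times 1$ identity, so that $\bB=\bB\otimes 1=1\otimes\bB$ for any matrix $\bB$, and in particular $\bu=\bu\otimes 1$ for any column vector $\bu$. With this device every expression in the statement becomes a product of two genuine Kronecker products, to which Lemma~\ref{lem:basic_of_KP} applies directly.

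First I would establish $\prn{\bI_K\otimes\bu}\bB_1=\bB_1\otimes\bu$. Writing $\bB_1=\bB_1\otimes 1$ and applying the mixed-product property to $\prn{\bI_K\otimes\bu}\prn{\bB_1\otimes 1}$ yields $\prn{\bI_K\bB_1}\otimes\prn{\bu\cdot 1}=\bB_1\otimes\bu$; the factors are conformable because $\bI_K\otimes\bu\in\RB^{Kd\times K}$ and $\bB_1\otimes 1\in\RB^{K\times d_1}$, so the inner dimension $K$ matches. For the second identity I first use the transpose rule $\prn{\bI_K\otimes\bv}^{\top}=\bI_K\otimes\bv^{\top}$ (Lemma~\ref{lem:spectral_norm_of_KP}), then write $\bB_2=\bB_2\otimes 1$ and invoke the mixed-product property again: $\prn{\bB_2\otimes 1}\prn{\bI_K\otimes\bv^{\top}}=\prn{\bB_2\bI_K}\otimes\prn{1\cdot\bv^{\top}}=\bB_2\otimes\bv^{\top}$.

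The third identity then follows by chaining the first two. By the first identity (with $\bB_1$ replaced by the square matrix $\bB_3$) we have $\prn{\bI_K\otimes\bu}\bB_3=\bB_3\otimes\bu$, and multiplying on the right by $\prn{\bI_K\otimes\bv}^{\top}=\bI_K\otimes\bv^{\top}$ gives, via the mixed-product property, $\prn{\bB_3\bI_K}\otimes\prn{\bu\bv^{\top}}=\bB_3\otimes\prn{\bu\bv^{\top}}$. Finally, the ``furthermore'' identity uses the same scalar-factor device: with $\bB_4=\bB_4\otimes 1$ one gets $\prn{\bB_1\otimes\bu}\prn{\bB_4\otimes 1}=\prn{\bB_1\bB_4}\otimes\prn{\bu\cdot 1}=\prn{\bB_1\bB_4}\otimes\bu$.

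There is no substantive obstacle in this lemma; it is purely computational. The only care required is bookkeeping of dimensions to ensure that every matrix product and every application of the mixed-product property is well defined---in particular that inserting the scalar factor $1$ makes the inner dimensions agree---and remembering to transpose the two Kronecker factors separately in those identities involving $\prn{\bI_K\otimes\bv}^{\top}$. All steps are immediate consequences of Lemma~\ref{lem:basic_of_KP} and Lemma~\ref{lem:spectral_norm_of_KP}.
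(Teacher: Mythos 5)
Your proof is correct, but it follows a genuinely different route from the paper's. The paper proves the first identity $\prn{\bI_K\otimes\bu}\bB_1=\bB_1\otimes\bu$ by brute force: it writes out $\bI_K\otimes\bu$ as a block matrix with copies of $\bu$ down the diagonal, multiplies against $\bB_1=(b_{ij})$, and verifies entry-by-entry that the result is $\bB_1\otimes\bu$. It then derives the remaining three identities from this base case --- the second by transposing (writing $\bB_2\prn{\bI_K\otimes\bv}^{\top}$ as the transpose of $\prn{\bI_K\otimes\bv}\bB_2^{\top}$ and applying the first identity), the third by chaining the first two together with associativity and the fact that $\bu\otimes\bv^{\top}=\bu\bv^{\top}$, and the fourth by applying the first identity twice. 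In particular, the paper never invokes the mixed-product property in this proof. You instead make the mixed-product property (Lemma~\ref{lem:basic_of_KP}) do all the work, via the observation that any loose factor can be written as a Kronecker product with the $1\times 1$ identity ($\bB_1=\bB_1\otimes 1$, $\bu=\bu\otimes 1$), after which each of the four identities is a one-line application of the same lemma. Your dimension checks are right in each case, so all applications are legitimate. What each approach buys: yours is shorter, uniform across all four identities, and avoids any entry-wise computation; the paper's is self-contained at the level of the base identity (it verifies it directly rather than leaning on another lemma) and makes the block structure of $\bI_K\otimes\bu$ explicit, which is arguably instructive given how often that matrix appears elsewhere in the paper's arguments.
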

\begin{proof}
Let $\bu=\prn{u_i}_{i=1}^d$ $\bB_1=\prn{b_{ij}}_{i,j=1}^K$, then
\begin{equation*}
    \begin{aligned}
        \prn{\bI_K\otimes\bu}\bB_1=&\begin{bmatrix}
\bu & \bm{0}_d & \cdots & \bm{0}_d & \bm{0}_d \\
\bm{0}_d & \bu & \cdots & \bm{0}_d & \bm{0}_d \\
\vdots & \vdots & \ddots & \vdots & \vdots \\
\bm{0}_d & \bm{0}_d & \cdots & \bu & \bm{0}_d \\
\bm{0}_d & \bm{0}_d & \cdots & \bm{0}_d & \bu
\end{bmatrix}\begin{bmatrix}
b_{11} & \cdots & b_{1K} \\
\vdots & \ddots & \vdots \\
b_{K1} & \cdots & b_{KK} 
\end{bmatrix}\\
=&\begin{bmatrix}
b_{11}u_1  & \cdots & b_{1K}u_1  \\
\vdots &\ddots & \vdots &\\
b_{11}u_d  & \cdots &  b_{1K}u_d\\
\vdots &\ddots & \vdots &\\
b_{K1}u_1  & \cdots &b_{KK} u_1  \\
\vdots &\ddots & \vdots &\\
b_{K1}u_d  & \cdots & b_{KK}u_d 
\end{bmatrix}\\
=&\begin{bmatrix}
b_{11}\bu & \cdots & b_{1K}\bu \\
\vdots & \ddots & \vdots \\
b_{K1}\bu & \cdots & b_{KK}\bu 
\end{bmatrix}\\
=& \bB_1\otimes u.
\end{aligned}
\end{equation*}
Hence
\begin{equation*}
    \begin{aligned}
\bB_2\prn{\bI_K\otimes\bv}^{\top}=&\brk{\prn{\bI_K\otimes\bv}\otimes \bB^{\top}_2}^{\top}=\brk{\bB^{\top}_2\otimes \bv}^{\top}=\bB_2\otimes\bv^{\top}.
\end{aligned}
\end{equation*}
And in the same way,
\begin{equation*}
    \begin{aligned}
\prn{\bI_K\otimes\bu}\bB_3\prn{\bI_K\otimes\bv}^{\top}=&\prn{\bB_3\otimes \bu}\otimes \bv^{\top}=\bB_3\otimes \prn{ \bu\otimes \bv^{\top}}=\bB_3\otimes\prn{ \bu \bv^{\top}}.
\end{aligned}
\end{equation*}
Furthermore,
\begin{equation*}
    \prn{\bB_1\otimes\bu}\bB_4=\brk{\prn{\bI_K\otimes\bu}\bB_1}\bB_4=\prn{\bI_K\otimes\bu}\prn{\bB_1\bB_4}=\prn{\bB_1\bB_4}\otimes \bu.
\end{equation*}
\end{proof}
\section{Related Work}\label{Appendix:related_work}
\paragraph{Distributional Reinforcement Learning.}
Distributional TD learning was first proposed in \citet{bellemare2017distributional}.
Following the distributional perspective in \citet{bellemare2017distributional}, \citet{pmlr-v97-qu19b} proposed a distributional version of the gradient TD learning algorithm,
\citet{tang2022nature} proposed a distributional version of multi-step TD learning, \citet{tang2024off} proposed a distributional version of off-policy Q($\lambda$) and TD($\lambda$) algorithms, and \citet{pmlr-v202-wu23s} proposed a distributional version of fitted Q evaluation to solve the distributional offline policy evaluation problem.
\citet{wiltzer2024dsm} proposed an approach for evaluating the return distributions for all policies simultaneously when the reward is deterministic or in the finite-horizon setting. 
\citet{wiltzer2024foundations} studied distributional policy evaluation in the multivariate reward setting and proposed corresponding TD learning algorithms.
Beyond the tabular setting, \citet{bellemare2019distributional,lyle2019comparative,bdr2022} proposed various distributional TD learning algorithms with linear function approximation under different parametrizations.

A series of recent studies have focused on the theoretical properties of distributional TD learning.
\citet{rowland2018analysis,speedy,zhang2023estimation,rowland2024analysis,rowland2024nearminimaxoptimal,peng2024statistical} analyzed the asymptotic and non-asymptotic convergence of distributional TD learning (or its model-based variants) in the tabular setting.
Among these works, \citet{rowland2024nearminimaxoptimal,peng2024statistical} established that in the tabular setting, learning the full return distribution is statistically as easy as learning its expectation in the model-based and model-free settings, respectively.
And \citet{bellemare2019distributional} provided an asymptotic convergence result for categorical TD learning with  linear function approximation.

Beyond the problem of distributional policy evaluation,
\citet{rowland2023statistical,NEURIPS2023_06fc38f5, wang2024more} showed that theoretically the classic value-based reinforcement learning could benefit from distributional reinforcement learning.
\citet{bauerle2011markov,chow2014algorithms,marthe2023beyond,noorani2023exponential,moghimi2025beyond,pires2025optimizing} considered optimizing statistical functionals of the return, and proposed algorithms to solve this harder problem.

\paragraph{Stochastic Approximation.}
Our {\LCTD} falls into the category of LSA. 
The classic TD learning, as one of the most classic LSA problems, has been extensively studied \citep{bertsekas1995neuro, tsitsiklis1996analysis, bhandari2018finite, dalal2018finite, patil2023finite,duan2023finite,li2024q,li2024high,samsonov2024gaussian, wu2024statistical}. 
Among these works, \citet{li2024high,samsonov2024improved} provided the tightest bounds for {\LTD} with constant step sizes, which is also considered in our paper.
While \citet{wu2024statistical} established the tightest bounds for {\LTD} with polynomial-decaying step sizes.

For general stochastic approximation problems, extensive works \citep{lakshminarayanan2018linear,srikant2019finite,mou2020linear,mou2022optimal,huo2023bias,li2023online,durmus2024finite,samsonov2024improved, chen2024lyapunov} have provided solid theoretical understandings.

\section{Omitted Results and Proofs in Section~\ref{Section:background}}
\subsection{Linear Projected Bellman Equation}\label{subsection:linear_projected_bellman_equation}
It is worth noting that, $\bPi_{\bphi}^{\pi}\colon \prn{\RB^\gS,\norm{\cdot}_{\mu_\pi}}\to\prn{\sV_\bphi,\norm{\cdot}_{\mu_\pi}}$ is an orthogonal projection.

We aim to derive Eqn.~\eqref{eq:linear_TD_equation}.
It is easy to check that, for any $\bV\in \RB^{\gS}$, $\bPi_{\bphi}^{\pi}\bV$ is uniquely give by $ \bV_{\tilde\bpsi}$ where
\begin{equation*}
    \tilde\bpsi=\bSigma_{\bphi}^{-1}\EB_{s\sim\mu_\pi}\brk{\bphi(s)V(s)}.
\end{equation*}
Hence, by the definition of Bellman operator (Eqn.~\eqref{eq:Bellman_equation}), $\bpsi^{\star}$ is the unique solution to the following system of linear equations for $\bpsi\in\RB^{d}$
\begin{equation*}
    \begin{aligned}
    \bpsi=&\bSigma_{\bphi}^{-1}\EB_{s\sim\mu_{\pi}}\brk{\bphi(s)\brk{\bT^\pi \bV_{\bpsi}}(s)}\\
    =&\bSigma_{\bphi}^{-1}\EB_{s\sim\mu_{\pi}}\brk{\bphi(s)\prn{\EB\brk{r_0\mid s_0=s }+\gamma\EB\brk{ \bphi(s_1)^{\top}\mid s_0=s }\bpsi}}\\
    =&\bSigma_{\bphi}^{-1}\EB_{s,s^\prime}\brk{\bphi(s)\bphi(s^\prime)^{\top}}\bpsi+\bSigma_{\bphi}^{-1}\EB_{s,r}\brk{\bphi(s)r},
    \end{aligned}
\end{equation*}
or equivalently,
\begin{equation*}
    \begin{aligned}
    \prn{\bSigma_{\bphi}-\gamma\EB_{s,s^\prime}\brk{\bphi(s)\bphi(s^\prime)^{\top}}}\bpsi=\EB_{s,r}\brk{\bphi(s)r}.
    \end{aligned}
\end{equation*}

\subsection{Convergence Results for Linear TD Learning}\label{subsection:convergence_linear_TD}
It is worthy noting that, {\LTD} is equivalent to the stochastic semi-gradient descent (SSGD) update.

In {\LTD}, our goal is to find a good estimator $\hat{\bpsi}$ such that $\norm{\bV_{\hat{\bpsi}}-\bV_{\bpsi^\star}}_{\mu_\pi}=\norm{\hat{\bpsi}-\bpsi^\star}_{\bSigma_{\bphi}}\leq \varepsilon$.
\cite{samsonov2024improved} considered the Polyak-Ruppert tail averaging $\bar{\bpsi}_{T}:=\prn{T/2+1}^{-1}\sum_{t=T/2}^T\bpsi_t$, and showed that in the generative model setting
with constant step size $\alpha\simeq (1-\gamma)\lambda_{\min}$, 
\begin{equation*}
   T=\wtilde{\gO}\prn{\frac{\norm{\bpsi^\star}^2_{\bSigma_{\bphi}}+1}{(1-\gamma)^2\lambda_{\min}}\prn{\frac{1}{\varepsilon^2}+\frac{1}{\lambda_{\min}}}}
\end{equation*}
is sufficient to guarantee that $\norm{\bV_{\bar{\bpsi}_{T}}-\bV_{\bpsi^\star}}_{\mu_\pi}\leq \varepsilon$.
They also provided sample complexity bounds when taking the instance-independent (\ie, not dependent on unknown quantity) 
step size, and in the Markovian setting.

\subsection{Categorical Parametrization is an Isometry}\label{appendix:PK_isometric}
\begin{proposition}\label{prop:PK_isometric}
The affine space $\prn{\sP^{\sgn}_K, \ell_2}$ is isometric with $\prn{\RB^{K}, \sqrt{\iota_K}\norm{\cdot}_{\bC^\top \bC}}$, in the sense that, for any $\nu_{\bp_1},\nu_{\bp_2}\in\sP^{\sgn}_K$, it holds that $\ell_2^2(\nu_{\bp_1},\nu_{\bp_2})=\iota_K\norm{\bp_1-\bp_2}^2_{\bC^{\top}\bC}$, where $\bC$ is defined in Eqn.~\eqref{eq:def_C}.
\end{proposition}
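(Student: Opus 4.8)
The plan is to compute the Cram\'er distance $\ell_2$ directly from the cumulative distribution functions and recognize the resulting quadratic form as $\iota_K\norm{\cdot}_{\bC^{\top}\bC}^2$. First I would record the CDF of a categorical measure $\nu_{\bp}=\sum_{k=0}^K p_k\delta_{x_k}$. Since $F_{\nu_{\bp}}(x)=\nu_{\bp}([0,x])=\sum_{k\colon x_k\leq x}p_k$, this is a step function equal to the partial sum $\sum_{k=0}^m p_k$ on each interval $[x_m,x_{m+1})$ for $m\in\{0,\ldots,K-1\}$, equal to $0$ for $x<0$, and equal to $\sum_{k=0}^K p_k=1$ for $x\geq x_K$.

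Next, given $\nu_{\bp_1},\nu_{\bp_2}\in\sP^{\sgn}_K$, I would set $\bm{q}:=\bp_1-\bp_2$ and examine the difference $\Delta F:=F_{\nu_{\bp_1}}-F_{\nu_{\bp_2}}$. By the previous step, $\Delta F$ equals $\sum_{k=0}^m q_k$ on $[x_m,x_{m+1})$, and it vanishes outside $[0,x_K)$: on $(-\infty,0)$ both CDFs are $0$, while on $[x_K,\infty)$ both equal $1$ and therefore cancel. This cancellation is exactly where the total-mass-one constraint $p_K=1-\sum_{k=0}^{K-1}p_k$ enters, and it is the only delicate point in the argument.

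Then I would evaluate the Cram\'er distance as the integral of $(\Delta F)^2$. Because each interval $[x_m,x_{m+1})$ has length $\iota_K$, this gives
\begin{equation*}
\ell_2^2(\nu_{\bp_1},\nu_{\bp_2})=\int_0^{x_K}(\Delta F(x))^2\,dx=\iota_K\sum_{m=0}^{K-1}\Big(\sum_{k=0}^m q_k\Big)^2.
\end{equation*}
Finally, I would identify the vector of partial sums with $\bC\bm{q}$. Indexing the rows and columns of $\bC=[\ind\{i\geq j\}]_{i,j\in[K]}$ by $\{1,\ldots,K\}$ and matching row $m+1$ with the partial sum up to the $0$-indexed entry $m$, the definition of $\bC$ yields $(\bC\bm{q})_{m+1}=\sum_{k=0}^m q_k$. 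Hence the sum above equals $\norm{\bC\bm{q}}^2=\bm{q}^{\top}\bC^{\top}\bC\bm{q}=\norm{\bm{q}}_{\bC^{\top}\bC}^2$, and therefore $\ell_2^2(\nu_{\bp_1},\nu_{\bp_2})=\iota_K\norm{\bp_1-\bp_2}_{\bC^{\top}\bC}^2$, which is the claimed isometry. The main ``obstacle'' here is purely bookkeeping rather than a genuine difficulty: one must handle the boundary behaviour at $x_K$ correctly (which relies on both measures having total mass one) and carefully align the $0$-indexed probability vector with the $1$-indexed matrix $\bC$.
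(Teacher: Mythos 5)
Your proof is correct and follows essentially the same route as the paper's: both compute $\ell_2^2$ as the integral of the squared CDF difference, note that the CDFs are step functions constant on the intervals $[x_m,x_{m+1})$ of length $\iota_K$ (with the difference vanishing beyond $x_K$ thanks to the total-mass-one constraint), and identify the vector of partial sums of $\bp_1-\bp_2$ with $\bC(\bp_1-\bp_2)$. Your write-up just makes explicit the boundary cancellation and indexing bookkeeping that the paper leaves implicit.
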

\begin{proof}
\begin{equation*}
    \begin{aligned}
    \ell_2^2(\nu_{\bp_1},\nu_{\bp_2})=&\int_{0}^{\prn{1{-}\gamma}^{-1}}\prn{F_{\nu_{\bp_1}}(x)-F_{\nu_{\bp_2}}(x)}^2 d x\\
    =&\iota_K\sum_{k=0}^{K-1}\prn{F_{\nu_{\bp_1}}(x_k)-F_{\nu_{\bp_2}}(x_k)}^2\\
    =&\iota_K\norm{\bC\prn{\bp_{1}-\bp_{2}}}^2\\
    =&\iota_K\norm{\bp_{1}-\bp_{2}}^2_{\bC^{\top}\bC}.
\end{aligned}
\end{equation*}
\end{proof}

\subsection{Categorical Projection Operator is Orthogonal Projection}\label{appendix:cate_project_orth}
\begin{proposition} \cite[Lemma~9.17]{bdr2022} \label{prop:orthogonal_decomposition}
For any $\nu\in\sP^{\sgn}$ and $\nu_{\bp}\in\sP^{\sgn}_K$, it holds that
\begin{equation*}
    \ell_2^2\prn{\nu,\nu_\bp}=\ell_2^2\prn{\nu,\bm{\Pi}_K\nu}+\ell_2^2\prn{\bm{\Pi}_K\nu,\nu_\bp}.
\end{equation*}
\end{proposition}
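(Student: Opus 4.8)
The plan is to recognize the Cram\'er distance $\ell_2$ as the metric induced by an $L^2$ inner product, and then to invoke the Hilbert-space projection (Pythagorean) theorem. Concretely, I would map each signed measure $\mu\in\sP^{\sgn}$ to its CDF $F_\mu$, viewed as an element of the Hilbert space $\gH:=L^2([0,(1-\gamma)^{-1}])$ equipped with $\inner{f}{g}:=\int_0^{(1-\gamma)^{-1}}f(x)g(x)\,dx$. Since every $F_\mu$ is bounded on a bounded interval, it lies in $\gH$, and by the definition of the Cram\'er distance $\ell_2^2(\mu_1,\mu_2)=\norm{F_{\mu_1}-F_{\mu_2}}_{\gH}^2$. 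Thus the whole statement reduces to a Pythagorean identity for an orthogonal projection in $\gH$.

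Next I would set $\gC:=\{F_{\nu_\bp}:\nu_\bp\in\sP^{\sgn}_K\}\subseteq\gH$. Because $\sP^{\sgn}_K$ is an affine subspace (the parametrization $\bp\mapsto\nu_\bp$ is affine, the constraint $\sum_{k=0}^K p_k=1$ being affine), and $F_{\nu_\bp}(x)=\sum_{k:\,x_k\leq x}p_k$ is affine in $\bp$, the set $\gC$ is a finite-dimensional affine subspace of $\gH$, with direction space $W:=\{F_{\nu_{\bp_1}}-F_{\nu_{\bp_2}}:\nu_{\bp_1},\nu_{\bp_2}\in\sP^{\sgn}_K\}$, a genuine linear subspace. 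By the definition $\bPi_K\nu=\argmin_{\nu_\bp\in\sP^{\sgn}_K}\ell_2(\nu,\nu_\bp)$, the element $F_{\bPi_K\nu}$ is the minimizer of $\norm{F_\nu-\cdot}_{\gH}^2$ over $\gC$, with existence and uniqueness already guaranteed by the explicit representation in Eqn.~\eqref{eq:categorical_prob}.

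The key step is the first-order optimality condition. For any direction $w\in W$ we have $F_{\bPi_K\nu}+t\,w\in\gC$ for all $t\in\RB$ (since $\gC$ is affine with direction $W$), so the quadratic $t\mapsto\norm{F_\nu-F_{\bPi_K\nu}-t\,w}_{\gH}^2$ is minimized at $t=0$; differentiating and setting the derivative to zero yields $\inner{F_\nu-F_{\bPi_K\nu}}{w}=0$ for every $w\in W$. In words, the residual $F_\nu-F_{\bPi_K\nu}$ is $\gH$-orthogonal to the direction space of $\gC$. Then, for an arbitrary $\nu_\bp\in\sP^{\sgn}_K$, the vector $w:=F_{\bPi_K\nu}-F_{\nu_\bp}$ lies in $W$, so expanding $\norm{F_\nu-F_{\nu_\bp}}_\gH^2=\norm{(F_\nu-F_{\bPi_K\nu})+(F_{\bPi_K\nu}-F_{\nu_\bp})}_\gH^2$ and using orthogonality to annihilate the cross term gives exactly $\ell_2^2(\nu,\nu_\bp)=\ell_2^2(\nu,\bPi_K\nu)+\ell_2^2(\bPi_K\nu,\nu_\bp)$.

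The only point requiring genuine care is this orthogonality step: one must ensure the perturbation $F_{\bPi_K\nu}+t\,w$ stays inside $\gC$ for \emph{all} real $t$, so that minimality forces the linear functional $w\mapsto\inner{F_\nu-F_{\bPi_K\nu}}{w}$ to vanish on the whole of $W$ rather than merely on a half-line. This is precisely where affineness (as opposed to mere convexity) of $\sP^{\sgn}_K$ is used, and it is what distinguishes the argument from a general variational inequality. Alternatively, one could verify the orthogonality $\inner{F_\nu-F_{\bPi_K\nu}}{w}=0$ directly by testing against a basis of $W$ and using the explicit hat-function weights of Eqn.~\eqref{eq:categorical_prob}, but the variational route makes that computation unnecessary.
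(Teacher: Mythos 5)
Your proof is correct. The paper gives no proof of this proposition at all---it simply cites \citet[Lemma~9.17]{bdr2022}---and your argument (embed signed measures into $L^2([0,(1-\gamma)^{-1}])$ via their CDFs, note that $\sP^{\sgn}_K$ maps to a finite-dimensional affine subspace, derive orthogonality of the residual from first-order optimality, and expand the square) is exactly the standard Hilbert-space Pythagorean argument underlying the cited lemma, and is the same ``affine structure'' argument the paper itself invokes when it asserts that Proposition~\ref{prop:orthogonal_decomposition_linear_approximation} is proved the same way.
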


\subsection{Categorical Projected Bellman Operator}
The following lemma characterizing $\bm{\Pi}_K\gT^\pi$ is useful for both practice and theoretical analysis.
\begin{proposition}\label{prop:categorical_projection_operator}
For any ${\bm{\eta}}\in\prn{\sP^{\sgn}}^\gS$ and $s\in\gS$, it holds that
\begin{equation*}
\begin{aligned}
     \bp_{\gT^\pi\bm{\eta}}(s)=&\EB\brk{\bg_K(r_0)+\prn{\bG(r_0)-\bm{1}_K^{\top}\otimes\bg_K(r_0)}  \bp_{\bm{\eta}}(s_1)  \Big| s_0=s }\\
     =&\EB\brk{\tilde\bG(r_0) \prn{\bp_{\bm{\eta}}(s_1)-\frac{1}{K+1}\bm{1}_{K}}  \Big| s_0=s }+\frac{1}{K+1}\sum_{j=0}^K\EB\brk{\bg_j(r_0) \Big| s_0=s }.
\end{aligned}
\end{equation*}
And in the same way, for any $r\in[0,1]$ and $s^\prime\in\gS$, it holds that
\begin{equation*}
\begin{aligned}
\bp_{\prn{b_{r,\gamma}}_\#{\eta}(s^\prime)}&=\tilde\bG(r) \prn{\bp_{\bm{\eta}}(s^\prime)-\frac{1}{K+1}\bm{1}_{K}}  +\frac{1}{K+1}\sum_{j=0}^K\bg_j(r),
\end{aligned}
\end{equation*}
    where $\tilde{\bG}$ and $\bg$ is defined in Theorem~\ref{thm:linear_cate_TD_equation}.
\end{proposition}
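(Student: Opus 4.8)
The plan is to prove the two displays in turn, starting with the single-transition identity (the second display) and then integrating it against the transition kernel to obtain the first. Throughout I will use that the categorical coordinate map $p_k(\nu)=\int h_k\,d\nu$, with hat function $h_k(x):=(1-|(x-x_k)/\iota_K|)_+$, is \emph{linear} in the measure $\nu$, and that these hats form a partition of unity on $[0,(1-\gamma)^{-1}]$, so that $\sum_{k=0}^{K}p_k(\nu)=\nu(\RB)=1$ and hence the top mass is the residual $p_K=1-\bm{1}_K^{\top}\bp_{\bm{\eta}}(s')$.

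First I would establish the second display. Representing the relevant measure through its categorical form $\bPi_K\eta(s')=\sum_{j=0}^{K}p_j(\eta(s'))\delta_{x_j}$, the affine pushforward $b_{r,\gamma}$ simply relocates each atom $x_j$ to $r+\gamma x_j$, so by linearity of $p_k$ and the definition $g_{j,k}(r)=h_k(r+\gamma x_j)$ one gets, componentwise, $p_k\big((b_{r,\gamma})_\#\eta(s')\big)=\sum_{j=0}^{K}p_j(\eta(s'))\,g_{j,k}(r)$. Splitting off the $j=K$ term and substituting $p_K=1-\bm{1}_K^{\top}\bp_{\bm{\eta}}(s')$ turns this into $\bp_{(b_{r,\gamma})_\#\eta(s')}=\bg_K(r)+\big(\bG(r)-\bm{1}_K^{\top}\otimes\bg_K(r)\big)\bp_{\bm{\eta}}(s')$, since $\bm{1}_K^{\top}\otimes\bg_K(r)$ is exactly the outer product $\bg_K(r)\bm{1}_K^{\top}$ whose every column equals $\bg_K(r)$.

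It then remains to reconcile this with the $\tilde\bG$-form. Recalling $\tilde\bG(r)=\bG(r)-\bm{1}_K^{\top}\otimes\bg_K(r)$, the target identity reduces to the vector equality $\bg_K(r)=\frac{1}{K+1}\big(\sum_{j=0}^{K}\bg_j(r)-\tilde\bG(r)\bm{1}_K\big)$, which I would verify by computing $\tilde\bG(r)\bm{1}_K=\sum_{j=0}^{K-1}\bg_j(r)-K\bg_K(r)$ (each column of $\tilde\bG(r)$ is $\bg_j(r)-\bg_K(r)$) and simplifying; the $\frac{1}{K+1}\bm{1}_K$ offset is precisely the normalization by the discrete uniform law that reconciles the two intercepts.

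Finally, the first display follows by taking the conditional expectation of the second over $a_0\sim\pi(\cdot\mid s)$ and $(r_0,s_1)\sim\gP(\cdot,\cdot\mid s,a_0)$: since $\gT^\pi\bm{\eta}(s)=\EB[(b_{r_0,\gamma})_\#\eta(s_1)\mid s_0=s]$ is a mixture and $p_k$ is linear, the coordinate map passes through the expectation, giving $\bp_{\gT^\pi\bm{\eta}}(s)=\EB[\bp_{(b_{r_0,\gamma})_\#\eta(s_1)}\mid s_0=s]$, into which the two equivalent forms of the second display substitute directly. The main obstacle is bookkeeping rather than conceptual: one must carefully track the boundary index $k=K$, the mass-constraint substitution $p_K=1-\bm{1}_K^{\top}\bp_{\bm{\eta}}(s')$, and the uniform offset $\frac{1}{K+1}\bm{1}_K$ so that the reduced $K\times K$ matrix $\tilde\bG(r)$ correctly encodes the full $(K{+}1)$-atom pushforward; a second point requiring care is that the relocation-of-atoms computation is exact because we work with the grid-supported representation $\bPi_K\eta$, which is where the partition-of-unity and grid structure enter.
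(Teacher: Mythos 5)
Your overall route --- relocate the grid atoms under $b_{r,\gamma}$, split off the $j=K$ term via the mass constraint, reconcile the two displayed forms through the identity $\tilde\bG(r)\bm{1}_K=\sum_{j=0}^{K-1}\bg_j(r)-K\bg_K(r)$, then integrate over the transition --- is exactly the paper's (it proves this via Proposition~\ref{prop:Pi_K_T}), and both your algebraic reconciliation and your final conditioning step are correct. The genuine gap is in your first step: you compute the coefficients of $\prn{b_{r,\gamma}}_\#\brk{\bPi_K\eta(s^\prime)}$, but the statement concerns the coefficients of $\prn{b_{r,\gamma}}_\#\eta(s^\prime)$, and for a general $\eta(s^\prime)\in\sP^{\sgn}$ these differ: categorical projection does not commute with the pushforward, i.e.\ $\bPi_K\prn{b_{r,\gamma}}_\#\eta(s^\prime)\neq\bPi_K\prn{b_{r,\gamma}}_\#\brk{\bPi_K\eta(s^\prime)}$ in general. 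Writing $h_k(y)=\prn{1-\abs{(y-x_k)/\iota_K}}_+$, your componentwise claim $p_k\prn{\prn{b_{r,\gamma}}_\#\eta(s^\prime)}=\sum_{j=0}^{K}p_j(\eta(s^\prime))\,g_{j,k}(r)$ asserts that $x\mapsto h_k(r+\gamma x)$ coincides $\eta(s^\prime)$-a.e.\ with its piecewise-linear interpolant on the grid $\brc{x_j}_{j=0}^K$; this fails off the grid, because the kinks of $h_k(r+\gamma\,\cdot)$ sit at $(x_k-r)/\gamma$ and $(x_k\pm\iota_K-r)/\gamma$, which are generally not grid points. Concretely, take $K=2$, $\gamma=1/2$ (so $\iota_K=1$, $x_j=j$), $r=0.3$, $\eta(s^\prime)=\delta_{1.4}$: the left side for $k=1$ is $h_1(0.3+0.7)=1$, while the right side is $0.6\cdot h_1(0.8)+0.4\cdot h_1(1.3)=0.6\cdot 0.8+0.4\cdot 0.7=0.76$.

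This is not a bookkeeping issue you can patch: the displayed identities are simply false for general $\bm{\eta}\in\prn{\sP^{\sgn}}^\gS$, so any proof of the statement in its stated generality must break somewhere, and yours breaks precisely at the silent substitution of $\bPi_K\eta(s^\prime)$ for $\eta(s^\prime)$ inside the pushforward. The correct resolution is to restrict to grid-supported measures, $\bm{\eta}\in\prn{\sP^{\sgn}_K}^\gS$: then $\eta(s^\prime)=\sum_{j=0}^{K}p_j(\eta(s^\prime))\delta_{x_j}$ holds exactly and your relocation computation is legitimate. This is in effect what the paper does --- its proof of Proposition~\ref{prop:Pi_K_T} expands $\EB_{G\sim\eta_{\btheta}(s_1)}\brk{h_k(r_0+\gamma G)}$ as a finite sum over grid atoms, a step valid only for categorical measures --- and every invocation of the present proposition in the paper (e.g.\ with the discrete uniform distribution in the proof of Lemma~\ref{lem:upper_bound_error_quantities}) is for categorical $\bm{\eta}$. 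So your argument is a correct proof of the true, categorical version of the claim; but your closing remark that the computation ``is exact because we work with the grid-supported representation'' is exactly backwards as justification, since passing to that representation changes the measure being pushed forward, and that change is what cannot be justified for general $\bm{\eta}$.
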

This proposition is a special case of Proposition~\ref{prop:Pi_K_T}, whose proof can be found in Appendix~\ref{appendix:proof_Pi_K_T}.
\section{Omitted Results and Proofs in Section~\ref{Section:linear_ctd}}\label{Appendix:omit_proof_linear_ctd}
\subsection{Linear-Categorical Parametrization is an Isometry}\label{appendix:linear_cate_isometric}
\begin{proposition}\label{prop:linear_cate_isometric}
The affine space $\prn{\sP^{\sgn}_{\bphi,K}, \ell_{2,\mu_{\pi}}}$ is isometric with  $\prn{\RB^{dK}, \sqrt{\iota_K}\norm{\cdot}_{\bI_K\otimes\bSigma_{\bphi}}}$, in the sense that, for any $\bm{\eta}_{\btheta_1},\bm{\eta}_{\btheta_2}\in\sP^{\sgn}_{\bphi,K}$, it holds that $\ell_{2,\mu_{\pi}}^2\prn{\bm{\eta}_{\btheta_1},\bm{\eta}_{\btheta_2}}=\iota_K\norm{\btheta_1-\btheta_2}^2_{\bI_K\otimes\bSigma_{\bphi}}$.
\end{proposition}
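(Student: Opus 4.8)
The plan is to reduce the claim state by state to the unweighted categorical isometry already established in Proposition~\ref{prop:PK_isometric}, and then to exploit the linear parametrization (Eqn.~\eqref{eq:def_linear_parametrize}) to express the per-state Cram\'er distances directly in terms of $\bphi$ and $\btheta$. First I would unfold the definition of the $\mu_\pi$-weighted Cram\'er distance and apply Proposition~\ref{prop:PK_isometric} pointwise in $s$. Since $\eta_{\btheta_i}(s)\in\sP^{\sgn}_K$ is identified by the vector $\bp(s;\btheta_i)=(p_k(s;\btheta_i))_{k=0}^{K-1}$, this gives
\begin{equation*}
\ell_{2,\mu_{\pi}}^2\prn{\bm{\eta}_{\btheta_1},\bm{\eta}_{\btheta_2}} = \EB_{s\sim\mu_{\pi}}\brk{\ell_2^2\prn{\eta_{\btheta_1}(s),\eta_{\btheta_2}(s)}} = \iota_K\,\EB_{s\sim\mu_{\pi}}\brk{\norm{\bC\prn{\bp(s;\btheta_1)-\bp(s;\btheta_2)}}^2}.
\end{equation*}

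The key observation is that $\bC=[\ind\{i\geq j\}]$ is the cumulative-sum matrix, so $\bC\bp(s;\btheta)$ is exactly the CDF vector $(F_k(s;\btheta))_{k=0}^{K-1}$ (this is precisely the identity used in the proof of Proposition~\ref{prop:PK_isometric}). By Eqn.~\eqref{eq:def_linear_parametrize}, $F_k(s;\btheta)=\bphi(s)^{\top}\btheta(k)+(k+1)/(K+1)$, so the normalization intercepts cancel in the difference and the $k$-th coordinate of $\bC(\bp(s;\btheta_1)-\bp(s;\btheta_2))$ equals $\bphi(s)^{\top}(\btheta_1(k)-\btheta_2(k))$. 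Hence
\begin{equation*}
\norm{\bC\prn{\bp(s;\btheta_1)-\bp(s;\btheta_2)}}^2 = \sum_{k=0}^{K-1}\prn{\bphi(s)^{\top}\prn{\btheta_1(k)-\btheta_2(k)}}^2.
\end{equation*}

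Finally I would take the expectation over $s\sim\mu_\pi$, pull it inside the finite sum, and use $\EB_{s\sim\mu_\pi}[\bphi(s)\bphi(s)^{\top}]=\bSigma_{\bphi}$ together with the block-diagonal structure of $\bI_K\otimes\bSigma_{\bphi}$ to obtain
\begin{equation*}
\EB_{s\sim\mu_{\pi}}\sum_{k=0}^{K-1}\prn{\bphi(s)^{\top}\prn{\btheta_1(k)-\btheta_2(k)}}^2 = \sum_{k=0}^{K-1}\prn{\btheta_1(k)-\btheta_2(k)}^{\top}\bSigma_{\bphi}\prn{\btheta_1(k)-\btheta_2(k)} = \norm{\btheta_1-\btheta_2}^2_{\bI_K\otimes\bSigma_{\bphi}},
\end{equation*}
which together with the leading $\iota_K$ factor yields the claim. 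There is no genuine obstacle here; the only step requiring care is the identification of $\bC\bp$ with the CDF vector and the cancellation of the intercept in the difference, after which everything is a direct computation. In effect, the statement is the stationary-distribution average of Proposition~\ref{prop:PK_isometric}, with the linear parametrization supplying the per-state CDF differences.
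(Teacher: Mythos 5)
Your proof is correct and takes essentially the same route as the paper's: both reduce the weighted Cram\'er distance to the per-state Euclidean distance of CDF vectors at the grid points (the paper writes $\iota_K\EB_{s\sim\mu_\pi}\brk{\norm{\bF_{\btheta_1}(s)-\bF_{\btheta_2}(s)}^2}$ directly, while you reach the same expression via Proposition~\ref{prop:PK_isometric} and the identity $\bC\bp(s;\btheta)=\bF_{\btheta}(s)$), then use the linear parametrization so the intercepts cancel and the average over $\mu_\pi$ produces the $\bI_K\otimes\bSigma_{\bphi}$-norm. The only cosmetic difference is that the paper packages the final expectation as a trace identity, whereas you carry it out coordinate-wise over the $K$ blocks.
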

\begin{proof}
For any $\bm{\eta}_{\btheta}\in\sP^{\sgn}_{\bphi,K}$, we denote $\bF_{\btheta}(s)=\prn{F_k(s;\btheta)}_{k=0}^{K-1}\in\RB^{K}$, then it holds that
\begin{equation*}
    \begin{aligned}
    \ell_{2,\mu_{\pi}}^2\prn{\bm{\eta}_{\btheta_1},\bm{\eta}_{\btheta_2}}=&\iota_K\EB_{s\sim\mu_\pi}\brk{\norm{{\bF_{\btheta_1}(s)-\bF_{\btheta_2}(s)}}^2}\\
    =&\iota_K\tr\prn{\bSigma_{\bphi}^{\frac{1}{2}}\prn{\bTheta_1-\bTheta_2}\prn{\bTheta_1-\bTheta_2}^{\top}\bSigma_{\bphi}^{\frac{1}{2}}} \\
    =&\iota_K\norm{\btheta_1-\btheta_2}^2_{\bI_K\otimes\bSigma_{\bphi}}.
\end{aligned}
\end{equation*}
\end{proof}

\subsection{Linear-Categorical Projection Operator}\label{appendix:linear-cate-project-op}
Proposition~\ref{prop:linear_projection} is an immediate corollary of the following lemma.
For any $\nu\in\sP^{\sgn}_K$, we define $\bF_{\nu}=\prn{F_k\prn{\nu}}_{k=0}^{K-1}=\prn{\nu\prn{[0, x_k]}}_{k=0}^{K-1}\in\RB^{K}$, and for any $\bm{\eta}\in\prn{\sP^{\sgn}}^{\gS}$, we define $\bp_{\bm{\eta}}(s)=\bp_{\Pi_K\eta(s)}$ and $\bF_{\bm{\eta}}(s)=\bF_{\Pi_K\eta(s)}$.
\begin{lemma}\label{lem:gradient_cramer_distance}
    For any $\bm{\eta}\in\prn{\sP^{\sgn}}^{\gS}$, $\btheta\in\RB^{dK}$ and $s\in\gS$, it holds that
    \begin{equation}\label{eq:gradient_cdf_representation}
            \begin{aligned}
        \nabla_{\bTheta} \ell_2^2\prn{{\eta}_{\btheta}(s),{\eta}(s)}&=2\iota_K\bphi(s)\prn{\bF_{\btheta}(s)-\bF_{\bm{\eta}}(s)}^{\top}\\
        &=2\iota_K\bphi(s)\brk{\bphi(s)^{\top}\bTheta+\prn{\frac{1}{K+1}\bm{1}_{K}-\bp_{\bm{\eta}}(s)}^{\top}\bC^{\top}}.
    \end{aligned}
    \end{equation}
    Furthermore, it holds that
        \begin{align*}
        \nabla_{\bTheta} \ell_{2,\mu_{\pi}}^2\prn{\bm{\eta}_{\btheta},\bm{\eta}}&=\EB_{s\sim\mu_{\pi}}\brk{\nabla_{\bTheta} \ell_2^2\prn{{\eta}_{\btheta}(s),{\eta}(s)}}\\
        &=2\iota_K\brk{\bSigma_{\bphi}\bTheta+\EB_{s\sim\mu_{\pi}}\brk{\bphi(s)\prn{\frac{1}{K+1}\bm{1}_{K}-\bp_{\bm{\eta}}(s)}^{\top}}\bC^{\top}}.
    \end{align*}

\end{lemma}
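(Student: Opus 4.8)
The plan is to reduce the Cram\'er distance to a Euclidean norm of CDF vectors, using the orthogonal decomposition to strip away the $\btheta$-independent part and the categorical isometry to turn what remains into a quadratic form in $\bTheta$, after which the gradient is a routine matrix-calculus computation.

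First I would fix $s\in\gS$ and apply the orthogonal decomposition (Proposition~\ref{prop:orthogonal_decomposition}) with $\nu=\eta(s)\in\sP^{\sgn}$ and $\nu_\bp=\eta_{\btheta}(s)\in\sP^{\sgn}_K$, giving
\begin{equation*}
\ell_2^2\prn{\eta_{\btheta}(s),\eta(s)}=\ell_2^2\prn{\eta(s),\bPi_K\eta(s)}+\ell_2^2\prn{\bPi_K\eta(s),\eta_{\btheta}(s)}.
\end{equation*}
The first summand does not depend on $\btheta$, so it is annihilated by $\nabla_{\bTheta}$. Since $\bPi_K\eta(s)$ and $\eta_{\btheta}(s)$ both lie in $\sP^{\sgn}_K$, the isometry of Proposition~\ref{prop:PK_isometric} rewrites the second summand as $\iota_K\norm{\bF_{\btheta}(s)-\bF_{\bm{\eta}}(s)}^2$, where $\bF_{\bm{\eta}}(s)=\bC\bp_{\bm{\eta}}(s)$ is the CDF vector of $\bPi_K\eta(s)$ and $\bF_{\btheta}(s)$ that of $\eta_{\btheta}(s)$ (recall $\bC$ maps a PMF vector to its CDF vector).

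Next I would make the dependence on $\bTheta$ explicit. By Eqn.~\eqref{eq:def_linear_parametrize} the $k$-th entry of $\bF_{\btheta}(s)$ is $\bphi(s)^{\top}\btheta(k)+(k{+}1)/(K{+}1)$, hence $\bF_{\btheta}(s)=\bTheta^{\top}\bphi(s)+\tfrac{1}{K+1}\bC\bm{1}_K$, where I use that the normalization vector $\prn{(k{+}1)/(K{+}1)}_{k=0}^{K-1}$ equals $\tfrac{1}{K+1}\bC\bm{1}_K$ (the CDF vector of the discrete uniform distribution). Writing the residual $\bu:=\bF_{\btheta}(s)-\bF_{\bm{\eta}}(s)$, whose $k$-th entry is affine in the $k$-th column $\btheta(k)$ of $\bTheta$, column-by-column differentiation gives $\partial\norm{\bu}^2/\partial\btheta(k)=2u_k\bphi(s)$, so stacking columns yields $\nabla_{\bTheta}\norm{\bu}^2=2\bphi(s)\bu^{\top}$. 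Multiplying by $\iota_K$ produces the first claimed identity $\nabla_{\bTheta}\ell_2^2(\eta_{\btheta}(s),\eta(s))=2\iota_K\bphi(s)\prn{\bF_{\btheta}(s)-\bF_{\bm{\eta}}(s)}^{\top}$, and substituting $\bF_{\bm{\eta}}(s)=\bC\bp_{\bm{\eta}}(s)$ together with the expression for $\bF_{\btheta}(s)$ gives the second (the $\bphi(s)^{\top}\bTheta$ and $\prn{\tfrac{1}{K+1}\bm{1}_K-\bp_{\bm{\eta}}(s)}^{\top}\bC^{\top}$ terms).

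Finally, for the $\mu_\pi$-weighted statement I would interchange $\nabla_{\bTheta}$ with the finite-dimensional smooth expectation $\EB_{s\sim\mu_\pi}$ and take expectations of the two terms separately, using $\EB_{s\sim\mu_\pi}[\bphi(s)\bphi(s)^{\top}]=\bSigma_{\bphi}$ on the first to obtain $\bSigma_{\bphi}\bTheta$ and pulling $\bC^{\top}$ out of the expectation on the second. The computations are elementary; the step requiring the most care is the matrix-calculus bookkeeping — keeping the transpose conventions straight and remembering that the $k$-th column of $\nabla_{\bTheta}$ is the derivative with respect to $\btheta(k)$ — together with the identification of the constant $(k{+}1)/(K{+}1)$ as $\tfrac{1}{K+1}\bC\bm{1}_K$, which is what makes the final answer collapse cleanly into $\bC$ and $\bp_{\bm{\eta}}(s)$.
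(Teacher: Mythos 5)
Your proposal is correct and follows essentially the same route as the paper's proof: both use the orthogonal decomposition (Proposition~\ref{prop:orthogonal_decomposition}) to discard the $\btheta$-independent term, then the categorical isometry to rewrite $\ell_2^2\prn{\eta_{\btheta}(s),\bPi_K\eta(s)}$ as $\iota_K\norm{\bF_{\btheta}(s)-\bF_{\bm{\eta}}(s)}^2$, and finish with elementary matrix calculus together with the identification $\prn{(k{+}1)/(K{+}1)}_{k=0}^{K-1}=\tfrac{1}{K+1}\bC\bm{1}_K$ and $\bF_{\bm{\eta}}(s)=\bC\bp_{\bm{\eta}}(s)$. The only cosmetic difference is that the paper first computes the vectorized gradient $\nabla_{\btheta}$ via Kronecker products and then records the matrix form $\nabla_{\bTheta}$, whereas you differentiate column-by-column directly in matrix form; the two computations are identical in content.
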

\begin{proof}
According to Proposition~\ref{prop:orthogonal_decomposition}, one has
\begin{align*}
    \ell_2^2\prn{{\eta}_{\btheta}(s),{\eta}(s)}=\ell_2^2\prn{{\eta}_{\btheta}(s),\bPi_K{\eta}(s)}+\ell_2^2\prn{\bPi_K{\eta}(s),{\eta}(s)}.
\end{align*}
Hence,
\begin{align*}
        \nabla_{\btheta} \ell_2^2\prn{{\eta}_{\btheta}(s),{\eta}(s)}&=\nabla_{\btheta}\ell_2^2\prn{{\eta}_{\btheta}(s),\bPi_K{\eta}(s)}\\
        &=\iota_K\nabla_{\btheta}\norm{{\bF_{\btheta}(s)-\bF_{\bm{\eta}}(s)}}^2\\
        &=2\iota_K\prn{\bI_K\otimes\bphi(s)}\prn{\bF_{\btheta}(s)-\bF_{\bm{\eta}}(s)}\\
        &=2\iota_K\prn{\bI_K\otimes\bphi(s)}\prn{\prn{\bI_K\otimes\bphi(s)}^{\top}\btheta+\bC\prn{\frac{1}{K+1}\bm{1}_{K}-\bp_{\bm{\eta}}(s)}}\\
        &=2\iota_K \brc{ \brk{\bI_K\otimes\prn{\bphi(s)\bphi(s)^{\top}}}\btheta+  
 \brk{\prn{\bC\prn{\frac{1}{K+1}\bm{1}_{K}-\bp_{\bm{\eta}}(s)}}\otimes \bphi(s)}}.
    \end{align*}
We also have the following matrix representation:
    \begin{equation*}
            \begin{aligned}
        \nabla_{\bTheta} \ell_2^2\prn{{\eta}_{\btheta}(s),{\eta}(s)}&=2\iota_K\bphi(s)\prn{\bF_{\btheta}(s)-\bF_{\bm{\eta}}(s)}^{\top}\\
        &=2\iota_K\bphi(s)\brk{\bphi(s)^{\top}\bTheta+\prn{\frac{1}{K+1}\bm{1}_{K}-\bp_{\bm{\eta}}(s)}^{\top}\bC^{\top}}.
    \end{aligned}
    \end{equation*}
\end{proof}
\begin{proposition}\label{prop:orthogonal_decomposition_linear_approximation}
For any $\bm{\eta}\in\prn{\sP^{\sgn}}^{\gS}$ and $\bm{\eta}_{\btheta}\in\sP^{\sgn}_{\bphi,K}$, it holds that
\begin{equation*}
    \ell_{2,\mu_{\pi}}^2\prn{\bm{\eta},\bm{\eta}_{\btheta}}=\ell_{2,\mu_{\pi}}^2\prn{\bm{\eta},\bPi_{K}\bm{\eta}}+\ell_{2,\mu_{\pi}}^2\prn{\bPi_{K}\bm{\eta},\bPi_{\bphi, K}^{\pi}\bm{\eta}}+\ell_{2,\mu_{\pi}}^2\prn{\bPi_{\bphi, K}^{\pi}\bm{\eta},\bm{\eta}_{\btheta}}.
\end{equation*}
\end{proposition}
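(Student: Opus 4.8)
The plan is to read this as an iterated Pythagorean identity for the two nested orthogonal projections $\bPi_{K}$ and $\bPi_{\bphi, K}^{\pi}$, peeling off one projection at a time: the first and third summands will come from splitting the distance across $\bPi_{K}$, and the two middle summands from splitting across $\bPi_{\bphi, K}^{\pi}$ inside the categorical product space.

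First I would handle the categorical projection. Applying the single-state decomposition of Proposition~\ref{prop:orthogonal_decomposition} with $\nu=\eta(s)$ and $\nu_{\bp}=\eta_{\btheta}(s)$ (legitimate since $\bm{\eta}_{\btheta}\in\sP^{\sgn}_{\bphi,K}\subset(\sP^{\sgn}_K)^{\gS}$, so $\eta_{\btheta}(s)\in\sP^{\sgn}_K$ for every $s$), then taking $\EB_{s\sim\mu_{\pi}}$ and using $(\bPi_{K}\bm{\eta})(s)=\bPi_{K}\eta(s)$, gives
\begin{equation*}
\ell_{2,\mu_{\pi}}^2(\bm{\eta},\bm{\eta}_{\btheta})=\ell_{2,\mu_{\pi}}^2(\bm{\eta},\bPi_{K}\bm{\eta})+\ell_{2,\mu_{\pi}}^2(\bPi_{K}\bm{\eta},\bm{\eta}_{\btheta}).
\end{equation*}
This already produces the first summand, and it remains to split the second term across $\bPi_{\bphi, K}^{\pi}$.

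For the second step I would first note that $\bPi_{\bphi, K}^{\pi}\bm{\eta}=\bPi_{\bphi, K}^{\pi}\bPi_{K}\bm{\eta}$, because the formula in Proposition~\ref{prop:linear_projection} depends on $\bm{\eta}$ only through $\bp_{\bm{\eta}}(s)$, which identifies $\bPi_{K}\eta(s)$; hence $\bPi_{\bphi, K}^{\pi}$ is genuinely a map on the categorical product space $(\sP^{\sgn}_K)^{\gS}$, in which $\bPi_{K}\bm{\eta}$, $\bm{\eta}_{\btheta}$, and $\bPi_{\bphi, K}^{\pi}\bm{\eta}$ all live. On this space $\ell_{2,\mu_{\pi}}$ is Hilbertian: by the pointwise isometry of Proposition~\ref{prop:PK_isometric} its square polarizes to the inner product $\langle\bm{\delta}_1,\bm{\delta}_2\rangle=\iota_K\EB_{s\sim\mu_{\pi}}[\bm{\delta}_1(s)^{\top}\bC^{\top}\bC\bm{\delta}_2(s)]$ on coordinate differences $\bm{\delta}_i\colon\gS\to\RB^K$. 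Since $\sP^{\sgn}_{\bphi,K}$ is an affine subspace and $\bPi_{\bphi, K}^{\pi}$ is its orthogonal projection, the Pythagorean theorem with apex $\bPi_{K}\bm{\eta}$, foot $\bPi_{\bphi, K}^{\pi}\bm{\eta}$, and arbitrary subspace point $\bm{\eta}_{\btheta}$ yields
\begin{equation*}
\ell_{2,\mu_{\pi}}^2(\bPi_{K}\bm{\eta},\bm{\eta}_{\btheta})=\ell_{2,\mu_{\pi}}^2(\bPi_{K}\bm{\eta},\bPi_{\bphi, K}^{\pi}\bm{\eta})+\ell_{2,\mu_{\pi}}^2(\bPi_{\bphi, K}^{\pi}\bm{\eta},\bm{\eta}_{\btheta}),
\end{equation*}
and substituting this into the previous display gives the claimed three-term identity.

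The main obstacle is justifying the orthogonality underlying this second decomposition, namely that the residual $\bPi_{K}\bm{\eta}-\bPi_{\bphi, K}^{\pi}\bm{\eta}$ is $\ell_{2,\mu_{\pi}}$-orthogonal to every feasible direction $\bm{\eta}_{\btheta}-\bPi_{\bphi, K}^{\pi}\bm{\eta}$. I would establish this via the first-order optimality condition from Lemma~\ref{lem:gradient_cramer_distance}: vanishing of $\nabla_{\bTheta}\ell_{2,\mu_{\pi}}^2(\bm{\eta}_{\btheta},\cdot)$ at the minimizer $\tilde\bTheta$ defining $\bPi_{\bphi, K}^{\pi}\bm{\eta}$ is precisely the normal equation for the inner product above. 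Equivalently, and perhaps more transparently, one can pass to cumulative coordinates $\bF=\bC\bp$, in which $\sP^{\sgn}_{\bphi,K}$ becomes the set of maps $s\mapsto\bTheta^{\top}\bphi(s)+\text{const}$ and $\ell_{2,\mu_{\pi}}^2$ becomes, up to the factor $\iota_K$, the ordinary $L^2(\mu_{\pi})$ norm on $\RB^K$-valued functions; the decomposition is then the textbook orthogonal-projection Pythagorean identity for least-squares in $L^2(\mu_{\pi})$, while the isometry of Proposition~\ref{prop:linear_cate_isometric} confirms that this coordinate change is distance-preserving.
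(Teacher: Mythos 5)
Your proof is correct and follows essentially the same route the paper intends: the paper's proof is only a one-line remark that the argument mirrors Proposition~\ref{prop:orthogonal_decomposition} once the affine structure is used, and your two-step decomposition (pointwise categorical Pythagoras averaged over $\mu_\pi$, then the affine least-squares Pythagoras for $\bPi_{\bphi, K}^{\pi}$ via the first-order optimality condition of Lemma~\ref{lem:gradient_cramer_distance}) is precisely that argument with the details filled in. In particular, your observations that $\bPi_{\bphi, K}^{\pi}\bm{\eta}=\bPi_{\bphi, K}^{\pi}\bPi_{K}\bm{\eta}$ and that orthogonality must be derived from the normal equations correctly avoid the circularity of simply invoking the orthogonal-projection property that this proposition itself is meant to establish.
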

The proof is straightforward and almost the same as that of Proposition~\ref{prop:orthogonal_decomposition} if we utilize the affine structure.

\subsection{Linear-Categorical Projected Bellman Equation}\label{subsection:proof_linear_cate_TD_equation}
To derive the result, the following proposition characterizing $\bPi_{K}\gT^{\pi}\bm{\eta}_{\btheta}$ is useful, whose proof can be found in Appendix~\ref{appendix:proof_Pi_K_T}.
\begin{proposition}\label{prop:Pi_K_T}
    For any $\btheta\in\RB^{dK}$ and $s\in\gS$, we abbreviate $\bp_{\gT^{\pi}\bm{\eta}_\btheta}(s)$ as $\tilde{\bp}_{\btheta}(s)$, then
    \begin{equation*}
    \begin{aligned}
             \tilde{\bp}_{\btheta}(s)=&\prn{\tilde{p}_k(s;\btheta)}_{k=0}^{K-1}=\EB\brk{\tilde{\bG}(r_0)\bC^{-1} \bTheta^{\top} \bphi(s_1)\Big| s_0=s }+\frac{1}{K+1}\sum_{j=0}^K\EB\brk{\bg_j(r_0) \Big| s_0=s }.
    \end{aligned}
    \end{equation*}
\end{proposition}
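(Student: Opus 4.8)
The plan is to reduce the claim to the single ``one-step'' computation of the categorical projection of a shifted categorical measure, and then to pass between the probability-mass and CDF coordinates using the cumulative-sum matrix $\bC$. First I would use linearity of the projection: each categorical weight $p_k(\nu)$ in Eqn.~\eqref{eq:categorical_prob} is the integral of a fixed hat function against $\nu$, so $\nu\mapsto\bp_{\bPi_K\nu}$ is affine and commutes with mixtures. Since $\gT^\pi\eta_\btheta(s)=\EB[(b_{r_0,\gamma})_\#\eta_\btheta(s_1)\mid s_0=s]$ is a mixture of measures, this lets me move $\bPi_K$ and the weight map inside the conditional expectation, giving $\tilde{\bp}_\btheta(s)=\EB[\bp_{\bPi_K(b_{r_0,\gamma})_\#\eta_\btheta(s_1)}\mid s_0=s]$. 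It then suffices to evaluate $\bp_{\bPi_K(b_{r,\gamma})_\#\eta_\btheta(s_1)}$ for a fixed reward $r$ and a fixed categorical distribution $\eta_\btheta(s_1)=\sum_{j=0}^K p_j(s_1;\btheta)\delta_{x_j}$.

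For the one-step computation I would push each atom forward, $(b_{r,\gamma})_\#\delta_{x_j}=\delta_{r+\gamma x_j}$, and read off its projection weights directly from Eqn.~\eqref{eq:categorical_prob}: the weight placed at $x_k$ is exactly the hat value $g_{j,k}(r)=(1-|(r+\gamma x_j-x_k)/\iota_K|)_+$. Collecting the contributions of all atoms $j=0,\dots,K$ gives $\bG(r)$, and eliminating the redundant top mass $p_K=1-\sum_{k<K}p_k(s_1;\btheta)$ yields the centered identity $\bp_{\bPi_K(b_{r,\gamma})_\#\eta_\btheta(s_1)}=\tilde\bG(r)\bigl(\bp_{\eta_\btheta(s_1)}-\tfrac{1}{K+1}\bm{1}_K\bigr)+\tfrac{1}{K+1}\sum_{j=0}^K\bg_j(r)$, where the correction $\tilde\bG(r)=\bG(r)-\bm{1}_K^\top\otimes\bg_K(r)$ bookkeeps exactly this elimination. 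This matrix identity coincides with the second identity of Proposition~\ref{prop:categorical_projection_operator} specialized to the categorical measure $\eta_\btheta(s_1)$, and I would establish it from scratch so that the argument is self-contained.

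Next I would convert to $\btheta$-coordinates. By the cumulative structure of $\bC=[\ind\{i\ge j\}]$ (as used in Proposition~\ref{prop:PK_isometric}) the CDF vector satisfies $\bF_\btheta(s_1)=\bC\,\bp_{\eta_\btheta(s_1)}$, while the linear parametrization of Eqn.~\eqref{eq:def_linear_parametrize} gives $\bF_\btheta(s_1)=\bTheta^\top\bphi(s_1)+\tfrac{1}{K+1}(1,2,\dots,K)^\top$. The essential observation is $(1,2,\dots,K)^\top=\bC\bm{1}_K$, so applying $\bC^{-1}$ produces $\bp_{\eta_\btheta(s_1)}=\bC^{-1}\bTheta^\top\bphi(s_1)+\tfrac{1}{K+1}\bm{1}_K$, that is, $\bp_{\eta_\btheta(s_1)}-\tfrac{1}{K+1}\bm{1}_K=\bC^{-1}\bTheta^\top\bphi(s_1)$; this is the cancellation that removes the normalization offset. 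Substituting into the centered one-step identity and taking the conditional expectation gives exactly $\tilde{\bp}_\btheta(s)=\EB[\tilde\bG(r_0)\bC^{-1}\bTheta^\top\bphi(s_1)\mid s_0=s]+\tfrac{1}{K+1}\sum_{j=0}^K\EB[\bg_j(r_0)\mid s_0=s]$, which is the claim.

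The main obstacle is the one-step computation of the second paragraph: verifying that projecting all $K+1$ shifted atoms and then removing the redundant top coordinate reproduces precisely $\tilde\bG(r)=\bG(r)-\bm{1}_K^\top\otimes\bg_K(r)$, with correct handling of the total-mass-$1$ constraint and of the boundary atoms $x_0$ and $x_K$ (where a shifted atom may land on an endpoint of $[0,(1-\gamma)^{-1}]$). Once that matrix identity is in hand, the remainder is the linear algebra with $\bC$ in the third paragraph together with the routine linearity argument in the first.
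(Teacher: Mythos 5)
Your proposal is correct and follows essentially the same route as the paper's proof: move the projection inside the conditional expectation by linearity, compute the atom-wise projection weights $g_{j,k}(r)$, eliminate the redundant mass $p_K$ to obtain $\tilde\bG(r)$, and convert to $\btheta$-coordinates via $\bC$. The only cosmetic difference is that you work with the centered identity of Proposition~\ref{prop:categorical_projection_operator} and derive $\bp_{\eta_\btheta(s_1)}-\tfrac{1}{K+1}\bm{1}_K=\bC^{-1}\bTheta^{\top}\bphi(s_1)$ from $\bF_{\btheta}=\bC\bp$ and $\bC\bm{1}_K=(1,\ldots,K)^{\top}$, whereas the paper runs the same algebra in uncentered form through the PMF parameter $\bW=\bTheta\bC^{-\top}$.
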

Combining this proposition with Proposition~\ref{prop:linear_projection}, we know that $\bTheta^{\star}$ is the unique solution to the following system of linear equations for $\bTheta\in\RB^{d\times K}$
\begin{equation*}
    \begin{aligned}
    \bTheta=&\bSigma_{\bphi}^{-1}\EB_{s\sim\mu_{\pi}}\brk{\bphi(s)\prn{\tilde\bp_{\btheta}(s)-\frac{1}{K+1}\bm{1}_{K}}^{\top}\bC^{\top}}\\
    =&\bSigma_{\bphi}^{-1}\EB_{s\sim\mu_{\pi}}\brk{\bphi(s)\prn{\frac{1}{K+1}\bm{1}_{K}-\EB\brk{\tilde{\bG}(r_0)\bC^{-1} \bTheta^{\top} \bphi(s_1)\Big| s_0=s }-\frac{1}{K+1}\sum_{j=0}^K\EB\brk{\bg_j(r_0) \Big| s_0=s }}^{\top}\bC^{\top}}\\
    =&\bSigma_{\bphi}^{-1}\EB_{s\sim\mu_{\pi}}\brk{\bphi(s)\bphi(s^\prime)^{\top} \bTheta\prn{\bC\tilde{\bG}(r)\bC^{-1}}^{\top}}+   \frac{1}{K+1}\bSigma_{\bphi}^{-1}\EB_{s\sim\mu_{\pi}}\brk{\bphi(s)\prn{\sum_{j=0}^K\bg_j(r)-\bm{1}_{K}}^{\top}\bC^{\top}},
    \end{aligned}
\end{equation*}
or equivalently,
\begin{equation*}
    \begin{aligned}
    \bSigma_{\bphi}\bTheta-\EB_{s\sim\mu_{\pi}}\brk{\bphi(s)\bphi(s^\prime)^{\top}\bTheta\prn{\bC\tilde{\bG}(r)\bC^{-1}}^{\top}}=   \frac{1}{K+1}\EB_{s\sim\mu_{\pi}}\brk{\bphi(s)\prn{\sum_{j=0}^K\bg_j(r)-\bm{1}_{K}}^{\top}\bC^{\top}},
    \end{aligned}
\end{equation*}
which is the desired conclusion.
The uniqueness and existence of the solution is guaranteed by the fact that the LHS is an invertible linear transformation of $\bTheta$, which is justified by Eqn.~\eqref{eq:bar_A_lower_bound}.

\subsection{Proof of Proposition~\ref{prop:Pi_K_T}}\label{appendix:proof_Pi_K_T}
\begin{proof}
Recall the definition of the distributional Bellman operator Eqn.~\eqref{eq:distributional_Bellman_equation} and categorical projection operator Eqn.~\eqref{eq:categorical_prob}, we have
\begin{equation}\label{eq:eq_in_proof_Pi_K_T}
    \begin{aligned}
\tilde{p}_k(s;\btheta)=& p_k\prn{\brk{\gT^{\pi}\bm{\eta}_\btheta}(s)} \\
=&\EB_{X\sim \brk{\gT^{\pi}\bm{\eta}_\btheta}(s)}\brk{\prn{1-\abs{\frac{X-x_k}{\iota_K}}}_+}\\
=&\EB\brk{\EB_{G\sim \eta_{\btheta}(s_1)}\brk{\prn{1-\abs{\frac{r_0+\gamma G-x_k}{\iota_K}}}_+}\Big| s_0=s }\\
=&\EB\brk{\sum_{j=0}^Kp_j(s_1;
\btheta)\prn{1-\abs{\frac{r_0+\gamma x_j-x_k}{\iota_K}}}_+\Big| s_0=s }\\
=&\EB\brk{\sum_{j=0}^Kp_j(s_1;
\btheta)g_{j,k}(r_0)\Big| s_0=s }\\
=&\EB\brk{g_{K,k}(r_0)+\sum_{j=0}^{K-1}p_j(s_1;
\btheta)\prn{g_{j,k}(r_0)-g_{K,k}(r_0)}\Big| s_0=s }.
\end{aligned}
\end{equation}
Hence, let $\bW=\bTheta \bC^{-\top}$ and $\bw=\vect(\bW)=\prn{\bC^{-1}\otimes \bI_d}\btheta$ (see Appendix~\ref{Appendix:pmf_representation} for their meaning), then
\begin{align*}
\tilde{\bp}_{\btheta}(s)=& \prn{\tilde{p}_k(s;\btheta)}_{k=0}^{K-1} \\
=&\EB\brk{\begin{bmatrix}
g_{K,1}(r_0) \\
\vdots \\
g_{K,{K-1}}(r_0)
\end{bmatrix}+\sum_{j=0}^{K-1}p_j(s_1;
\btheta)\begin{bmatrix}
g_{j,1}(r_0)-g_{K,1}(r_0) \\
\vdots \\
g_{j,K-1}(r_0)-g_{K,K-1}(r_0)
\end{bmatrix}\Bigg| s_0=s}\\ 
=&\EB\brk{\bg_K(r_0)+\sum_{j=0}^{K-1}p_j(s_1;
\btheta)\prn{\bg_j(r_0)-\bg_K(r_0)}    \Big| s_0=s }\\
=&\EB\brk{\bg_K(r_0)+\prn{\bG(r_0)-\bm{1}_K^{\top}\otimes\bg_K(r_0)}  \bp_{\btheta}(s_1)  \Big| s_0=s }\\
=&\EB\brk{\bg_K(r_0)+\prn{\bG(r_0)-\bm{1}_K^{\top}\otimes\bg_K(r_0)}  \brk{\prn{\bI_K\otimes\bphi(s_1)}^{\top}\bw+\frac{1}{K+1}\bm{1}_{K}}  \Big| s_0=s }\\
=&\EB\brk{\prn{\bG(r_0)-\bm{1}_K^{\top}\otimes\bg_K(r_0)}  \prn{\bI_K\otimes\bphi(s_1)}^{\top} \Big| s_0=s }\bw\\
&\quad +\EB\brk{\bg_K(r_0)+\frac{1}{K+1}\prn{\bG(r_0)-\bm{1}_K^{\top}\otimes\bg_K(r_0)}  \bm{1}_{K} \Big| s_0=s }\\
=&\EB\brk{\prn{\bG(r_0)-\bm{1}_K^{\top}\otimes\bg_K(r_0)} \otimes \bphi(s_1)^{\top}\Big| s_0=s }\bw+\frac{1}{K+1}\sum_{j=0}^K\EB\brk{\bg_j(r_0) \Big| s_0=s },
\end{align*}
or equivalently,
    \begin{equation*}
    \begin{aligned}
             \tilde{\bp}_{\btheta}(s)&=\EB\brk{\tilde{\bG}(r_0) \bW^{\top} \bphi(s_1)\Big| s_0=s }+\frac{1}{K+1}\sum_{j=0}^K\EB\brk{\bg_j(r_0) \Big| s_0=s }\\
             &=\EB\brk{\tilde{\bG}(r_0) \bC^{-1} \bTheta^{\top} \bphi(s_1)\Big| s_0=s }+\frac{1}{K+1}\sum_{j=0}^K\EB\brk{\bg_j(r_0) \Big| s_0=s }.
    \end{aligned}
    \end{equation*}
\end{proof}

\subsection{Proof of Proposition~\ref{prop:approx_error}}\label{appendix:proof_approx_error}
\begin{proof}
By the basic inequality (Lemma~\ref{lem:prob_basic_inequalities}), we only need to show
   \begin{equation*}
    \begin{aligned}
        \ell_{2,\mu_{\pi}}^2\prn{\bm{\eta}^\pi,\bm{\eta}_{\btheta^{\star}}}\leq&\frac{\ell_{2,\mu_{\pi}}^2\prn{\bm{\eta}^{\pi},\bPi_{\bphi, K}^{\pi}\bm{\eta}^{\pi}}}{1-\gamma}\\
        =&\frac{\ell_{2,\mu_{\pi}}^2\prn{\bm{\eta}^{\pi},\bPi_{K}\bm{\eta}^{\pi}}+\ell_{2,\mu_{\pi}}^2\prn{\bPi_{K}\bm{\eta}^{\pi},\bPi_{\bphi, K}^{\pi}\bm{\eta}^{\pi}}}{1-\gamma}\\
       \leq& \frac{1}{K(1-\gamma)^2}+\frac{\ell_{2,\mu_{\pi}}^2\prn{\bPi_K\bm{\eta}^{\pi},\bPi_{\bphi, K}^{\pi}\bm{\eta}^{\pi}}}{1-\gamma},
    \end{aligned}
\end{equation*}
where we used \citet[Proposition~9.18 and Eqn.~(5.28)]{bdr2022}.
\end{proof}

\subsection{Linear-Categorical Parametrization with Probability Mass Function Representation}\label{Appendix:pmf_representation}
We introduce new notations for linear-categorical parametrization with probability mass function (PMF) representation.
Let $\bW:=\bTheta \bC^{-\top}=\prn{\btheta(0), \btheta(1)-\btheta(0), \cdots,\btheta(K-1)-\btheta(K-2)}\in\RB^{d\times K}$ and the vectorization of $\bW$, $\bw:=\vect\prn{\bW}=\prn{\bC^{-1}\otimes\bI_d}\btheta\in\RB^{dK}$.
We abbreviate $\bp_{\bm{\eta}_\btheta}$ as $\bp_{\bw}$ in this section. 
Then by Lemma~\ref{lem:vec_and_KP}, for any $s\in\gS$, it holds that
\begin{equation}\label{eq:PMF_linear_parametrization}
    \bp_{\bw}(s)=\bW^{\top}\bphi(s)+(K+1)^{-1}\bm{1}_{K}.
\end{equation}
PMF and CDF representations are equivalent because $\bC$ is invertible.

For any $\bm{\eta}_{\bw_1},\bm{\eta}_{\bw_2}\in\sP^{\sgn}_{\bphi,K}$, by Proposition~\ref{prop:PK_isometric},
\begin{equation}\label{eq:CDF_target_function}
    \begin{aligned}
    \ell_{2,\mu_{\pi}}^2\prn{\bm{\eta}_{\bw_1},\bm{\eta}_{\bw_2}}=&\EB_{s\sim\mu_\pi}\brk{\ell_{2}^2\prn{\eta_{\bw_1}(s),\eta_{\bw_2}(s)}}\\
    =&\iota_K\EB_{s\sim\mu_\pi}\brk{\norm{\bC\prn{\bp_{\bw_1}(s)-\bp_{\bw_2}(s)}}^2}\\
    =&\iota_K\tr\prn{\bSigma_{\bphi}^{\frac{1}{2}}\prn{\bW_1-\bW_2}\bC^{\top}\bC\prn{\bW_1-\bW_2}^{\top}\bSigma_{\bphi}^{\frac{1}{2}}} \\
    =&\iota_K\norm{\bw_1-\bw_2}^2_{\prn{\bC^{\top}\bC}\otimes\bSigma_{\bphi}},
\end{aligned}
\end{equation}
hence the affine space $\prn{\sP^{\sgn}_{\bphi,K}, \ell_{2,\mu_{\pi}}}$ is isometric with the Euclidean space $\prn{\RB^{Kd}, \sqrt{\iota_K}\norm{\cdot}_{\prn{\bC^{\top}\bC}\otimes\bSigma_{\bphi}}}$ if we consider the PMF representation.

Following the proof of Lemma~\ref{lem:gradient_cramer_distance} in Appendix~\ref{appendix:linear-cate-project-op}, we can also derive the gradient when we use the PMF parametrization:
\begin{align*}
        \nabla_{\bw} \ell_2^2\prn{{\eta}_{\bw}(s),{\eta}(s)}&=\nabla_{\bw}\ell_2^2\prn{{\eta}_{\bw}(s),\bPi_K{\eta}(s)}\\
        &=\iota_K\nabla_{\bw}\norm{\bC\prn{\bp_{\bw}(s)-\bp_{\bm{\eta}}(s)}}^2\\
        &=2\iota_K\prn{\bI_K\otimes\bphi(s)}\bC^{\top}\bC\prn{\bp_{\bw}(s)-\bp_{\bm{\eta}}(s)}\\
        &=2\iota_K\prn{\bI_K\otimes\bphi(s)}\bC^{\top}\bC\prn{\prn{\bI_K\otimes\bphi(s)}^{\top}\bw+\frac{1}{K+1}\bm{1}_{K}-\bp_{\bm{\eta}}(s)}\\
        &=2\iota_K \brc{ \brk{\prn{\bC^{\top}\bC}\otimes\prn{\bphi(s)\bphi(s)^{\top}}}\bw+  
 \brk{\prn{\bC^{\top}\bC\prn{\frac{1}{K+1}\bm{1}_{K}-\bp_{\bm{\eta}}(s)}}\otimes \bphi(s)}},
    \end{align*}
    
\begin{equation}\label{eq:gradient_pmf_representation}
\begin{aligned}
        \nabla_{\bW} \ell_2^2\prn{{\eta}_{\bw}(s),{\eta}(s)}&=2\iota_K\bphi(s)\prn{\bp_{\bw}(s)-\bp_{\bm{\eta}}(s)}^{\top}\bC^{\top}\bC\\
        &=2\iota_K\bphi(s)\brk{\bphi(s)^{\top}\bW+\prn{\frac{1}{K+1}\bm{1}_{K}-\bp_{\bm{\eta}}(s)}^{\top}}\bC^{\top}\bC.
    \end{aligned}
    \end{equation}

\subsection{Stochastic Semi-Gradient Descent with Linear Function Approximation}\label{appendix:equiv_ssgd_lctd}
We denote by ${\gT}_t^\pi$ the corresponding empirical distributional Bellman operator at the $t$-th iteration, which satisfies
\begin{equation}\label{eq:empirical_op}
    \brk{\gT_t^\pi{\bm{\eta}}}(s_{t})=(b_{r_{t},\gamma})_\#(\eta(s_{t+1})),\quad\forall\bm{\eta}\in\sP^{\gS}.
\end{equation}
\subsubsection{Probability Mass Function Representation}
Consider the stochastic semi-gradient descent (SSGD) with the probability mass function (PMF) representation
\[
    \bW_t\gets\bW_{t-1}-\alpha\nabla_{\bW}\ell_2^2\prn{{\eta}_{\bw_{t-1}}(s_t), \brk{\gT_t^\pi{\bm{\eta}_{\bw_{t-1}}}}(s_{t})},
\]
where $\nabla_{\bW}$ stands for taking gradient w.r.t.\ $\bW_{t-1} \in \RB^{d\times K}$ in the first term ${\eta}_{\bw_{t-1}}(s_t)$ (the second term is regarding as a constant, that's why we call it a semi-gradient). 
We can check that $\nabla_{\bW}\ell_2^2\prn{{\eta}_{\bw_{t-1}}(s_t), \brk{\gT_t^\pi{\bm{\eta}_{\bw_{t-1}}}}(s_{t})}$ is an unbiased estimate of $\nabla_{\bW} \ell_{2,\mu_{\pi}}^2\prn{\bm{\eta}_{\bw_{t-1}},\gT^\pi{\bm{\eta}_{\bw_{t-1}}}}$.

Now, let us compute the gradient term.
By Eqn.~\eqref{eq:gradient_pmf_representation}, we have
    \begin{align*}
        \nabla_{\bW} \ell_2^2\prn{{\eta}_{\bw_{t-1}}(s_t),\brk{\gT_t^\pi{\bm{\eta}_{\bw_{t-1}}}}(s_{t})}&=2\iota_K\bphi(s_t)\prn{\bp_{\bw_{t-1}}(s_t)-\bp_{\gT_t^\pi{\bm{\eta}_{\bw_{t-1}}}}(s_t)}^{\top}\bC^{\top}\bC\\
        &=2\iota_K\bphi(s_t)\brk{\bphi(s_t)^{\top}\bW_{t-1}+\prn{\frac{1}{K+1}\bm{1}_{K}-\bp_{\gT_t^\pi{\bm{\eta}_{\bw_{t-1}}}}(s_t)}^{\top}}\bC^{\top}\bC.
    \end{align*}
where $\bp_{\gT_t^\pi{\bm{\eta}_{\bw_{t-1}}}}(s_t)=\bp_{\bPi_K\gT_t^\pi{\bm{\eta}_{\bw_{t-1}}}(s_t)}=\prn{p_k\prn{\brk{\gT_t^\pi{\bm{\eta}_{\bw_{t-1}}}}(s_t)}}_{k=0}^{K-1}\in\RB^{K}$.
Now, we turn to compute $\bp_{\gT_t^\pi{\bm{\eta}_{\bw_{t-1}}}}(s_t)$.
According to Eqn.~\eqref{eq:categorical_prob},
    \begin{align*}
        p_k\prn{\brk{\gT_t^\pi{\bm{\eta}_{\bw_{t-1}}}}(s_t)}=&\EB_{X\sim \brk{\gT_t^\pi{\bm{\eta}_{\bw_{t-1}}}}(s_t)}\brk{\prn{1-\abs{\frac{X-x_k}{\iota_K}}}_+}\\
        =&\EB_{G\sim \eta_{\bw_{t-1}}(s_{t+1})}\brk{\prn{1-\abs{\frac{r_t+\gamma G-x_k}{\iota_K}}}_+}\\
=&\sum_{j=0}^Kp_j(s_{t+1};
\bw_{t-1})g_{j,k}(r_t)\\
=&g_{K,k}(r_t)+\sum_{j=0}^{K-1}p_j(s_{t+1};
\bw_{t-1})\prn{g_{j,k}(r_t)-g_{K,k}(r_t)},
    \end{align*}
which has the same form as Eqn.~\eqref{eq:eq_in_proof_Pi_K_T}.
Following the proof of Proposition~\ref{prop:Pi_K_T} in Appendix~\ref{appendix:proof_Pi_K_T}, one can show that
\begin{equation}\label{eq:project_bellman_w}
    \begin{aligned}
             \bp_{\gT_t^\pi{\bm{\eta}_{\bw_{t-1}}}}(s_t)=\tilde{\bG}(r_t) \bW^{\top}_{t-1} \bphi(s_{t+1})+\frac{1}{K+1}\sum_{j=0}^K\bg_j(r_t).
    \end{aligned}
\end{equation}
Hence, the update scheme is
\begin{equation}\label{eq:ssgd_pmf}
    \begin{aligned}
\bW_t\gets&\bW_{t-1}-2\iota_K\alpha\bphi(s_t)\prn{\bp_{\bw_{t-1}}(s_t)-\bp_{\gT_t^\pi{\bm{\eta}_{\bw_{t-1}}}}(s_t)}^{\top}\bC^{\top}\bC\\
=&\bW_{t-1}-2\iota_K\alpha\bphi(s_t)\brk{\bphi(s_t)^{\top}\bW_{t-1}-\bphi(s_{t+1})^{\top}\bW_{t-1}\tilde{\bG}^{\top}(r_t)-\frac{1}{K+1}\prn{\sum_{j=0}^K\bg_j(r_t)-\bm{1}_{K}}^{\top}}\bC^{\top}\bC.
\end{aligned}
\end{equation}
Note that our {\LCTD} (Eqn.~\eqref{eq:linear_CTD}) is equivalent to
\begin{equation}\label{eq:linear_CTD_PMF_represent}
\begin{aligned}
\bW_t{\gets}\bW_{t{-}1}{-}\alpha\bphi(s_t)\!\!\brk{\bphi(s_t)^{\top}\bW_{t{-}1}{-}\bphi(s_{t{+}1})^{\top}\bW_{t{-}1}\tilde{\bG}^{\top}(r_t)-\frac{1}{K{+}1}\!\prn{\sum_{j=0}^K\bg_j(r_t){-}\bm{1}_{K}}^{\top}},
\end{aligned}
\end{equation}
in the PMF representation.
Compared to Eqn.~\eqref{eq:linear_CTD_PMF_represent}, the SSGD (Eqn.~\eqref{eq:ssgd_pmf}) has an additional $\bC^{\top}\bC$, and the step size is $2\iota_K\alpha$.

\subsubsection{Cumulative Distribution Function Representation}
Consider the SSGD with the CDF representation
\[
    \bTheta_t\gets\bTheta_{t-1}-\alpha\nabla_{\bTheta}\ell_2^2\prn{{\eta}_{\btheta_{t-1}}(s_t), \brk{\gT_t^\pi{\bm{\eta}_{\btheta_{t-1}}}}(s_{t})},
\]
where $\nabla_{\bTheta}$ stands for taking gradient w.r.t.\ $\bTheta_{t-1}=\btheta_{t-1}\bC^{\top} \in \RB^{d\times K}$ in the first term ${\eta}_{\btheta_{t-1}}(s_t)$ (the second term is regarding as a constant). 
One can check that $\nabla_{\bTheta}\ell_2^2\prn{{\eta}_{\btheta_{t-1}}(s_t), \brk{\gT_t^\pi{\bm{\eta}_{\btheta_{t-1}}}}(s_{t})}$ is an unbiased estimate of $\nabla_{\bTheta} \ell_{2,\mu_{\pi}}^2\prn{\bm{\eta}_{\btheta_{t-1}},\gT^\pi{\bm{\eta}_{\btheta_{t-1}}}}$.

Now, let us compute the gradient term.
By Eqn.~\eqref{eq:gradient_cdf_representation} and Eqn.~\eqref{eq:project_bellman_w}, we have
    \begin{align*}
        &\nabla_{\bTheta} \ell_2^2\prn{{\eta}_{\btheta_{t-1}}(s_t),\brk{\gT_t^\pi{\bm{\eta}_{\btheta_{t-1}}}}(s_{t})}\\
        &\qquad=2\iota_K\bphi(s_t)\prn{\bF_{\btheta_{t-1}}(s_t)-\bF_{\gT_t^\pi{\bm{\eta}_{\btheta_{t-1}}}}(s_t)}^{\top}\\
        &\qquad=2\iota_K\bphi(s_t)\brk{\bphi(s_t)^{\top}\bTheta_{t-1}+\prn{\frac{1}{K+1}\bm{1}_{K}-\bp_{\gT_t^\pi{\bm{\eta}_{\btheta_{t-1}}}}(s)}^{\top}\bC^{\top}}\\
        &\qquad=2\iota_K\bphi(s_t)\brk{\bphi(s_t)^{\top}\bTheta_{t-1}-\bphi(s_{t+1})^{\top}\bTheta_{t-1}\bC^{-\top}\tilde{\bG}^{\top}(r_t)\bC^{\top}-\frac{1}{K+1}\prn{\sum_{j=0}^K\bg_j(r_t)-\bm{1}_{K}}^{\top}\bC^{\top}}.
    \end{align*}
Hence, the update scheme is
\begin{equation*}
   \begin{aligned}
\bTheta_t\gets&\bTheta_{t-1}-2\iota_K\alpha\bphi(s_t)\prn{\bF_{\btheta_{t-1}}(s_t)-\bF_{\gT_t^\pi{\bm{\eta}_{\btheta_{t-1}}}}(s_t)}^{\top}\\
=&\bTheta_{t-1}-2\iota_K\alpha\bphi(s_t)\brk{\bphi(s_t)^{\top}\bTheta_{t-1}-\bphi(s_{t+1})^{\top}\bTheta_{t-1}\bC^{-\top}\tilde{\bG}^{\top}(r_t)\bC^{\top}-\frac{1}{K+1}\prn{\sum_{j=0}^K\bg_j(r_t)-\bm{1}_{K}}^{\top}\bC^{\top}},
\end{aligned} 
\end{equation*}
which has the same form as Linear-CTD (Eqn.~\eqref{eq:linear_CTD}) with the step size $2\alpha\iota_K$.

\subsection{{\LCTD} is mean-preserving}\label{Appendix:LCTD_as_extension}
We will show that our {\LCTD} is mean-preserving, which was first discovered by \citet[Proposition~8]{lyle2019comparative}.
In this section, we assume the first coordinate of the feature is a constant, \ie, $\phi_1(s)=1/\sqrt{d}$ for any $s\in\gS$. 
As stated before, we will always assume this to ensure $\sP_{\bphi,K}$ can be uniquely defined.
\begin{proposition}\label{prop:mean_preserve_property}
Let $\bV_{\btheta}:=(V_\btheta (s))_{s\in\gS}$ be the value function corresponding to $\btheta$, \ie, $V_\btheta (s)=\EB_{G\sim\eta_{\btheta}(s)}[G]$,
then for any initialization of the {\LTD} parameter $\bpsi_0$, there exists a (not unique) corresponding {\LCTD} parameter $\btheta_0$ such that
$\bV_{\btheta_0}=\bV_{\bpsi_0}$, furthermore, for any $t\geq 1$ and even number $T\geq2$, it holds that 
\begin{equation*}
    \bV_{\btheta_t}=\bV_{\bpsi_t},\quad \bV_{\bar{\btheta}_T}=\bV_{\bar{\bpsi}_T}.
\end{equation*}
\end{proposition}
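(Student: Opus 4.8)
The plan is to reduce the entire statement to one algebraic fact: the mean of $\eta_\btheta(s)$ is an \emph{affine} function of $\btheta$ that intertwines the two updates. First I would record the induced value function in closed form. Writing $\bF_\btheta(s)=(F_k(s;\btheta))_{k=0}^{K-1}$ and using $p_k(s;\btheta)=F_k(s;\btheta)-F_{k-1}(s;\btheta)$, $F_{-1}\equiv0$, $F_K\equiv1$, and $x_{k+1}-x_k=\iota_K$, summation by parts gives
\begin{equation*}
V_\btheta(s)=\sum_{k=0}^K x_k\,p_k(s;\btheta)=x_K-\iota_K\bm{1}_K^\top\bF_\btheta(s)=-\iota_K\,\bphi(s)^\top\bTheta\bm{1}_K+\tfrac{1}{2}x_K,
\end{equation*}
where the constant $\tfrac12 x_K$ comes from the normalization term $(k{+}1)/(K{+}1)$ in Eqn.~\eqref{eq:def_linear_parametrize}. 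Hence $\btheta\mapsto\bV_\btheta$ is affine and factors through $\bTheta\bm{1}_K=\sum_{k=0}^{K-1}\btheta(k)$. Since $\phi_1\equiv1/\sqrt d$ is constant, the additive constant is realized through the intercept feature, so given $\bpsi_0$ I would pick any $\bTheta_0$ solving $-\iota_K\bTheta_0\bm{1}_K+\tfrac12 x_K\sqrt d\,\be_1=\bpsi_0$ (for instance loading the right-hand side into $\btheta_0(0)$ and setting the remaining columns to zero). Such $\bTheta_0$ exists and is non-unique once $K\geq2$, which yields the first claim $\bV_{\btheta_0}=\bV_{\bpsi_0}$.

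Next I would prove $\bV_{\btheta_t}=\bV_{\bpsi_t}$ by induction on $t$, the base case being the initialization above. By the CDF form of Eqn.~\eqref{eq:linear_CTD} derived in Appendix~\ref{appendix:equiv_ssgd_lctd}, the bracket in the {\LCTD} update equals $(\bF_{\btheta_{t-1}}(s_t)-\bF_{\gT_t^\pi\bm{\eta}_{\btheta_{t-1}}}(s_t))^\top$, so applying the affine map $\bTheta\mapsto-\iota_K\bphi(s)^\top\bTheta\bm{1}_K+\tfrac12 x_K$ to both sides of the update yields
\begin{equation*}
V_{\btheta_t}(s)=V_{\btheta_{t-1}}(s)+\alpha\,\bphi(s)^\top\bphi(s_t)\,\big[\iota_K\bm{1}_K^\top\big(\bF_{\btheta_{t-1}}(s_t)-\bF_{\gT_t^\pi\bm{\eta}_{\btheta_{t-1}}}(s_t)\big)\big].
\end{equation*}
Now $\iota_K\bm{1}_K^\top\bF_{\btheta_{t-1}}(s_t)=x_K-V_{\btheta_{t-1}}(s_t)$, while $\iota_K\bm{1}_K^\top\bF_{\gT_t^\pi\bm{\eta}_{\btheta_{t-1}}}(s_t)=x_K-m$, where $m$ is the mean of $\bPi_K(b_{r_t,\gamma})_\#\eta_{\btheta_{t-1}}(s_{t+1})$. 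At this point I would invoke mean-preservation of the categorical projection: since $\eta_{\btheta_{t-1}}(s_{t+1})$ is supported on $\{x_0,\dots,x_K\}$ and $r_t\in[0,1]$, the pushforward is supported on $[0,(1-\gamma)^{-1}]=[x_0,x_K]$, and on that interval the triangular kernel of Eqn.~\eqref{eq:categorical_prob} reproduces the identity $\sum_{k=0}^K x_k(1-|x-x_k|/\iota_K)_+=x$, so $\bPi_K$ leaves the mean unchanged and $m=r_t+\gamma V_{\btheta_{t-1}}(s_{t+1})$. The increment thus reduces to $\alpha\,\bphi(s)^\top\bphi(s_t)[r_t+\gamma V_{\btheta_{t-1}}(s_{t+1})-V_{\btheta_{t-1}}(s_t)]$. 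Expanding Eqn.~\eqref{eq:linear_TD} gives the identical recursion $V_{\bpsi_t}(s)=V_{\bpsi_{t-1}}(s)+\alpha\,\bphi(s)^\top\bphi(s_t)[r_t+\gamma V_{\bpsi_{t-1}}(s_{t+1})-V_{\bpsi_{t-1}}(s_t)]$, so under the inductive hypothesis $\bV_{\btheta_{t-1}}=\bV_{\bpsi_{t-1}}$ the two increments coincide as functions of $s$, giving $\bV_{\btheta_t}=\bV_{\bpsi_t}$.

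Finally, the tail-averaging identities follow from the affinity established in the first step: averaging a convex combination leaves the additive constant $\tfrac12 x_K$ unchanged, so $\bV_{\bar\btheta_T}=(T/2+1)^{-1}\sum_{t=T/2}^T\bV_{\btheta_t}$, and by linearity of $\bpsi\mapsto\bV_\bpsi$ we likewise have $\bV_{\bar\bpsi_T}=(T/2+1)^{-1}\sum_{t=T/2}^T\bV_{\bpsi_t}$; equality of the per-iterate value functions then yields $\bV_{\bar\btheta_T}=\bV_{\bar\bpsi_T}$. I expect the main obstacle to be the second step: one must correctly identify the {\LCTD} bracket with the CDF residual and, above all, justify mean-preservation of $\bPi_K$ on exactly the support $[x_0,x_K]$, keeping careful track of the $(k{+}1)/(K{+}1)$ normalization and the $\iota_K$ factors throughout the summation-by-parts bookkeeping; everything else is routine linear algebra.
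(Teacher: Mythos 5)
Your proposal is correct and follows essentially the same route as the paper's own proof: the same affine closed form $V_{\btheta}(s)=\tfrac{1}{2}x_K-\iota_K\bphi(s)^{\top}\bTheta\bm{1}_K$, the same use of the intercept feature to construct a (non-unique) matching $\btheta_0$, the same induction in which the {\LCTD} bracket is identified with the CDF residual and the increment is matched to the {\LTD} increment via mean-preservation of $\bPi_K$ (the pointwise identity $\sum_{k=0}^K x_k(1-\abs{x-x_k}/\iota_K)_+=x$ on $[x_0,x_K]$, which is exactly the paper's Lemma~\ref{lem:cramer_proj_preserve_mean}), and the same affinity/linearity argument for the tail averages.
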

\begin{proof}[Proof of Proposition~\ref{prop:mean_preserve_property}]
Recall that the updating scheme of {\LTD} is given by
\begin{equation*}
\begin{aligned}
\bpsi_t\gets&\bpsi_{t-1}-\alpha\bphi(s_t)\prn{V_{\bpsi_{t-1}}(s_t)-r_t-\gamma V_{\bpsi_{t-1}}(s_{t+1})}
\end{aligned}
\end{equation*}
And the updating scheme of {\LCTD} is given by
\begin{equation*}
\begin{aligned}
\bTheta_t\gets&\bTheta_{t-1}-\alpha\bphi(s_t)\prn{\bF_{\btheta_{t-1}}(s_t)-\bF_{\gT_t^\pi{\bm{\eta}_{\btheta_{t-1}}}}(s_t)}^{\intercal}.
\end{aligned}
\end{equation*}
Let  
$\bV_{\btheta}:=(V_\btheta (s))_{s\in\gS}$ be the value function corresponding to $\btheta$, we have
\begin{equation*}
\begin{aligned}
    V_\btheta (s)=&\iota_K \sum_{k=0}^{K-1}\prn{1-F_k(s;\btheta)}\\
    =&\iota_K\prn{\bm{1}_K-\bF_{\btheta}(s)}^{\intercal}\bm{1}_K \\
    =&\frac{1}{2(1-\gamma)}-\iota_K\bphi(s)^{\intercal}\bTheta\bm{1}_K.
\end{aligned}
\end{equation*}
Hence, if we take $\psi_{0,1}=\frac{\sqrt{d}}{2(1-\gamma)}$, $\psi_{0,i}=0$ for any $i\in\brc{2,\ldots, d}$, and $\btheta_0=\bm{0}_{dK}$, it holds that
\begin{equation*}
    \bV_{\bpsi_0}=\bV_{\btheta_0}=\frac{1}{2(1-\gamma)}\bm{1}_{\gS}.
\end{equation*}
We can also show that for any $\bpsi_0\in\RB^d$, there exists $\btheta_0\in\RB^{d\times K}$ such that $\bV_{\bpsi_0}=\bV_{\btheta_0}$.
That is we need to find $\btheta_0$ such that for any $s\in\gS$,
\begin{equation*}
    \iota_K\sum_{k=0}^{K-1}\bphi(s)^{\intercal}\btheta_0(k)=\frac{1}{2(1-\gamma)}-\bphi(s)^\intercal \bpsi_0.
\end{equation*}
We can take $\btheta_0$ satisfying the following equations to make the above equation hold
\begin{equation*}
    \iota_K\sum_{k=0}^{K-1}\theta_0(k,1)=\frac{\sqrt{d}}{2(1-\gamma)}-\psi_{0,1},
\end{equation*}
and
\begin{equation*}
    \iota_K\sum_{k=0}^{K-1}\btheta_0(k)_{-1}=- \bpsi_{0,-1}.
\end{equation*}
Furthermore, for any $t\geq 1$, we have for any $s\in\gS$,
\begin{equation*}
\begin{aligned}
    V_{\btheta_t}(s)=&\frac{1}{2(1-\gamma)}-\iota_K\bphi(s)^{\intercal}\bTheta_t\bm{1}_K\\
    =&\frac{1}{2(1-\gamma)}-\iota_K\bphi(s)^{\intercal}\prn{\bTheta_{t-1}-\alpha\bphi(s_t)\prn{\bF_{\btheta_{t-1}}(s_t)-\bF_{\gT_t^\pi{\bm{\eta}_{\btheta_{t-1}}}}(s_t)}^{\intercal}}\bm{1}_K\\
    =&V_{\btheta_{t-1}}(s)+\alpha\iota_K \prn{\bphi(s)^{\intercal}\bphi(s_t)}\prn{\bF_{\btheta_{t-1}}(s_t)-\bF_{\gT_t^\pi{\bm{\eta}_{\btheta_{t-1}}}}(s_t)}^{\intercal}\bm{1}_K
\end{aligned}
\end{equation*}
\begin{equation*}
\begin{aligned}
    V_{\bpsi_t}(s)=&\bphi(s)^{\intercal}\bpsi_t\\
    =&\bphi(s)^{\intercal}\prn{\bpsi_{t-1}-\alpha\bphi(s_t)\prn{V_{\bpsi_{t-1}}(s_t)-r_t-\gamma V_{\bpsi_{t-1}}(s_{t+1})}}\\
    =&V_{\bpsi_{t-1}}(s)-\alpha\prn{\bphi(s)^{\intercal}\bphi(s_t)}\prn{V_{\bpsi_{t-1}}(s_t)-r_t-\gamma V_{\bpsi_{t-1}}(s_{t+1})}.
\end{aligned}
\end{equation*}
We need to check that, if $\bV_{\btheta_{t-1}}=\bV_{\bpsi_{t-1}}$, it holds that
\begin{equation*}
    \iota_K\prn{\bF_{\gT_t^\pi{\bm{\eta}_{\btheta_{t-1}}}}(s_t)-\bF_{\btheta_{t-1}}(s_t)}^{\intercal}\bm{1}_K=V_{\bpsi_{t-1}}(s_t)-r_t-\gamma V_{\bpsi_{t-1}}(s_{t+1}),
\end{equation*}
which is the direct corollary of the following fact
\begin{equation*}
    \text{LHS}=V_{\bpsi_t}(s_t)-\EB_{X\sim \bPi_K\prn{b_{r,\gamma}}_\#\eta_{\btheta_{t-1}}(s_{t+1})}[X]=V_{\bpsi_t}(s_t)-r_t-\gamma V_{\btheta_{t-1}}(s_{t+1}),
\end{equation*}
by Lemma~\ref{lem:cramer_proj_preserve_mean}.
And we can obtain $\bV_{\bar\btheta_{T}}=\bV_{\bar\bpsi_{T}}$ by using the facts that $\sP^{\sgn}_{\bphi,K}$ is an affine space, $\sV_{\bphi}$ is a linear space and taking expectation is a linear operator.
\end{proof}
\begin{lemma}\label{lem:cramer_proj_preserve_mean}
For any $\nu\in\sP^{\sgn}$, it holds that
\begin{equation*}
    \EB_{X\sim\nu}[X]=\EB_{X\sim\bPi_K \nu}[X].
\end{equation*}
\end{lemma}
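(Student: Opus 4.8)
The plan is to reduce the claim to the elementary fact that piecewise-linear interpolation on the grid $\{x_k\}_{k=0}^K$ reproduces linear functions exactly. Write $h_k(x):=\prn{1-\abs{(x-x_k)/\iota_K}}_+$ for the triangular (hat) kernel centered at $x_k$. First I would expand the mean of the projected measure using the representation $\bPi_K\nu=\sum_{k=0}^K p_k(\nu)\delta_{x_k}$, which gives $\EB_{X\sim\bPi_K\nu}[X]=\sum_{k=0}^K x_k\,p_k(\nu)$, and record that $p_k(\nu)=\int_{[0,(1-\gamma)^{-1}]}h_k(x)\,\nu(dx)$ holds for \emph{every} $k\in\{0,\dots,K\}$: for $k<K$ this is exactly Eqn.~\eqref{eq:categorical_prob}, while for $k=K$ it is equivalent to the defining identity $p_K(\nu)=1-\sum_{k=0}^{K-1}p_k(\nu)$ once we know $\sum_{k=0}^K h_k(x)\equiv 1$ on the support and $\nu(\RB)=1$.

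The crux of the argument is the identity $\sum_{k=0}^K x_k\,h_k(x)=x$ for all $x\in[0,(1-\gamma)^{-1}]$. I would prove it by fixing an arbitrary subinterval $[x_j,x_{j+1}]$, on which only $h_j$ and $h_{j+1}$ are nonzero, with $h_j(x)=(x_{j+1}-x)/\iota_K$ and $h_{j+1}(x)=(x-x_j)/\iota_K$. These sum to $1$ (the partition-of-unity fact used above), and since $x_{j+1}-x_j=\iota_K$ a one-line computation gives
\begin{equation*}
    x_j h_j(x)+x_{j+1}h_{j+1}(x)=\frac{x_j(x_{j+1}-x)+x_{j+1}(x-x_j)}{\iota_K}=\frac{x\,(x_{j+1}-x_j)}{\iota_K}=x.
\end{equation*}
Since the subintervals $[x_j,x_{j+1}]$ cover the whole support and the endpoints are handled trivially ($h_0(x_0)=h_K(x_K)=1$), the identity holds throughout.

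Finally I would interchange the finite sum with the integral—valid by linearity of the integral, since $\nu$ is a finite signed measure on the compact support $[0,(1-\gamma)^{-1}]$ and all integrands are bounded—to conclude
\begin{equation*}
    \EB_{X\sim\bPi_K\nu}[X]=\sum_{k=0}^K x_k\int h_k(x)\,\nu(dx)=\int\Big(\sum_{k=0}^K x_k h_k(x)\Big)\nu(dx)=\int x\,\nu(dx)=\EB_{X\sim\nu}[X].
\end{equation*}
There is no substantial obstacle here: the only points requiring a little care are the endpoint bookkeeping of the hat functions and confirming the $k=K$ representation of $p_K$, both of which follow from the partition-of-unity identity. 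The conceptual content is simply that the Cram\'er ($\ell_2$) projection onto the categorical class acts as linear interpolation of the CDF, and linear interpolation preserves first moments.
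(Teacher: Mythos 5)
Your proof is correct and takes essentially the same route as the paper's: both reduce the claim to the interpolation identity $\sum_{k=0}^K x_k\prn{1-\abs{(x-x_k)/\iota_K}}_+ = x$ on $[0,(1-\gamma)^{-1}]$ (proved by localizing to the two adjacent grid points) and then exchange the finite sum with the integral against $\nu$. The only minor difference is that you explicitly verify the representation $p_K(\nu)=\int h_K\,d\nu$ for the top atom via the partition-of-unity identity, a bookkeeping point the paper's proof leaves implicit.
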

\begin{proof}
By Eqn.~\eqref{eq:categorical_prob}, $\bm{\Pi}_K\nu\in\sP^{\sgn}_K$ is uniquely identified with a vector $\bp_\nu=\prn{p_k(\nu)}_{k=0}^{K-1}\in\RB^K$, where
\begin{equation*}
     p_k(\nu)=\int_{\brk{0,(1-\gamma)^{-1}}}(1-\abs{(x-x_k)/{\iota_K}})_+ \nu(dx).
\end{equation*}
Hence,
for any $x\in[0,(1-\gamma)^{-1}]$, we define $x_{\text{lb}}:=\max\{y\in\{x_0,\ldots,x_K\}\colon x\leq y \}$, then
\begin{equation*}
    \begin{aligned}
        \sum_{k=0}^K x_k (1-\abs{(x-x_k)/{\iota_K}})_+=&x_{\text{lb}}\prn{1-\frac{x-x_{\text{lb}}}{\iota_K}}+\prn{x_{\text{lb}}+\iota_K}\prn{1-\frac{x_{\text{lb}}+\iota_K-x}{\iota_K}}\\
        =&x_{\text{lb}}+\iota_K\prn{1-\frac{x_{\text{lb}}+\iota_K-x}{\iota_K}}\\
        =&x,
    \end{aligned}
\end{equation*}
therefore,
\begin{equation*}
    \begin{aligned}
        \EB_{X\sim\bPi_K \nu}[X]=&\sum_{k=0}^K x_k \int_{\brk{0,(1-\gamma)^{-1}}}(1-\abs{(x-x_k)/{\iota_K}})_+ \nu(dx)\\
        =&\int_{\brk{0,(1-\gamma)^{-1}}}\sum_{k=0}^K x_k (1-\abs{(x-x_k)/{\iota_K}})_+ \nu(dx)\\
        =&\int_{\brk{0,(1-\gamma)^{-1}}}x\nu(dx)\\
        =&\EB_{X\sim\nu}[X].
    \end{aligned}
\end{equation*}
\end{proof}

\section{Omitted Results and Proofs in Section~\ref{Section:analysis_linear_ctd}}
\subsection{Proof of Lemma~\ref{lem:translate_error_to_loss}}\label{appendix:proof_translate_error_to_loss}
\begin{proof}
By Lemma~\ref{lem:prob_basic_inequalities} and Eqn.~\eqref{eq:CDF_target_function}, we have
\begin{equation*}
\begin{aligned}
    \prn{\gL(\btheta)}^2=&W_{1,\mu_{\pi}}^2\prn{\bm{\eta}_{\btheta},\bm{\eta}_{\btheta^{\star}}}\\
    \leq&\frac{1}{1-\gamma}\ell_{2,\mu_{\pi}}^2\prn{\bm{\eta}_{\btheta},\bm{\eta}_{\btheta^{\star}}}\\
    =&\frac{\iota_K}{1-\gamma}\tr\prn{\prn{\bTheta-\bTheta^\star}^{\top}\bSigma_{\bphi}\prn{\bTheta-\bTheta^\star}} \\
    =&\frac{1}{K(1-\gamma)^2}\norm{\btheta-\btheta^\star}^2_{\bI_K\otimes\bSigma_{\bphi}},
\end{aligned}
\end{equation*}
We only need to show that
\begin{equation}\label{eq:bar_A_lower_bound}
\begin{aligned}
        \bI_K\otimes\bSigma_{\bphi}\preccurlyeq\frac{1}{(1-\sqrt\gamma)^2}\bar{\bA}^{\top}\prn{\bI_K\otimes\bSigma_{\bphi}^{-1}}\bar{\bA}\prn{\preccurlyeq\frac{4}{(1-\gamma)^2\lambda_{\min}}\bar{\bA}^{\top}\bar{\bA}},
\end{aligned}
\end{equation}
or equivalently,
    \begin{equation*}
\begin{aligned}
        \prn{\bI_K\otimes\bSigma_{\bphi}^{-\frac{1}{2}}}\bar{\bA}^{\top}\prn{\bI_K\otimes\bSigma_{\bphi}^{-1}}\bar{\bA}\prn{\bI_K\otimes\bSigma_{\bphi}^{-\frac{1}{2}}}\succcurlyeq \prn{1-\sqrt\gamma}^2 \bI_{dK}.
\end{aligned}
\end{equation*}
Recall
    \begin{equation*}
\begin{aligned}
        \bar{\bA}&=\prn{\bI_K\otimes\bSigma_{\bphi}}-\EB_{s, r, s^\prime}\brk{\prn{\bC\tilde{\bG}(r)\bC^{-1}}\otimes\prn{\bphi(s)\bphi(s^\prime)^{\top}}},
\end{aligned}
\end{equation*}
then for any $\btheta\in\RB^{dK}$ with $\norm{\btheta}=1$, 
    \begin{equation*}
\begin{aligned}
        &\btheta^{\top} \prn{\bI_K\otimes\bSigma_{\bphi}^{-\frac{1}{2}}}\bar{\bA}^{\top}\prn{\bI_K\otimes\bSigma_{\bphi}^{-1}}\bar{\bA}\prn{\bI_K\otimes\bSigma_{\bphi}^{-\frac{1}{2}}}\btheta\\
        &\qquad=\norm{\prn{\bI_K\otimes\bSigma_{\bphi}^{-\frac{1}{2}}}\bar{\bA}\prn{\bI_K\otimes\bSigma_{\bphi}^{-\frac{1}{2}}}\btheta}^2\\
        &\qquad=\norm{\btheta- \EB_{s, r, s^\prime}\brk{\prn{\bC\tilde{\bG}(r)\bC^{-1}}\otimes\prn{\bSigma_{\bphi}^{-\frac{1}{2}}\bphi(s)\bphi(s^\prime)^{\top}\bSigma_{\bphi}^{-\frac{1}{2}}}}\btheta }^2\\
        &\qquad\geq\prn{1-\norm{ \EB_{s, r, s^\prime}\brk{\prn{\bC\tilde{\bG}(r)\bC^{-1}}\otimes\prn{\bSigma_{\bphi}^{-\frac{1}{2}}\bphi(s)\bphi(s^\prime)^{\top}\bSigma_{\bphi}^{-\frac{1}{2}}}}\btheta  } }^2.
\end{aligned}
\end{equation*}
It suffices to show that
    \begin{equation}\label{eq:biscuit_matrix_bound}
\begin{aligned}
        &\norm{ \EB_{s, r, s^\prime}\brk{\prn{\bC\tilde{\bG}(r)\bC^{-1}}\otimes\prn{\bSigma_{\bphi}^{-\frac{1}{2}}\bphi(s)\bphi(s^\prime)^{\top}\bSigma_{\bphi}^{-\frac{1}{2}}}}} \leq \sqrt\gamma.
\end{aligned}
\end{equation}
For brevity, we abbreviate $\bC\tilde{\bG}(r)\bC^{-1}$ as $\bY(r)=\prn{y_{ij}(r)}_{i,j=1}^K\in\RB^{K\times K}$.
Thus, it suffices to show that
\begin{equation*}
    \norm{\EB_{s, r, s^\prime}\brk{\bY(r)\otimes\prn{\bSigma_{\bphi}^{-\frac{1}{2}}\bphi(s)\bphi(s^\prime)^{\top}\bSigma_{\bphi}^{-\frac{1}{2}}}}}\leq\sqrt\gamma.
\end{equation*}
For any vectors $\bw=\prn{\bw(0)^{\top}, \cdots, \bw(K-1)^{\top}}$ and $\bv=\prn{\bv(0)^{\top}, \cdots, \bv(K-1)^{\top}}$ in $\RB^{dK}$, we define the corresponding matrices $\bW=\prn{\bw(0), \cdots, \bw(K-1)}$ and $\bV=\prn{\bv(0), \cdots, \bv(K-1)}$ in $\RB^{d\times K}$, then $\norm{\bw}=\norm{\bW}_{F}=\sqrt{\tr\prn{\bW^\top \bW}}$ and $\norm{\bv}=\norm{\bV}_{F}=\sqrt{\tr\prn{\bV^\top \bV}}$.
With these notations, we have
    \begin{equation*}
\begin{aligned}
        &\norm{\EB_{s, r, s^\prime}\brk{\bY(r)\otimes\prn{\bSigma_{\bphi}^{-\frac{1}{2}}\bphi(s)\bphi(s^\prime)^{\top}\bSigma_{\bphi}^{-\frac{1}{2}}}}}\\
        &\qquad =\sup_{\norm{\bw}=\norm{\bv}=1}\bw^\top \EB_{s, r, s^\prime}\brk{\bY(r)\otimes\prn{\bSigma_{\bphi}^{-\frac{1}{2}}\bphi(s)\bphi(s^\prime)^{\top}\bSigma_{\bphi}^{-\frac{1}{2}}}}\bv\\
        &\qquad = \sup_{\norm{\bw}=\norm{\bv}=1}\EB_{s, r, s^\prime}\brk{ \sum_{i,j=1}^K y_{ij}(r)\bw(i)^{\top}\bSigma_{\bphi}^{-\frac{1}{2}}\bphi(s)\bphi(s^\prime)^{\top}\bSigma_{\bphi}^{-\frac{1}{2}} \bv(j)   },
\end{aligned}
\end{equation*}
it is easy to check that
\begin{equation*}
    \bw(i)^{\top}\bSigma_{\bphi}^{-\frac{1}{2}}\bphi(s)\bphi(s^\prime)^{\top}\bSigma_{\bphi}^{-\frac{1}{2}} \bv(j) = \prn{\bW^{\top}\bSigma_{\bphi}^{-\frac{1}{2}}\bphi(s)\bphi(s^\prime)^{\top}\bSigma_{\bphi}^{-\frac{1}{2}}\bV}_{ij},
\end{equation*}
hence
    \begin{equation*}
\begin{aligned}
        &\norm{\EB_{s, r, s^\prime}\brk{\bY(r)\otimes\prn{\bSigma_{\bphi}^{-\frac{1}{2}}\bphi(s)\bphi(s^\prime)^{\top}\bSigma_{\bphi}^{-\frac{1}{2}}}}}\\
        &\qquad = \sup_{\norm{\bW}_F=\norm{\bV}_F=1}\EB_{s, r, s^\prime}\brk{ \sum_{i,j=1}^K y_{ij}(r)\prn{\bW^{\top}\bSigma_{\bphi}^{-\frac{1}{2}}\bphi(s)\bphi(s^\prime)^{\top}\bSigma_{\bphi}^{-\frac{1}{2}}\bV}_{ij}  }\\
        &\qquad = \sup_{\norm{\bW}_F=\norm{\bV}_F=1}\EB_{s, r, s^\prime}\brk{ \tr\prn{ \bY(r)\bV^{\top}\bSigma_{\bphi}^{-\frac{1}{2}}\bphi(s^\prime)\bphi(s)^{\top}\bSigma_{\bphi}^{-\frac{1}{2}}\bW  }}\\
        &\qquad = \sup_{\norm{\bW}_F=\norm{\bV}_F=1}\EB_{s, r, s^\prime}\brk{\bphi(s)^{\top}\bSigma_{\bphi}^{-\frac{1}{2}}\bW   \bY(r)\bV^{\top}\bSigma_{\bphi}^{-\frac{1}{2}}\bphi(s^\prime) }\\
        &\qquad \leq \sup_{\norm{\bW}_F=\norm{\bV}_F=1}\EB_{s, r, s^\prime}\brk{\norm{\bW^{\top}\bSigma_{\bphi}^{-\frac{1}{2}}\bphi(s)}   \norm{\bY(r)}\norm{\bV^{\top}\bSigma_{\bphi}^{-\frac{1}{2}}\bphi(s^\prime)} }\\
        &\qquad \leq \sqrt\gamma\sup_{\norm{\bW}_F=\norm{\bV}_F=1}\EB_{s, s^\prime}\brk{\norm{\bW^{\top}\bSigma_{\bphi}^{-\frac{1}{2}}\bphi(s)}   \norm{\bV^{\top}\bSigma_{\bphi}^{-\frac{1}{2}}\bphi(s^\prime)} }\\
        &\qquad \leq \sqrt\gamma\sup_{\norm{\bW}_F=\norm{\bV}_F=1}\sqrt{\EB_{s }\brk{\norm{\bW^{\top}\bSigma_{\bphi}^{-\frac{1}{2}}\bphi(s)}^2 }  \EB_{ s^\prime}\brk{\norm{\bV^{\top}\bSigma_{\bphi}^{-\frac{1}{2}}\bphi(s^\prime)}^2 }}\\
         &\qquad = \sqrt\gamma\sup_{\norm{\bW}_F=\norm{\bV}_F=1}\sqrt{\EB_{s }\brk{\bphi(s)^{\top}\bSigma_{\bphi}^{-\frac{1}{2}}\bW\bW^{\top}\bSigma_{\bphi}^{-\frac{1}{2}}\bphi(s)  }  \EB_{ s^\prime}\brk{\bphi(s^\prime)^{\top}\bSigma_{\bphi}^{-\frac{1}{2}}\bV\bV^{\top}\bSigma_{\bphi}^{-\frac{1}{2}}\bphi(s^\prime) }}\\
         &\qquad = \sqrt\gamma\sup_{\norm{\bW}_F=\norm{\bV}_F=1}\sqrt{\tr\prn{\bW\bW^{\top}\bSigma_{\bphi}^{-\frac{1}{2}}\EB_{s }\brk{\bphi(s)\bphi(s)^{\top}}\bSigma_{\bphi}^{-\frac{1}{2}}}   \tr\prn{\bV\bV^{\top}\bSigma_{\bphi}^{-\frac{1}{2}}\EB_{s^\prime }\brk{\bphi(s^\prime)\bphi(s^\prime)^{\top}}\bSigma_{\bphi}^{-\frac{1}{2}}}}\\
         &\qquad = \sqrt\gamma\sup_{\norm{\bW}_F=\norm{\bV}_F=1}\sqrt{\tr\prn{\bW\bW^{\top}}   \tr\prn{\bV\bV^{\top}}}\\
         &\qquad = \sqrt\gamma\sup_{\norm{\bW}_F=\norm{\bV}_F=1}\norm{\bW}_F\norm{\bV}_F\\
        &\qquad = \sqrt\gamma,
\end{aligned}
\end{equation*}
where we used $\norm{\bY(r)}\leq \sqrt\gamma$ for any $r\in[0,1]$ by Lemma~\ref{lem:Spectra_of_ccgcc}, and Cauchy-Schwarz inequality.

To summarize, we have shown the desired result
    \begin{equation*}
\begin{aligned}
        \prn{\bI_K\otimes\bSigma_{\bphi}^{-\frac{1}{2}}}\bar{\bA}^{\top}\prn{\bI_K\otimes\bSigma_{\bphi}^{-1}}\bar{\bA}\prn{\bI_K\otimes\bSigma_{\bphi}^{-\frac{1}{2}}}\succcurlyeq \prn{1-\sqrt\gamma}^2 \bI_{dK}.
\end{aligned}
\end{equation*}
\end{proof}

\subsection{Proof of Lemma~\ref{lem:upper_bound_error_quantities}}\label{appendix:proof_upper_bound_error_quantities}
\begin{proof}
For simplicity, we omit $t$ in the random variables, for example, we use $\bA$ to denote a random matrix with the same distribution as $\bA_t$.
In addition, we omit the subscripts in the expectation, the involving random variables are $s\sim\mu_{\pi}, a\sim\pi\prn{\cdot\mid s}, (r,s^\prime)\sim\gP(\cdot,\cdot\mid s,a)$.
\paragraph{Bounding $C_A$.}
By Lemma~\ref{lem:spectral_norm_of_KP},
\begin{equation*}
    \begin{aligned}
        \norm{\bA}\leq &\norm{\bI_K\otimes\prn{\bphi(s)\bphi(s)^{\top}}}+\norm{\prn{\bC\tilde{\bG}(r)\bC^{-1}}\otimes\prn{\bphi(s)\bphi(s^\prime)^{\top}}}\\
        =&\norm{\bphi(s)}^2+\norm{\bphi(s)}\norm{\bphi(s^\prime)}\norm{\bC\tilde{\bG}(r)\bC^{-1}}\\
        \leq &1+\sqrt{\gamma},
    \end{aligned}
\end{equation*}
where we used Lemma~\ref{lem:Spectra_of_ccgcc}. 
Hence, $C_A\leq 2(1+\sqrt{\gamma})$.
\paragraph{Bounding $C_e$.}
By Eqn.~\eqref{eq:AAt_bound},
\begin{equation*}
    \begin{aligned}
        \norm{\bA\btheta^{\star}}^2=&\prn{\btheta^{\star}}^{\top}\bA^{\top}\bA\btheta^{\star}\\
        \leq& 2\prn{\norm{\btheta^{\star}}^2_{\bI_K\otimes\prn{\bphi(s)\bphi(s)^{\top}}}+\gamma\norm{\btheta^{\star}}^2_{\bI_K\otimes\prn{\bphi(s^\prime)\bphi(s^\prime)^{\top}}}}\\
        \leq&2(1+\gamma)\sup_{s\in\gS}\norm{\btheta^{\star}}_{\bI_K\otimes\prn{\bphi(s)\bphi(s)^{\top}}}^2\\
        \leq&2(1+\gamma)\sup_{s\in\gS}\norm{\bphi(s)}^2 \norm{\btheta^{\star}}^2\\
        \leq& 2(1+\gamma)\norm{\btheta^{\star}}^2.
    \end{aligned}
\end{equation*}
Hence
\begin{equation*}
    \begin{aligned}
        \norm{\bA\btheta^{\star}}\leq&\sqrt{2(1+\gamma)}\norm{\btheta^{\star}}.
    \end{aligned}
\end{equation*}

As for $\norm{\bb}$, 
\begin{equation}\label{eq:bt_upper_bound}
    \begin{aligned}
       \norm{\bb}=&\frac{1}{K+1}\norm{\brk{\bC\prn{\sum_{j=0}^K\bg_j(r)-\bm{1}_K}}\otimes\bphi(s)}\\
       \leq&\frac{1}{K+1}\norm{\bC\prn{\sum_{j=0}^K\bg_j(r)-\bm{1}_K}}\norm{\bphi(s)}\\
       \leq&\frac{1}{K+1}\norm{\bC\prn{\sum_{j=0}^K\bg_j(r)-\bm{1}_K}}.
    \end{aligned}
\end{equation}
By Proposition~\ref{prop:categorical_projection_operator} with ${\bm{\eta}}\in\sP^{\sgn}_K$ satisfying 
$\eta(\tilde s)=\nu$ 
for all $\tilde s\in\gS$, where $\nu=\prn{K+1}^{-1}\sum_{k=0}^K \delta_{x_k}$ is the discrete uniform distribution, we can derive that, for any $r\in[0,1]$ and $s^\prime\in\gS$, it holds that
\begin{equation*}
    \begin{aligned}
        \frac{1}{K+1}\prn{\sum_{j=0}^K\bg_j(r)-\bm{1}_K}=&\prn{\bp_{\prn{b_{r,\gamma}}_\#{\eta}(s^\prime)}-\frac{1}{K+1}\bm{1}_K}-\tilde\bG(r) \prn{\bp_{\bm{\eta}}(s^{\prime})-\frac{1}{K+1}\bm{1}_{K}} \\
        =&\bp_{\prn{b_{r,\gamma}}_\#\nu}-\frac{1}{K+1}\bm{1}_K,
    \end{aligned}
\end{equation*}
Hence,
\begin{equation*}
    \begin{aligned}
       \norm{\bb}\leq&\frac{1}{K+1}\norm{\bC\prn{\sum_{j=0}^K\bg_j(r)-\bm{1}_K}}\\
       =&\norm{\bC\prn{\bp_{\prn{b_{r,\gamma}}_\#\nu}-\frac{1}{K+1}\bm{1}_K}}\\
       =& \frac{1}{\sqrt{\iota_K}}\ell_2\prn{\bPi_K(b_{r,\gamma})_\#(\nu),\nu}\\
       \leq& \sqrt{K(1-\gamma)}\ell_2\prn{(b_{r,\gamma})_\#(\nu) ,\nu}\\
       \leq& 3\sqrt{K}(1-\gamma),
    \end{aligned}
\end{equation*}
where we used the orthogonal decomposition (Proposition~\ref{prop:orthogonal_decomposition}) and an upper bound for $\ell_2\prn{(b_{r,\gamma})_\#(\nu) ,\nu}$ (Lemma~\ref{lem:norm_b_bound}).

In summary,
\begin{equation*}
    \begin{aligned}
        \norm{\be}=&\norm{\bA\btheta^{\star}-\bb}\\
        \leq&\norm{\bA\btheta^{\star}}+\norm{\bb}\\
        \leq& \sqrt{2(1+\gamma)}\norm{\btheta^{\star}}+3\sqrt{K}\prn{1-\gamma}.
    \end{aligned}
\end{equation*}
Hence, $C_e\leq \sqrt{2(1+\gamma)}\norm{\btheta^{\star}}+3\sqrt{K}\prn{1-\gamma}$.
\paragraph{Bounding $\tr\prn{\bSigma_{e}}$.}
\begin{equation*}
    \begin{aligned}
        \tr\prn{\bSigma_{e}}=&\EB\brk{\norm{\bA\btheta^{\star}-\bb}^2}\leq2\prn{\btheta^{\star}}^{\top}\EB\brk{\bA^{\top}\bA}\btheta^{\star}+2\EB\brk{\bb^{\top}\bb}.
    \end{aligned}
\end{equation*}
By Eqn.~\eqref{eq:EAAt_bound},
\begin{equation*}
    \begin{aligned}
\prn{\btheta^{\star}}^{\top}\EB\brk{\bA^{\top}\bA}\btheta^{\star}\leq 2(1+\gamma)\norm{\btheta^{\star}}_{\bI_K\otimes\bSigma_{\bphi}}^2.
    \end{aligned}
\end{equation*}
And by Lemma~\ref{lem:norm_b_bound},
\begin{equation*}
    \begin{aligned}
        \EB\brk{\bb^{\top}\bb}
        \leq &9K(1-\gamma)^2.
    \end{aligned}
\end{equation*}
To summarize,
\begin{equation*}
    \begin{aligned}
        \tr\prn{\bSigma_{e}}\leq& 4(1+\gamma)\norm{\btheta^{\star}}_{\bI_K\otimes\bSigma_{\bphi}}^2+18K(1-\gamma)^2.
    \end{aligned}
\end{equation*}
\end{proof}

\subsection{Proof of Lemma~\ref{lem:exponential_stable}}\label{appendix:proof_exponential_stable}
\begin{proof}
For simplicity, we use the same abbreviations as in Appendix~\ref{appendix:proof_upper_bound_error_quantities}.
As in the proof of \citep[Lemma~2][]{samsonov2024improved}, we only need to show that, for any $p\in\NB$, $\alpha\in\prn{0,(1-\sqrt\gamma)/(38p)}$, it holds that
\begin{equation*}
    \EB\brc{\brk{\prn{\bI_{dK}-\alpha\bA}^{\top}\prn{\bI_{dK}-\alpha\bA}}^p}\preccurlyeq \bI_{dK}-\frac{1}{2}\alpha p(1-\gamma)\bI_K\otimes\bSigma_{\bphi}\prn{\preccurlyeq \prn{1-\frac{1}{2}\alpha p (1-\gamma)\lambda_{\min}}\bI_{dK}}.
\end{equation*}
Let $\bB:=\bA+\bA^{\top}-\alpha \bA^{\top}\bA$ which satisfies $\prn{\bI_{dK}-\alpha\bA}^{\top}\prn{\bI_{dK}-\alpha\bA}=\bI_{dK}-\alpha\bB$. To give an upper bound of $\EB\brk{\prn{\bI_{dK}-\alpha\bB}^p}$, it suffices to show that
\begin{equation*}
   \EB\brk{\bB}\succcurlyeq (1-\sqrt\gamma)\bI_K\otimes\bSigma_{\bphi},\quad  \EB\brk{\bB^p}\preccurlyeq \frac{17}{16}4^p\bI_K\otimes\bSigma_{\bphi} ,\quad \forall p\in\brc{2,3,\cdots},
\end{equation*}
if we take $\alpha\in\prn{0,(1-\sqrt\gamma)/(2(1+\gamma))}$. 

Given these results, we have, when $\alpha\in\prn{0,(1-\sqrt\gamma)/(38p)}$, it holds that
\begin{equation*}
\begin{aligned}
        \EB\brk{\prn{\bI_{dK}-\alpha\bB}^p}\preccurlyeq&\bI-\alpha p \EB\brk{\bB}+\sum_{l=2}^p\alpha^l \binom{p}{l}\EB\brk{\bB^l}\\
        \preccurlyeq&\bI-\prn{\alpha p (1-\sqrt\gamma)-\frac{17}{16}\sum_{l=2}^\infty\prn{4\alpha p}^l }\bI_K\otimes\bSigma_{\bphi}\\
        =&\bI-\prn{\alpha p (1-\sqrt\gamma)-\frac{17\alpha^2p^2}{1-4\alpha p }}\bI_K\otimes\bSigma_{\bphi}\\
        \preccurlyeq&\bI-\frac{1}{2}\alpha p(1-\sqrt\gamma)\bI_K\otimes\bSigma_{\bphi}
\end{aligned}
\end{equation*}

\paragraph{Lower Bound of $\EB\brk{\bB}$.}
To show $\EB\brk{\bB}\succcurlyeq (1-\sqrt\gamma)\bI_K\otimes\bSigma_{\bphi}$, we first show that $\EB\brk{\bA+\bA^{\top}}\succcurlyeq 2(1-\sqrt\gamma)\bI_K\otimes\bSigma_{\bphi}$, which is equivalent to $\prn{\bI_K\otimes\bSigma_{\bphi}}^{-\frac{1}{2}}\EB\brk{\bA+\bA^{\top}}\prn{\bI_K\otimes\bSigma_{\bphi}}^{-\frac{1}{2}}\succcurlyeq 2(1-\sqrt\gamma)$, where
\begin{equation*}
\begin{aligned}
    \EB\brk{\bA+\bA^{\top}}&=2\prn{\bI_K\otimes\bSigma_{\bphi} }-\EB\brk{\prn{\bC\tilde{\bG}(r)\bC^{-1}}\otimes\prn{\bphi(s)\bphi(s^\prime)^{\top}}}-\EB\brk{\prn{\bC\tilde{\bG}(r)\bC^{-1}}\otimes\prn{\bphi(s)\bphi(s^\prime)^{\top}}}^{\top}.
\end{aligned}
\end{equation*}
Then, for any $\btheta\in\RB^{dK}$ with $\norm{\btheta}=1$, 
    \begin{equation*}
\begin{aligned}
        &\btheta^{\top}\prn{\bI_K\otimes\bSigma_{\bphi}}^{-\frac{1}{2}}\EB\brk{\bA+\bA^{\top}}\prn{\bI_K\otimes\bSigma_{\bphi}}^{-\frac{1}{2}}\btheta\\
        &\qquad=2-2\btheta^{\top}\prn{\bI_K\otimes\bSigma_{\bphi}}^{-\frac{1}{2}}\EB\brk{\prn{\bC\tilde{\bG}(r)\bC^{-1}}\otimes\prn{\bphi(s)\bphi(s^\prime)^{\top}}}\prn{\bI_K\otimes\bSigma_{\bphi}}^{-\frac{1}{2}}  \btheta\\
        &\qquad\geq 2-2\norm{\EB_{s, r, s^\prime}\brk{\prn{\bC\tilde{\bG}(r)\bC^{-1}}\otimes\prn{\bSigma_{\bphi}^{-\frac{1}{2}}\bphi(s)\bphi(s^\prime)^{\top}\bSigma_{\bphi}^{-\frac{1}{2}}}}}\\
        &\qquad \geq 2(1-\sqrt\gamma),
\end{aligned}
\end{equation*}
where we used the result Eqn.~\eqref{eq:biscuit_matrix_bound}.

Next, we give an upper bound for $\EB\brk{\bA^{\top}\bA}$, we need to compute the following terms: by Lemma~\ref{lem:spectra_of_PSD_KP},
\begin{equation}\label{eq:ctc_phiphi_2_bound}
\begin{aligned}
   \brk{\bI_K\otimes\prn{\bphi(s)\bphi(s)^{\top}}}^2=&\bI_K\otimes\prn{\bphi(s)\bphi(s)^{\top}\bphi(s)\bphi(s)^{\top}}\\
   =&\norm{\bphi(s)}^2\bI_K\otimes\prn{\bphi(s)\bphi(s)^{\top}}\\
   \preccurlyeq&\bI_K\otimes\prn{\bphi(s)\bphi(s)^{\top}}.
\end{aligned}
\end{equation}
Hence
\begin{equation*}
\begin{aligned}
   \EB\brk{\bI_K\otimes\prn{\bphi(s)\bphi(s)^{\top}}}^2\preccurlyeq&\bI_K\otimes\bSigma_{\bphi}.
\end{aligned}
\end{equation*}
And by Lemma~\ref{lem:spectra_of_PSD_KP} and Lemma~\ref{lem:Spectra_of_ccgcc},
\begin{equation*}
\begin{aligned}
   &\brk{\prn{\bC\tilde{\bG}(r)\bC^{-1}}\otimes\prn{\bphi(s)\bphi(s^\prime)^{\top}}}^{\top}\brk{\prn{\bC\tilde{\bG}(r)\bC^{-1}}\otimes\prn{\bphi(s)\bphi(s^\prime)^{\top}}}\\
   &\qquad =\prn{\bC^{-\top}\tilde{\bG}^{\top}(r)\bC^{\top}\bC\tilde{\bG}(r)\bC^{-1}}\otimes\prn{\bphi(s^\prime)\bphi(s)^{\top}\bphi(s)\bphi(s^\prime)^{\top}}\\
   &\qquad =\norm{\bphi(s)}^2\prn{\bC^{-\top}\tilde{\bG}^{\top}(r)\bC^{\top}\bC\tilde{\bG}(r)\bC^{-1}}\otimes\prn{\bphi(s^\prime)\bphi(s^\prime)^{\top}}\\
   &\qquad \preccurlyeq \norm{\bC\tilde{\bG}(r)\bC^{-1}}^2\bI_K\otimes\prn{\bphi(s^\prime)\bphi(s^\prime)^{\top}}\\
   &\qquad \preccurlyeq \gamma\bI_K\otimes\prn{\bphi(s^\prime)\bphi(s^\prime)^{\top}}.
\end{aligned}
\end{equation*}
To summarize, by the basic inequality $\prn{\bB_1-\bB_2}^{\top}\prn{\bB_1-\bB_2}\preccurlyeq 2\prn{\bB_1^{\top}\bB_1+\bB_2^{\top}\bB_2}$, we have
\begin{equation}\label{eq:AAt_bound}
\begin{aligned}
    \bA^{\top}\bA\preccurlyeq&2\bI_K\otimes\prn{\bphi(s)\bphi(s)^{\top}+\gamma\bphi(s^\prime)\bphi(s^\prime)^{\top}},
\end{aligned}
\end{equation}
and, after taking expectation, 
\begin{equation}\label{eq:EAAt_bound}
\begin{aligned}
    \EB\brk{\bA^{\top}\bA}\preccurlyeq&2\prn{1+\gamma}\bI_K\otimes\bSigma_{\bphi}.
\end{aligned}
\end{equation}
Combining these together, we obtain
\begin{equation*}
\begin{aligned}
    \EB\brk{\bB}&\succcurlyeq 2\brk{(1-\sqrt\gamma)-\alpha(1+\gamma) }\bI_K\otimes\bSigma_{\bphi}\succcurlyeq (1-\sqrt\gamma)\bI_K\otimes\bSigma_{\bphi},
\end{aligned}
\end{equation*}
if we take $\alpha\in\prn{0,(1-\sqrt\gamma)/(2(1+\gamma))}$.

\paragraph{Upper Bound of $\EB\brk{\bB^p}$.}
Because $\bB^2$ is always PSD, we have the following upper bound
\begin{equation*}
\bB^p\preccurlyeq \norm{\bB}^{p-2}\bB^2.    
\end{equation*}
We first give an almost-sure upper bound for $\norm{\bB}$.
By Lemma~\ref{lem:upper_bound_error_quantities}, $\norm{\bA}\leq 1+\sqrt{\gamma}$.
And by Eqn.~\eqref{eq:AAt_bound},
\begin{equation}\label{eq:norm_AAt_bound}
\begin{aligned}
    \norm{\bA^{\top}\bA}\leq&2\norm{\bI_K\otimes\prn{\bphi(s)\bphi(s)^{\top}+\gamma\bphi(s^\prime)\bphi(s^\prime)^{\top}}}\\
    \leq&2\norm{\bphi(s)\bphi(s)^{\top}+\gamma\bphi(s^\prime)\bphi(s^\prime)^{\top}}\\
    \leq& 2(1+\gamma).
\end{aligned}
\end{equation}
Hence,
\begin{equation}\label{eq:norm_B_bound}
\begin{aligned}
    \norm{\bB}=&\norm{\bA+\bA^{\top}-\alpha\bA^{\top}\bA}\\
    \leq &2\norm{\bA}+\alpha\norm{\bA^{\top}\bA}\\
    \leq& 2(1+\sqrt{\gamma})+2\alpha(1+\gamma) \\
    \leq& 4,
\end{aligned}
\end{equation}
because $\alpha\in\prn{0,(1-\sqrt\gamma)/(2(1+\gamma))}$.

Now, we aim to give an upper bound for $\EB\brk{\bB^2}$,
\begin{equation*}
\begin{aligned}
    \bB^2=&\prn{\bA+\bA^{\top}-\alpha\bA^{\top}\bA}^2\\
    \preccurlyeq& (1+\beta)\prn{\bA+\bA^{\top}}^2+\prn{1+\beta^{-1}}\alpha^2\prn{\bA^{\top}\bA}^2\\
    \preccurlyeq&2(1+\beta)\prn{\bA^{\top}\bA+\bA\bA^{\top}}+(1+\beta^{-1})\alpha^2\prn{\bA^{\top}\bA}^2,
\end{aligned}
\end{equation*}
where we used the fact that $\prn{\bB_1+\bB_2}^2\preccurlyeq (1+\beta)\bB_1^2+(1+\beta^{-1})\bB_2^2$ for any symmetric matrices $\bB_1,\bB_2$, since $\beta\bB_1^2+\beta^{-1}\bB_2^2-\bB_1\bB_2-\bB_2\bB_1=\prn{\sqrt{\beta}\bB_1-\sqrt{\beta^{-1}}\bB_2}^2\succcurlyeq \bm{0}$, $\beta\in (0, 1)$ to be determined; and the fact that   $\bA^2+\prn{\bA^{\top}}^2\preccurlyeq\bA^{\top}\bA+\bA\bA^{\top}$ since the square of the skew-symmetric matrix is negative semi-definite $\prn{\bA-\bA^{\top}}^2\preccurlyeq \bm{0}$.
By Eqn.~\eqref{eq:norm_AAt_bound} and Eqn.~\eqref{eq:EAAt_bound}, we have
\begin{equation*}
\begin{aligned}
    \norm{\bA^{\top}\bA}\leq 2(1+\gamma),
\end{aligned}
\end{equation*}
\begin{equation}\label{eq:upper_bound_B_part_1}
\begin{aligned}
    \EB\brk{\bA^{\top}\bA}\preccurlyeq&2\prn{1+\gamma}\bI_K\otimes\bSigma_{\bphi},
\end{aligned}
\end{equation}
thus, by $\alpha\in\prn{0,(1-\sqrt\gamma)/(2(1+\gamma))}$, it holds that
\begin{equation}\label{eq:upper_bound_B_part_2}
\begin{aligned}
    \alpha^2\EB\brk{\prn{\bA^{\top}\bA}^2}\preccurlyeq4\alpha^2(1+\gamma)^2\bI_K\otimes\bSigma_{\bphi}\preccurlyeq (1-\sqrt\gamma)^2 \bI_K\otimes\bSigma_{\bphi}.
\end{aligned}
\end{equation}
As for $\EB\brk{\bA\bA^{\top}}$, by the basic inequality $\prn{\bB_1-\bB_2}\prn{\bB_1-\bB_2}^{\top}\preccurlyeq 2\prn{\bB_1\bB_1^{\top}+\bB_2\bB_2^{\top}}$, we have
\begin{equation*}
\begin{aligned}
    \bA\bA^{\top}=&\brc{\brk{\bI_K\otimes\prn{\bphi(s)\bphi(s)^{\top}}}-\brk{\prn{\bC\tilde{\bG}(r)\bC^{-1}}\otimes\prn{\bphi(s)\bphi(s^\prime)^{\top}}}}\\
    &\cdot\brc{\brk{\bI_K\otimes\prn{\bphi(s)\bphi(s)^{\top}}}-\brk{\prn{\bC\tilde{\bG}(r)\bC^{-1}}\otimes\prn{\bphi(s)\bphi(s^\prime)^{\top}}}}^{\top}\\
    \preccurlyeq& 2\brk{\bI_K\otimes\prn{\bphi(s)\bphi(s)^{\top}}}^2+2\brk{\prn{\bC\tilde{\bG}(r)\bC^{-1}}\otimes\prn{\bphi(s)\bphi(s^\prime)^{\top}}}\brk{\prn{\bC\tilde{\bG}(r)\bC^{-1}}\otimes\prn{\bphi(s)\bphi(s^\prime)^{\top}}}^{\top}.
\end{aligned}
\end{equation*}
By Eqn.~\eqref{eq:ctc_phiphi_2_bound}, we have
\begin{equation*}
\begin{aligned}
\brk{\bI_K\otimes\prn{\bphi(s)\bphi(s)^{\top}}}^2\preccurlyeq&\bI_K\otimes\prn{\bphi(s)\bphi(s)^{\top}}.
\end{aligned}
\end{equation*}
And by Lemma~\ref{lem:Spectra_of_ccgcc},
\begin{equation*}
\begin{aligned}
   &\brk{\prn{\bC\tilde{\bG}(r)\bC^{-1}}\otimes\prn{\bphi(s)\bphi(s^\prime)^{\top}}}\brk{\prn{\bC\tilde{\bG}(r)\bC^{-1}}\otimes\prn{\bphi(s)\bphi(s^\prime)^{\top}}}^{\top}\\
   &\qquad =\prn{\bC\tilde{\bG}(r)\bC^{-1}\bC^{-T}\tilde{\bG}^{\top}(r)\bC^{\top}}\otimes\prn{\bphi(s)\bphi(s^\prime)^{\top}\bphi(s^\prime)\bphi(s)^{\top}}\\
   &\qquad =\norm{\bphi(s^\prime)}^2\prn{\bC\tilde{\bG}(r)\bC^{-1}\bC^{-T}\tilde{\bG}^{\top}(r)\bC^{\top}}\otimes\prn{\bphi(s)\bphi(s)^{\top}}\\
   &\qquad \preccurlyeq \norm{\bC\tilde{\bG}(r)\bC^{-1}}^2\bI_K\otimes\prn{\bphi(s)\bphi(s)^{\top}}\\
   &\qquad \preccurlyeq \gamma  \bI_K\otimes\prn{\bphi(s)\bphi(s)^{\top}}.
\end{aligned}
\end{equation*}
To summarize, we have
\begin{equation*}
\begin{aligned}
    \bA\bA^{\top}\preccurlyeq&2(1+\gamma)\bI_K\otimes\prn{\bphi(s)\bphi(s)^{\top}},
\end{aligned}
\end{equation*}
and after taking expectation
\begin{equation}\label{eq:upper_bound_B_part_3}
\begin{aligned}
    \EB\brk{\bA\bA^{\top}}\preccurlyeq&2\prn{1+\gamma}\bI_K\otimes\bSigma_{\bphi}.
\end{aligned}
\end{equation}
By putting everything together (Eqn.~\eqref{eq:upper_bound_B_part_1}, Eqn.~\eqref{eq:upper_bound_B_part_2}, Eqn.~\eqref{eq:upper_bound_B_part_3}), we have
\begin{equation*}
\begin{aligned}
    \bB^2\preccurlyeq&2(1+\beta)\prn{\bA^{\top}\bA+\bA\bA^{\top}}+(1+\beta^{-1})\alpha^2\prn{\bA^{\top}\bA}^2\\
    \preccurlyeq&\brk{8(1+\gamma)(1+\beta)+(1+\beta^{-1})(1-\sqrt{\gamma})^2}\bI_K\otimes\bSigma_{\bphi}\\
    \preccurlyeq&\prn{17+9\gamma-10\sqrt{\gamma}}\bI_K\otimes\bSigma_{\bphi}\\
    \preccurlyeq&17\bI_K\otimes\bSigma_{\bphi},
\end{aligned}
\end{equation*}
where we take $\beta=\frac{1-\sqrt{\gamma}}{\sqrt{8(1+\gamma)}}$.
Therefore, by Eqn.~\eqref{eq:norm_B_bound},
\begin{equation*}
\begin{aligned}
    \bB^p\preccurlyeq& \norm{\bB}^{p-2}\bB^2\\
    \preccurlyeq& 4^{p-2}17\bI_K\otimes\bSigma_{\bphi}\\
    =&\frac{17}{16}4^p\bI_K\otimes\bSigma_{\bphi}.
\end{aligned}
\end{equation*}
\end{proof}

\subsection{\texorpdfstring{$L^p$}{Lp} Convergence}
\begin{theorem}[$L^p$ Convergence]\label{thm:lp_error_linear_ctd}
For any $K\geq (1-\gamma)^{-1}$, $p>2$ and $\alpha\in(0,(1-\sqrt\gamma)/[38(p+\log T)])$, it holds that
    \begin{equation*}
    \begin{aligned}
              \EB^{1/p}\brk{\prn{\gL\prn{\bar\btheta_T}}^p}\lesssim&\frac{\sqrt{p}}{\sqrt{T}}\frac{\frac{1}{\sqrt{K}(1-\gamma)}\norm{\btheta^{\star}}_{\bI_K\otimes\bSigma_{\bphi}}+1}{(1-\gamma)\sqrt{\lambda_{\min}}}\prn{1+\frac{\sqrt{\alpha p}+\alpha p}{\sqrt{(1-\gamma)\lambda_{\min}}}}\\
        &+\frac{p}{T}\frac{\frac{1}{\sqrt{K}(1-\gamma)}\norm{\btheta^{\star}}_{\bI_K\otimes\bSigma_{\bphi}}+1}{(1-\gamma)^{\frac{3}{2}}\lambda_{\min}}\prn{1+\frac{1}{\sqrt{\alpha p}}}\\
        &+\frac{1}{T}\frac{(1-\frac{1}{2}\alpha (1-\sqrt\gamma)\lambda_{\min} )^{T/2}}{ \sqrt{\alpha}(1-\gamma)\sqrt{\lambda_{\min}}}\prn{\frac{1}{\sqrt\alpha}+\frac{p}{\sqrt{ (1-\gamma)\lambda_{\min}}}}\frac{1}{\sqrt{K}(1-\gamma)}\norm{\btheta_0-\btheta^{\star}}.
    \end{aligned}
    \end{equation*}
\end{theorem}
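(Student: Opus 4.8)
The plan is to mirror the proof of the $L^2$ bound (Theorem~\ref{thm:l2_error_linear_ctd}), replacing the $L^2$ exponential-stability convergence result (Theorem~\ref{thm:exponential_stability_l2_convergence}) with its general $L^p$ counterpart, \citet[Theorem~2]{samsonov2024improved}. First I would invoke Lemma~\ref{lem:translate_error_to_loss}, raise both sides to the $p$-th power, and take expectations, yielding
\begin{equation*}
\EB^{1/p}[(\gL(\bar\btheta_T))^p]\leq \frac{2}{\sqrt{K}(1-\gamma)^2\sqrt{\lambda_{\min}}}\,\EB^{1/p}[\norm{\bar{\bA}(\bar\btheta_T-\btheta^{\star})}^p].
\end{equation*}
Thus the entire task reduces to controlling the $L^p$ moment of the preconditioned error $\norm{\bar{\bA}(\bar\btheta_T-\btheta^{\star})}$, for which the exponential-stability machinery applies essentially verbatim.

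Next I would verify the hypotheses of \citet[Theorem~2]{samsonov2024improved}. Lemma~\ref{lem:exponential_stable} already supplies the required exponential contraction $\EB^{1/q}[\norm{\bGamma_t^{(\alpha)}\bu}^q]\leq (1-\alpha a)^t\norm{\bu}$ with $a=(1-\sqrt\gamma)\lambda_{\min}/2$, valid for every moment order $q$ once $\alpha<\alpha_{q,\infty}=(1-\sqrt\gamma)/(38q)$. Because the $L^p$ convergence theorem bounds an $L^p$ norm of a sum of $T$ martingale-type increments, it must be applied at a slightly inflated moment order $q\simeq p+\log T$, so that the factors of $T$ generated by the summation are absorbed into logarithmic rather than polynomial corrections; this is precisely the origin of the step-size restriction $\alpha<(1-\sqrt\gamma)/[38(p+\log T)]$ in the statement. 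Feeding the almost-sure and second-moment bounds of Lemma~\ref{lem:upper_bound_error_quantities}, namely $C_A\leq 4$ and $\sqrt{\tr(\bSigma_{e})}\lesssim\norm{\btheta^{\star}}_{\bI_K\otimes\bSigma_{\bphi}}+\sqrt{K}(1-\gamma)$, into that theorem produces a three-term bound of the same shape as Theorem~\ref{thm:exponential_stability_l2_convergence}, but now with a $\sqrt{p}$ multiplying the dominant $T^{-1/2}$ term and a $p$ multiplying the $T^{-1}$ term, together with the additional $\sqrt{\alpha p}+\alpha p$ factors that track the higher-moment corrections.

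Finally I would simplify. Combining the translation prefactor $K^{-1/2}(1-\gamma)^{-2}\lambda_{\min}^{-1/2}$ with $\sqrt{\tr(\bSigma_{e})}$ collapses the noise contribution to $(1-\gamma)^{-1}\lambda_{\min}^{-1/2}(\norm{\btheta^{\star}}_{V_1}+1)$, using the definition $\norm{\btheta^{\star}}_{V_1}=K^{-1/2}(1-\gamma)^{-1}\norm{\btheta^{\star}}_{\bI_K\otimes\bSigma_{\bphi}}$; likewise the initialization error carries the factor $K^{-1/2}(1-\gamma)^{-1}\norm{\btheta_0-\btheta^{\star}}$, which is $\norm{\btheta_0-\btheta^{\star}}_{V_2}$. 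Substituting $a\simeq(1-\gamma)\lambda_{\min}$ and collecting the powers of $\alpha$ and $a$ then matches each of the three displayed terms. The main obstacle is the bookkeeping of the $p$- and $\log T$-dependence through the stability argument: one has to confirm that invoking the contraction at order $p+\log T$ (rather than $p$) is exactly what keeps the $T$-summation harmless, and to check that the resulting moment constants are the advertised $\sqrt{p}$ and $p$. All the MDP-specific analysis---the $\sqrt{\gamma}$-contraction of $\bC\tilde{\bG}(r)\bC^{-1}$ encoded in Eqn.~\eqref{eq:biscuit_matrix_bound} and the bound on $\norm{\bb_t}$---has already been isolated into Lemma~\ref{lem:upper_bound_error_quantities} and the spectral estimates, so no new distributional arguments are required beyond transporting the $L^2$ reasoning to general $p$.
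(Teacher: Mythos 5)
Your proposal is correct and follows essentially the same route as the paper's proof: the paper likewise combines Lemma~\ref{lem:translate_error_to_loss}, Lemma~\ref{lem:upper_bound_error_quantities}, and Lemma~\ref{lem:exponential_stable} with \citet[Theorem~2]{samsonov2024improved} (applied at moment order $p+\log T$, which is exactly where the step-size restriction $\alpha<(1-\sqrt\gamma)/[38(p+\log T)]$ comes from), and then simplifies using $\norm{\btheta^{\star}}\leq\lambda_{\min}^{-1/2}\norm{\btheta^{\star}}_{\bI_K\otimes\bSigma_{\bphi}}$ to collect everything into the three displayed terms.
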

\begin{proof}
    Combining Lemma~\ref{lem:translate_error_to_loss}, Lemma~\ref{lem:upper_bound_error_quantities} and Lemma~\ref{lem:exponential_stable} with \citep[Theorem~2][]{samsonov2024improved}, we have
    \begin{equation*}
       \begin{aligned}
        &\EB^{1/p}\brk{\prn{\gL\prn{\bar\btheta_T}}^p}\\
        &\qquad\lesssim \frac{1}{\sqrt{K(1-\gamma)^4\lambda_{\min}}}\Bigg[\sqrt{\frac{p\tr\prn{\bSigma_e}}{T}}\prn{1+\frac{C_A\sqrt{\alpha p}}{\sqrt{a}}+\frac{C_A C_e \alpha p}{\sqrt{\tr\prn{\bSigma_e}}}}+\frac{(1+C_A)C_e p}{T}\\
        &\qquad\quad+\frac{p\sqrt{\tr\prn{\bSigma_e}}}{\sqrt{a}T}\prn{1+\frac{1}{\sqrt{\alpha p}}}+(1-\alpha a)^{T/2}\prn{\frac{1}{\alpha T}+\frac{C_A p}{\sqrt{\alpha a}T}}\norm{\btheta_0-\btheta^{\star}} \Bigg]\\
        &\qquad\lesssim\frac{1}{\sqrt{K(1-\gamma)^4\lambda_{\min}}}\Bigg[\sqrt{p}\frac{\norm{\btheta^{\star}}_{\bI_K\otimes\bSigma_{\bphi}}+\sqrt{K}(1-\gamma)}{\sqrt{T}}\prn{1+\frac{\sqrt{\alpha p}+\alpha p}{\sqrt{(1-\gamma)\lambda_{\min}}}}+\frac{p\prn{\norm{\btheta^{\star}}+\sqrt{K}\prn{1-\gamma}}}{T}\\
        &\qquad\quad+p\frac{\norm{\btheta^{\star}}_{\bI_K\otimes\bSigma_{\bphi}}+\sqrt{K}(1-\gamma)}{\sqrt{ (1-\gamma)\lambda_{\min}}T}\prn{1+\frac{1}{\sqrt{\alpha p}}}\\
        &\qquad\quad+(1-\frac{1}{2}\alpha (1-\sqrt\gamma)\lambda_{\min} )^{T/2}\prn{\frac{1}{\alpha T}+\frac{p}{\sqrt{\alpha (1-\gamma)\lambda_{\min}}T}}\norm{\btheta_0-\btheta^{\star}} \Bigg]\\
        &\qquad\lesssim\frac{\sqrt{p}}{\sqrt{T}}\frac{\frac{1}{\sqrt{K}(1-\gamma)}\norm{\btheta^{\star}}_{\bI_K\otimes\bSigma_{\bphi}}+1}{(1-\gamma)\sqrt{\lambda_{\min}}}\prn{1+\frac{\sqrt{\alpha p}+\alpha p}{\sqrt{(1-\gamma)\lambda_{\min}}}}\\
        &\qquad\quad+\frac{p}{T}\frac{\frac{1}{\sqrt{K}(1-\gamma)}\norm{\btheta^{\star}}_{\bI_K\otimes\bSigma_{\bphi}}+1}{(1-\gamma)^{\frac{3}{2}}\lambda_{\min}}\prn{1+\frac{1}{\sqrt{\alpha p}}}\\
        &\qquad\quad+\frac{1}{T}\frac{(1-\frac{1}{2}\alpha (1-\sqrt\gamma)\lambda_{\min} )^{T/2}}{ \sqrt{\alpha}(1-\gamma)\sqrt{\lambda_{\min}}}\prn{\frac{1}{\sqrt\alpha}+\frac{p}{\sqrt{ (1-\gamma)\lambda_{\min}}}}\frac{1}{\sqrt{K}(1-\gamma)}\norm{\btheta_0-\btheta^{\star}},
       \end{aligned}
    \end{equation*}
    where we used the fact that $\norm{\btheta^{\star}}\leq \prn{\lambda_{\min}}^{-1/2}\norm{\btheta^{\star}}_{\bI_K\otimes\bSigma_{\bphi}}$.
\end{proof}

\subsection{Convergence Results for SSGD with the PMF Representation}\label{Appendix:convergece_ssgd_pmf}
In this section, we present the counterparts of Lemma~\ref{lem:translate_error_to_loss}, Lemma~\ref{lem:upper_bound_error_quantities}, Lemma~\ref{lem:exponential_stable} and Theorem~\ref{thm:l2_error_linear_ctd} for stochastic semi-gradient descent (SSGD) with the probability mass function (PMF) representation.
These results will additionally depend on $K$.
The additional $K$-dependent terms arise because the condition number of $\bC^{\top}\bC$ scales with $K^2$ (Lemma~\ref{lem:spectra_of_CTC}).
These terms are inevitable within our theoretical framework. 
The proofs of these results require only minor modifications to the original proofs, and we omit them for brevity.

In fact, in Appendix~\ref{Appendix:numerical_experiment}, we validate some theoretical results through numerical experiments.
To be concrete, we find that empirically, as $K$ increases, to ensure convergence, the step size of the vanilla algorithm in \citep[Section~9.6]{bdr2022} indeed needs to decay at a rate of $K^{-2}$.
In contrast, the step size of our {\LCTD} does not need to be adjusted when $K$ increases.
Moreover, we find that {\LCTD} empirically consistently outperforms the vanilla algorithm under different $K$.

Recall Eqn.~\eqref{eq:ssgd_pmf}, the updating scheme of the algorithm is
\begin{equation*}
    \begin{aligned}
\bW_t\gets&\bW_{t-1}-\alpha\bphi(s_t)\prn{\bp_{\bw_{t-1}}(s_t)-\bp_{\gT_t^\pi{\bm{\eta}_{\bw_{t-1}}}}(s_t)}^{\top}\bC^{\top}\bC\\
=&\bW_{t-1}-\alpha\bphi(s_t)\brk{\bphi(s_t)^{\top}\bW_{t-1}-\bphi(s_{t+1})^{\top}\bW_{t-1}\tilde{\bG}^{\top}(r_t)-\frac{1}{K+1}\prn{\sum_{j=0}^K\bg_j(r_t)-\bm{1}_{K}}^{\top}}\bC^{\top}\bC,
\end{aligned}
\end{equation*}
which is equivalent to
\begin{small}
\begin{equation*}
    \begin{aligned}
\bW_t\bC^{\top}{\gets}&\bW_{t{-}1}\bC^{\top}{-}\alpha\bphi(s_t)\brk{\bphi(s_t)^{\top}\bW_{t{-}1}\bC^{\top}{-}\bphi(s_{t{+}1})^{\top}\bW_{t{-}1}\bC^{\top}(\bC\tilde{\bG}(r_t)\bC^{-1})^{\top}-\frac{1}{K{+}1}\prn{\sum_{j=0}^K\bg_j(r_t){-}\bm{1}_{K}}^{\top}\bC^{\top}}\bC\bC^{\top},
\end{aligned}
\end{equation*}
\end{small}
here we drop the additional $2\iota_K$ in the step size for brevity.
Letting $\bTheta_{\operatorname{PMF}, t}:=\bW_t\bC^{\top}$ be the CDF parameter, the algorithm becomes
\begin{small}
\begin{equation}\label{eq:linear_CTD_PMF_THETA}
    \begin{aligned}
\bTheta_{\operatorname{PMF}, t}{\gets}&\bTheta_{\operatorname{PMF}, t-1}{-}\alpha\bphi(s_t)\brk{\bphi(s_t)^{\top}\bTheta_{\operatorname{PMF}, t{-}1}-\bphi(s_{t{+}1})^{\top}\bTheta_{\operatorname{PMF}, t{-}1}(\bC\tilde{\bG}(r_t)\bC^{-1})^{\top}{-}\frac{1}{K{+}1}\prn{\sum_{j=0}^K\bg_j(r_t){-}\bm{1}_{K}}^{\top}\bC^{\top}}\bC\bC^{\top}.
\end{aligned}
\end{equation}
\end{small}
Here, we add the subscript $\operatorname{PMF}$ to the original notations to indicate the difference.
Then, the algorithm corresponds to the following linear system for $\btheta\in\RB^{dK}$
\begin{equation*}
     \bar{\bA}_{\operatorname{PMF}}\btheta=\bar{\bb}_{\operatorname{PMF}},
\end{equation*}
where
\begin{equation*}
\begin{aligned}
        \bar{\bA}_{\operatorname{PMF}}&=\brk{\prn{\bC\bC^{\top}}\otimes\bSigma_{\bphi}}-\EB_{s, r, s^\prime}\brk{\prn{\bC\bC^{\top} (\bC\tilde{\bG}(r_t)\bC^{-1})}\otimes\prn{\bphi(s)\bphi(s^\prime)^{\top}}},
\end{aligned}
\end{equation*}
\begin{equation*}
    \bar{\bb}_{\operatorname{PMF}}=\frac{1}{K+1}\EB_{s, r}\brc{\brk{\bC\bC^{\top}\bC\prn{\sum_{j=0}^K\bg_j(r)-\bm{1}_K}}\otimes\bphi(s)}.
\end{equation*}
Compared to our {\LCTD} (Eqn.~\eqref{eq:bt}), this algorithm has an additional matrix $\bC\bC^\top$ with the  condition number of order $K^2$ (see Lemma~\ref{lem:spectra_of_CTC}).

Now, we are ready to state the theoretical results for the algorithm.
\begin{lemma}\label{lem:loss_bound_PMF}
For any $\btheta\in\RB^{dK}$, it holds that
    \begin{equation}\label{eq:eq_in_lem_loss_bound_PMF}
      \gL(\btheta)\lesssim\frac{1}{\sqrt K(1-\gamma)^2\sqrt{\lambda_{\min}}} \norm{\bar{\bA}_{\operatorname{PMF}}\prn{{\btheta}-\btheta^{\star}}}.
    \end{equation}
\end{lemma}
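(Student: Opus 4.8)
The plan is to deduce the bound from its {\LCTD} counterpart (Lemma~\ref{lem:translate_error_to_loss}) by factoring the PMF system matrix through $\bar{\bA}$. First I would observe that, by the mixed-product property (Lemma~\ref{lem:basic_of_KP}), the two system matrices are related by
$$\bar{\bA}_{\operatorname{PMF}}=\prn{\prn{\bC\bC^{\top}}\otimes\bI_d}\bar{\bA},$$
since $\prn{\prn{\bC\bC^{\top}}\otimes\bI_d}\prn{\bI_K\otimes\bSigma_{\bphi}}=\prn{\bC\bC^{\top}}\otimes\bSigma_{\bphi}$ and $\prn{\prn{\bC\bC^{\top}}\otimes\bI_d}\prn{\prn{\bC\tilde{\bG}(r)\bC^{-1}}\otimes\prn{\bphi(s)\bphi(s^\prime)^{\top}}}=\prn{\bC\bC^{\top}\bC\tilde{\bG}(r)\bC^{-1}}\otimes\prn{\bphi(s)\bphi(s^\prime)^{\top}}$, which matches the definition of $\bar{\bA}_{\operatorname{PMF}}$ term by term.

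Since $\bC$ is lower triangular with unit diagonal, $\bC\bC^{\top}$ is positive definite and hence invertible, so writing $\bu:=\btheta-\btheta^{\star}$ we may invert the factorization to get $\bar{\bA}\bu=\prn{\prn{\bC\bC^{\top}}^{-1}\otimes\bI_d}\bar{\bA}_{\operatorname{PMF}}\bu$. By submultiplicativity of the spectral norm and Lemma~\ref{lem:spectral_norm_of_KP},
$$\norm{\bar{\bA}\bu}\leq\norm{\prn{\bC\bC^{\top}}^{-1}\otimes\bI_d}\,\norm{\bar{\bA}_{\operatorname{PMF}}\bu}=\frac{1}{\lambda_{\min}\prn{\bC\bC^{\top}}}\norm{\bar{\bA}_{\operatorname{PMF}}\bu}.$$
At this point I would invoke Lemma~\ref{lem:spectra_of_CTC}: because $\bC\bC^{\top}$ and $\bC^{\top}\bC$ share the same spectrum, its lower spectral bound gives $\lambda_{\min}\prn{\bC\bC^{\top}}\gtrsim 1$, so that $\norm{\bar{\bA}\bu}\lesssim\norm{\bar{\bA}_{\operatorname{PMF}}\bu}$. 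Feeding this into Lemma~\ref{lem:translate_error_to_loss} then yields Eqn.~\eqref{eq:eq_in_lem_loss_bound_PMF} with a universal constant.

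The loss-to-parameter half of the argument is identical to the {\LCTD} case and needs no modification: the chain $\prn{\gL(\btheta)}^2\leq(1-\gamma)^{-1}\ell_{2,\mu_\pi}^2\prn{\bm{\eta}_{\btheta},\bm{\eta}_{\btheta^{\star}}}=K^{-1}(1-\gamma)^{-2}\norm{\bu}_{\bI_K\otimes\bSigma_{\bphi}}^2$ from Eqn.~\eqref{eq:CDF_target_function}, together with the PSD inequality Eqn.~\eqref{eq:bar_A_lower_bound}, already controls $\gL(\btheta)$ by $\norm{\bar{\bA}\bu}$, and the displayed factorization transfers this to $\norm{\bar{\bA}_{\operatorname{PMF}}\bu}$. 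I expect the one genuine obstacle to be the spectral input from Lemma~\ref{lem:spectra_of_CTC}: one must recognize that although the \emph{condition number} of $\bC\bC^{\top}$ grows like $K^2$ (which is precisely what degrades the step size and the error quantities $C_A,C_e,\tr(\bSigma_e)$ in the companion lemmas, injecting the extra $K$-dependence there), its \emph{smallest} eigenvalue stays bounded below by a universal constant. It is exactly this asymmetry, $\lambda_{\min}\prn{\bC\bC^{\top}}=\Theta(1)$ while $\lambda_{\max}\prn{\bC\bC^{\top}}=\Theta(K^2)$, that allows the loss bound to retain the same form as Lemma~\ref{lem:translate_error_to_loss} even though the remainder of the SSGD analysis inevitably acquires powers of $K$.
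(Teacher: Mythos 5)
Your proof is correct and follows essentially the same route the paper intends: the paper omits the proof as a ``minor modification'' of Lemma~\ref{lem:translate_error_to_loss}, and its discussion after the lemma identifies the key fact you use, namely that $\lambda_{\min}(\bC\bC^{\top})=\Omega(1)$ (Lemma~\ref{lem:spectra_of_CTC}). Your factorization $\bar{\bA}_{\operatorname{PMF}}=\prn{\prn{\bC\bC^{\top}}\otimes\bI_d}\bar{\bA}$, combined with $\norm{\prn{\bC\bC^{\top}}^{-1}}\leq 4$ and the already-proven {\LCTD} bound, is exactly the clean way to carry out that modification.
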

Lemma~\ref{lem:loss_bound_PMF} achieves the same order of bound as prior results for {\LCTD} (Lemma~\ref{lem:translate_error_to_loss}), as the minimum eigenvalue of $\bC\bC^\top$ remains $\Omega(1)$ (Lemma~\ref{lem:spectra_of_CTC}).
However, from the numerical experiments (Figure~\ref{fig:comparation}) in Appendix~\ref{subsection:experiment_dependency_of_step_size}, we observe that after substituting $\bar{\btheta}_t$ into $\btheta$, as $K$ grows, the RHS grows with $K$, while the LHS remains almost unchanged. 
This might be because when the matrix $\bC\bC^\top$ acts on the relevant random vectors, the stretching coefficient (\ie, $\norm{\bC\bC^\top \bx}/\norm{\bx}$ for some vector $\bx$) is usually of order $K^2$ rather than 
a constant order.
For example, consider the case where the matrix $\bC\bC^\top$ acts on a random vector $\mathbf{X}$ that follows a uniform distribution over the surface of unit sphere ($\norm{\mathbf{X}}=1$). 
Since the $k$-th largest eigenvalue of the matrix $\bC\bC^\top$ is of order $k^2$, by Hanson-Wright inequality \citep[Theorem~6.2.1]{vershynin_2018}, we have $\norm{\bC\bC^\top\mathbf{X}}$ is of order $K^2$ with high probability.  
\begin{lemma}\label{lem:PMF_quantity_bound}
It holds that
    \begin{equation*}
       C_{A,\operatorname{PMF}}\lesssim K^2 C_A,\quad C_{e,\operatorname{PMF}}\lesssim K^2 C_e,\quad \tr\prn{\bSigma_{e,\operatorname{PMF}}}\lesssim K^4\tr\prn{\bSigma_{e}}.
    \end{equation*}
\end{lemma}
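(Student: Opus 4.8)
The plan is to reduce everything to the single algebraic observation that the PMF iteration differs from {\LCTD} only by left-multiplication with the fixed Kronecker factor $(\bC\bC^{\top})\otimes\bI_d$. Concretely, comparing the vectorized PMF update (Eqn.~\eqref{eq:linear_CTD_PMF_THETA}) with the vectorized {\LCTD} matrix $\bA_t$, I would first verify the identities
\[
\bA_{\operatorname{PMF},t} = \brk{(\bC\bC^{\top})\otimes\bI_d}\bA_t, \qquad \bb_{\operatorname{PMF},t} = \brk{(\bC\bC^{\top})\otimes\bI_d}\bb_t,
\]
which follow from the mixed-product property (Lemma~\ref{lem:basic_of_KP}): applying $(\bC\bC^{\top})\otimes\bI_d$ to $\bI_K\otimes(\bphi(s_t)\bphi(s_t)^{\top})$ produces $(\bC\bC^{\top})\otimes(\bphi(s_t)\bphi(s_t)^{\top})$, and to $(\bC\tilde{\bG}(r_t)\bC^{-1})\otimes(\bphi(s_t)\bphi(s_{t+1})^{\top})$ produces $(\bC\bC^{\top}\bC\tilde{\bG}(r_t)\bC^{-1})\otimes(\bphi(s_t)\bphi(s_{t+1})^{\top})$, matching exactly the random matrix read off from Eqn.~\eqref{eq:linear_CTD_PMF_THETA}. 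Taking expectations recovers $\bar{\bA}_{\operatorname{PMF}}=[(\bC\bC^{\top})\otimes\bI_d]\bar{\bA}$ and likewise for $\bar{\bb}$, and subtracting yields $\be_{\operatorname{PMF},t}=[(\bC\bC^{\top})\otimes\bI_d]\be_t$.

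The second ingredient is the operator-norm bound on the prefactor. By Lemma~\ref{lem:spectral_norm_of_KP}, $\norm{(\bC\bC^{\top})\otimes\bI_d}=\norm{\bC\bC^{\top}}$, and since $\bC\bC^{\top}$ and $\bC^{\top}\bC$ share the same eigenvalues, Lemma~\ref{lem:spectra_of_CTC} gives $\norm{\bC\bC^{\top}}\lesssim K^2$. With these two facts the three bounds follow by submultiplicativity: $\norm{\bA_{\operatorname{PMF},t}}\le\norm{\bC\bC^{\top}}\norm{\bA_t}$ together with $\norm{\bA_{\operatorname{PMF},t}-\bar{\bA}_{\operatorname{PMF}}}\le\norm{\bC\bC^{\top}}\norm{\bA_t-\bar{\bA}}$ give $C_{A,\operatorname{PMF}}\lesssim K^2 C_A$; $\norm{\be_{\operatorname{PMF},t}}\le\norm{\bC\bC^{\top}}\norm{\be_t}$ gives $C_{e,\operatorname{PMF}}\lesssim K^2 C_e$; and for the covariance, $\tr(\bSigma_{e,\operatorname{PMF}})=\EB[\norm{\be_{\operatorname{PMF},t}}^2]\le\norm{\bC\bC^{\top}}^2\,\EB[\norm{\be_t}^2]=\norm{\bC\bC^{\top}}^2\tr(\bSigma_e)\lesssim K^4\tr(\bSigma_e)$, where the square is what upgrades the $K^2$ factor to $K^4$.

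There is no serious analytic obstacle here; the entire content lies in the factorization. The one step requiring care is verifying $\bA_{\operatorname{PMF},t}=[(\bC\bC^{\top})\otimes\bI_d]\bA_t$ directly from the matrix-form update, since one must correctly transpose the trailing factor $\bC\bC^{\top}$ when vectorizing via $\vect(\bB_1\bB_2\bB_3)=(\bB_3^{\top}\otimes\bB_1)\vect(\bB_2)$ (Lemma~\ref{lem:vec_and_KP}) and then invoke symmetry of $\bC\bC^{\top}$. Once that identity is pinned down, the remaining estimates are immediate consequences of the $K^2$ spectral bound, and they make transparent why these quantities, and hence the induced sample complexity of SSGD with the PMF representation, inflate by powers of $K$ relative to {\LCTD}.
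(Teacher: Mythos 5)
Your proof is correct, and every step checks out: the factorization identities $\bA_{\operatorname{PMF},t}=\brk{(\bC\bC^{\top})\otimes\bI_d}\bA_t$ and $\bb_{\operatorname{PMF},t}=\brk{(\bC\bC^{\top})\otimes\bI_d}\bb_t$ follow exactly as you say from Lemma~\ref{lem:vec_and_KP}, the mixed-product property, and symmetry of $\bC\bC^{\top}$, and they reproduce the expressions for $\bar{\bA}_{\operatorname{PMF}}$ and $\bar{\bb}_{\operatorname{PMF}}$ displayed in Appendix~\ref{Appendix:convergece_ssgd_pmf}; combined with $\norm{(\bC\bC^{\top})\otimes\bI_d}=\norm{\bC\bC^{\top}}=\norm{\bC^{\top}\bC}\leq K^2$ (Lemmas~\ref{lem:spectral_norm_of_KP} and~\ref{lem:spectra_of_CTC}), submultiplicativity gives all three bounds, with the squaring of the prefactor correctly accounting for the $K^4$ in the trace bound. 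Note that the paper does not actually write out a proof of this lemma: it states that the results follow by ``minor modifications'' of the proofs of Lemmas~\ref{lem:translate_error_to_loss}--\ref{lem:exponential_stable}, i.e., by redoing the term-by-term Kronecker estimates of Appendix~\ref{appendix:proof_upper_bound_error_quantities} with the extra $\bC\bC^{\top}$ factors inserted. Your route is genuinely different and, for the statement as written, cleaner: a single preconditioning identity relates the PMF quantities to the {\LCTD} ones, so the three \emph{relative} bounds ($\lesssim K^2 C_A$, $\lesssim K^2 C_e$, $\lesssim K^4\tr(\bSigma_e)$) drop out simultaneously, whereas the paper's implied route produces absolute bounds that must then be matched against Lemma~\ref{lem:upper_bound_error_quantities}. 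The one point you should make explicit rather than leave implicit in the word ``subtracting'' is that $\be_{\operatorname{PMF},t}$ is defined with the PMF fixed point $\btheta^{\star}_{\operatorname{PMF}}$, so your identity $\be_{\operatorname{PMF},t}=\brk{(\bC\bC^{\top})\otimes\bI_d}\be_t$ requires $\btheta^{\star}_{\operatorname{PMF}}=\btheta^{\star}$; this holds because $(\bC\bC^{\top})\otimes\bI_d$ is invertible ($\bC$ is invertible), so the two linear systems $\bar{\bA}\btheta=\bar{\bb}$ and $\bar{\bA}_{\operatorname{PMF}}\btheta=\bar{\bb}_{\operatorname{PMF}}$ have the same unique solution, consistent with the paper's use of a single $\btheta^{\star}$ in Lemma~\ref{lem:loss_bound_PMF}.
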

Lemma~\ref{lem:PMF_quantity_bound} introduces an additional factor of $K^2$ (or $K^4$) compared to previous results for {\LCTD} (Lemma~\ref{lem:upper_bound_error_quantities}) , since the maximum eigenvalue of $\bC\bC^\top$ is of order $K^2$. 
\begin{lemma}\label{lem:pmf_convergence}
For any $p\geq 2$, let $a_{\operatorname{PMF}}\simeq(1-\sqrt\gamma)\lambda_{\min}$ and $\alpha_{p,\infty}^{\operatorname{PMF}}\simeq(1-\sqrt\gamma)/(pK^2)$ ($\alpha_{p,\infty}^{\operatorname{PMF}}p\leq 1/2$).
Then for any $\alpha\in\prn{0,\alpha^{\operatorname{PMF}}_{p,\infty}}$, $\bu\in\RB^{dK}$ and $t\in\NB$
    \begin{equation*}
       \EB^{1/p}\brk{\norm{\bGamma_{t,\operatorname{PMF}}^{(\alpha)}\bu}^p}\leq \prn{1-\alpha a_{\operatorname{PMF}}}^t \norm{\bu}.
    \end{equation*}
\end{lemma}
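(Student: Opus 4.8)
The plan is to mirror the proof of Lemma~\ref{lem:exponential_stable}, tracking the extra matrix $\bC\bC^{\top}$ that distinguishes the vanilla SSGD iteration (Eqn.~\eqref{eq:linear_CTD_PMF_THETA}) from {\LCTD}. Writing the per-step random matrix in the $\btheta$-coordinate as $\bA_{t,\operatorname{PMF}}=\bP\bA_t$ with $\bP:=\prn{\bC\bC^{\top}}\otimes\bI_d$ and $\bA_t$ the {\LCTD} matrix, the iteration is exactly the {\LCTD} iteration left-multiplied by the fixed PSD matrix $\bP$ (the preconditioner that {\LCTD} removes). I would carry out the whole argument in the geometry weighted by $\bH:=\bP^{-1}=\prn{\bC\bC^{\top}}^{-1}\otimes\bI_d$, equivalently in the PMF coordinate $\bw=\prn{\bC^{-1}\otimes\bI_d}\btheta$, and invoke the reduction of \citet[Lemma~2]{samsonov2024improved}: it suffices to establish a one-step $\bH$-weighted PSD contraction of the form $\EB\{[\prn{\bI-\alpha\bA_{\operatorname{PMF}}}^{\top}\bH\prn{\bI-\alpha\bA_{\operatorname{PMF}}}]^{p}\}\preccurlyeq\prn{1-\alpha p\,a_{\operatorname{PMF}}}\bH$.

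Expanding $\prn{\bI-\alpha\bA_{\operatorname{PMF}}}^{\top}\bH\prn{\bI-\alpha\bA_{\operatorname{PMF}}}=\bH-\alpha\bigl(\bA_{\operatorname{PMF}}^{\top}\bH+\bH\bA_{\operatorname{PMF}}\bigr)+\alpha^{2}\bA_{\operatorname{PMF}}^{\top}\bH\bA_{\operatorname{PMF}}$ and using the identities $\bH\bA_{\operatorname{PMF}}=\bA_t$ and $\bA_{\operatorname{PMF}}^{\top}\bH=\bA_t^{\top}$, the linear (drift) term collapses to $\bA_t+\bA_t^{\top}$, which is \emph{exactly} the object controlled in Lemma~\ref{lem:exponential_stable}; its expectation satisfies $\EB[\bA_t+\bA_t^{\top}]\succcurlyeq 2\prn{1-\sqrt\gamma}\bI_K\otimes\bSigma_{\bphi}$ via the biscuit-matrix bound Eqn.~\eqref{eq:biscuit_matrix_bound} together with $\norm{\bC\tilde{\bG}(r)\bC^{-1}}\leq\sqrt\gamma$ (Lemma~\ref{lem:Spectra_of_ccgcc}). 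The quadratic term is where the $K$-dependence enters: $\bA_{\operatorname{PMF}}^{\top}\bH\bA_{\operatorname{PMF}}=\bA_t^{\top}\bP\bA_t\preccurlyeq\lambda_{\max}\prn{\bC\bC^{\top}}\,\bA_t^{\top}\bA_t$, and $\lambda_{\max}\prn{\bC\bC^{\top}}\simeq K^2$ by Lemma~\ref{lem:spectra_of_CTC}. Consequently, choosing $\alpha\lesssim\prn{1-\sqrt\gamma}/\prn{pK^2}$ keeps the quadratic contribution subdominant, while the higher-moment bounds on $\bigl(\bA_{\operatorname{PMF}}^{\top}\bH+\bH\bA_{\operatorname{PMF}}-\alpha\bA_{\operatorname{PMF}}^{\top}\bH\bA_{\operatorname{PMF}}\bigr)^{p}$ inflate by the powers of $\lambda_{\max}\prn{\bC\bC^{\top}}$ recorded in Lemma~\ref{lem:PMF_quantity_bound}; this reproduces the {\LCTD} estimates with $C_A$ replaced by $\simeq K^2 C_A$, which is precisely the origin of the threshold $\alpha_{p,\infty}^{\operatorname{PMF}}\simeq\prn{1-\sqrt\gamma}/\prn{pK^2}$. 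Finally, the rate stays $K$-free: the positive curvature $\bI_K\otimes\bSigma_{\bphi}$ from the drift dominates $\bH$ through $\bI_K\otimes\bSigma_{\bphi}\succcurlyeq\lambda_{\min}\,\lambda_{\min}\prn{\bC\bC^{\top}}\,\bH$, and since $\lambda_{\min}\prn{\bC\bC^{\top}}=\Omega(1)$ (Lemma~\ref{lem:spectra_of_CTC}) this yields $a_{\operatorname{PMF}}\simeq\prn{1-\sqrt\gamma}\lambda_{\min}$.

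The main obstacle is the ill-conditioning of $\bC\bC^{\top}$, whose condition number is $\Theta(K^2)$: the argument must cleanly \emph{apportion} the two extreme eigenvalues, letting $\lambda_{\max}\prn{\bC\bC^{\top}}\simeq K^2$ govern only the step-size ceiling and the second-moment inflation while $\lambda_{\min}\prn{\bC\bC^{\top}}=\Omega(1)$ governs the rate. The delicate point is that the naively symmetrized Euclidean drift $\bP\bar{\bA}+\bar{\bA}^{\top}\bP$ is \emph{not} lower-bounded by a multiple of $\bI_{dK}$ — the $K^2$-stretching of $\bP$ can overwhelm the positive part — so one genuinely must work in the $\bH$-weighted norm rather than directly in the Euclidean norm as in Lemma~\ref{lem:exponential_stable}, and passing the resulting weighted contraction back to the Euclidean statement is where the conditioning of $\bC\bC^{\top}$ must be absorbed; it is exactly this mechanism that makes the vanilla algorithm's sample complexity grow with $K$.
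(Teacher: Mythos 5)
The paper never actually writes this proof out: Appendix~\ref{Appendix:convergece_ssgd_pmf} states that the PMF-representation results ``require only minor modifications to the original proofs'' and omits them, so there is no line-by-line argument to compare against. Measured against what the paper plainly intends, your strategy is the right one. You correctly factor $\bA_{t,\operatorname{PMF}}=\bP\bA_t$ with $\bP=(\bC\bC^{\top})\otimes\bI_d$, pass to the $\bH=\bP^{-1}$-weighted geometry (equivalently the $\bw$-coordinates, since $\norm{\btheta}_{\bP^{-1}}=\norm{(\bC^{-1}\otimes\bI_d)\btheta}$), collapse the drift to $\bA_t+\bA_t^{\top}$ via $\bH\bA_{\operatorname{PMF}}=\bA_t$ so that Eqn.~\eqref{eq:biscuit_matrix_bound} applies verbatim, and let $\lambda_{\max}(\bC\bC^{\top})\simeq K^2$ and $\lambda_{\min}(\bC\bC^{\top})=\Omega(1)$ (Lemma~\ref{lem:spectra_of_CTC}) govern, respectively, the step-size ceiling and the $K$-free rate. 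Indeed the paper itself signals exactly this geometry: the second bound in Lemma~\ref{lem:Spectra_of_ccgcc}, $\norm{(\bC^{\top}\bC)^{1/2}\tilde{\bG}(r)(\bC^{\top}\bC)^{-1/2}}\leq\sqrt\gamma$, has no role in the {\LCTD} analysis and exists precisely to make the drift work under the $\bC^{\top}\bC$-weighting; your congruence shortcut reaches the same drift bound from Eqn.~\eqref{eq:biscuit_matrix_bound} directly, which is equivalent and arguably cleaner.

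The genuine gap is the last step, which you flag but do not close. After fixing the form of your weighted reduction (one should conjugate first, i.e.\ apply the Euclidean reduction from the proof of Lemma~\ref{lem:exponential_stable} to $\tilde{\bA}_t=\bP^{1/2}\bA_t\bP^{1/2}$; your displayed condition $\EB\{[\cdot]^p\}\preccurlyeq(1-\alpha p a_{\operatorname{PMF}})\bH$ is dimensionally inconsistent for $p\geq 2$), what the argument delivers is
\begin{equation*}
  \EB^{1/p}\brk{\norm{\bGamma_{t,\operatorname{PMF}}^{(\alpha)}\bu}_{\bP^{-1}}^p}\leq\prn{1-\alpha a_{\operatorname{PMF}}}^t\norm{\bu}_{\bP^{-1}},
\end{equation*}
and converting both sides to the Euclidean norm costs $\sqrt{\norm{\bP}\norm{\bP^{-1}}}\leq 2K$, i.e.\ a prefactor the lemma's statement does not have. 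This cannot be ``absorbed'' by the present technique: a prefactor-free Euclidean contraction, already at $t=1$, $p=2$, would force $\bP\bar{\bA}+\bar{\bA}^{\top}\bP\succcurlyeq 2a_{\operatorname{PMF}}\bI_{dK}$, and, exactly as you observe, no such Euclidean lower bound is available — the matrix that would need a $\sqrt\gamma$-type bound is $(\bC^{\top}\bC)\tilde{\bG}(r)(\bC^{\top}\bC)^{-1}$, not the half-power conjugation covered by Lemma~\ref{lem:Spectra_of_ccgcc}. So either $\bGamma_{t,\operatorname{PMF}}^{(\alpha)}$ is read in the PMF coordinates, where the weighted and Euclidean norms coincide and your proof is complete, or the statement must carry a $\simeq K$ prefactor (or a weighted norm). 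This residual mismatch is as much a looseness of the paper's statement as a defect of your argument, and it is harmless downstream since Theorem~\ref{thm:l2_error_linear_ctd_PMF} already tolerates polynomial-in-$K$ factors; but a complete write-up must say explicitly which of the two readings it adopts, rather than asserting the conditioning ``must be absorbed.''
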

As before, in this lemma, $a_{\operatorname{PMF}}$ does not depend on $K$ because the minimum eigenvalue of $\bC\bC^\top$ is $\Omega(1)$, and $\alpha_{p,\infty}^{\operatorname{PMF}}$ scales with $K^{-2}$ because the maximum eigenvalue of $\bC\bC^\top$ is of order $K^2$. 
\begin{theorem}\label{thm:l2_error_linear_ctd_PMF}
For any $K\geq (1-\gamma)^{-1}$ and $\alpha\in(0,\alpha_{p,\infty}^{\operatorname{PMF}})$, it holds that
 \begin{equation*}
       \begin{aligned}
        \EB^{1/2}[(\gL(\bar\btheta_{\operatorname{PMF},T}))^2]\lesssim&\frac{K^2\prn{\norm{\btheta^{\star}}_{V_1}+1}}{\sqrt{T}(1-\gamma)\sqrt{\lambda_{\min}}}\prn{1+K\sqrt{\frac{\alpha K^2}{(1-\gamma)\lambda_{\min}}}}+\frac{K^3\prn{\norm{\btheta^{\star}}_{V_1}+1}}{T\sqrt{\alpha K^2}(1-\gamma)^{\frac{3}{2}}\lambda_{\min}}\\
        &+K\frac{(1-\frac{1}{2}(\alpha K^2)K^{-2} (1-\sqrt\gamma)\lambda_{\min} )^{T/2}}{T \sqrt{\alpha K^2}(1-\gamma)\sqrt{\lambda_{\min}}}\prn{\frac{K}{\sqrt{\alpha K^2}}{+}\frac{K^2}{\sqrt{ (1{-}\gamma)\lambda_{\min}}}}\norm{\btheta_{\operatorname{PMF},0}-\btheta^{\star}}_{V_2}.
       \end{aligned}
    \end{equation*}
\end{theorem}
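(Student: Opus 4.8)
The plan is to follow the proof of Theorem~\ref{thm:l2_error_linear_ctd} essentially verbatim, substituting the {\LCTD} quantities with their PMF counterparts established in Lemma~\ref{lem:loss_bound_PMF}, Lemma~\ref{lem:PMF_quantity_bound}, and Lemma~\ref{lem:pmf_convergence}. The backbone is again the generic $L^2$ stochastic-approximation bound of Theorem~\ref{thm:exponential_stability_l2_convergence}, now applied to the linear system $\bar{\bA}_{\operatorname{PMF}}\btheta = \bar{\bb}_{\operatorname{PMF}}$ associated with the PMF update in Eqn.~\eqref{eq:linear_CTD_PMF_THETA}.

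First I would invoke Theorem~\ref{thm:exponential_stability_l2_convergence} with the roles of $\bar{\bA}$, $C_A$, $a$, and $\tr(\bSigma_{e})$ played by $\bar{\bA}_{\operatorname{PMF}}$, $C_{A,\operatorname{PMF}}$, $a_{\operatorname{PMF}}$, and $\tr(\bSigma_{e,\operatorname{PMF}})$. This is legitimate once $\alpha \in (0, \alpha_{p,\infty}^{\operatorname{PMF}})$, which is precisely the hypothesis of the statement and the range in which the exponential-stability condition is verified in Lemma~\ref{lem:pmf_convergence}. I would then substitute the estimates of Lemma~\ref{lem:PMF_quantity_bound}, namely $C_{A,\operatorname{PMF}} \lesssim K^2 C_A \lesssim K^2$ and $\sqrt{\tr(\bSigma_{e,\operatorname{PMF}})} \lesssim K^2 \sqrt{\tr(\bSigma_{e})}$, together with $a_{\operatorname{PMF}} \simeq (1-\sqrt\gamma)\lambda_{\min}$. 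This produces an upper bound on $\EB^{1/2}[\norm{\bar{\bA}_{\operatorname{PMF}}(\bar{\btheta}_{\operatorname{PMF},T}-\btheta^{\star})}^2]$ in which each occurrence of $C_A$ carries an extra $K^2$ and the variance prefactor $\sqrt{\tr(\bSigma_{e})}$ is inflated by $K^2$; writing the step size in the suggestive form $\alpha K^2$ exposes the structural match with the {\LCTD} bound of Theorem~\ref{thm:l2_error_linear_ctd}.

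Next I would translate this operator-weighted error into the loss $\gL(\bar{\btheta}_{\operatorname{PMF},T})$ via Lemma~\ref{lem:loss_bound_PMF}, which costs only the factor $K^{-1/2}(1-\gamma)^{-2}\lambda_{\min}^{-1/2}$. Finally, recognizing $\sqrt{\tr(\bSigma_{e})} \lesssim \norm{\btheta^{\star}}_{\bI_K\otimes\bSigma_{\bphi}} + \sqrt{K}(1-\gamma)$ from Lemma~\ref{lem:upper_bound_error_quantities}, and recalling the definitions $\norm{\btheta^{\star}}_{V_1} = \frac{1}{\sqrt{K}(1-\gamma)}\norm{\btheta^{\star}}_{\bI_K\otimes\bSigma_{\bphi}}$ and $\norm{\btheta_{\operatorname{PMF},0}-\btheta^{\star}}_{V_2} = \frac{1}{\sqrt{K}(1-\gamma)}\norm{\btheta_{\operatorname{PMF},0}-\btheta^{\star}}$, I would collect terms to arrive at the stated three-term bound.

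The main obstacle is purely bookkeeping: tracking the asymmetric powers of $K$. The loss-translation step degrades by only $K^{-1/2}$ because $\lambda_{\min}(\bC\bC^{\top}) = \Omega(1)$ (Lemma~\ref{lem:spectra_of_CTC}), whereas the variance proxy $\tr(\bSigma_{e,\operatorname{PMF}})$ and the almost-sure bound $C_{A,\operatorname{PMF}}$ blow up like $K^4$ and $K^2$ respectively because $\lambda_{\max}(\bC\bC^{\top}) \simeq K^2$. Keeping the contraction rate $a_{\operatorname{PMF}}$ of order $K^0$ while the admissible step size shrinks like $K^{-2}$ is exactly what forces the final bound to deteriorate with $K$, in contrast to Theorem~\ref{thm:l2_error_linear_ctd}; the only real care needed is to confirm that these competing factors recombine into precisely the exponents displayed in the statement.
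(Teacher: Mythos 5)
Your proposal is correct and is exactly the route the paper intends: the paper explicitly omits this proof, stating it requires ``only minor modifications to the original proofs,'' and your plan---apply Theorem~\ref{thm:exponential_stability_l2_convergence} to the system $\bar{\bA}_{\operatorname{PMF}}\btheta=\bar{\bb}_{\operatorname{PMF}}$ using the stability range from Lemma~\ref{lem:pmf_convergence}, substitute $C_{A,\operatorname{PMF}}\lesssim K^2$, $\tr(\bSigma_{e,\operatorname{PMF}})\lesssim K^4\tr(\bSigma_e)$, $a_{\operatorname{PMF}}\simeq(1-\sqrt\gamma)\lambda_{\min}$, and translate to $\gL$ via Lemma~\ref{lem:loss_bound_PMF}---is precisely that modification. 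The $K$-power bookkeeping you describe does recombine into the displayed exponents (e.g., the first term's $K^2\sqrt{\alpha}/\sqrt{(1-\gamma)\lambda_{\min}}$ equals the stated $K\sqrt{\alpha K^2/((1-\gamma)\lambda_{\min})}$), so the argument goes through as planned.
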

This theorem for the PMF version algorithm yields an upper bound that is $K^3$ times looser than Theorem~\ref{thm:l2_error_linear_ctd} for our {\LCTD}.
The appearance of the $K^3$ factor is due to the fact that the condition number of the redundant matrix $\bC\bC^\top$ is of order $K^2$.
This factor is unavoidable within our theoretical analysis framework. 

However, from the numerical experiments (Table~\ref{table:pmf_K_alpha} and Table~\ref{table:cdf_K_alpha}) in Appendix~\ref{subsection:experiment_dependency_of_step_size}, we can only observe that our {\LCTD} consistently outperforms the PMF version algorithm under different values of $K$, but the performance gap does not increase significantly when $K$ becomes larger as predicted by Theorem~\ref{thm:l2_error_linear_ctd} and Theorem~\ref{thm:l2_error_linear_ctd_PMF}. 
The reason for this might be, as discussed after Lemma~\ref{lem:loss_bound_PMF}: in the experimental environment we have set, when the matrix $\bC\bC^\top$ acts on the vectors it encountered, the stretching coefficient is usually of order $K^2$ rather than a constant order.
See the numerical experiments (Figure~\ref{fig:comparation}) in Appendix~\ref{subsection:experiment_dependency_of_step_size} for some evidence.
\section{Numerical Experiment}\label{Appendix:numerical_experiment}
In this appendix, we validate the proposed {\LCTD} algorithm (Eqn.~\eqref{eq:linear_CTD}) with numerical experiments, and show its advantage over the baseline algorithm, stochastic semi-gradient descent (SSGD) with the probability mass function (PMF) representation (Eqn.~\eqref{eq:linear_CTD_PMF_THETA}).

To empirically evaluate our {\LCTD} algorithm, we consider a $3$-state MDP with $\gamma = 0.75$.  
When the number of states is finite, we denote by $\bPhi=(\bphi(s))_{s\in\gS}\in\RB^{d\times \gS}$ the feature matrix.
Here, we set the feature matrix $\bPhi$ to be a full-rank matrix in $\mathbb{R}^{3\times3}$. 
The following experiments share zero initialization $\btheta_0=\bm{0}$ with $\text{max\_}\text{iteration}{=}500000$ and $\text{batch\_size}{=}25$.

All of the experiments are conducted on a server with 4 NVIDIA RTX 4090 GPUs and Intel(R) Xeon(R) Gold 6132 CPU @ 2.60GHz.
\subsection{Empirical Convergence of {\LCTD}}\label{subsection:experiment_convergence}
We employ the $\LCTD$ algorithm in the above environment
and have the following convergence results in Figure~\ref{fig:loss_cdf}. 
\begin{figure}[ht]
    \centering
    \includegraphics[width=0.7\linewidth]{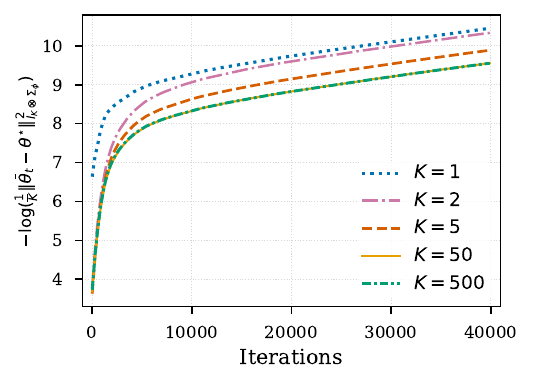} 
    \caption{Convergence results  under varying $K$ for our  $\LCTD$ algorithm with step size $\alpha = 0.01$. These curves exhibit similar trends, demonstrating our algorithm's robustness across different $K$ values.}
    \label{fig:loss_cdf}
\end{figure}
This figure shows the negative logarithm of $\frac{1}{K}\norm{\bar{\btheta}_{t}-\btheta^{\star}}_{\bI_{K}\otimes\bSigma_{\bphi}}^2=(1-\gamma)\ell_{2,\mu_{\pi}}^2(\bm{\eta}_{\btheta},\bm{\eta}_{\btheta^{\star}})$ along iterations. 
We observe that our {\LCTD} algorithm can converge for different values of $K$ when we set the step size as $\alpha=0.01$. 
\subsection{Comparison with SSGD with the PMF Representation}\label{subsection:experiment_dependency_of_step_size}
First, we repeat the same experiment as in the previous section for the baseline algorithm, SSGD with the PMF representation. 
The experimental results in Figure~\ref{fig:loss_pmf} demonstrate that when the baseline algorithm uses a fixed step size $\alpha=0.01$, it does not converge when $K$ is large ($K\geq 44$).
The results in Figure~\ref{fig:loss_cdf} and Figure~\ref{fig:loss_pmf} verify the advantage of our {\LCTD} over the baseline algorithm as mentioned in Remark~\ref{remark:compare_ssgd_pmf}: when $K$ increases, the step size of the 
baseline algorithm needs to decay (Lemma~\ref{lem:pmf_convergence}).
In contrast, the step size of our {\LCTD} does not need to be adjusted when $K$ increases (Lemma~\ref{lem:exponential_stable}).
\begin{figure}[ht]
    \centering
    \includegraphics[width=0.7\linewidth]{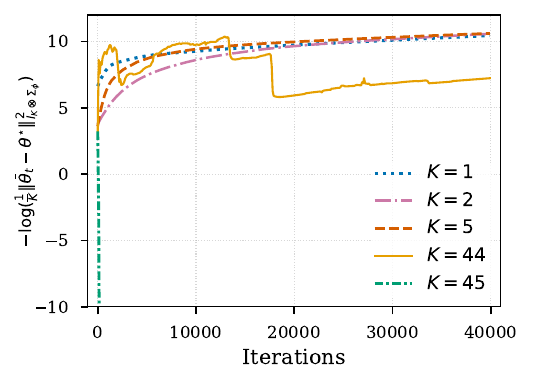} 
    \caption{Convergence results  under varying $K$ for the baseline algorithm, SSGD with the PMF representation with step size $\alpha = 0.01$. We remark that when $K = 45$, the program reports errors of inf and nan. In contrast to results of $\LCTD$ in 
    Figure~\ref{fig:loss_cdf}, the baseline algorithm no longer converges when $K$ is large ($K\geq 44$).}
    \label{fig:loss_pmf}
\end{figure}

Next, we will verify that the maximum step size $\alpha_\infty^{\operatorname{PMF},(K)}$ that ensures the convergence of the baseline algorithm scales with $K^{-2}$, as predicted in Lemma~\ref{lem:pmf_convergence}. 
Then we will compare the convergence rate of the baseline algorithm with that of our {\LCTD} algorithm. 

In Figure~\ref{fig:loss_change_alpha}, we employ the baseline algorithm with different step sizes under fixed $K=150$, and we find that the baseline algorithm converges when the step size does not exceed $8.6\mathrm{e}{-}4$, and it does not converge when the step size exceeds $8.7\mathrm{e}{-}4$.
\begin{figure}[ht]
    \centering
    \includegraphics[width=0.7\linewidth]{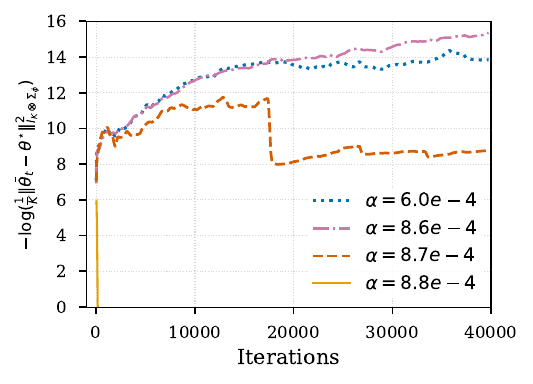} 
    \caption{Convergence results with different step sizes for the baseline algorithm, SSGD with the PMF representation under fixed $K = 150$. 
    We remark that when we take $\alpha = 8.8\mathrm{e}{-}4$, the program reports errors of inf and nan. The baseline algorithm converges when the step size does not exceed $8.6\mathrm{e}{-}4$, and it does not converge when the step size exceeds $8.7\mathrm{e}{-}4$.
    Therefore, $\alpha_\infty^{\operatorname{PMF},(150)}\in[8.6\mathrm{e}{-}4, 8.7\mathrm{e}{-}4]$ in this environment.}   
    \label{fig:loss_change_alpha}
\end{figure}
This indicates that $\alpha_\infty^{\operatorname{PMF},(150)}\in[8.6\mathrm{e}{-}4, 8.7\mathrm{e}{-}4]$ in this environment, providing a good approximation of $\alpha_\infty^{\operatorname{PMF},(150)}$.

We repeat the above experiments under varying $K$, searching for a step size that can ensure convergence (a lower bound of $\alpha_\infty^{\operatorname{PMF},(K)}$) and a step size that leads to divergence (an upper bound of $\alpha_\infty^{\operatorname{PMF},(K)}$) such that the two step sizes are as close as possible and thereby we can get a good approximation of $\alpha_\infty^{\operatorname{PMF},(K)}$. 
The results are summarized in Table~\ref{table:pmf_K_alpha}. 
\begin{table}[t]
\begin{center}
\begin{tabular}{lrrr}
\multicolumn{1}{c}{\bf $K$} & \multicolumn{1}{c}{\bf Lower Bound of $\alpha_\infty^{\operatorname{PMF},(K)}$} & \multicolumn{1}{c}{\bf Upper Bound of $\alpha_\infty^{\operatorname{PMF},(K)}$} & \multicolumn{1}{c}{\bf Iterations}\\
\hline
30 & 2.1e-2 & 2.2e-2 & 37245\\
45 & 9e-3 & 9.5e-3 & 39262\\
75 & 3.4e-3 & 3.5e-3 & 38286\\
105 & 1.75e-3 & 1.8e-3 &38123\\
150 & 8.6e-4 & 8.7e-4 &38556\\
225 & 3.8e-4 & 3.9e-4 &38317\\
300 & 2.1e-4 & 2.2e-4 &38999\\
375 & 1.35e-4 & 1.4e-4 &38674\\
450 & 9.5e-5 & 9.8e-5 &38506\\
\hline
\end{tabular}
\end{center}
\caption{Lower and upper bounds of the maximum step size $\alpha_\infty^{\operatorname{PMF},(K)}$ that ensures the convergence  under varying $K$ for the baseline algorithm, SSGD with the PMF representation.
The bounds are determined using the same method as that in Figure~\ref{fig:loss_change_alpha}.
The {\bf{Iterations}} column refers to  the number of iterations required for the error to reach below $2\mathrm{e}{-}6$ when the step size satisfies $\alpha\approx0.2\alpha_\infty^{\operatorname{PMF},(K)}$.}
\label{table:pmf_K_alpha}
\end{table}

In Figure~\ref{fig:dependence}, we use the approximate values of $\alpha_\infty^{\operatorname{PMF},(K)}$ provided in Table~\ref{table:pmf_K_alpha} to perform a quadratic function fitting of $1/\alpha_\infty^{\operatorname{PMF},(K)}$ with respect to $K$. 
We find that $\alpha_\infty^{\operatorname{PMF},(K)}$ indeed approximately scales with $K^{-2}$, which verifies our theoretical result (Lemma~\ref{lem:pmf_convergence}).

\begin{figure}[ht]
    \centering
    \includegraphics[width=0.7\linewidth]{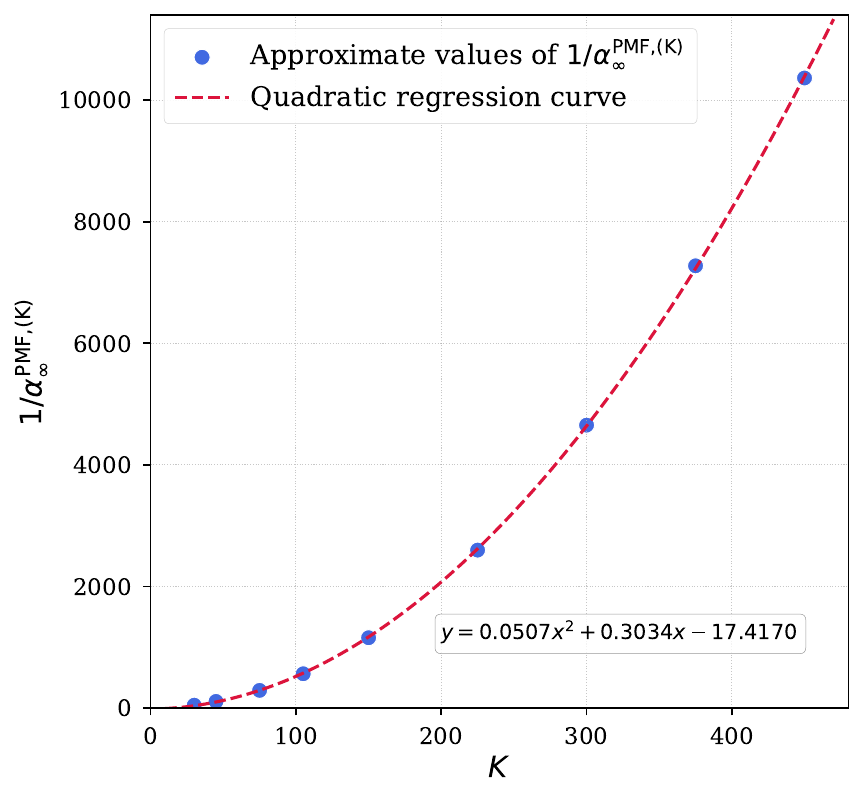} 
    \caption{The approximate values of of maximum step sizes $1/\alpha_\infty^{\operatorname{PMF},(K)}$ under varying $K$. 
    Here we take the average of the upper and lower bounds of $\alpha_\infty^{\operatorname{PMF},(K)}$ provided in Table~\ref{table:pmf_K_alpha} as an approximation of $\alpha_\infty^{\operatorname{PMF},(K)}$ and perform quadratic regression of $1/\alpha_\infty^{\operatorname{PMF},(K)}$ on $K$. This fit achieves a mean squared error of $425.85$ and $R^2$ of $0.99996$, which indicates that $1/\alpha_\infty^{\operatorname{PMF},(K)}$ indeed grows quadratically with respect to $K$, aligning with our theoretical results (Lemma~\ref{lem:pmf_convergence}).}
    \label{fig:dependence}
\end{figure}

To compare the statistical efficiency of our {\LCTD} algorithm and the baseline algorithm, in Table~\ref{table:pmf_K_alpha}, we also report the number of iterations required for the error to reach below $2\mathrm{e}{-}6$ when the step size satisfies $\alpha\approx0.2\alpha_\infty^{\operatorname{PMF},(K)}$.
In addition, we present the parallel results of our {\LCTD} in Table~\ref{table:cdf_K_alpha}. 
\begin{table}[t]
\begin{center}
\begin{tabular}{lrrr}
\multicolumn{1}{c}{\bf $K$} & \multicolumn{1}{c}{\bf Lower Bound of $\alpha_\infty^{(K)}$} & \multicolumn{1}{c}{\bf Upper Bound of $\alpha_\infty^{(K)}$} & \multicolumn{1}{c}{\bf Iterations}\\
\hline
30 & 1.65 & 1.7 & 17908\\
45 & 1.65 & 1.7 & 17925\\
75 & 1.65 & 1.7 & 17942\\
105 & 1.6 & 1.65 & 18623\\
150 & 1.5 & 1.55 &21947\\
225 & 1.55 & 1.6 &20890\\
300 & 1.55 & 1.6 &20890\\
375 & 1.55 & 1.65 &20595\\
450 & 1.5 & 1.55 &21947\\
\hline
\end{tabular}
\end{center}
\caption{Lower and upper bounds of the maximum step size $\alpha_\infty^{(K)}$ that ensures the convergence under varying $K$ for our {\LCTD}.
The bounds are determined using the same method as that in Figure~\ref{fig:loss_change_alpha}.
The {\bf{Iterations}} column refers to  the number of iterations required for the error to reach below $2\mathrm{e}{-}6$ the step size satisfies $\alpha\approx0.2\alpha_\infty^{(K)}$.}
\label{table:cdf_K_alpha}
\end{table}
In Table~\ref{table:cdf_K_alpha}, we find that the value of $\alpha_\infty^{(K)}$ for our {\LCTD} algorithm is much larger than $\alpha_\infty^{\operatorname{PMF},(K)}$, and it does not decrease significantly with the growth of $K$. 
Moreover, by comparing the {\bf{Iterations}} columns in Table~\ref{table:pmf_K_alpha} and Table~\ref{table:cdf_K_alpha}, we find that the sample complexity of our {\LCTD} does not increase significantly with the growth of $K$, and {\LCTD} empirically consistently outperforms the baseline algorithm under different $K$.

However, the performance gap does not increase significantly as expected when $K$ increases as predicted by Theorem~\ref{thm:l2_error_linear_ctd} and Theorem~\ref{thm:l2_error_linear_ctd_PMF}.
The reason for this might be that, as discussed after Lemma~\ref{lem:loss_bound_PMF}, in the experimental environment we have set, when the matrix $\bC\bC^\top$ acts on the vectors it encountered, the stretching coefficient (\ie, $\norm{\bC\bC^\top \bx}/\norm{\bx}$ for some vector $\bx$) is usually of order $K^2$
rather than a constant order.

We verify this conjecture through the following experiment.
We focus on the LHS and RHS of Eqn.~\eqref{eq:eq_in_lem_loss_bound_PMF} in Lemma~\ref{lem:loss_bound_PMF}:
\begin{equation*}
      \gL(\btheta)\lesssim\frac{1}{\sqrt K(1-\gamma)^2\sqrt{\lambda_{\min}}} \norm{\bar{\bA}_{\operatorname{PMF}}\prn{{\btheta}-\btheta^{\star}}},
\end{equation*}
where
\begin{equation*}
\begin{aligned}
        \bar{\bA}_{\operatorname{PMF}}&=\brk{\prn{\bC\bC^{\top}}\otimes\bSigma_{\bphi}}-\EB_{s, r, s^\prime}\brk{\prn{\bC\bC^{\top} (\bC\tilde{\bG}(r_t)\bC^{-1})}\otimes\prn{\bphi(s)\bphi(s^\prime)^{\top}}}.
\end{aligned}
\end{equation*}
In our theoretical analysis, we first give an upper bound of the RHS, and then apply Lemma~\ref{lem:loss_bound_PMF} bound the loss function $\gL(\btheta)$ in the LHS with the RHS. 
However, since the minimum eigenvalue of the matrix $\bC\bC^\top$ in the RHS is only of a constant order, we are unable to have a term of $1/K^2$ in the RHS. 
Therefore, our conjecture can be verified by checking whether the bound provided in Lemma~\ref{lem:loss_bound_PMF} is tight in this environment, which is presented in Figure~\ref{fig:comparation}. 
The left sub-graph of Figure~\ref{fig:comparation} corresponds to the LHS, and the right sub-graph corresponds to the RHS. 
We omit the constants that are independent of $K$. 
From Figure~\ref{fig:comparation}, we can find that the LHS remains almost unchanged under different $K$, but the RHS increases as $K$ becomes larger. 
This indicates that the stretching coefficient of the matrix $\bC\bC^\top$ that we frequently encounters during the iterative process grows with $K$ rather than remaining a constant order. 
A similar analysis also holds for $a_{\operatorname{PMF}}$ in Lemma~\ref{lem:pmf_convergence}, and we omit it for brevity.  
These factors result in the performance gap between our {\LCTD} algorithm and the baseline algorithm not increasing significantly when $K$ becomes larger, as predicted by Theorem~\ref{thm:l2_error_linear_ctd} and Theorem~\ref{thm:l2_error_linear_ctd_PMF}.


\begin{figure}[ht]
    \centering
    \includegraphics[width=1\linewidth]{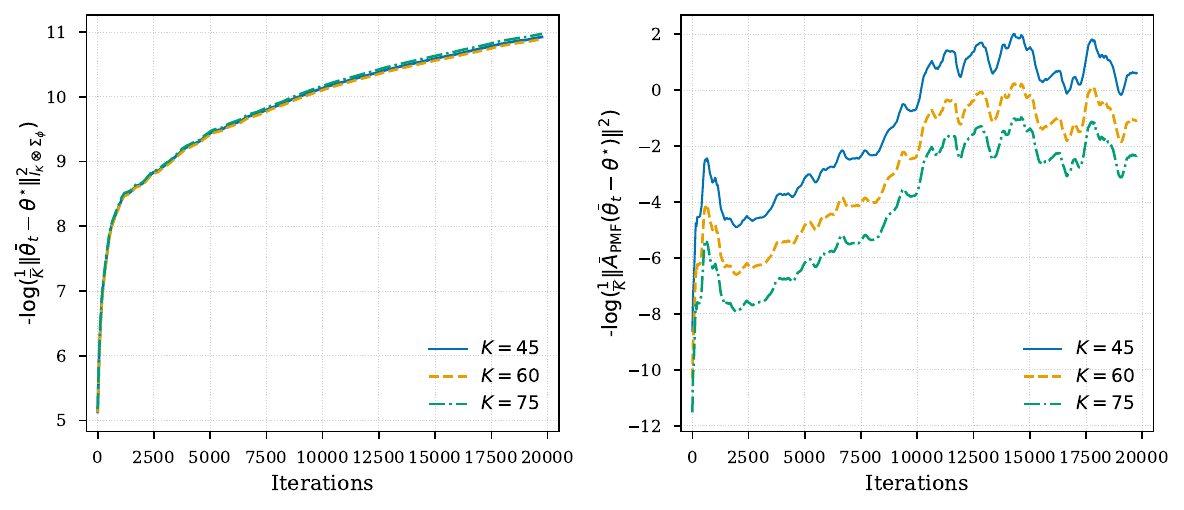} 
    \caption{LHS and RHS of Eqn.~\eqref{eq:eq_in_lem_loss_bound_PMF} in Lemma~\ref{lem:loss_bound_PMF} under varying $K$. 
The left sub-graph corresponds to the LHS, and the right sub-graph corresponds to the RHS. 
We omit the constants that are independent of $K$. 
We can find that the LHS remains almost unchanged under different $K$, but the RHS increases as $K$ becomes larger, indicating that the stretching coefficient of the matrix $\bC\bC^\top$ that we frequently encounters during the iterative process grows with $K$ rather than remaining a constant order.}
    \label{fig:comparation}
\end{figure}
\section{Other Technical Lemmas}\label{Appendix_technical_lemmas}
\begin{lemma}\label{lem:prob_basic_inequalities}
For any $\nu_1, \nu_2\in\sP^{\sgn}$, we have $W_1(\nu_1,\nu_2)\leq\frac{1}{\sqrt{1-\gamma}}\ell_2(\nu_1,\nu_2)$. 
\end{lemma}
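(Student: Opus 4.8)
The plan is to reduce both distances to integrals of the CDF difference over the common support and then invoke the Cauchy--Schwarz inequality, using crucially that the support $[0,(1-\gamma)^{-1}]$ has finite Lebesgue measure $(1-\gamma)^{-1}$. This is the source of the factor $(1-\gamma)^{-1/2}$ in the bound.

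First I would set $\Delta(x):=F_{\nu_1}(x)-F_{\nu_2}(x)$ and record the CDF-based representations of the two distances. By the definitions introduced right after $F_\nu(x):=\nu([0,x])$, the $1$-Wasserstein distance is the $L^1$ norm of $\Delta$ and the Cram\'er distance $\ell_2$ is its $L^2$ norm. Since both $\nu_1,\nu_2\in\sP^{\sgn}$ have total mass $1$ and are supported in $[0,(1-\gamma)^{-1}]$, one checks that $\Delta(x)=0$ for $x<0$ (where both CDFs vanish) and for $x\geq(1-\gamma)^{-1}$ (where both CDFs equal $1$, so the boundary terms cancel). Hence $\Delta$ is supported on $[0,(1-\gamma)^{-1}]$ and
\[
W_1(\nu_1,\nu_2)=\int_0^{(1-\gamma)^{-1}}\abs{\Delta(x)}\,dx,\qquad \ell_2(\nu_1,\nu_2)=\prn{\int_0^{(1-\gamma)^{-1}}\Delta(x)^2\,dx}^{1/2}.
\]

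Next I would apply Cauchy--Schwarz to the pair of functions $1$ and $\abs{\Delta}$ on the interval of length $(1-\gamma)^{-1}$, giving
\[
\int_0^{(1-\gamma)^{-1}}\abs{\Delta(x)}\,dx\leq\prn{\int_0^{(1-\gamma)^{-1}}1\,dx}^{1/2}\prn{\int_0^{(1-\gamma)^{-1}}\Delta(x)^2\,dx}^{1/2}=\frac{1}{\sqrt{1-\gamma}}\,\ell_2(\nu_1,\nu_2),
\]
which is exactly the claimed inequality. There is essentially no obstacle here: the statement is a one-line H\"older estimate on a bounded interval. The only step deserving a sentence of care is the identification of $W_1$ and $\ell_2$ with the $L^1$ and $L^2$ norms of $\Delta$ in the signed-measure setting, which I would justify by noting that the shared total mass $1$ forces the boundary contributions to agree so that $\Delta$ is compactly supported on $[0,(1-\gamma)^{-1}]$; everything else is the standard comparison of $L^1$ and $L^2$ norms via the finite length of the support.
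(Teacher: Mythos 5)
Your proof is correct and is essentially identical to the paper's: both identify $W_1$ and $\ell_2$ with the $L^1$ and $L^2$ norms of the CDF difference on $[0,(1-\gamma)^{-1}]$ and apply Cauchy--Schwarz against the constant function $1$ on that interval. The extra sentence you add about the boundary cancellation forced by equal total mass is a harmless (and slightly more careful) elaboration of the same argument.
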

\begin{proof}
By Cauchy-Schwarz inequality,
\begin{equation*}
    \begin{aligned}
        W_1(\nu_1,\nu_2)=&\int_0^{\frac{1}{1-\gamma}} |F_{\nu_1}(x)-F_{\nu_2}(x)| dx\\
        \leq&\sqrt{\int_0^{\frac{1}{1-\gamma}} 1^2 dx}\sqrt{\int_0^{\frac{1}{1-\gamma}} |F_{\nu_1}(x)-F_{\nu_2}(x)|^2 dx}\\
        =&\frac{1}{\sqrt{1-\gamma}}\ell_2(\nu_1,\nu_2).
    \end{aligned}
\end{equation*}
\end{proof}

\begin{lemma}\label{lem:spectra_of_CTC}
Let $\bC\in\RB^{K\times K}$ be the matrix defined in Eqn.~\eqref{eq:def_C}, it holds that the eigenvalues of $\bC^{T}\bC$ are $1/(4\cos^2(k\pi/(2K+1))$ for $k\in [K]$ , and thus
\begin{equation*}
    \norm{\bC^{\top}\bC} = \frac{1}{4\sin^2\frac{\pi}{4K+2}}\leq K^2, \quad \norm{\prn{\bC^{\top}\bC}^{-1}} = 4\cos^2\frac{\pi}{2K+1}\leq 4.
\end{equation*}
\end{lemma}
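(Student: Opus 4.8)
The plan is to pass to the inverse matrix, where the spectrum becomes classical. Since $\bC$ is the lower-triangular all-ones matrix defined in Eqn.~\eqref{eq:def_C}, its inverse $\bC^{-1}$ is the lower bidiagonal matrix with $1$ on the diagonal and $-1$ on the first subdiagonal. A direct row-by-row inner-product computation then shows that $(\bC^{\top}\bC)^{-1}=\bC^{-1}\bC^{-\top}$ equals the symmetric tridiagonal matrix $\bm M$ with diagonal $(1,2,2,\ldots,2)$ and all off-diagonal entries equal to $-1$. Because $\bC$ is square and invertible, the eigenvalues of $\bC^{\top}\bC$ are exactly the reciprocals of those of $\bm M$, so it suffices to diagonalize $\bm M$.

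Next I would solve the eigenproblem $\bm M\bv=\lambda\bv$ by the standard three-term recurrence. Writing $\mu=2-\lambda$, the interior rows give $v_{i+1}=\mu v_i-v_{i-1}$; reading off the first and last rows and extending the recurrence shows the boundary conditions are $v_0=v_1$ (forced by the anomalous top-left entry $1$ instead of $2$) and $v_{K+1}=0$. Substituting $\mu=2\cos\theta$, so $\lambda=2-2\cos\theta=4\sin^2(\theta/2)$, the solution compatible with $v_0=v_1$ is $v_i\propto\cos\!\big((i-\tfrac12)\theta\big)$, and $v_{K+1}=0$ forces $\cos\!\big((K+\tfrac12)\theta\big)=0$, i.e. $\theta_k=\tfrac{(2k-1)\pi}{2K+1}$ for $k=1,\ldots,K$. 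This yields $K$ distinct eigenvalues $\lambda_k=4\sin^2\!\big(\tfrac{(2k-1)\pi}{2(2K+1)}\big)$ of $\bm M$. The reflection identity $\sin\!\big(\tfrac{(2k-1)\pi}{2(2K+1)}\big)=\cos\!\big(\tfrac{(K+1-k)\pi}{2K+1}\big)$ re-indexes this set to $\{4\cos^2(\tfrac{k\pi}{2K+1}):k\in[K]\}$, so the eigenvalues of $\bC^{\top}\bC$ are $1/(4\cos^2(k\pi/(2K+1)))$, as claimed.

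Finally I would read off the two norms. The operator norm $\norm{\bC^{\top}\bC}$ is the reciprocal of the smallest eigenvalue of $\bm M$, attained at $k=K$; using $\cos(\tfrac{K\pi}{2K+1})=\sin(\tfrac{\pi}{4K+2})$ gives $\norm{\bC^{\top}\bC}=\tfrac{1}{4\sin^2(\pi/(4K+2))}$. Likewise $\norm{(\bC^{\top}\bC)^{-1}}$ is the largest eigenvalue of $\bm M$, attained at $k=1$, namely $4\cos^2(\tfrac{\pi}{2K+1})\le 4$. For the bound $\norm{\bC^{\top}\bC}\le K^2$ I would avoid the too-weak inequality $\sin x\ge\tfrac{2}{\pi}x$ and instead use concavity of $\sin$ on $[0,\pi/6]$: the chord through $(0,0)$ and $(\pi/6,\tfrac12)$ gives $\sin x\ge\tfrac{3}{\pi}x$ there, so (noting $\tfrac{\pi}{4K+2}\le\tfrac{\pi}{6}$ for $K\ge 1$) $\sin(\tfrac{\pi}{4K+2})\ge\tfrac{3}{4K+2}\ge\tfrac{1}{2K}$, the last step being equivalent to $K\ge1$; squaring yields $\norm{\bC^{\top}\bC}\le K^2$.

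The main obstacle I anticipate is correctly deriving the nonstandard left boundary condition $v_0=v_1$ and then matching the resulting indexing $\theta_k=\tfrac{(2k-1)\pi}{2K+1}$ to the target form $4\cos^2(k\pi/(2K+1))$ through the reflection identity: a sign or index slip there would select the wrong extremal eigenvalue and produce the wrong constant in the norm bounds. The elementary $\le K^2$ estimate is a secondary subtlety, since the naive Jordan-type bound is just barely insufficient and the sharper chord bound is what makes the inequality close.
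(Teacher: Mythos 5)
Your proof is correct, and it follows the same route the paper takes: pass to $(\bC^{\top}\bC)^{-1}=\bC^{-1}\bC^{-\top}$, recognize it as a tridiagonal discrete-Laplacian-type matrix, and diagonalize. The difference is completeness: the paper stops after displaying the tridiagonal inverse and defers the spectral computation to an induction ``similar to the analysis of Toeplitz's matrix,'' citing \citet{godsil1985inverses}, whereas you actually solve the eigenproblem --- the three-term recurrence $v_{i+1}=\mu v_i-v_{i-1}$ with the mixed boundary conditions $v_0=v_1$ and $v_{K+1}=0$, giving $\theta_k=\tfrac{(2k-1)\pi}{2K+1}$ and, after the reflection $\sin\prn{\tfrac{(2k-1)\pi}{2(2K+1)}}=\cos\prn{\tfrac{(K+1-k)\pi}{2K+1}}$, exactly the claimed spectrum --- and you also supply the elementary estimate needed for $\norm{\bC^{\top}\bC}\leq K^2$, correctly noting that Jordan's inequality $\sin x\geq\tfrac{2}{\pi}x$ only yields $(2K+1)^2/4$ and that the chord bound $\sin x\geq\tfrac{3}{\pi}x$ on $[0,\pi/6]$ closes the gap. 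One further point in your favor: your tridiagonal matrix, with diagonal $(1,2,\ldots,2,2)$, is the correct one, while the matrix displayed in the paper, with diagonal $(1,2,\ldots,2,1)$, contains a typo --- as printed it annihilates the all-ones vector, hence is singular and cannot be the inverse of $\bC^{\top}\bC$ (check $K=2$: the inverse of $\left[\begin{smallmatrix}2 & 1\\ 1 & 1\end{smallmatrix}\right]$ is $\left[\begin{smallmatrix}1 & -1\\ -1 & 2\end{smallmatrix}\right]$). The boundary-condition bookkeeping and re-indexing you flagged as the delicate steps are exactly where such a slip would corrupt the extremal eigenvalues, and both check out in your argument.
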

\begin{proof}
One can check that
\begin{equation*}
 \bC^{\top} \bC =   \begin{bmatrix} K & K - 1 & \cdots & 2 & 1 \\ K - 1 & K - 1 & \cdots & 2 & 1 \\ \vdots & \vdots & \ddots & \vdots & \vdots \\ 2 & 2 & \cdots & 2 & 1 \\ 1 & 1 & \cdots & 1 & 1 \end{bmatrix},
\end{equation*}
\begin{equation*}
 \prn{\bC^{\top} \bC}^{-1} =   \begin{bmatrix} 1 & -1 & 0 & \cdots & 0 & 0 \\ -1 & 2 & -1 & \cdots & 0 & 0 \\ 0 & -1 & 2 & \cdots & 0 & 0 \\ \vdots & \vdots & \vdots & \ddots & \vdots & \vdots \\ 0 & 0 & 0 & \cdots & 2 & -1 \\ 0 & 0 & 0 & \cdots & -1 & 1 \end{bmatrix}.
\end{equation*}
Then, one can work with the the inverse of $\bC^{\top}\bC$ and calculate its singular values by induction, which has similar forms to the analysis of Toeplitz's matrix. 
See \citet{godsil1985inverses} for more details.
\end{proof}

\begin{lemma}\label{lem:Spectra_of_ccgcc}
For any $r\in[0, 1]$, it holds that $\norm{\bC\tilde{\bG}(r)\bC^{-1}}\leq\sqrt{\gamma}$ and $\norm{\prn{\bC^{\top}\bC}^{1/2}\tilde{\bG}(r)\prn{\bC^{\top}\bC}^{-1/2}}\leq \sqrt\gamma$.
\end{lemma}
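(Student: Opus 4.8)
The plan is to recognize $\bC\tilde{\bG}(r)\bC^{-1}$ as the coordinate matrix of the categorical projected push-forward $\nu\mapsto\bPi_K(b_{r,\gamma})_\#\nu$ on $\sP^{\sgn}_K$, read off the contraction factor from the exact $\sqrt{\gamma}$-scaling of the Cram\'er distance, and transport the bound through the isometry of Proposition~\ref{prop:PK_isometric}. First I would dispose of the second inequality by reducing it to the first. For any $\bx\in\RB^K$ we have $\norm{(\bC^{\top}\bC)^{1/2}\bx}^2=\bx^{\top}\bC^{\top}\bC\bx=\norm{\bC\bx}^2$, so the substitutions $\bu=\bC\bq$ and $\bv=(\bC^{\top}\bC)^{1/2}\bq$ (both legitimate since $\bC$ is invertible) give
\begin{equation*}
\norm{\bC\tilde{\bG}(r)\bC^{-1}}=\sup_{\bq\neq\bm{0}_K}\frac{\norm{\bC\tilde{\bG}(r)\bq}}{\norm{\bC\bq}}=\norm{(\bC^{\top}\bC)^{1/2}\tilde{\bG}(r)(\bC^{\top}\bC)^{-1/2}}.
\end{equation*}
Hence it suffices to prove $\norm{\bC\tilde{\bG}(r)\bq}\leq\sqrt{\gamma}\,\norm{\bC\bq}$ for every $\bq\in\RB^K$.

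Next I would identify $\tilde{\bG}(r)$ as the linear part of the projected push-forward. By Proposition~\ref{prop:categorical_projection_operator}, for a categorical signed measure $\nu_{\bp}\in\sP^{\sgn}_K$ the coefficient vector of $\bPi_K(b_{r,\gamma})_\#\nu_{\bp}$ equals the affine image $\tilde{\bG}(r)\prn{\bp-\tfrac{1}{K+1}\bm{1}_K}+\tfrac{1}{K+1}\sum_{j=0}^K\bg_j(r)$, so for two such measures $\nu_{\bp_1},\nu_{\bp_2}$ the difference of coefficient vectors is exactly $\tilde{\bG}(r)(\bp_1-\bp_2)$. Combining the isometry of Proposition~\ref{prop:PK_isometric}, the non-expansiveness of $\bPi_K$ in $\ell_2$ (the Pythagorean identity of Proposition~\ref{prop:orthogonal_decomposition}), and the exact $\sqrt{\gamma}$-scaling of the Cram\'er distance under $b_{r,\gamma}$ then yields
\begin{align*}
\iota_K\norm{\bC\tilde{\bG}(r)(\bp_1-\bp_2)}^2
&=\ell_2^2\prn{\bPi_K(b_{r,\gamma})_\#\nu_{\bp_1},\,\bPi_K(b_{r,\gamma})_\#\nu_{\bp_2}}\\
&\leq\ell_2^2\prn{(b_{r,\gamma})_\#\nu_{\bp_1},\,(b_{r,\gamma})_\#\nu_{\bp_2}}
=\gamma\,\ell_2^2\prn{\nu_{\bp_1},\nu_{\bp_2}}
=\gamma\iota_K\norm{\bC(\bp_1-\bp_2)}^2.
\end{align*}
Since the parametrization $\bp\in\RB^K$ of $\sP^{\sgn}_K$ is free (the mass at $x_K$ is the residual), the difference $\bp_1-\bp_2$ ranges over all of $\RB^K$, which delivers $\norm{\bC\tilde{\bG}(r)\bq}\leq\sqrt{\gamma}\,\norm{\bC\bq}$ for every $\bq$ and hence the lemma.

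The one ingredient requiring a short self-contained computation is the scaling $\ell_2\prn{(b_{r,\gamma})_\#\nu_1,(b_{r,\gamma})_\#\nu_2}=\sqrt{\gamma}\,\ell_2\prn{\nu_1,\nu_2}$: from $F_{(b_{r,\gamma})_\#\nu}(x)=F_\nu((x-r)/\gamma)$, the change of variables $u=(x-r)/\gamma$ in the $L^2$ integral of CDF differences produces the factor $\gamma$; one also checks $b_{r,\gamma}$ maps the support into $[0,(1-\gamma)^{-1}]$ for $r\in[0,1]$ so that $\bPi_K$ and the isometry apply. I expect this to be the only genuine obstacle, and it is mild. This argument is precisely the quantitative form of Remark~\ref{remark:cgc}, in which $\bC\tilde{\bG}(r)\bC^{-1}$ is the matrix of $\bPi_K\gT^\pi$ for the one-state deterministic-reward MDP whose $\sqrt{\gamma}$-contractivity in $\prn{\sP,\ell_2}$ is the abstract statement made concrete here. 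A fully matrix-level alternative would instead bound the entries of $\tilde{\bG}(r)$ directly and estimate the $\bC^{\top}\bC$-weighted operator norm by hand, but the contraction route above sidesteps that combinatorics.
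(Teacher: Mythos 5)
Your proposal is correct, but it takes a genuinely different route from the paper's own proof. The paper handles the equivalence of the two stated norms by a chain of positive-semidefinite congruences, and then proves $\norm{\bC\tilde{\bG}(r)\bC^{-1}}\leq\sqrt{\gamma}$ by pure matrix analysis (Appendix~\ref{appendix:analysis_cate_bellman_matrix}): it computes the entries of $\bC\tilde{\bG}(r)\bC^{-1}$ explicitly, shows they are non-negative with one index-sum equal to $\gamma$ and the other at most $1$, and concludes via the interpolation bound $\norm{\bA}\leq\sqrt{\norm{\bA}_{1}\norm{\bA}_{\infty}}$. You instead give the quantitative form of Remark~\ref{remark:cgc}: you read $\tilde{\bG}(r)$ off Proposition~\ref{prop:categorical_projection_operator} as the linear part of $\bPi_K(b_{r,\gamma})_{\#}$ acting on $\sP^{\sgn}_K$, and combine the isometry (Proposition~\ref{prop:PK_isometric}), non-expansiveness of $\bPi_K$ (which indeed follows from the orthogonality in Proposition~\ref{prop:orthogonal_decomposition}), and the exact $\gamma$-scaling of $\ell_2^2$ under $b_{r,\gamma}$; your substitution argument even shows the two operator norms are literally equal, which is slightly cleaner than the paper's equivalence chain. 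Each approach buys something: yours is shorter, conceptual, and reuses results already established in the paper, while the paper's entrywise analysis yields extra structural facts (non-negativity of entries, exact column sums) and is exactly what the authors advertise as their ``new analysis,'' the contraction route being essentially the known argument from \citet{rowland2024nearminimaxoptimal}. Notably, you also navigate the one real subtlety of the contraction route correctly: to get a bound on the full operator norm, the contraction must be run on \emph{signed} categorical measures so that $\bp_1-\bp_2$ ranges over all of $\RB^{K}$, and all the ingredients you invoke (Propositions~\ref{prop:PK_isometric}, \ref{prop:orthogonal_decomposition}, \ref{prop:categorical_projection_operator}, and the change-of-variables computation) are valid in that signed setting; merely citing the $\sqrt{\gamma}$-contraction of $\bPi_K\gT^\pi$ on probability measures, as the remark does, would have required an additional spanning-plus-homogeneity step.
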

\begin{proof}
One can check that
\begin{equation*}
\bC^{-1}=\begin{bmatrix} 1 & 0 & \cdots & 0 & 0\\ -1 & 1 & \cdots & 0 & 0\\ 0 & - 1 & \cdots & 0 & 0\\ \vdots & \vdots & \ddots & \vdots & \vdots\\ 0 & 0 & \cdots & -1 & 1 \end{bmatrix}.
\end{equation*}
It is clear that
\begin{equation*}
\begin{aligned}
    \norm{\prn{\bC^{\top}\bC}^{1/2}\tilde{\bG}(r)\prn{\bC^{\top}\bC}^{-1/2}} \leq \sqrt{\gamma} &\iff \prn{\bC^{\top}\bC}^{1/2}\tilde{\bG}(r)(\bC^{\top}\bC)^{-1}\tilde{\bG}^{\top}(r)\prn{\bC^{\top}\bC}^{1/2}\preccurlyeq \gamma \bI_K\\
    &\iff \tilde{\bG}(r)(\bC^{\top}\bC)^{-1}\tilde{\bG}^{\top}(r)\preccurlyeq \gamma (\bC^{\top}\bC)^{-1}\\
    &\iff \bC\tilde{\bG}(r)(\bC^{\top}\bC)^{-1}\tilde{\bG}^{\top}(r)\bC^{\top}\preccurlyeq\gamma\bI_{K}\\
    &\iff \norm{\bC\tilde{\bG}(r)\bC^{-1}}\leq\sqrt{\gamma}.\\
\end{aligned}
\end{equation*}
By Lemma~\ref{lem:spectra_CGC_} and an upper bound on the spectral norm (Riesz–Thorin interpolation theorem) \citep[Theorem~7.3]{serre2002matrices}, we obtain that 
\begin{equation*}
    \norm{\bC\tilde{\bG}(r)\bC^{-1}}\leq \sqrt{\norm{\bC\tilde{\bG}(r)\bC^{-1}}_{1}\norm{\bC\tilde{\bG}(r)\bC^{-1}}_{\infty}} \leq \sqrt{1 \cdot \gamma} = \sqrt{\gamma}.
\end{equation*}
\end{proof}

\begin{lemma}\label{lem:norm_b_bound}
    Suppose $K\geq (1-\gamma)^{-1}$, $\nu=(K+1)^{-1}\sum_{k=0}^K\delta_{x_k}$ is the discrete uniform distribution, then for any $r\in[0, 1]$, it holds that
    \begin{equation*}
        \ell_2\prn{(b_{r,\gamma})_\#(\nu) ,\nu}\leq 3\sqrt{1-\gamma}.
    \end{equation*}
\end{lemma}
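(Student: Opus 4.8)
The plan is to control $\ell_2((b_{r,\gamma})_\#\nu,\nu)$ by comparing each of the two discrete uniform measures with its continuous analogue through the triangle inequality for $\ell_2$. Write $L:=(1-\gamma)^{-1}$, let $U_1$ denote the continuous uniform distribution on $[0,L]$, and set $U_2:=(b_{r,\gamma})_\# U_1$, the continuous uniform distribution on $[r,\,r+\gamma L]$. The one conceptual ingredient is the affine scaling rule for the Cram\'er distance: since $F_{(b_{r,\gamma})_\#\mu}(y)=F_\mu((y-r)/\gamma)$, the substitution $u=(y-r)/\gamma$ yields $\ell_2((b_{r,\gamma})_\#\mu_1,(b_{r,\gamma})_\#\mu_2)=\sqrt{\gamma}\,\ell_2(\mu_1,\mu_2)$ for all $\mu_1,\mu_2\in\sP^{\sgn}$. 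Chaining $\nu\to U_1\to U_2\to(b_{r,\gamma})_\#\nu$ and applying this rule to the last term (recall $U_2=(b_{r,\gamma})_\# U_1$), I get
\[
\ell_2((b_{r,\gamma})_\#\nu,\nu)\le \ell_2(\nu,U_1)+\ell_2(U_1,U_2)+\ell_2((b_{r,\gamma})_\#U_1,(b_{r,\gamma})_\#\nu)=(1+\sqrt{\gamma})\,\ell_2(\nu,U_1)+\ell_2(U_1,U_2).
\]
It then suffices to bound the ``discretization'' term $\ell_2(\nu,U_1)$ and the ``displacement'' term $\ell_2(U_1,U_2)$ separately.

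For $\ell_2(\nu,U_1)$ I would integrate the squared CDF gap cell by cell: on $[x_j,x_{j+1})$ the step CDF $F_\nu$ equals $(j+1)/(K+1)$ while $F_{U_1}(x)=x/L$ is affine, so the integrand is a shifted parabola. Summing the cell integrals and using $\sum_{j=0}^{K-1}(K-j)=K(K+1)/2$ and $\sum_{j=0}^{K-1}(K-j)^2=K(K+1)(2K+1)/6$ gives, after a clean cancellation, the closed form $\ell_2^2(\nu,U_1)=\iota_K/[6(K+1)]=1/[6K(K+1)(1-\gamma)]$. The hypothesis $K\ge(1-\gamma)^{-1}$ (so $K(K+1)\ge(1-\gamma)^{-2}$) then yields $\ell_2(\nu,U_1)\le\sqrt{(1-\gamma)/6}$.

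For $\ell_2(U_1,U_2)$ I would integrate $(F_{U_1}-F_{U_2})^2$ over the three intervals cut out by the breakpoints, whose ordering $0\le r\le r+\gamma L\le L$ follows from $r\le 1$ (indeed $r+\gamma L\le L(1-\gamma)+\gamma L=L$). On $[0,r]$ and $[r+\gamma L,L]$ the gap is a single affine ramp contributing $r^3/(3L^2)$ and $(1-r)^3/(3L^2)$; on the middle interval the gap is again affine with endpoint values $r/L$ and $(r-1)/L$, contributing $\tfrac{\gamma}{3L}(3r^2-3r+1)$ via $\int g^2=\tfrac{\ell}{3}(g_a^2+g_ag_b+g_b^2)$ for linear $g$. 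Using $r^3+(1-r)^3\le1$ and $3r^2-3r+1\le1$ on $[0,1]$ together with $1/L=1-\gamma$, these sum to at most $(1-\gamma)/3$, so $\ell_2(U_1,U_2)\le\sqrt{(1-\gamma)/3}$. Combining with $1+\sqrt\gamma\le2$ gives $\ell_2((b_{r,\gamma})_\#\nu,\nu)\le 2\sqrt{(1-\gamma)/6}+\sqrt{(1-\gamma)/3}=(\sqrt{2/3}+\sqrt{1/3})\sqrt{1-\gamma}<3\sqrt{1-\gamma}$, as claimed.

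The main obstacle is purely the bookkeeping in the two explicit integrals: verifying the breakpoint ordering from $r\le1$ and carrying out the cell-wise summation for $\ell_2(\nu,U_1)$ without arithmetic slips. The real content is the affine scaling identity for $\ell_2$, which is what lets the triangle-inequality decomposition close with only a $(1+\sqrt\gamma)$ factor instead of degrading. Since the target constant $3$ is comfortably loose (the true bound is about $1.4\sqrt{1-\gamma}$), only crude estimates on the $r$-polynomials are required.
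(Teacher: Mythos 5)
Your proof is correct and follows essentially the same route as the paper's: a triangle-inequality decomposition through a continuous uniform reference measure, the $\sqrt{\gamma}$-contraction/scaling of $\ell_2$ under the pushforward $(b_{r,\gamma})_\#$, and explicit piecewise-affine CDF integrals for the discretization and displacement terms. The only difference is your reference uniform lives on $[0,(1-\gamma)^{-1}]$ while the paper's lives on $[0,(1-\gamma)^{-1}+\iota_K]$, which merely shifts the constants (both comfortably under $3\sqrt{1-\gamma}$).
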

\begin{proof}
Let $\tilde{\nu}$ be the continuous uniform distribution on $\brk{0,(1-\gamma)^{-1}+\iota_K}$, we consider the following decomposition
\begin{equation*}
\begin{aligned}
        \ell_2\prn{\nu, (b_{r,\gamma})_\#(\nu)}\leq\ell_2\prn{\nu,\tilde{\nu}}+\ell_2\prn{\tilde\nu,(b_{r,\gamma})_\#(\tilde{\nu})}+\ell_2\prn{(b_{r,\gamma})_\#(\tilde{\nu}),(b_{r,\gamma})_\#(\nu)}.
\end{aligned}
\end{equation*}
By definition, we have
\begin{equation*}
\begin{aligned}
        \ell_2\prn{\nu,\tilde{\nu}}=&\sqrt{(K+1)\int_0^{\iota_K}\prn{(1-\gamma)\frac{K}{K+1}x}^2dx}\\
        =&\sqrt{\frac{1}{3K(K+1)(1-\gamma)}}\\
        \leq& \frac{1}{K\sqrt{1-\gamma}}.
\end{aligned}
\end{equation*}
By the contraction property, we have 
\begin{equation*}
\begin{aligned}
        \ell_2\prn{(b_{r,\gamma})_\#(\nu),(b_{r,\gamma})_\#(\tilde\nu)}\leq& \sqrt{\gamma}\ell_2\prn{\nu,\tilde{\nu}}\leq \frac{\sqrt{\gamma}}{K\sqrt{1-\gamma}}.
\end{aligned}
\end{equation*}
We only need to bound $\ell_2\prn{\tilde\nu,(b_{r,\gamma})_\#(\tilde{\nu})}$.
We can find that $(b_{r,\gamma})_\#(\tilde{\nu})$ is the continuous uniform distribution on $\brk{r,r+\gamma\iota_K+\gamma(1-\gamma)^{-1}}$, and the upper bound is less than the upper bound of $\nu$, namely, $r+\gamma\iota_K+\gamma(1-\gamma)^{-1}\leq (1-\gamma)^{-1}+\gamma\iota_K<(1-\gamma)^{-1}+\iota_K$.
Hence
\begin{equation*}
\begin{aligned}
        \ell_2^2\prn{\tilde\nu,(b_{r,\gamma})_\#(\tilde{\nu})}=&\int_{0}^r\prn{(1-\gamma)\frac{K}{K+1}x}^2dx+\int_{r}^{r+\gamma\iota_K+\gamma(1-\gamma)^{-1}}\brk{(1-\gamma)\frac{K}{K+1}\prn{x-\frac{x-r}{\gamma}}}^2dx\\
        &+\int_{r+\gamma\iota_K+\gamma(1-\gamma)^{-1}}^{(1-\gamma)^{-1}+\iota_K}\prn{1-(1-\gamma)\frac{K}{K+1}x}^2 dx\\
        =&\frac{(1-\gamma)^2K^2r^3}{3(K+1)^2}+\prn{\frac{(1-\gamma)\gamma K^2 r^3}{3(K+1)^2}+\frac{(1-\gamma)\gamma K^2 \prn{\frac{K+1}{K}-r}^3}{3(K+1)^2}}+\frac{(1-\gamma)^2K^2\prn{\frac{K+1}{K}-r}^3}{3(K+1)^2}\\
        \leq& (1-\gamma)^2+(1-\gamma)\gamma\\
        =& 1-\gamma.
\end{aligned}
\end{equation*}
To summarize, we have
\begin{equation*}
\begin{aligned}
        \ell_2\prn{\nu, (b_{r,\gamma})_\#(\nu)}\leq\frac{1}{K\sqrt{1-\gamma}}+\sqrt{1-\gamma}+\frac{\sqrt{\gamma}}{K\sqrt{1-\gamma}}\leq 3\sqrt{1-\gamma},
\end{aligned}
\end{equation*}
where we used the assumption $K\geq (1-\gamma)^{-1}$.
\end{proof}

\section{Analysis of the Categorical Projected Bellman Matrix}\label{appendix:analysis_cate_bellman_matrix}
Recall that $\tilde{\bG}(r)=\bG(r)-\bm{1}_K^{\top}\otimes\bg_K(r)$. We extend the definition in Theorem~\ref{thm:linear_cate_TD_equation} and let $g_{j,k}(r) = h\prn{(r+\gamma x_j-x_k)/\iota_K}_+ = h(r/\iota_{K}+\gamma j-k)$ for $j,k\in\{0,1,\cdots,K\}$ where $h(x) = \prn{1-\abs{x}}_+$.
\begin{lemma}
    For any $r\in[0,1]$ and any $k \in \{0,1,\cdots,K\}$, in $\bg_{k}(r)$ there is either only one nonzero entry or two adjacent nonzero entries.
\end{lemma}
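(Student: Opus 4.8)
The plan is to reduce the claim to a counting statement about how many integers lie in an open interval of length two. By the extended definition, the $m$-th entry of $\bg_k(r)$ is $g_{k,m}(r) = h(r/\iota_K + \gamma k - m)$ with $h(x) = (1-|x|)_+$, so this entry is nonzero precisely when $|r/\iota_K + \gamma k - m| < 1$. Writing $a := r/\iota_K + \gamma k$, the nonzero entries of $\bg_k(r)$ are exactly those indices $m \in \ZB$ lying in the open interval $(a-1, a+1)$, and the whole argument will hinge on analyzing this interval.

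First I would check that the interval sits inside the admissible index range $\{0,1,\ldots,K\}$. Since $\iota_K = [K(1-\gamma)]^{-1}$ and $r \in [0,1]$, we have $r/\iota_K = rK(1-\gamma) \in [0, K(1-\gamma)]$, while $\gamma k \in [0, \gamma K]$ for $k \in \{0,1,\ldots,K\}$; summing gives $a \in [0, K(1-\gamma) + \gamma K] = [0, K]$. Consequently every integer in $(a-1,a+1)$ already lies in $\{0,1,\ldots,K\}$ (no negative index can appear since $a \geq 0$, and no index exceeding $K$ can appear since $a \leq K$), so no nonzero entry is truncated at the ends of the support vector.

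The heart of the proof is the elementary count of integers in $(a-1,a+1)$, an open interval of length exactly two. I would split into two cases. If $a \in \ZB$, the only integer strictly between $a-1$ and $a+1$ is $a$ itself, since the endpoints $a\pm 1$ are excluded; this yields a single nonzero entry. If $a \notin \ZB$, write $a = n + f$ with $n = \floor{a}$ and $f \in (0,1)$; then $n-1 < a-1 < n$ and $n+1 < a+1 < n+2$, so the integers in the interval are precisely $n$ and $n+1$, giving two adjacent nonzero entries. Combining the two cases with the range check of the previous paragraph completes the argument.

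The only delicate point is the open-versus-closed distinction: because $h$ is supported on the open interval $(-1,1)$ and vanishes at $\pm 1$, the endpoints of $(a-1,a+1)$ must be excluded, which is exactly what makes the integer-$a$ case produce one rather than three nonzero entries. I expect this observation, together with the two boundary configurations $a = 0$ and $a = K$, to be the main thing to handle carefully, although neither is conceptually difficult once the interval picture is set up.
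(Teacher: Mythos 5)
Your proposal is correct and takes essentially the same approach as the paper: both arguments reduce to the observation that the nonzero entries are exactly the integers $m$ with $\abs{r/\iota_K+\gamma k-m}<1$, i.e.\ the one or two consecutive integers in an open interval of length two, with the single-entry case occurring precisely when the offset has zero fractional part. The paper packages this by defining the anchor index $k_j(r)$ and offset $a_j(r)\in[0,1)$ (notation it reuses in later corollaries) instead of your floor-function case split, but the mathematical content is identical.
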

\begin{proof}
It is clear that $h(x)>0 \iff -1<x<1$. 
Let $k_{j}(r) = \min\ \{k:g_{j,k}(r)>0\}$, then $k_{j}(r) = \min\{k:r/\iota_{K}+\gamma j-k<1\} = \min\{k:0\leq r/\iota_{K}+\gamma j-k<1\}$. 
The existence of $k_{j}(r)$ is due to
\begin{equation*}
    r/\iota_{K}+\gamma j-K \leq 1/\iota_{K}+\gamma j-K\leq (1-\gamma)K+\gamma K-K = 0 < 1.
\end{equation*}
Let $a_{j}(r) := r/\iota_{K}+\gamma j-k_{j}(r)\in[0,1)$. Then $g_{j,k_{j}(r)}(r)=h(a_{j}(r)) = 1-a_{j}(r)$ and $g_{j,k_{j}(r)+1}(r)=h(a_{j}(r)-1)=a_{j}(r)$ are the only entries that can be nonzeros.
\end{proof}
The following results are immediate corollaries.
\begin{corollary}\label{col:sum_column_g}
\begin{equation*}
    \sum_{k=0}^{i}g_{j,k}(r)=
    \begin{cases}
    0, &\text{ for}\ 0\leq i<k_{j}(r),\\
    1-a_{j}(r),&\text{ for}\ i=k_{j}(r),\\
    1,&\text{ for}\ k_{j}(r)<i\leq K.\\
    \end{cases}
\end{equation*}
\end{corollary}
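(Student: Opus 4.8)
The plan is to read off the three cases directly from the structure of the column $\bg_j(r)$ established in the preceding lemma. That lemma, together with its proof, identifies the only possibly-nonzero entries of $\bg_j(r)$ as $g_{j,k_j(r)}(r)=h(a_j(r))=1-a_j(r)$ and $g_{j,k_j(r)+1}(r)=h(a_j(r)-1)=a_j(r)$, where the second equality uses $a_j(r)\in[0,1)$, while $g_{j,k}(r)=0$ for every $k\notin\{k_j(r),k_j(r)+1\}$. Once this localization is in hand, the partial sum $\sum_{k=0}^{i}g_{j,k}(r)$ is completely determined by how far the upper limit $i$ reaches into this two-entry block, and no further computation with $h$ is needed.

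First I would dispose of the case $0\le i<k_j(r)$: here the summation range $\{0,\ldots,i\}$ contains none of the (at most two) nonzero entries, so every summand vanishes and the sum equals $0$. Next, for $i=k_j(r)$ the range contains exactly the first nonzero entry $g_{j,k_j(r)}(r)$ but not the second, so the sum is $1-a_j(r)$. Finally, for $k_j(r)<i\le K$, equivalently $i\ge k_j(r)+1$, the range now contains both nonzero entries, and adding them gives $\sum_{k=0}^{i}g_{j,k}(r)=(1-a_j(r))+a_j(r)=1$.

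The only point requiring a moment of care, and the main (minor) obstacle, is the boundary situation $k_j(r)=K$, in which the nominal location $k_j(r)+1$ of the second nonzero entry falls outside the index set $\{0,\ldots,K\}$. The existence argument in the lemma's proof gives $r/\iota_K+\gamma j-K\le 0$, so when $k_j(r)=K$ we must have $a_j(r)=r/\iota_K+\gamma j-K=0$; hence the ``second'' entry equals $a_j(r)=0$ and never contributes, and $\sum_{k=0}^{K}g_{j,k}(r)=1-a_j(r)=1$ still holds in agreement with the statement. Thus the three cases above cover all $0\le i\le K$ without any separate boundary treatment, and the corollary follows immediately from the structural lemma.
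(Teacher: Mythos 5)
Your proof is correct and takes essentially the same approach as the paper, which states this result as an immediate consequence of the structural lemma on the nonzero entries of $\bg_j(r)$; your argument is simply that consequence spelled out. The explicit check of the boundary case $k_j(r)=K$ (where $a_j(r)=0$ forces the phantom second entry to vanish) is a careful addition, but it does not change the route.
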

\begin{corollary}
\begin{equation*}
k_{j+1}(r) =
\begin{cases}
    k_{j}(r), &\text{ if}\  a_{j}(r)\leq 1-\gamma,\\
    k_{j}(r)+1, &\text{ if}\  a_{j}(r)>1-\gamma.\\
\end{cases}
\end{equation*}
As a result,
\begin{equation*}
a_{j+1}(r) =
\begin{cases}
    a_{j}(r)+\gamma, &\text{ if}\ a_{j}(r)\leq 1-\gamma,\\
    a_{j}(r)+\gamma-1, &\text{ if}\ a_{j}(r)>1-\gamma.\\
\end{cases}
\end{equation*}

\end{corollary}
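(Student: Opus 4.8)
The plan is to read off the recursion directly from the defining relation $a_j(r) = r/\iota_K + \gamma j - k_j(r)$, which (together with $a_j(r)\in[0,1)$, established in the preceding lemma) says precisely that $k_j(r)$ is the integer part and $a_j(r)$ the fractional part of $r/\iota_K + \gamma j$. First I would record the single identity that drives everything, namely
\[
r/\iota_K + \gamma(j+1) = \bigl(r/\iota_K + \gamma j\bigr) + \gamma = k_j(r) + \bigl(a_j(r)+\gamma\bigr),
\]
valid for every $j$ with $j+1\le K$ (so that $k_{j+1}(r)$ exists, by the existence argument in the preceding lemma). Since $a_j(r)\in[0,1)$ and $\gamma\in(0,1)$, the quantity $a_j(r)+\gamma$ lies in $[0,2)$, and the whole argument reduces to deciding whether it has crossed $1$.

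Next I would split into the two cases according to the definition $k_{j+1}(r)=\min\{k : r/\iota_K+\gamma(j+1)-k<1\}$. If $a_j(r)+\gamma<1$, i.e.\ $a_j(r)<1-\gamma$, then $0\le a_j(r)+\gamma<1$, so $k_j(r)$ already witnesses $r/\iota_K+\gamma(j+1)-k_j(r)<1$ while no smaller integer does (taking $k=k_j(r)-1$ yields $a_j(r)+\gamma+1>1$); hence $k_{j+1}(r)=k_j(r)$, and substituting back gives $a_{j+1}(r)=a_j(r)+\gamma$. If instead $a_j(r)+\gamma\ge 1$, i.e.\ $a_j(r)\ge 1-\gamma$, I would rewrite the identity as $r/\iota_K+\gamma(j+1)=(k_j(r)+1)+(a_j(r)+\gamma-1)$ and check $0\le a_j(r)+\gamma-1<1$ (the lower bound is the case hypothesis, the upper bound uses $a_j(r)<1$); uniqueness of the integer/fractional decomposition then forces $k_{j+1}(r)=k_j(r)+1$ and $a_{j+1}(r)=a_j(r)+\gamma-1$. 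The ``as a result'' clause for $a_{j+1}(r)$ is then just the definition $a_{j+1}(r)=r/\iota_K+\gamma(j+1)-k_{j+1}(r)$ with the two values of $k_{j+1}(r)$ inserted, so it needs no separate work.

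There is essentially no deep obstacle here; the only point requiring care is the boundary $a_j(r)=1-\gamma$, where $a_j(r)+\gamma=1$ sits exactly on the threshold. Strictly, the $\min$ definition places this case with $k_{j+1}(r)=k_j(r)+1$ and $a_{j+1}(r)=0$, whereas the stated split assigns it to the first branch; I would note that this discrepancy is immaterial, since both conventions describe the same real number $r/\iota_K+\gamma(j+1)=k_j(r)+1$ and hence the same vector $\bg_{j+1}(r)$ (the convention $a=1$ at base $k_j(r)$ puts $g_{j+1,k_j(r)}=0$ and $g_{j+1,k_j(r)+1}=1$, identical to $a=0$ at base $k_j(r)+1$). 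One could alternatively sharpen $\le$ to $<$ in the first branch to make the statement exact. Finally I would remark that the recursion is consistent with Corollary~\ref{col:sum_column_g}, which offers an independent check that the single active pair of indices shifts up by one precisely when $a_j(r)$ exceeds $1-\gamma$.
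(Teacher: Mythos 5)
Your proposal is correct and follows exactly the reasoning the paper intends: the paper offers no written proof (it labels this an immediate corollary of the definitions $k_j(r)=\min\{k: 0\le r/\iota_K+\gamma j-k<1\}$ and $a_j(r)=r/\iota_K+\gamma j-k_j(r)\in[0,1)$), and your integer-part/fractional-part case split on whether $a_j(r)+\gamma$ crosses $1$ is precisely that immediate argument. Your observation about the boundary case $a_j(r)=1-\gamma$ is a genuine (if immaterial) refinement: under the strict $\min$ definition this case yields $k_{j+1}(r)=k_j(r)+1$ and $a_{j+1}(r)=0$ rather than the first branch as stated, but, as you note, both conventions produce the same vector $\bg_{j+1}(r)$ and the same column/row sums used in Lemma~\ref{lem:spectra_CGC_}, so all downstream results are unaffected.
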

\begin{lemma}\label{lem:spectra_CGC_}
All entries in $\bC\tilde{\bG}(r)\bC^{-1}$ are non-negative. $\norm{\bC\tilde{\bG}(r)\bC^{-1}}_{\infty} = \gamma$ and $\norm{\bC\tilde{\bG}(r)\bC^{-1}}_{1} \leq 1$.
\begin{proof}
By definition the entries of $\tilde{\bG}(r)$ are 
\begin{equation*}
    (\tilde{\bG}(r))_{j,i} = g_{j,i}(r)-g_{K,i}(r)\quad  \text{for } j,i \in  \{0,1,\cdots,K-1\}.
\end{equation*}
Using the previous corollaries, through direct calculation we have that if $k_{j+1}(r) = k_{j}(r)$, 
\begin{equation*}
(\bC\tilde{\bG}(r)\bC^{-1})_{j,i} = \sum_{k=0}^{i}g_{j,k}(r)-\sum_{k=0}^{i}g_{j+1,k}(r) = 
\begin{cases}
    0, &\text{ for}\ 0\leq i<k_{j}(r),\\
    a_{j+1}(r)-a_{j}(r),&\text{ for}\ i=k_{j}(r),\\
    0,&\text{ for}\ k_{j}(r)<i<K.\\
\end{cases}
\end{equation*}
And if $k_{j+1}(r) = k_{j}(r)+1$,
\begin{equation*}
(\bC\tilde{\bG}(r)\bC^{-1})_{j,i} = \sum_{k=0}^{i}g_{j,k}(r)-\sum_{k=0}^{i}g_{j+1,k}(r) = 
\begin{cases}
    0, &\text{ for}\ 0<i<k_{j}(r),\\
    1-a_{j}(r),&\text{ for}\ i=k_{j}(r),\\
    a_{j+1}(r), &\text{ for}\ i = k_{j+1}(r),\\
    0,&\text{ for}\ k_{j}(r)<i<K.\\
\end{cases}
\end{equation*}
As a result, all entries in $\bC\tilde{\bG}(r)\bC^{-1}$ is non-negative. Moreover, the sum of each column and $\norm{\bC\tilde{\bG}(r)\bC^{-1}}_{\infty}$ is $\gamma$ since
\begin{equation*}
    \sum_{i=0}^{K-1} (\bC\tilde{\bG}(r)\bC^{-1})_{j,i}  = 
    \begin{cases}
    a_{j+1}(r)-a_{j}(r) = \gamma, &\text{ if}\ k_{j+1}(r) = k_{j}(r),\\
    1-a_{j}(r)+a_{j+1}(r) = \gamma, &\text{ if}\ k_{j+1}(r) = k_{j}(r)+1.\\
    \end{cases}
\end{equation*}
Moreover, the row sum of $\bC\tilde{\bG}(r)\bC^{-1}$ is 
\begin{equation*}
    \sum_{j=0}^{K-1} (\bC\tilde{\bG}(r)\bC^{-1})_{j,i}=\sum_{m=0}^{i}g_{0,m}(r)-\sum_{m=0}^{i}g_{K,m}(r)\leq 1-0 = 1.
\end{equation*}
Thus, it holds that $\norm{\bC\tilde{\bG}(r)\bC^{-1}}_{1} \leq 1$.
\end{proof}
\end{lemma}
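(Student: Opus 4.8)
The plan is to prove all three claims by computing the entries of $M(r) := \bC\tilde{\bG}(r)\bC^{-1}$ explicitly, exploiting the piecewise-linear ``tent'' structure of the coefficients. The key reparametrization is $g_{j,k}(r) = h(r/\iota_K + \gamma j - k)$ with $h(x) = \prn{1-\abs{x}}_+$, so that for each fixed $j$ the column $\bg_j(r)$ is the sampling of a single tent centered at $r/\iota_K + \gamma j$. Since $h$ is supported on $(-1,1)$, any open interval of length $2$ contains at most two integers; hence $\bg_j(r)$ has at most two nonzero entries and they are adjacent. Writing $k_j(r)$ for the first nonzero index and $a_j(r) := r/\iota_K + \gamma j - k_j(r) \in [0,1)$, these two entries are $1-a_j(r)$ and $a_j(r)$, summing to $1$.

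The second step is the bookkeeping that drives everything. The partial column sums $\sum_{k=0}^{i} g_{j,k}(r)$ form a step function (Corollary~\ref{col:sum_column_g}): zero for $i < k_j(r)$, equal to $1-a_j(r)$ at $i = k_j(r)$, and $1$ thereafter. This is paired with the carry recurrence $a_{j+1}(r) = a_j(r) + \gamma$ when $a_j(r) \leq 1-\gamma$ and $a_{j+1}(r) = a_j(r) + \gamma - 1$ otherwise, with $k_{j+1}(r)$ incrementing by $0$ or $1$ accordingly.

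The third step converts these facts into entries of $M(r)$. Since $\bC$ acts as a cumulative-sum operator and $\bC^{-1}$ as the adjacent-difference operator, one obtains the clean identity $(M(r))_{j,i} = \sum_{k=0}^{i} g_{j,k}(r) - \sum_{k=0}^{i} g_{j+1,k}(r)$, i.e.\ the difference of two step functions of the above type. A short case split on whether $k_{j+1}(r) = k_j(r)$ or $k_j(r)+1$ shows every entry equals $0$, $a_{j+1}(r) - a_j(r) = \gamma$, $1 - a_j(r)$, or $a_{j+1}(r)$, all of which are nonnegative, giving the first claim. For the operator norms I would then sum: the row sum $\sum_i (M(r))_{j,i}$ telescopes to $\gamma$ in both cases, so by nonnegativity $\norm{M(r)}_{\infty} = \gamma$; and the column sum $\sum_j (M(r))_{j,i}$ telescopes to $\sum_{k=0}^{i} g_{0,k}(r) - \sum_{k=0}^{i} g_{K,k}(r) \leq 1 - 0 = 1$, yielding $\norm{M(r)}_{1} \leq 1$.

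The main obstacle is purely the careful case analysis at the boundary where the fractional part $a_j(r)$ wraps around, that is, the carry from $a_j(r) + \gamma$ exceeding $1$: tracking when $k_j(r)$ advances is exactly what makes the four entry formulas and the two telescoping sums line up. A secondary but essential point is the role of the $-\bm{1}_K^{\top} \otimes \bg_K(r)$ correction built into $\tilde{\bG}(r)$; it is precisely what makes the column-difference structure close up at the top index, so that the column sum collapses to the single telescoped difference $\sum_{k \leq i}(g_{0,k}(r) - g_{K,k}(r))$ and stays bounded by $1$.
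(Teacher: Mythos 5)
Your proposal is correct and follows essentially the same route as the paper: the tent-function reparametrization $g_{j,k}(r)=h(r/\iota_K+\gamma j-k)$, the bookkeeping via $k_j(r)$ and $a_j(r)$ with the carry recurrence, the step-function partial sums, the identity $(M(r))_{j,i}=\sum_{k\le i}g_{j,k}(r)-\sum_{k\le i}g_{j+1,k}(r)$ from the cumulative-sum/difference action of $\bC$ and $\bC^{-1}$, and the same case split and telescoping for the two norm bounds. Your remark that the $-\bm{1}_K^{\top}\otimes\bg_K(r)$ correction is what closes the difference structure at the last column is exactly the role it plays in the paper's argument as well.
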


\end{document}